\documentclass[sn-basic,iicol]{sn-jnl}

\usepackage{bm}
\usepackage{natbib}
\usepackage{titletoc}
\usepackage{tikz}
\usepackage[framemethod=tikz]{mdframed}
\usepackage{thm-restate}
\usepackage{hyperref}
\usepackage{subfigure} 
\usepackage{ragged2e}
\usepackage{soul}
\usepackage{tcolorbox}

\usepackage{graphicx}%
\usepackage{multirow}%
\usepackage{amsmath,amssymb,amsfonts}%
\usepackage{amsthm}%
\usepackage{mathrsfs}%
\usepackage[title]{appendix}%
\usepackage{xcolor}%
\usepackage{textcomp}%
\usepackage{manyfoot}%
\usepackage{booktabs}%
\usepackage{algorithm}%
\usepackage{algorithmicx}%
\usepackage{algpseudocode}%
\usepackage{listings}%

\hypersetup{
	colorlinks=true,
	linkcolor=blue,
	urlcolor=orange,
	citecolor=orange,
}



\theoremstyle{thmstyleone}%
\newtheorem{proposition}{Proposition}
\newtheorem{lemma}{Lemma}
\newtheorem{corollary}{Corollary}

\theoremstyle{thmstyletwo}%
\newtheorem{remark}{Remark}%

\newtheorem*{sketch}{Proof Sketch}

\theoremstyle{thmstylethree}%
\newtheorem{definition}{Definition}%
\newtheorem{assumption}{Assumption}

\raggedbottom

\newcommand{\size}[1]{\left\lvert #1 \right\rvert}
\newcommand{\I}[1] {\mathbf{1} \left[ #1 \right]}
\newcommand{\pp}[1] {\mathbb{P} \left[ #1 \right]}
\newcommand{\E}[2] {\mathop{\mathbb{E}}\limits_{#1} \left[ #2 \right]}
\newcommand{\pr}[2] {\mathop{\mathbb{P}}\limits_{#1} \left[ #2 \right]}

\definecolor{Top2}{RGB}{102, 171, 221}
\definecolor{Top1}{RGB}{245, 137, 112}

\definecolor{gt}{RGB}{136, 197, 75}
\definecolor{am}{RGB}{117, 170, 255}

\tcbuselibrary{skins}
\newenvironment{qbox}
	{\begin{tcolorbox}[enhanced jigsaw, drop shadow=black!50!white,colback=white, width=0.95\linewidth, center, left=2pt,right=2pt,top=1pt,bottom=1pt,halign=center]}
	{\end{tcolorbox}}

\newenvironment{qbox-tight}
	{\begin{tcolorbox}[enhanced jigsaw, drop shadow=black!50!white,colback=white, width=\linewidth, center, left=2pt,right=2pt,top=1pt,bottom=1pt,halign=center]}
	{\end{tcolorbox}}

\usepackage{color}
\definecolor{dkgreen}{rgb}{0,0.6,0}
\definecolor{gray}{rgb}{0.5,0.5,0.5}
\definecolor{mauve}{rgb}{0.58,0,0.82}
\lstset{frame=tb,
  language=Python,
  aboveskip=3mm,
  belowskip=3mm,
  showstringspaces=false,
  columns=flexible,
  basicstyle=\bfseries\ttfamily,
  numbers=left,
  numbersep=0em, 
  numberstyle=\scriptsize\bfseries\color{gray},
  keywordstyle=\bfseries\color{blue},
  identifierstyle=\bfseries,
  commentstyle=\color{dkgreen},
  stringstyle=\color{mauve},
  breaklines=true,
  breakatwhitespace=true,
  tabsize=3,
}

\begin{document}

\title[]{Top-K Pairwise Ranking: Bridging the Gap Among Ranking-Based Measures for Multi-Label Classification}


\author[1,2]{\fnm{Zitai} \sur{Wang}}\email{wangzitai@iie.ac.cn}
\author*[3]{\fnm{Qianqian} \sur{Xu}}\email{xuqianqian@ict.ac.cn}
\author[4]{\fnm{Zhiyong} \sur{Yang}}\email{yangzhiyong21@ucas.ac.cn}
\author[3,4]{\fnm{Peisong} \sur{Wen}}\email{wenpeisong20z@ict.ac.cn}
\author[5]{\fnm{Yuan} \sur{He}}\email{heyuan.hy@alibaba-inc.com}
\author[6]{\fnm{Xiaochun} \sur{Cao}}\email{caoxiaochun@mail.sysu.edu.cn}
\author*[4,3,7]{\fnm{Qingming} \sur{Huang}}\email{qmhuang@ucas.ac.cn}

\affil[1]{\orgdiv{}\orgname{IIE, Chinese Academy of Sciences}, \city{Beijing}, \postcode{100093}, \state{Beijing}, \country{China}}
\affil[2]{\orgdiv{SCS}, \orgname{University of Chinese Academy of Sciences}, \city{Beijing}, \postcode{100049}, \state{Beijing}, \country{China}}
\affil[3]{\orgdiv{IIP}, \orgname{ICT, Chinese Academy of Sciences}, \city{Beijing}, \postcode{100190}, \state{Beijing}, \country{China}}
\affil[4]{\orgdiv{SCST}, \orgname{University of Chinese Academy of Sciences}, \city{Beijing}, \postcode{100049}, \state{Beijing}, \country{China}}
\affil[5]{\orgname{Alibaba Group}, \city{Hangzhou}, \postcode{311121}, \state{Zhejiang}, \country{China}}
\affil[6]{\orgdiv{SCST}, \orgname{Shenzhen Campus of Sun Yat-sen University}, \city{Shenzhen}, \postcode{518107}, \state{Guangdong}, \country{China}}
\affil[7]{\orgdiv{BDKM}, \orgname{University of Chinese Academy of Sciences}, \city{Beijing}, \postcode{101408}, \state{Beijing}, \country{China}}

\abstract{Multi-label ranking, which returns multiple top-ranked labels for each instance, has a wide range of applications for visual tasks. Due to its complicated setting, prior arts have proposed various measures to evaluate model performances. However, both theoretical analysis and empirical observations show that a model might perform inconsistently on different measures. To bridge this gap, this paper proposes a novel measure named Top-K Pairwise Ranking (TKPR), and a series of analyses show that TKPR is compatible with existing ranking-based measures. In light of this, we further establish an empirical surrogate risk minimization framework for TKPR. On one hand, the proposed framework enjoys convex surrogate losses with the theoretical support of Fisher consistency. On the other hand, we establish a sharp generalization bound for the proposed framework based on a novel technique named data-dependent contraction. Finally, empirical results on benchmark datasets validate the effectiveness of the proposed framework.}

\keywords{Image Classification, Multi-Label Classification, Model Evaluation, Top-K Ranking.}

\maketitle

\section{Introduction}
\label{sec:introduction}
In many real-world visual tasks, the instances are intrinsically multi-labeled \cite{DBLP:journals/pr/BoutellLSB04,DBLP:journals/pami/CarneiroCMV07,DBLP:conf/cvpr/WangYMHHX16,DBLP:conf/cvpr/ChenWWG19}. For example, a photo taken on a coast might consist of multiple objects such as the sea, beach, sky, and cloud. Hence, Multi-Label Classification (MLC) has attracted rising attention in recent years \cite{DBLP:journals/pami/LiuWST22,DBLP:journals/corr/abs-2210-03968,DBLP:conf/cvpr/DingWCZLBYH23,DBLP:conf/icml/WuLY23,DBLP:conf/cvpr/KimKJSA023,DBLP:conf/icml/Liu0LG23,DBLP:journals/ijcv/ChenZL23}.

\begin{figure*}[t]
        \centering
        \includegraphics[width=0.98\linewidth]{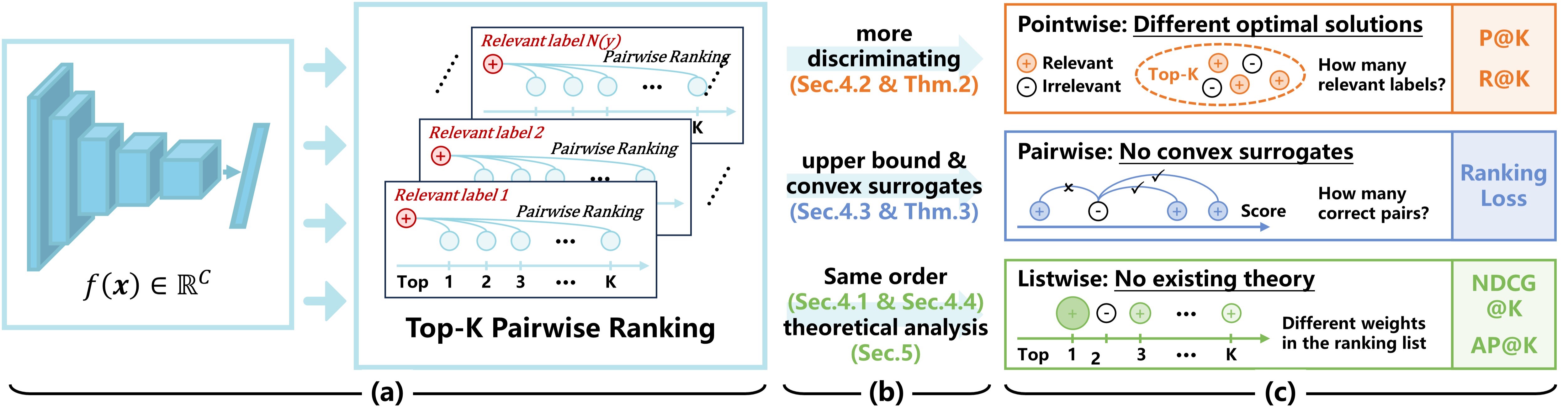}
        \caption{Overview of measure comparison: \textbf{(a)} the definition of the proposed TKPR measure, \textbf{(b)} the advantages of TKPR over existing ranking-based measures, and \textbf{(c)} representative ranking-based measures and their limitations.}
      \label{fig:overview}
\end{figure*}

Due to the complicated setting, a number of measures are proposed to evaluate the model performance \cite{DBLP:journals/tkde/ZhangZ14,DBLP:conf/icml/WuZ17,DBLP:journals/pami/LiuWST22}. Generally speaking, existing MLC measures can be divided into two groups: the threshold-based ones and the ranking-based ones. The former ones, such as the Hamming loss \cite{DBLP:journals/ml/DembczynskiWCH12,DBLP:conf/nips/WuZ20}, the subset accuracy \cite{DBLP:conf/nips/WuZ20,DBLP:conf/nips/GerychHBAR21}, and the F-measure \cite{DBLP:journals/tkde/TsoumakasKV11,DBLP:conf/icml/NanCLC12,DBLP:journals/jmlr/WaegemanDJCH14}, require predefined thresholds to decide whether each label is relevant. Although these measures are widely used due to its intuitive nature, the optimal thresholds might vary according to the decision conditions in the test phase, and thus predefined thresholds could produce systematic biases. By contrast, the ranking-based ones only check whether the relevant labels are top-ranked, which is insensitive to the selection of thresholds. In view of this, this paper focuses on the ranking-based measures and their optimization, which is also known as Multi-Label Ranking (MLR) \cite{DBLP:conf/icml/DembczynskiKH12,DBLP:conf/cvpr/LiSL17,DBLP:conf/nips/WuLXZ21}.

    As shown in Fig.\ref{fig:overview}\textbf{(c)}, existing ranking-based measures fall into the following three categories, according to the taxonomy in Learning-to-Rank \cite{DBLP:journals/ftir/Liu09}: (1) Pointwise approaches, such as precision@K and recall@K \cite{DBLP:conf/nips/WydmuchJKBD18,DBLP:conf/nips/X19}, reduce MLC to multiple single-label problems and check whether each relevant label is ranked higher than K. (2) Pairwise approaches, such as the ranking loss \cite{DBLP:conf/icml/DembczynskiKH12,DBLP:journals/ai/GaoZ13,DBLP:conf/nips/WuZ20,DBLP:conf/nips/WuLXZ21} and AUC \cite{DBLP:conf/icml/WuZ17}, transforms MLC into a series of pairwise problems that checks whether the relevant label is ranked higher than the irrelevant one. (3) Listwise approaches, such as Average Precision (AP) \cite{DBLP:conf/eccv/WuH0WL20,DBLP:conf/iccv/RidnikBZNFPZ21} and Normalized Discounted Cumulative Gain (NDCG) \cite{DBLP:conf/kdd/PrabhuV14,kalina2018bayes}, do not have such transformation.

Faced with these measures, prior arts have provided a series of theoretical and empirical insights \cite{DBLP:journals/ai/GaoZ13,DBLP:conf/icml/WuZ17,kalina2018bayes,DBLP:conf/nips/X19,DBLP:conf/nips/WuZ20,DBLP:conf/nips/WuLXZ21}. However, two important problems remain open. On one hand, the Bayes optimal solutions to different measures might be different. Consequently, \textbf{optimizing a specific measure does not necessarily induce better model performances in terms of the others}. For example, \cite{DBLP:conf/nips/X19} theoretically shows that no multiclass reduction can be optimal for both precision@K and recall@K. Empirically, as shown in Fig.\ref{fig:tendency_in_content}(a), the model performances on different measures show different trends when optimizing the ranking loss. One potential remedy is to optimize the more discriminating listwise measures. However, the connection among these measures has not been well studied, though some literature has compared the measures within the same category \cite{kalina2018bayes,DBLP:conf/nips/X19,DBLP:conf/nips/WuZ20}. On the other hand, \textbf{efficient optimization on these complicated measures is still challenging}. For instance, \cite{DBLP:journals/ai/GaoZ13} shows that convex surrogate losses are inconsistent with the ranking loss. Although \cite{DBLP:conf/icml/DembczynskiKH12} presents a consistent surrogate, its generalization property and empirical performance are not satisfactory \cite{DBLP:conf/nips/WuLXZ21}. Hence, a natural question arises: 

\begin{qbox}
    Whether there exists a measure that is \textbf{(A)} compatible with other measures and also \textbf{(B)} easy to optimize?
\end{qbox}

\begin{figure}[t]
        \centering
        \subfigure[Optimize the ranking loss]{\includegraphics[width=0.48\linewidth]{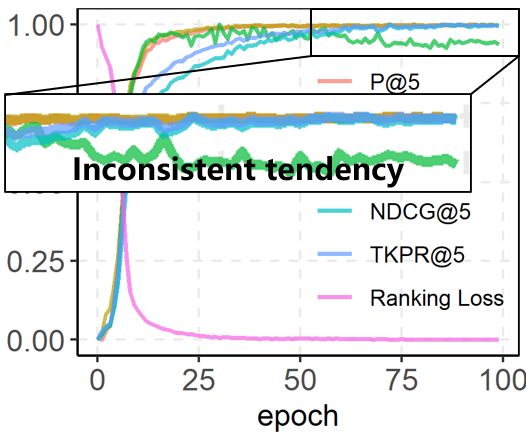}}
        \subfigure[Optimize TKPR]{\includegraphics[width=0.48\linewidth]{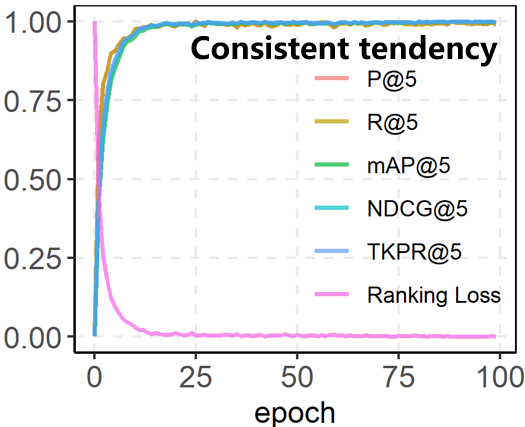}}
        \caption{Normalized ranking-based measures \textit{w.r.t.} the training epoch on the Pascal VOC 2007 dataset in the MLC setting. (a) When optimizing the ranking loss, the changes in different measures are inconsistent. (b) By contrast, when optimizing TKPR, the changes are highly consistent.}
        \label{fig:tendency_in_content}
\end{figure}

    To answer this question, as shown in Fig.\ref{fig:overview}\textbf{(a)}, we construct a novel measure named Top-K Pairwise Ranking (TKPR) by integrating the intelligence of existing ranking-based measures. As shown in Sec.\ref{subsec:def_tkpr}, TKPR has three equivalent formulations that exactly correspond to the aforementioned three categories of measures. On top of the pointwise formulation, the analysis in Sec.\ref{subsec:vs_p_r} shows that TKPR is more discriminating than precision@K and recall@K. Then, Sec.\ref{subsec:vs_ranking_loss} shows that the pairwise formulation of TKPR is the upper bound of the ranking loss. Finally, based on the listwise formulation, Sec.\ref{subsec:vs_ap} shows that TKPR has the same order as a cut-off version of AP. To sum up, as shown in Fig.\ref{fig:tendency_in_content}(b), \textbf{TKPR is compatible with existing ranking-based measures}, which answers \textbf{(A)}.

Considering these advantages, it is appealing to construct an Empirical Risk Minimization (ERM) framework for TKPR, as an answer to \textbf{(B)}. As shown in Sec.\ref{sec:roadmap}, the original objective has the following abstract formulation:
\begin{equation}
    \label{eq:op_abs}
    \min_{f} \E{(\boldsymbol{x}, \boldsymbol{y})}{\sum_{i} \sum_{k} \ell_{i, k}(f; \boldsymbol{x})},
\end{equation}
    where the loss $\ell_{i, k}$ takes the model prediction $f(\boldsymbol{x})$, a relevant label $i$, and the top-$k$ label as inputs.

First, in Sec.\ref{sec:consistency}, we replace the discrete loss $\ell_{i, k}$ with differentiable surrogate losses. One basic requirement for such surrogates is Fisher Consistency, \textit{i.e.}, optimizing the surrogate objective should recover the solution to Eq.(\ref{eq:op_abs}). In this direction, prior arts \cite{DBLP:conf/nips/WydmuchJKBD18,DBLP:conf/nips/X19,DBLP:journals/pami/WangXYHCH23} assume that no ties exist in the conditional distribution $\pp{y_i = 1 \mid \boldsymbol{x}}$. By contrast, we adopt a more practical with-tie assumption since some labels could be highly correlated in the multi-label setting. Under this assumption, we first present a necessary and sufficient condition for the Bayes optimal solution to Eq.(\ref{eq:op_abs}) in Sec.\ref{subsec:bayes}. On top of this, a sufficient condition is established for TKPR Fisher consistency in Sec.\ref{subsec:consistency}, which indicates that \textbf{common convex losses are all reasonable surrogates}.

Then, we turn to minimize the unbiased estimation of the surrogate objective over the training data sampled from the distribution. But, whether the performance on training data can generalize well to unseen data? First, in Sec.\ref{subsec:gen_traditional}, we show that, based on the traditional techniques \cite{DBLP:conf/nips/WuZ20,DBLP:conf/nips/WuLXZ21}, the proposed framework can achieve a generalization bound proportional to $\sqrt{K}$, where $K$ is a hyperparameter determined by the largest ranking position of interest. For scenarios requiring a large $K$, this bound is rather unfavorable. To fix this issue, in Sec.\ref{subsec:sharper_bound}, we extend the traditional definition of Lipschitz continuity and propose a novel technique named data-dependent contraction. On top of this, we can \textbf{obtain informative generalization bounds} that not only eliminate the dependence on $K$, but also become sharper under an imbalanced label distribution. What's more, the proposed framework also achieves sharper bounds on the pairwise ranking loss. Eventually, in Sec.\ref{subsec:prac_bound}, we show that the novel technique is applicable to both kernel-based models and convolutional neural networks by presenting the corresponding practical results.

    Finally, the empirical results presented in Sec.\ref{sec:experiment} validate the effectiveness of the learning algorithm induced by the ERM framework and the theoretical results.

To sum up, the contribution of this paper is four-fold:
    \begin{itemize}
        \item \textbf{New measure with detailed comparison}: We propose a novel measure named TKPR that is compatible with existing ranking-based MLC measures such as precision@K, recall@K, the ranking loss, NDCG@K, and AP@K.
        \item \textbf{ERM framework with consistent convex surrogates}: An ERM framework for TKPR optimization is established with convex surrogate losses, supported by Fisher consistency. 
        \item \textbf{Technique for generalization analysis}: A novel technique named data-dependent contraction provides sharp and informative generalization bounds.
        \item \textbf{Induced Algorithm and Empirical Validation}: The empirical results not only speak to the effectiveness of the learning algorithm induced by the ERM framework, but also validate the theoretical results.
    \end{itemize}

\section{Related Work}
\subsection{Multi-Label Classification}
In this part, we outline existing methods and some common settings for multi-label classification.

According to the taxonomy presented in \cite{DBLP:journals/tkde/ZhangZ14}, traditional methods for MLC fall into two categories. The first category, named \textbf{problem transformation}, transforms MLC into other well-studied problems. For example, one-vs-all approaches treat the prediction on each label as a binary classification problem \cite{DBLP:journals/pr/BoutellLSB04,DBLP:conf/icml/DembczynskiCH10,DBLP:journals/ml/DembczynskiWCH12}. Ranking-based approaches optimize the pairwise ranking between relevant and irrelevant labels \cite{DBLP:journals/ml/FurnkranzHMB08,DBLP:journals/ai/GaoZ13}. And \cite{DBLP:journals/tkde/TsoumakasKV11,DBLP:journals/pr/BoutellLSB04,DBLP:conf/icml/JerniteCS17,DBLP:conf/nips/X19} reduce MLC to multiple multiclass problems, where each problem consists of a relevant label. The second category, named \textbf{algorithm adaptation}, utilizes traditional learning techniques to model multi-label data such as decision tree \cite{DBLP:conf/pkdd/ClareK01} and SVM \cite{DBLP:conf/nips/ElisseeffW01}.

In the era of deep learning, algorithm adaptation methods utilize the high learning capability of neural networks to model the correlation between classes. For example, \cite{DBLP:conf/cvpr/WangYMHHX16,DBLP:conf/cvpr/ChenWWG19,DBLP:conf/aaai/Tang0XPWC20} utilize Recurrent Neural Networks (RNNs) or Graph Convolutional Networks (GCNs) to exploit the higher-order dependencies among labels. \cite{DBLP:conf/iccv/WangCLXL17,DBLP:conf/aaai/YouGCLBW20,DBLP:conf/eccv/YeHPWQ20} capture the attentional regions of the image by exploiting the information hidden in the label dependencies. As an orthogonal direction, \textbf{loss-oriented methods}, which follow the inspiration of problem transformation, aims to boost the learning process with well-designed loss functions. For example, \cite{DBLP:conf/eccv/WuH0WL20} proposes a weighted binary loss to handle the imbalanced label distribution. \cite{DBLP:conf/iccv/RidnikBZNFPZ21} designs an adaptive weight scheme for the binary loss to pay more attention to hard-negative labels. Recently, \cite{DBLP:conf/cvpr/LiuLLHYY22} employs causal inference to eliminate the contextual bias induced by co-occurrence but out-of-interest objects in images.

In anthor direction, an emerging trend is to learn with partial annotations since fully-annotated multi-label datasets are generally expensive. However, the definition of partial annotation still lacks consensus. For example, MLML \cite{DBLP:conf/aaai/SunZZ10,DBLP:conf/cvpr/ColeALPMJ21,DBLP:conf/eccv/ZhouCWCH22,DBLP:journals/ijcv/ChenZL23} assumes that only a part of relevant labels, even only a single relevant label, is available. This setting is correlated to extreme multi-label learning, where an extremely large number of candidate labels is of interest \cite{DBLP:conf/sigir/LiuCWY17,DBLP:conf/nips/WydmuchJKBD18,kalina2018bayes,DBLP:journals/corr/abs-2210-03968}. As a comparison, the other MLML setting assumes that a part of irrelevant labels is also available \cite{DBLP:conf/icpr/WuLWHJ14,DBLP:journals/ijcv/WuJLGL18,DBLP:conf/cvpr/BaruchRFBZNZ22}. And PML \cite{DBLP:conf/aaai/XuLG19,DBLP:journals/pami/XieH22} requires all relevant labels and assumes that extra false positive labels exist. 

In this paper, the proposed framework belongs to both problem transformation and loss-oriented methods. Besides, this framework is naturally applicable to MLML since the objective does not require irrelevant labels as inputs. Thus, we also provide the empirical results under the MLML setting in Sec.\ref{sec:experiment}.

\subsection{Optimization on Ranking-Based Measures}
In this part, we briefly review the optimization methods on ranking-based measures. Note that \textbf{the scenario is not limited to multi-label classification}. In fact, a significant part of methods in this direction focuses on the binary case, where the top-ranked instances are of interest, rather than labels.

\subsubsection{Top-K Optimization}
The top-K measure, which checks whether the ground-truth label is ranked higher than K, is a popular metric for multiclass classification when semantic ambiguity exists among the classes \cite{DBLP:journals/tmm/XuXHY14,DBLP:journals/ijcv/RussakovskyDSKS15,DBLP:conf/cvpr/HeZRS16}. The key issue of top-K optimization is its Bayes optimal solution. To be specific, the early work \cite{DBLP:conf/nips/LapinHS15} simply relaxes the multiclass hinge loss. However, \cite{DBLP:conf/cvpr/Lapin0S16,DBLP:journals/pami/LapinHS18} point out that the induced objective is inconsistent with the top-K measure. In other words, optimizing the relaxed objective does not necessarily recover the solution to the original one. To fix this issue, \cite{DBLP:conf/icml/YangK20} establishes a sufficient and necessary condition for top-K consistency, that is, the top-K preserving property. 

Faced with multi-label data, \cite{DBLP:conf/nips/X19} shows that the pointwise measures, precision@K and recall@K, is equivalent to the top-K measure averaged on relevant labels. In Sec.\ref{sec:tkpr}, we show that TKPR has a similar but more discriminating pointwise formulation.

\subsubsection{Multi-Label AUC Optimization}
As a representative pairwise ranking measure, AUC measures the probability that the positive instance is ranked higher than the negative one in each pair \cite{DBLP:conf/ijcai/LingHZ03}. In multi-label learning, AUC has three formulations that average correctly-ranked pairs on each label, each instance, and prediction matrix, named macro-AUC, instance-AUC, and micro-AUC, respectively \cite{DBLP:conf/icml/WuZ17}. Formally, the ranking loss is equivalent to instance-AUC. However, the theoretical analyses cannot simply adapt from the single-labeled AUC \cite{DBLP:journals/pami/YangXBCH22,DBLP:conf/nips/WangX00CH22,DBLP:journals/csur/YangY23,DBLP:journals/pami/YangXBHCH23,DBLP:conf/nips/DaiX0CH23} to the ranking loss. To be specific, \cite{DBLP:journals/ai/GaoZ13} points out that no convex surrogate loss is consistent with the ranking loss. To address this issue, \cite{DBLP:conf/icml/DembczynskiKH12,DBLP:conf/nips/WuLXZ21} propose pointwise convex surrogates for the ranking loss. Unfortunately, neither the theoretical generalization bound nor the empirical results of these surrogates is satisfactory. 

As shown in Sec.\ref{subsec:pairwise_measures} and Sec.\ref{sec:generalization}, the proposed framework provides a promising solution to minimizing the ranking loss. And the empirical analysis in Sec.\ref{sec:experiment} also validates its effectiveness.


\subsubsection{AP \& NDCG Optimization}
\label{subsec:related_ap_ndcg}
AP and NDCG are two popular performance measures for ranking systems, particularly in the fields of information retrieval \cite{DBLP:conf/sigir/XuL07,DBLP:conf/sigir/RadlinskiC10} and recommender systems \cite{DBLP:conf/www/XuBCC12,DBLP:conf/aaai/HuangWCMG15}. AP is adaptable to largely skewed datasets since it is an asymptotically-unbiased estimation of AUPRC, which is insensitive to the data distribution \cite{DBLP:conf/icml/DavisG06}. On the other hand, NDCG considers the graded relevance values, which allows it performs beyond the relevant/irrelevant scenario. 

    To the best of our knowledge, existing work on AP \& NDCG optimization generally focuses on the setting of retrieval, where the ranking of instances is of interest \cite{DBLP:conf/cvpr/MohapatraRJKK18,DBLP:conf/nips/RamziTRAB21,DBLP:conf/eccv/BrownXKZ20,wen2022exploring,DBLP:conf/nips/SwezeyGCE21,DBLP:conf/icml/QiuHZZY22}. By contrast, multi-label learning focuses on the ranking of labels \cite{DBLP:conf/eccv/WuH0WL20,DBLP:conf/iccv/RidnikBZNFPZ21,DBLP:conf/kdd/PrabhuV14,kalina2018bayes}. In this direction, although random forest algorithms can directly optimize NDCG-based losses \cite{DBLP:conf/kdd/PrabhuV14}, it is not applicable to neural networks. 

In Sec.\ref{sec:tkpr}, we show that optimizing TKPR is also favorable to the model performance on both NDCG@K and AP@K.

\begin{table}[t]
    \renewcommand{\arraystretch}{1.2}
    \caption{Some important Notations used in this paper.}
    \label{table:notation}
    \centering
    \footnotesize 
    \begin{tabular}{p{0.17\linewidth}p{0.78\linewidth}}
        \toprule
        Notation & Description \\
        \midrule
        $\mathcal{X}, \mathcal{Y}$ & the input space and output space \\
        $C$ & the number of classes \\
        $\mathcal{L}$ & the set of possible labels \\
        $K$ & the hyperparameter of TKPR \\
        $\mathcal{D}$ & the joint distribution defined on $\mathcal{Z} := \mathcal{X} \times \mathcal{Y}$ \\
        $(\boldsymbol{x}, \boldsymbol{y})$ & a sample belonging to $\mathcal{Z}$  \\
        $\mathcal{S}$ & the dataset \textit{i.i.d.} sampled from $\mathcal{D}$ \\
        $\eta(\boldsymbol{x})_i$ & the conditional probability $\pp{y_i = 1 | \boldsymbol{x}}$ \\
        $\mathcal{P}(\boldsymbol{y}), \mathcal{N}(\boldsymbol{y})$ & the relevant/irrelevant label set of $\boldsymbol{y}$ \\
        $N(\boldsymbol{y}), N_{-}(\boldsymbol{y})$ & the number of relevant/irrelevant labels of $\boldsymbol{y}$ \\
        \midrule
        $\pi_{\boldsymbol{s}}(i)$ & the index of $s_i$ when sorting $\boldsymbol{s} \in \mathbb{R}^C$ descendingly \\
        $\sigma(\boldsymbol{s}, k)$ & the index of the top-$k$ entry in $\boldsymbol{s} \in \mathbb{R}^C$ \\
        $s_{[k]}$ & the top-$k$ entry in $\boldsymbol{s} \in \mathbb{R}^C$ \\
        $\texttt{Tie}_k(\boldsymbol{s})$ & the indices of the entries that equals to $s_{[k]}$ \\
        $\texttt{Top}_k(\boldsymbol{s})$ & the indices of the top-1 to top-$k$ entries  in $\boldsymbol{s} \in \mathbb{R}^C$ \\
        \midrule
        $f$ & a score function mapping $\mathcal{X}$ to $\mathbb{R}^{C}$ \\
        $\mathcal{F}$ & the set of score functions \\
        $m(f)$ & the measure defined on $\mathcal{D}$, \textit{i.e.}, $\E{\boldsymbol{z} \sim \mathcal{D}}{m(f(\boldsymbol{x}), \boldsymbol{y})}$ \\
        $m(f \mid \boldsymbol{x})$ & the conditional measure, \textit{i.e.}, $\E{\boldsymbol{y} \mid \boldsymbol{x}}{m(f(\boldsymbol{x}), \boldsymbol{y})}$ \\
        $m(f, \boldsymbol{y})$ & the measure defined on a sample, \textit{i.e.}, $m(f(\boldsymbol{x}), \boldsymbol{y})$ \\
        $\hat{m}(f, \mathcal{S})$ & the empirical measure defined on $\mathcal{S}$ \\
        $m_\text{MC}, m_\text{ML}$ & Abstract multiclass/multi-label measures \\
        $\alpha$ & the weighting terms for pointwise measures \\
        $\alpha_1, \alpha_2, \alpha_3$ & weighting terms for TKPR (refer to Prop.\ref{prop:tkprtondcg})\\
        $\text{reg}(f; m)$ & the regret of $f$ \textit{w.r.t.} the measure $m$ \\
        $\ell_{0-1}, \ell$ & the 0-1 loss and its surrogate loss \\
        $\hat{\mathfrak{C}}_\mathcal{S}(\mathcal{F})$ & the empirical complexity of $\mathcal{F}$ \\
        \bottomrule
    \end{tabular}
\end{table}

\section{Preliminaries}
\label{sec:pre}
    In this section, we first present basic notations and common settings for MLC, as summarized in Tab.\ref{table:notation}. Then, as shown in Fig.\ref{fig:overview}, we review existing ranking-based measures and point out their limitations.

\subsection{Notations and Settings}
Let $\mathcal{X}$ be the input space, and $\mathcal{L} := \{1, 2, \cdots, C\}$ denotes the set of possible labels. In MLC, each input $\boldsymbol{x} \in \mathcal{X}$ is associated with a label vector $\boldsymbol{y} \in \mathcal{Y} := \{0, 1\}^{C}$. Traditional MLC interprets $y_i = 1 \text{ or } 0$ as the relevance/irrelevance of label $i$ to the instance $\boldsymbol{x}$, respectively. Since fully-annotated data are generally expensive, MLC with Missing Labels (MLML) has gained rising attention, where $y_i = 0$ means that the annotation of the label $y_i$ is missing \cite{DBLP:conf/aaai/SunZZ10,DBLP:conf/mir/IbrahimEPR20,DBLP:conf/cvpr/ColeALPMJ21,DBLP:journals/corr/abs-2210-03968,DBLP:journals/pami/LiuWST22}. In other words, let $\mathcal{P}(\boldsymbol{y}) := \{i \in \mathcal{L} \mid y_i = 1\}$ denote the relevant label set of $\boldsymbol{y}$. Then, in traditional MLC, the number of relevant labels can be directly obtained by $N(\boldsymbol{y}) := \size{\mathcal{P}(\boldsymbol{y})} = \sum_{i \in \mathcal{L}} y_i$. While in MLML, we only know that the number of relevant labels is no less than $N(\boldsymbol{y})$. 

Let $\mathcal{D}$ be the joint distribution defined on $\mathcal{Z} := \mathcal{X} \times \mathcal{Y}$. Given a sample $(\boldsymbol{x}, \boldsymbol{y}) \in \mathcal{Z}$, 
\begin{equation}
    \eta(\boldsymbol{x})_i := \pp{y_i = 1 \mid \boldsymbol{x}} = \sum_{\boldsymbol{y}: y_i = 1} \pp{\boldsymbol{y} \mid \boldsymbol{x}}
\end{equation}
represents the conditional probability of the class $i$ according to $\mathcal{D}$. Some prior arts assume that no ties exist among the conditional probabilities \cite{DBLP:conf/nips/WydmuchJKBD18,DBLP:conf/nips/X19,DBLP:journals/pami/WangXYHCH23}. That is, given an input $\boldsymbol{x} \in \mathcal{X}$, for any $i \neq j \in \mathcal{L}, \eta(\boldsymbol{x})_i \neq \eta(\boldsymbol{x})_j$. However, this no-tie assumption might be a little strong since some labels might be highly correlated in the multi-label setting. In the following discussion, we will adopt the following with-tie assumption:

\begin{assumption}
    \label{asm:tiesexist}
    Given an input $\boldsymbol{x} \in \mathcal{X}$, we assume that ties might exist in the conditional distribution $\eta(\boldsymbol{x})$. That is, 
    \begin{equation}
        \forall \boldsymbol{x} \in \mathcal{X}, i \neq j \in \mathcal{L}, \pr{}{\eta(\boldsymbol{x})_i = \eta(\boldsymbol{x})_j} > 0.
    \end{equation}
\end{assumption}

Given the training data sampled from $\mathcal{D}$, our goal is to learn a score function $f: \mathcal{X} \to \mathbb{R}^C$ such that the relevant labels can be ranked higher than the irrelevant ones. Then, one can adopt the top-ranked labels as the final decision results. Following the prior arts \cite{DBLP:conf/icml/YangK20,DBLP:journals/pami/WangXYHCH23}, we adopt the worst case assumption when ties exist between the scores:
\begin{assumption}
    \label{asm:wrongly_break_ties}
    Ties might exist in $f(\boldsymbol{x})$, and all the ties will be wrongly broken. In other words, given $i, j \in \mathcal{L}$ such that $\eta(\boldsymbol{x})_{i} > \eta(\boldsymbol{x})_{j}$, if $f(\boldsymbol{x})_{i} = f(\boldsymbol{x})_{j}$, then we have $\pi_{f}(i) > \pi_{f}(j)$, where $\pi_{f}(i) \in \mathcal{L}$ denotes the index of $f(\boldsymbol{x})_{i}$ when $f(\boldsymbol{x}) \in \mathbb{R}^C$ is sorted in descending order.
\end{assumption}
\begin{corollary}
    \label{coll:wrongly_break_ties}
    Given a relevant label $i$ and an irrelevant label $j$ such that $f(\boldsymbol{x})_{i} = f(\boldsymbol{x})_{j}$, we have $\pi_{f}(i) > \pi_{f}(j)$.
\end{corollary}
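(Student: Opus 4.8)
The plan is to read off Corollary~\ref{coll:wrongly_break_ties} from Assumption~\ref{asm:wrongly_break_ties} once a single observation is made explicit. Fix $\boldsymbol{x}$ and the label vector $\boldsymbol{y}$ of the sample, let $i$ be a relevant label and $j$ an irrelevant label, and suppose $f(\boldsymbol{x})_i = f(\boldsymbol{x})_j$. The only ingredient needed beyond the assumption is the strict inequality
\begin{equation}
    \eta(\boldsymbol{x})_i > \eta(\boldsymbol{x})_j ,
\end{equation}
i.e.\ a relevant label is conditionally strictly more probable than an irrelevant one. Granting this, the pair $(i,j)$ satisfies exactly the premise of Assumption~\ref{asm:wrongly_break_ties} (strictly ordered conditional probabilities together with a tie in $f$), so the assumption directly yields $\pi_f(i) > \pi_f(j)$: the tie is broken so that the irrelevant label $j$ outranks the relevant label $i$, which is the claim.

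The step that actually requires an argument is the inequality $\eta(\boldsymbol{x})_i > \eta(\boldsymbol{x})_j$, and this is where care is needed, because Assumption~\ref{asm:tiesexist} explicitly allows ties among the entries of $\eta(\boldsymbol{x})$, so one must know that such ties never straddle the relevant/irrelevant boundary. I would obtain this from the defining relationship between the relevant set and the conditional probabilities in the worst-case evaluation setting used throughout the paper, under which relevant labels carry strictly larger conditional mass than irrelevant ones; it is transparent in the deterministic case, where $\eta(\boldsymbol{x})_i = 1 > 0 = \eta(\boldsymbol{x})_j$. With this sub-claim settled, the corollary is a one-line specialization of the assumption and no further computation is involved.

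Accordingly I expect no genuine obstacle: the whole content is matching the relevant/irrelevant terminology of the corollary to the $\eta$-based hypothesis of Assumption~\ref{asm:wrongly_break_ties}, and the only pitfall to avoid is invoking that assumption on a pair whose conditional probabilities might coincide, which is precisely why the strict inequality above is isolated as the key sub-claim.
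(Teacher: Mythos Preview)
Your argument hinges on the sub-claim $\eta(\boldsymbol{x})_i > \eta(\boldsymbol{x})_j$ whenever $i$ is relevant and $j$ is irrelevant for the sample $(\boldsymbol{x},\boldsymbol{y})$, and this is where the approach breaks down. In the paper's setting the label vector $\boldsymbol{y}$ is a random draw from $\mathbb{P}[\cdot \mid \boldsymbol{x}]$, so $y_i=1$ and $y_j=0$ for a particular sample says nothing about the ordering of the marginals $\eta(\boldsymbol{x})_i$ and $\eta(\boldsymbol{x})_j$; one can easily have $\eta(\boldsymbol{x})_i < \eta(\boldsymbol{x})_j$ yet observe $y_i=1,\ y_j=0$ on a given draw. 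Your appeal to the ``deterministic case'' is exactly the case the paper does \emph{not} assume (Assumption~\ref{asm:tiesexist} is a with-tie, stochastic-label assumption), so the sub-claim cannot be obtained in the generality needed.

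The paper itself does not prove the corollary by reducing it to the $\eta$-clause of Assumption~\ref{asm:wrongly_break_ties}. Rather, the Remark immediately following the corollary makes clear that Assumption~\ref{asm:wrongly_break_ties} and Corollary~\ref{coll:wrongly_break_ties} are two parallel instantiations of the single umbrella principle ``all ties are wrongly broken'': the $\eta$-based version is invoked when the conditional distribution is the object of study (Bayes-optimality, consistency), and the relevance-based version is invoked when a concrete sample $(\boldsymbol{x},\boldsymbol{y})$ is given (e.g.\ the proof of Proposition~\ref{prop:reformulation}). In other words, Corollary~\ref{coll:wrongly_break_ties} is read off from the first sentence of Assumption~\ref{asm:wrongly_break_ties} as the sample-level meaning of ``wrongly broken'', not derived from the $\eta$-inequality in its second sentence. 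Your route via $\eta$ is therefore not only gapped but also not the paper's intended reading.
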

\begin{remark}
    Asm.\ref{asm:wrongly_break_ties} will be used when the conditional distribution $\eta(\boldsymbol{x})$ is available, such as the proofs of Prop.\ref{prop:bayes} and Thm.\ref{thm:condition_for_consistency}. And Cor.\ref{coll:wrongly_break_ties} applies to the case where the relevant and irrelevant labels are given, such as the proof of Prop.\ref{prop:reformulation}.
\end{remark}

\subsection{Ranking-based MLC Measures}
\label{sec:ranking_measures}
To evaluate model performances in multi-label setting, prior arts have proposed various measures. According to the taxonomy in Learning-to-Rank \cite{DBLP:journals/ftir/Liu09}, we classify existing ranking-based measures into three categories: the pointwise ones, the pairwise ones, and the listwise ones.

\subsubsection{Pointwise Measures}
Essentially, pointwise measures reduce the multi-label problem to $N(\boldsymbol{y})$ multiclass classification problems. Abstractly, given a sample $(\boldsymbol{x}, \boldsymbol{y}) \in \mathcal{Z}$, we have 
\begin{equation}
    \label{eq:pointwise}
    m_\text{ML}(f, \boldsymbol{y}) = \frac{1}{\alpha} \sum_{y \in \mathcal{P}(\boldsymbol{y})} m_\text{MC}(f, y),
\end{equation}
where $m_\text{MC}: \mathbb{R}^C \times \mathcal{L} \to \mathbb{R}_+$ denotes a measure for multiclass classification, and $\alpha$ is a weighting term designed for different scenarios. As shown in \cite{DBLP:conf/nips/X19}, when selecting the top-K measure \cite{DBLP:conf/cvpr/Lapin0S16,DBLP:journals/pami/LapinHS18,DBLP:conf/icml/YangK20} as $m_\text{MC}$, $m_\text{ML}$ will be exactly equivalent to two pointwise measures: precision@K and recall@K.

\begin{proposition} 
    \label{prop:consistency_PAL_PALW}
    Let $\I{\cdot}$ denote the indicator function. Given the top-K measure $m_\text{K}(f, y) = \I{\pi_{f}(y) \le K}$. When $\alpha = K$, $m_\text{ML}$ is equivalent to
    \begin{equation}
        \texttt{P@K}(f, \boldsymbol{y}):= \frac{1}{K} \sum_{k=1}^{K} y_{\sigma(f, k)};
    \end{equation}
    when $\alpha = N(\boldsymbol{y})$, $m_\text{ML}$ is equivalent to
    \begin{equation}
        \texttt{R@K}(f, \boldsymbol{y}) := \frac{1}{N(\boldsymbol{y})} \cdot \sum_{k=1}^{K} y_{\sigma(f, k)},
    \end{equation}
    where $\sigma(f, k) \in \mathcal{L}$ denotes the top-$k$ class in $f(\boldsymbol{x})$. 
\end{proposition}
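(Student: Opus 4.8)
The plan is to reduce both equivalences to a single counting identity and then divide by the appropriate normaliser. With $m_\text{MC} = m_\text{K}$, for every relevant label $y \in \mathcal{P}(\boldsymbol{y})$ we have $m_\text{K}(f, y) = \I{\pi_{f}(y) \le K}$, so
\[
\sum_{y \in \mathcal{P}(\boldsymbol{y})} m_\text{K}(f, y) = \size{\{\, y \in \mathcal{P}(\boldsymbol{y}) : \pi_{f}(y) \le K \,\}},
\]
i.e.\ the number of relevant labels landing among the $K$ highest-scoring positions; meanwhile $\sum_{k=1}^{K} y_{\sigma(f, k)}$ counts the ranks $k \in \{1, \dots, K\}$ whose occupant $\sigma(f, k)$ is relevant. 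Hence it suffices to prove the identity
\[
\sum_{y \in \mathcal{P}(\boldsymbol{y})} \I{\pi_{f}(y) \le K} \;=\; \sum_{k=1}^{K} y_{\sigma(f, k)}, \qquad (\star)
\]
after which, writing $m_\text{ML}(f, \boldsymbol{y}) = \tfrac{1}{\alpha} \sum_{y \in \mathcal{P}(\boldsymbol{y})} m_\text{K}(f, y)$, the choice $\alpha = K$ yields $\texttt{P@K}(f, \boldsymbol{y})$ and the choice $\alpha = N(\boldsymbol{y}) = \size{\mathcal{P}(\boldsymbol{y})}$ yields $\texttt{R@K}(f, \boldsymbol{y})$, which are exactly the two claims.

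To prove $(\star)$ I would first fix a tie-breaking convention — the worst-case one of Asm.\ref{asm:wrongly_break_ties}, placing irrelevant labels before relevant ones among equal scores (Cor.\ref{coll:wrongly_break_ties}) — so that $\pi_{f} : \mathcal{L} \to \{1, \dots, C\}$ and $\sigma(f, \cdot) : \{1, \dots, C\} \to \mathcal{L}$ are well-defined permutations and are mutually inverse, $\pi_{f}(\sigma(f, k)) = k$. Applying this bijection to the index set of the left-hand sum then gives
\[
\size{\{\, y \in \mathcal{P}(\boldsymbol{y}) : \pi_{f}(y) \le K \,\}} = \size{\{\, k \in \{1, \dots, K\} : \sigma(f, k) \in \mathcal{P}(\boldsymbol{y}) \,\}} = \sum_{k=1}^{K} \I{\sigma(f, k) \in \mathcal{P}(\boldsymbol{y})} = \sum_{k=1}^{K} y_{\sigma(f, k)},
\]
where the final step uses $\sigma(f, k) \in \mathcal{P}(\boldsymbol{y}) \iff y_{\sigma(f, k)} = 1$. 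This establishes $(\star)$ and hence the proposition.

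The only point needing care — rather than a real obstacle — is the tie handling: if $\pi_{f}$ and $\sigma(f, \cdot)$ were left set-valued or mutually inconsistent, $(\star)$ could fail at the boundary rank $K$. Once Asm.\ref{asm:wrongly_break_ties} pins down a single total order the identity is pure double counting; one may additionally remark that the value of $(\star)$ is insensitive to any remaining arbitrary choices within a block of equal scores lying entirely inside, or entirely outside, the top $K$, so the statement is well posed as written. Note that Asm.\ref{asm:tiesexist} is not invoked here, since the argument concerns ties in $f(\boldsymbol{x})$ rather than in $\eta(\boldsymbol{x})$.
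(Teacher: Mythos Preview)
Your proof is correct. The paper does not actually supply its own proof of this proposition; it is stated as a known fact attributed to prior work (\cite{DBLP:conf/nips/X19}), so there is nothing to compare against directly. The counting identity $(\star)$ you isolate does appear implicitly elsewhere in the paper --- for instance, in the proof of Prop.~\ref{prop:bayes_of_pr} the step $\sum_{k=1}^{K} y_{\sigma(f,k)} = \sum_{i \in \mathcal{L}} y_i \,\I{i \in \texttt{Top}_K(f)}$ is exactly your $(\star)$ rewritten --- and your bijection argument via $\pi_f \circ \sigma(f,\cdot) = \mathrm{id}$ is the standard and correct justification. Your discussion of tie handling is more careful than anything the paper spells out for this particular statement.
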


Cor.4 of \cite{DBLP:conf/nips/X19} shows that, under the no-tie assumption, the Bayes optimal solutions to optimizing \texttt{P@K} and \texttt{R@K} are generally inconsistent. In other words, optimizing one measure cannot guarantee the performance on the other one. We will update this corollary under Asm.\ref{asm:tiesexist} in Appendix \ref{app:bayesofpr}. In Sec.\ref{subsec:vs_ap}, we show that optimizing our proposed measure, \textit{i.e.}, TKPR can boost both precision and recall performance.

\subsubsection{Pairwise Measures}
\label{subsec:pairwise_measures}
    Pairwise measures check whether in each label pair, the relevant label is ranked higher than the irrelevant one. Similar inspiration exists in many paradigms such as contrastive learning \cite{DBLP:conf/nips/KhoslaTWSTIMLK20, DBLP:conf/aaai/AljundiPSCR23}. For multi-label learning, the ranking loss, which is essentially equivalent to instance-AUC \cite{DBLP:conf/icml/WuZ17}, calculates the fraction of mis-ranked pairs for each sample \cite{DBLP:journals/ai/GaoZ13}:
\begin{equation}
    \label{eq:ranking_loss}
    \begin{aligned}
        & L_\text{rank}(f, \boldsymbol{y}) \\
        & := \frac{1}{N(\boldsymbol{y}) N_{-}(\boldsymbol{y})} \sum_{i \in \mathcal{P}(\boldsymbol{y})}\sum_{j \in \mathcal{N}(\boldsymbol{y})} \ell_{0-1}(s_i - s_j),
    \end{aligned}
\end{equation}
where $\boldsymbol{s} := f(\boldsymbol{x})$ denotes the output of the score function, $\ell_{0-1}(t) := \I{t \le 0}$ is the 0-1 loss, $\mathcal{N}(\boldsymbol{y}) := \{j \in \mathcal{L} \mid y_j = 0\}$ is the irrelevant label set, and $N_{-}(\boldsymbol{y}) := \size{\mathcal{N}(\boldsymbol{y})}$ is the number of irrelevant labels.

To optimize the ranking loss, one should replace the non-differentiable 0-1 loss with a differentiable surrogate loss $\ell$, which induces the following surrogate objective:
\begin{equation}
    \label{eq:ranking_loss_surrogate}
    \begin{aligned}
    & L_\text{rank}^{\ell}(f, \boldsymbol{y}) := \\
    & \frac{1}{N(\boldsymbol{y}) N_{-}(\boldsymbol{y})} \sum_{i \in \mathcal{P}(\boldsymbol{y})}\sum_{j \in \mathcal{N}(\boldsymbol{y})} \ell(s_i - s_j).
    \end{aligned}
\end{equation}
However, as pointed out in \cite{DBLP:conf/icml/WuZ17}, under Asm.\ref{asm:tiesexist}, $L_\text{rank}^{\ell}(f, \boldsymbol{y})$ induced by any convex surrogate loss is inconsistent with $L_\text{rank}(f, \boldsymbol{y})$, and we have to select a non-convex one to optimize. Although \cite{DBLP:conf/icml/DembczynskiKH12} proposes a consistent surrogate, its generalization property is not satisfactory \cite{DBLP:conf/nips/WuLXZ21}, which is summarized later in Tab.\ref{tab:loss_comparison}.

To address this issue, in Sec.\ref{subsec:vs_ranking_loss}, we show that the TKPR loss is the upper bound of the ranking loss. Furthermore, Sec.\ref{sec:generalization} shows that the Empirical Risk Minimization framework for TKPR also enjoys a sharp generalization on the ranking loss.

\subsubsection{Listwise Measures}
Listwise measures assign different weights to labels according to their positions in the ranking list. In this way, these measures pay more attention to the top-ranked labels. For example, Normalized Discounted Cumulative Gain at K (NDCG@K) \cite{DBLP:journals/corr/abs-2210-03968}, weighs the importance of different positions with a specified decreasing discount functions:
\begin{equation*}
    \begin{aligned}
        \texttt{DCG@K}(f, \boldsymbol{y}) & := \sum_{k=1}^{K} D(k) y_{\sigma(f, k)}, \\
        \texttt{IDCG@K}(\boldsymbol{y}) & := \max_{f} \texttt{DCG@K}(f, \boldsymbol{y}) = \sum_{k=1}^{N_K(\boldsymbol{y})} D(k), \\
        \texttt{NDCG@K}(f, \boldsymbol{y}) & := \frac{\texttt{DCG@K}(f, \boldsymbol{y})}{\texttt{IDCG@K}(\boldsymbol{y})}, \\
    \end{aligned}
\end{equation*} 
where $N_K(\boldsymbol{y}) := \min\{K, N(\boldsymbol{y})\}$, and $D(k)$ is the discount function. Here, we consider two common choices \cite{DBLP:conf/colt/WangWLHL13}:
\begin{equation}
    \label{eq:discount}
    \begin{aligned}
        D_{log}(k) & :=  1 / \log_2{(k + 1)}, \\
        D_l(k) & := K + 1 - k. \\
    \end{aligned}
\end{equation}
As another example, Average Precision (AP) \cite{DBLP:conf/icml/WuZ17,DBLP:conf/eccv/WuH0WL20,DBLP:conf/iccv/RidnikBZNFPZ21,DBLP:conf/nips/WenX00H22,DBLP:journals/pami/WenX00H24} averages the precision performance at different recall performances:
\begin{equation}
    \texttt{AP}(f, \boldsymbol{y}) := \frac{1}{C} \sum_{k=1}^{C} y_{\sigma(f, k)} \cdot \texttt{P@k}(f, \boldsymbol{y}).
\end{equation}
In this paper, we consider its cut-off version \cite{DBLP:conf/cikm/LiGX17}:
\begin{equation}
    \texttt{AP@K}(f, \boldsymbol{y}) := \frac{1}{N_K(\boldsymbol{y})} \sum_{k=1}^{K} y_{\sigma(f, k)} \cdot \texttt{P@k}(f, \boldsymbol{y}).
\end{equation}

As shown in Sec.\ref{subsec:related_ap_ndcg}, existing work on NDCG \& AP optimization generally focuses on the ranking of instances, rather than that of labels. In Sec.\ref{subsec:def_tkpr} and Sec.\ref{subsec:vs_ap}, we will show that optimizing TKPR can also boost the model performance on the two measures.

\section{TKPR and its Advantages}
\label{sec:tkpr}
In this section, we begin by the definition of TKPR. Then, detailed analyses illustrate how this measure is compatible with existing ranking-based measures, whose outline is shown in Fig.\ref{fig:overview}.

\subsection{Three formulations of TKPR}
\label{subsec:def_tkpr}
To bridge the gap among existing ranking-based measures, we average the ranking results between relevant labels and the top-ranked ones:
\begin{equation}
    \label{eq:tkpr}
    \texttt{TKPR}(f, \boldsymbol{y}) := \frac{1}{\alpha K} \sum_{y \in \mathcal{P}(\boldsymbol{y})} \sum_{k \le K} \I{s_y > s_{[k]}},
\end{equation}
where $s_{[k]}$ denotes the $k$-largest entry in $\mathbf{s}$, and $\alpha$ denotes the weighting terms.

At the first glance, Eq.(\ref{eq:tkpr}) has a pairwise formulation. But if we review Eq.(\ref{eq:pointwise}), it is not difficult to find that TKPR also enjoys a pointwise formulation:
\begin{equation}
    \label{eq:tkpr_point}
    \texttt{TKPR}(f, \boldsymbol{y}) = \frac{1}{\alpha} \sum_{y \in \mathcal{P}(\boldsymbol{y})} \left[ \frac{1}{K} \sum_{k \le K}  m_k(f, y) \right],
\end{equation}
Besides, TKPR also exhibit a listwise formulation, whose proof can be found in Appendix \ref{app:tkprtondcg}.
\begin{restatable}{proposition}{tkprtondcg}
    \label{prop:tkprtondcg}
    Given a score function $f$ and $(\boldsymbol{x}, \boldsymbol{y}) \in \mathcal{Z}$,
    \renewcommand{\arraystretch}{1.5}
    $$
        \texttt{TKPR}^{\alpha}(f, \boldsymbol{y}) = \left\{\begin{array}{ll}
            \frac{1}{K} \cdot \texttt{DCG-l@K}(f, \boldsymbol{y}), \ & \alpha = \alpha_1, \\
            \frac{1}{K} \cdot \texttt{DCG-ln@K}(f, \boldsymbol{y}), \ & \alpha = \alpha_2, \\
            \frac{1}{K} \cdot \texttt{NDCG-l@K}(f, \boldsymbol{y}), \ & \alpha = \alpha_3, \\
        \end{array}\right.
    $$
    where $\texttt{DCG-l@K}, \texttt{NDCG-l@K}$ represent the NDCG measures equipped with the linear discount function $D_l$ defined in Eq.(\ref{eq:discount}), respectively; 
    \begin{equation}
        \texttt{DCG-ln@K}(f, \boldsymbol{y}) := \frac{1}{N_K(\boldsymbol{y})} \cdot \texttt{DCG-l@K}(f, \boldsymbol{y}).
    \end{equation}
    denotes the linear DCG@K with a linear weighting term; \textbf{$\alpha_1 = 1$, $\alpha_2 := N_{K}(\boldsymbol{y})$, $\alpha_3 := N_{K}(\boldsymbol{y}) \tilde{N}_K(\boldsymbol{y})$}, and $$\tilde{N}_K(\boldsymbol{y}) := \left[ 2K + 1 - N_K(\boldsymbol{y}) \right] / 2.$$
\end{restatable}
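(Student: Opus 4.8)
The plan is to reduce all three claims to one master identity,
\[
    \texttt{TKPR}^{\alpha}(f, \boldsymbol{y}) = \frac{1}{\alpha K}\, \texttt{DCG-l@K}(f, \boldsymbol{y}),
\]
holding for an arbitrary ($f$-free) weight $\alpha$, and then to obtain the three formulas by substituting $\alpha = \alpha_1, \alpha_2, \alpha_3$. To prove the master identity I would start from the pointwise formulation Eq.(\ref{eq:tkpr_point}), reading $m_k(f, y) = \I{\pi_f(y) \le k}$ as the top-$k$ measure of Prop.\ref{prop:consistency_PAL_PALW} (with $K$ replaced by $k$). For a fixed relevant label $y \in \mathcal{P}(\boldsymbol{y})$ the inner sum telescopes:
\[
    \sum_{k=1}^{K} \I{\pi_f(y) \le k} = \max\{0,\, K + 1 - \pi_f(y)\} = D_l(\pi_f(y))\, \I{\pi_f(y) \le K},
\]
where $D_l(k) = K + 1 - k$ is the linear discount from Eq.(\ref{eq:discount}). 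Hence $\texttt{TKPR}^{\alpha}(f, \boldsymbol{y}) = \frac{1}{\alpha K} \sum_{y \in \mathcal{P}(\boldsymbol{y})} D_l(\pi_f(y))\, \I{\pi_f(y) \le K}$.

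The next step is a change of summation variable from relevant labels to ranking positions. Under Asm.\ref{asm:wrongly_break_ties}, $\pi_f$ is a genuine permutation of $\mathcal{L}$ with inverse $\sigma(f, \cdot)$, so the pairs $(y, \pi_f(y))$ with $y \in \mathcal{P}(\boldsymbol{y})$ and $\pi_f(y) \le K$ are precisely the pairs $(\sigma(f, k), k)$ with $k \le K$ and $y_{\sigma(f, k)} = 1$. Substituting turns the sum over labels into the sum over positions:
\[
    \sum_{y \in \mathcal{P}(\boldsymbol{y})} D_l(\pi_f(y))\, \I{\pi_f(y) \le K} = \sum_{k=1}^{K} D_l(k)\, y_{\sigma(f, k)} = \texttt{DCG-l@K}(f, \boldsymbol{y}),
\]
which is the master identity. (The same computation applied to the pairwise form Eq.(\ref{eq:tkpr}), via the equivalence of the formulations and Cor.\ref{coll:wrongly_break_ties}, gives the same result.)

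Finally I would substitute the three weights. For $\alpha = \alpha_1 = 1$ the master identity is literally the first line. For $\alpha = \alpha_2 = N_K(\boldsymbol{y})$, dividing $\texttt{DCG-l@K}$ by $N_K(\boldsymbol{y})$ is the definition of $\texttt{DCG-ln@K}$, giving the second line. For $\alpha = \alpha_3 = N_K(\boldsymbol{y})\, \tilde{N}_K(\boldsymbol{y})$, the point is that this weight equals $\texttt{IDCG-l@K}(\boldsymbol{y})$: using the closed form of IDCG recalled in Sec.\ref{sec:ranking_measures},
\[
    \texttt{IDCG-l@K}(\boldsymbol{y}) = \sum_{k=1}^{N_K(\boldsymbol{y})} D_l(k) = \sum_{k=1}^{N_K(\boldsymbol{y})} (K + 1 - k) = N_K(\boldsymbol{y}) \cdot \frac{2K + 1 - N_K(\boldsymbol{y})}{2} = N_K(\boldsymbol{y})\, \tilde{N}_K(\boldsymbol{y}),
\]
the last equality being the definition of $\tilde{N}_K$. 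Hence $\texttt{TKPR}^{\alpha_3}(f, \boldsymbol{y}) = \frac{1}{K} \cdot \frac{\texttt{DCG-l@K}(f, \boldsymbol{y})}{\texttt{IDCG-l@K}(\boldsymbol{y})} = \frac{1}{K}\, \texttt{NDCG-l@K}(f, \boldsymbol{y})$, the third line. The proof is almost entirely bookkeeping; the one place to be careful is the re-indexing step in the presence of ties --- i.e.\ making sure $\pi_f$ and $\sigma(f, \cdot)$ really are mutual inverses, which is exactly where Asm.\ref{asm:wrongly_break_ties} enters --- together with the elementary arithmetic-progression evaluation that identifies $\alpha_3$ with $\texttt{IDCG-l@K}$.
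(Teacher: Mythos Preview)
Your proposal is correct and follows essentially the same route as the paper: both establish the master identity $\texttt{TKPR}^{\alpha}(f,\boldsymbol{y}) = \frac{1}{\alpha K}\,\texttt{DCG-l@K}(f,\boldsymbol{y})$ and then evaluate $\texttt{IDCG-l@K}(\boldsymbol{y})$ as the arithmetic sum $N_K(\boldsymbol{y})\tilde{N}_K(\boldsymbol{y})$ to identify $\alpha_3$. The only cosmetic difference is that the paper derives the identity by swapping the order of summation and reading off the ``triangle'' $\sum_{k\le K}\sum_{i\in\texttt{Top}_k(f)} y_i = \sum_{k\le K}(K+1-k)\,y_{\sigma(f,k)}$, whereas you first close the inner sum over $k$ for each fixed label and then re-index by position; these are two orderings of the same double-counting argument.
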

Hence, optimizing TKPR is favorable to the model performance on the NDCG measures. Next, on top of these formulations, we will show how TKPR is compatible with other ranking-based measures.

\subsection{TKPR v.s. P@K and R@K}
\label{subsec:vs_p_r}
Intuitively, TKPR is more discriminating than \texttt{P@K} and \texttt{R@K}. That is, TKPR can distinguish finer-grained differences on model performances. For example, given $K=4$ and a sample with 
$$\boldsymbol{y} = \begin{pmatrix}
    1 & 1 & 0 & 0 & 0 & 0
\end{pmatrix},$$
let $\boldsymbol{s}_1 = f_1(\boldsymbol{x})$ and $\boldsymbol{s}_2 = f_2(\boldsymbol{x})$ be two predictions such that
$$\begin{aligned}
    \boldsymbol{s}_1 & = \begin{pmatrix}
        0.8 & 0.8 & 0.9 & 0.9 & 0.2 & 0.2
    \end{pmatrix}, \\
    \boldsymbol{s}_2 & = \begin{pmatrix}
        0.9 & 0.9 & 0.8 & 0.8 & 0.2 & 0.2
    \end{pmatrix}. \\
\end{aligned}$$
It is clear that $f_2$ performs better than $f_1$ since the relevant labels are ranked higher. However, \texttt{P@K} fails to distinguish this difference:
$$\begin{aligned}
    \texttt{P@K}(f_1, \boldsymbol{y}) = \frac{2}{4} & = \frac{2}{4} = \texttt{P@K}(f_2, \boldsymbol{y}), \\
    \texttt{TKPR}^{\alpha_1}(f_1, \boldsymbol{y}) = \frac{2 + 1}{4} & <  \frac{4 + 3}{4} = \texttt{TKPR}^{\alpha_1}(f_2, \boldsymbol{y}). \\
\end{aligned}$$

Although this example aligns with our intuitive understanding, there are evident instances of counterexamples. Given another prediction $\boldsymbol{s}_3 = f_3(\boldsymbol{x})$ such that
$$
    \boldsymbol{s}_3 = \begin{pmatrix}
        0.8 & 0.1 & 0.9 & 0.2 & 0.2 & 0.2
    \end{pmatrix},
$$
TKPR fails to distinguish the difference:
$$\begin{aligned}
    \texttt{P@K}(f_1, \boldsymbol{y}) = \frac{2}{4} & > \frac{1}{4} = \texttt{P@K}(f_3, \boldsymbol{y}), \\
    \texttt{TKPR}^{\alpha_1}(f_1, \boldsymbol{y}) = \frac{2 + 1}{4} & = \frac{3}{4} = \texttt{TKPR}^{\alpha_1}(f_3, \boldsymbol{y}). \\
\end{aligned}$$

In view of this, we present a precise definition of statistical discriminancy, which enables us to conduct a comprehensive comparison between measures.

\begin{definition}[Statistical discriminancy \cite{DBLP:conf/ijcai/LingHZ03}]
    Given two measures $m_1$ and $m_2$ and two predictions $\boldsymbol{s}, \boldsymbol{s}'$, let 
    $$\begin{aligned}
        P & := \{ (\boldsymbol{s}, \boldsymbol{s}') | m_1(\boldsymbol{s}) > m_1(\boldsymbol{s}'), m_2(\boldsymbol{s}) = m_2(\boldsymbol{s}') \}, \\
        S & := \{ (\boldsymbol{s}, \boldsymbol{s}') | m_1(\boldsymbol{s}) = m_1(\boldsymbol{s}'), m_2(\boldsymbol{s}) > m_2(\boldsymbol{s}') \}. \\
    \end{aligned}$$
    Then, the degree of discriminancy between $m_1$ and $m_2$ is defined by $\texttt{Dis}(m_1, m_2) := \size{P} / \size{S}$. We say $m_1$ is statistically more discriminating if and only if $\texttt{Dis}(m_1, m_2) > 1$. In this case, $m_1$ could discover more discrepancy that $m_2$ fails to distinguish. 
\end{definition}

Then, the following theorem validates our conjunction, whose details can be found in Appendix \ref{app:conanddis}.
\begin{restatable}{theorem}{conanddis}
    \label{thm:conanddis}
    Given $K > 1$, TKPR is statistically more discriminating than \texttt{P@K} and \texttt{R@K}.
\end{restatable}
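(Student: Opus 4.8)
The plan is to turn the statement into a finite counting inequality and then reduce it to a fact about subset sums of $\{1,\dots,K\}$.

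\textbf{Step 1 (reduction to a finite space).} Fix a label vector $\boldsymbol{y}$ with $N:=N(\boldsymbol{y})$ relevant labels (the trivial cases $N\in\{0,C\}$ aside). Since $\texttt{P@K}$, $\texttt{R@K}$ and $\texttt{TKPR}$ depend on a prediction only through the ranking it induces --- a strict linear order by Asm.\ref{asm:wrongly_break_ties} --- and since only the relevant/irrelevant pattern matters, the space of predictions may be identified with $\Omega:=\{A\subseteq\{1,\dots,C\}:\,|A|=N\}$, where $A$ records the positions occupied by relevant labels, so $|\Omega|=\binom{C}{N}$. On $\Omega$ one has $\texttt{R@K}=\tfrac{K}{N}\texttt{P@K}$, hence $\texttt{R@K}$ induces the same partition of $\Omega$ as $\texttt{P@K}$ and $\texttt{Dis}(\texttt{TKPR},\texttt{R@K})=\texttt{Dis}(\texttt{TKPR},\texttt{P@K})$; so it suffices to compare $\texttt{TKPR}$ with $\texttt{P@K}$. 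The order induced by $\texttt{TKPR}^{\alpha}$ is the same for every (sample-fixed) $\alpha$, so by the listwise identity of Prop.\ref{prop:tkprtondcg} we may work with $\texttt{TKPR}(A)=\tfrac1K\sum_{p\in A,\,p\le K}(K+1-p)$, while $\texttt{P@K}(A)=\tfrac1K|A\cap\{1,\dots,K\}|$.

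\textbf{Step 2 (a contingency-table identity).} Partition $\Omega$ by the joint value $(\texttt{P@K},\texttt{TKPR})$ and let $M_{uv}$ be the size of the cell with $\texttt{P@K}=u$, $\texttt{TKPR}=v$; write $r_u=\sum_v M_{uv}$, $c_v=\sum_u M_{uv}$, $T=\sum_{u,v}M_{uv}^2$. Counting ordered pairs gives $|P|=\tfrac12(\sum_u r_u^2-T)$ and $|S|=\tfrac12(\sum_v c_v^2-T)$, and since $\sum_u r_u=\sum_v c_v=|\Omega|$,
\begin{equation*}
    |P|-|S|=\tfrac12\Big(\sum_u r_u^2-\sum_v c_v^2\Big).
\end{equation*}
So the theorem is equivalent to $\sum_u r_u^2>\sum_v c_v^2$: the coarser $\texttt{P@K}$-partition has a strictly larger ``collision count'' than the finer $\texttt{TKPR}$-partition.

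\textbf{Step 3 (plan for the collision inequality).} Make the partitions explicit. A position set with $j$ of the top-$K$ slots relevant is built by choosing those $j$ slots and placing the remaining $N-j$ relevant labels among the $C-K$ tail slots, so $r_j=\binom{C-K}{N-j}\binom{K}{j}$; likewise $c_v=\sum_j\binom{C-K}{N-j}b_{j,v}$, where $b_{j,v}$ is the number of $j$-subsets of $\{1,\dots,K\}$ with sum $v$. Put $\beta_j:=\binom{C-K}{N-j}$ and $D_{j,j'}:=\sum_v b_{j,v}b_{j',v}$. Then $\sum_u r_u^2-\sum_v c_v^2=\vec\beta^{\top}\big(\mathrm{diag}(\binom{K}{j}^2)-bb^{\top}\big)\vec\beta$ with $b=(b_{j,v})$, i.e.
\begin{equation*}
    \sum_j\beta_j^2\Big(\binom{K}{j}^2-D_{j,j}\Big)-\sum_{j\ne j'}\beta_j\beta_{j'}D_{j,j'}.
\end{equation*}
The positive (diagonal) part is carried by the bulk $j\approx K/2$, where $\binom{K}{j}$ is largest and the sums of same-size subsets are widely spread, so $D_{j,j}=\sum_v b_{j,v}^2\ll\binom{K}{j}^2$ (by unimodality of the Gaussian-binomial coefficients $b_{j,\cdot}$), while the cross terms $D_{j,j'}$, counting coincidental equal sums of subsets of \emph{different} sizes, are comparatively negligible. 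Quantitatively one would bound $\sum_v c_v^2\le(\max_j\beta_j)^2\sum_v q_v^2$ with $q_v=\sum_j b_{j,v}$ (the number of subsets of $\{1,\dots,K\}$ of sum $v$), use a local-limit/unimodality estimate $\sum_v q_v^2=O(4^{K}/K^{3/2})$, and match it against a lower bound for $\sum_j r_j^2$ (e.g. $\sum_j r_j^2\ge\binom{C}{N}^2/(K+1)$ by Cauchy--Schwarz, refined via the shape of the Pascal vector $\vec\beta$). The case $K=2$ is immediate: there $\texttt{TKPR}$ actually refines $\texttt{P@K}$, so $S=\emptyset$ and $|P|>0$.

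\textbf{Step 4 (the main obstacle).} The real difficulty is that the $\texttt{P@K}$ and $\texttt{TKPR}$ partitions \emph{cross}: neither refines the other (for $K\ge3$ the top-$K$ position sets $\{1\}$ and $\{2,K\}$ have equal $\texttt{TKPR}$ but different $\texttt{P@K}$, while many position sets with the same $\texttt{P@K}$ differ in $\texttt{TKPR}$). Hence the soft fact ``refining a partition decreases $\sum(\text{class size})^2$'' does not apply, and --- more seriously --- the comparison cannot be done level by level: for large $K$ the low-$\texttt{P@K}$ levels can individually have ``negative balance'', since e.g. $\sum_{j'}D_{1,j'}$ equals the number of subsets of $\{1,\dots,K\}$ with sum $\le K$, which grows like $e^{\Theta(\sqrt{K})}$ and eventually dwarfs $\binom{K}{1}^2$. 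The argument therefore has to be genuinely global --- one must show that the few enormous ``bulk'' $\texttt{P@K}$-levels outweigh the entire mass spread over the many small $\texttt{TKPR}$-levels --- and the technical crux is to make the unimodality/concentration estimates for $b_{j,v}$ and $q_v$ sharp enough to hold for \emph{every} admissible triple $(C,N,K)$ rather than only asymptotically.
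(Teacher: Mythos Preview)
Your Steps~1--2 are correct and match the paper's opening: both reduce $\texttt{R@K}$ to $\texttt{P@K}$, pass to the listwise form of $\texttt{TKPR}$, and arrive at the contingency-table identity $2(|P|-|S|)=\sum_u r_u^{2}-\sum_v c_v^{2}$.

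The first divergence is in how the prediction space is discretised. You count full position sets $A\subseteq\{1,\dots,C\}$ with $|A|=N$, which brings in the weights $\beta_j=\binom{C-K}{N-j}$ and makes the target inequality depend on $C$ and $N$. The paper instead identifies a prediction with its top-$K$ relevance pattern --- a subset $S\subseteq\{1,\dots,K\}$, each counted once --- so that $r_b=\binom{K}{b}$ and $c_a=pn_K(a):=\#\{S\subseteq\{1,\dots,K\}:\sum S=a\}$. The inequality then becomes the parameter-free $\sum_{b}\binom{K}{b}^{2}>\sum_{a}pn_K(a)^{2}$, i.e.\ $\binom{2K}{K}$ exceeds the number of ordered pairs of subsets of $\{1,\dots,K\}$ with equal sum. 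Your weighted version is strictly harder, and nothing in the paper's argument needs it.

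The second divergence is method. The paper does not use asymptotics at all: it attacks the unweighted inequality by an exact recursion $pn_K(a,b)=pn_{K-1}(a-b,b)+pn_{K-1}(a-b,b-1)$ (subtract $1$ from every part), iterates it down to $K=1$ to express $pn_K(a)$ through binomials $\binom{K-1}{\cdot}$, and then compares termwise with $\binom{K}{b}$. Your route through local-limit bounds on $q_v=\sum_j b_{j,v}$ and Cauchy--Schwarz on the row sums is, as you yourself diagnose in Step~4, only order-of-magnitude; because the two partitions genuinely cross for $K\ge 3$, no level-by-level argument is available, and your sketch does not yield a bound that is uniform in $K$ (let alone in $C,N$).

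So the gap is real: you have the right framework but no proof. The missing ingredient is an exact, or at least uniformly non-asymptotic, combinatorial bound on $\sum_a pn_K(a)^{2}$. Dropping the $\beta_j$ weights --- adopting the paper's discretisation --- eliminates the $C,N$ dependence and is the natural first step; after that, an exact recurrence-based computation of the kind the paper uses is far more likely to succeed than the analytic estimates you outline.
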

\begin{sketch}
    The proof is based on the concept of partition number in combinatorial mathematics \cite{andrews_1984}. To be concise, we first define a partition number $pn_K(a, b)$ that exactly equals to the number of predictions with $\texttt{P@K}(f, \boldsymbol{y}) = b / K, \texttt{TKPR}^{\alpha_1}(f, \boldsymbol{y}) = a / K$. Then, $\size{P} - \size{S}$ can be denoted as the function of $pn_K(a, b)$. We further construct a recurrence formula for $pn_K(a, b)$, which helps complete the proof.
\end{sketch}

\subsection{TKPR v.s. the ranking loss}
\label{subsec:vs_ranking_loss}
To compare TKPR with the ranking loss, we have the following equivalent formulation of Eq.(\ref{eq:tkpr_point}), where the indicator function is replaced by the 0-1 loss. The corresponding proof can be found in Appendix \ref{app:reformulation}. 
\begin{restatable}{proposition}{reformulation}
    \label{prop:reformulation}
    Under Asm.\ref{asm:wrongly_break_ties}, maximizing TKPR is equivalent to minimizing the TKPR loss
    \begin{equation}
        \label{eq:op1_}
        L_{K}^\alpha(f, \boldsymbol{y}) := \frac{1}{\alpha K} \sum_{y \in \mathcal{P}(\boldsymbol{y})} \sum_{k \le K + 1} \ell_{0-1} \left(s_y - s_{[k]}\right),
    \end{equation}
    Furthermore, the following inequality holds:
    \begin{equation}
        L_\text{rank}(f, \boldsymbol{y}) \le \frac{\alpha}{N(\boldsymbol{y})} L_{K}^\alpha(f, \boldsymbol{y}).
    \end{equation}
\end{restatable}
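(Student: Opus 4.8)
The plan is to prove both assertions by a per‑relevant‑label reduction, starting from the pointwise formulation in Eq.~(\ref{eq:tkpr_point}) and invoking Cor.~\ref{coll:wrongly_break_ties} to handle score ties at the decision boundary. \emph{Step~1 (reformulation).} I would fix $y\in\mathcal{P}(\boldsymbol{y})$ and record two identities. On the TKPR side, $\sum_{k\le K}m_k(f,y)=\sum_{k\le K}\I{\pi_f(y)\le k}=\max\{0,K+1-\pi_f(y)\}=(K+1)-\min\{K+1,\pi_f(y)\}$. On the loss side, writing $r_y:=\size{\{\,l\in\mathcal{L}:s_l\ge s_y\,\}}$ and using $s_{[1]}\ge\cdots\ge s_{[C]}$, one gets $\sum_{k\le K+1}\ell_{0-1}(s_y-s_{[k]})=\size{\{\,k\le K+1:s_{[k]}\ge s_y\,\}}=\min\{K+1,r_y\}$. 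Every index ranked above $y$ carries a score $\ge s_y$, so $\pi_f(y)\le r_y$; and by Cor.~\ref{coll:wrongly_break_ties} every \emph{irrelevant} label of score $s_y$ is ranked above $y$, so $\pi_f(y)=r_y$ unless $y$ is tied below another relevant label. Adding the two identities and summing over $y$, $\alpha K\big(\texttt{TKPR}(f,\boldsymbol{y})+L_K^\alpha(f,\boldsymbol{y})\big)=(K+1)N(\boldsymbol{y})+\sum_{y\in\mathcal{P}(\boldsymbol{y})}\big(\min\{K+1,r_y\}-\min\{K+1,\pi_f(y)\}\big)\ge(K+1)N(\boldsymbol{y})$, with equality whenever no relevant label is tied below another relevant label. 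Since a minimizer of $L_K^\alpha(f,\boldsymbol{y})$ necessarily ranks the relevant labels at the top without ties among them, it attains this equality; hence $\max_f\texttt{TKPR}(f,\boldsymbol{y})+\min_f L_K^\alpha(f,\boldsymbol{y})=(K+1)N(\boldsymbol{y})/(\alpha K)$ and every minimizer of $L_K^\alpha(f,\boldsymbol{y})$ maximizes $\texttt{TKPR}(f,\boldsymbol{y})$, which is the claimed equivalence.

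\emph{Step~2 (ranking‑loss bound).} The factor $1/N(\boldsymbol{y})$ is common to both sides, so I would reduce to showing, for each $y\in\mathcal{P}(\boldsymbol{y})$, that $\tfrac{1}{N_{-}(\boldsymbol{y})}\sum_{j\in\mathcal{N}(\boldsymbol{y})}\ell_{0-1}(s_y-s_j)\le\tfrac1K\sum_{k\le K+1}\ell_{0-1}(s_y-s_{[k]})$. The left‑hand numerator equals $t_y:=\size{\{\,j\in\mathcal{N}(\boldsymbol{y}):s_j\ge s_y\,\}}\le N_{-}(\boldsymbol{y})$, and by Step~1 the right‑hand sum equals $\min\{K+1,r_y\}\ge\min\{K+1,t_y+1\}$ since $r_y\ge t_y+1$. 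If $t_y\ge K$, the right side is $\ge(K+1)/K>1\ge t_y/N_{-}(\boldsymbol{y})$; if $t_y\le K-1$, the claim reduces to $Kt_y\le N_{-}(\boldsymbol{y})(t_y+1)$, which holds because $K\le N_{-}(\boldsymbol{y})$. Summing over $y$ then gives $L_\text{rank}(f,\boldsymbol{y})\le\tfrac{\alpha}{N(\boldsymbol{y})}L_K^\alpha(f,\boldsymbol{y})$.

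The main obstacle will be the tie accounting in Step~1: making precise why the indicators $m_k(f,y)=\I{\pi_f(y)\le k}$ of Eq.~(\ref{eq:tkpr_point}) turn into the losses $\ell_{0-1}(s_y-s_{[k]})$ and why the reformulated sum in Eq.~(\ref{eq:op1_}) must run to $K+1$ rather than $K$ --- this is exactly where Cor.~\ref{coll:wrongly_break_ties} is needed, at indices with $s_y=s_{[k]}$, so that a boundary tie is charged to both the $\texttt{TKPR}$ count and the loss count. A secondary point is that Step~2 uses $K\le N_{-}(\boldsymbol{y})$, i.e.\ there are at least $K$ irrelevant labels --- the standard top‑$K$ regime --- which I would flag as a standing assumption if it is not already implied by the setting.
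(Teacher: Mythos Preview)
Your approach is the paper's: a per-relevant-label reduction in both parts. For Step~1 the paper establishes, via a two-case split on whether $\pi_f(y)\le K$, the per-$y$ identity $\sum_{k\le K}\I{\pi_f(y)>k}=-1+\sum_{k\le K+1}\ell_{0-1}(s_y-s_{[k]})$ and sums to an \emph{exact} constant offset between $\texttt{TKPR}$ and $L_K^\alpha$, from which the equivalence is immediate; your computation through $\min\{K+1,\pi_f(y)\}$ versus $\min\{K+1,r_y\}$ is the same content, and your explicit $r_y$ vs.\ $\pi_f(y)$ distinction (they coincide under Cor.~\ref{coll:wrongly_break_ties} except for relevant--relevant score ties, which the paper simply does not treat) is a refinement rather than a departure --- though the paper's constant-sum packaging gives the two-sided equivalence more directly than your inequality-then-tightness argument. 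For Step~2 your cased per-$y$ counting is exactly what the paper's one-liner (ordering of averaged scores plus ``$\ell_{0-1}$ non-increasing'') is meant to encode; yours is the rigorous version, and your flag that $K\le N_-(\boldsymbol{y})$ is needed is correct and should indeed be recorded as a standing hypothesis.
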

Since $L_K^\alpha$ is the upper bound of $L_\text{rank}$, minimizing $L_K^\alpha$ will boost the performance on $L_\text{rank}$. Besides, the TKPR loss has the following advantages:
\begin{itemize}
    \item \textbf{(Scenario).} $L_\text{rank}$ can only be applied to traditional MLC since the irrelevant labels are explicitly required. Whereas, $L_K^\alpha$ loss is applicable to both traditional MLC and MLML.
    \item \textbf{(Complexity).} $L_\text{rank}$ suffers from a high computational burden $\mathcal{O}(C^2)$ \cite{DBLP:conf/nips/WuLXZ21}. By contrast, $L_K^\alpha$ enjoys a complexity of $\mathcal{O}(C K)$.
    \item \textbf{(Theoretical results).} Further analysis in Sec.\ref{sec:consistency} and Sec.\ref{sec:generalization} shows that $L_K^\alpha$ has some other superior properties, which is summarized in Tab.\ref{tab:loss_comparison}.
\end{itemize}

\subsection{TKPR v.s. AP@K}
\label{subsec:vs_ap}
The following theorem shows that benefiting from the linear discount function, TKPR has the same order as \texttt{AP@K}. In other words, optimizing TKPR can help improve the model performance on \texttt{AP@K}, whose proof can be found in Appendix \ref{app:ndcgboundmap}.
\begin{restatable}{theorem}{ndcgboundmap}
    \label{thm:ndcgboundmap}
    Given a score function $f$ and $(\boldsymbol{x}, \boldsymbol{y}) \in \mathcal{Z}$, there exists a constant $\rho > 0$ such that
    \begin{equation}
        \begin{aligned}
            \rho \cdot \texttt{TKPR}^{\alpha_2}(f, \boldsymbol{y}) & \le  \texttt{AP@K}(f, \boldsymbol{y}) \\
            & \le K \cdot \texttt{TKPR}^{\alpha_1}(f, \boldsymbol{y}), \\
        \end{aligned}
    \end{equation}
    where the upper bound of $\rho$ is bounded in  
    $$
        \left[ 1 / (K + 1), K \ln(K + 1) \right].
    $$
\end{restatable}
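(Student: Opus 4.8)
The plan is to sandwich $\texttt{AP@K}$ between two TKPR quantities by exploiting the listwise formulation of TKPR from Prop.~\ref{prop:tkprtondcg} and the pointwise structure of both $\texttt{AP@K}$ and $\texttt{P@k}$. First, I would write out both sides in terms of the sorted indices $\sigma(f,k)$. Recall $\texttt{AP@K}(f,\boldsymbol{y}) = \frac{1}{N_K(\boldsymbol{y})}\sum_{k=1}^{K} y_{\sigma(f,k)}\,\texttt{P@k}(f,\boldsymbol{y})$, where $\texttt{P@k}(f,\boldsymbol{y}) = \frac{1}{k}\sum_{j=1}^{k} y_{\sigma(f,j)}$. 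The key observation is that $\sum_{k=1}^{K} y_{\sigma(f,k)}\,\texttt{P@k}(f,\boldsymbol{y})$ only ever puts positive weight on positions $k$ that hold a relevant label, and at such a position $k$ the precision $\texttt{P@k}$ counts exactly the relevant labels sitting in the top-$k$ block. By Prop.~\ref{prop:reformulation}, the inner sum $\sum_{k\le K}\I{s_y > s_{[k]}}$ appearing in TKPR counts, for each relevant label $y$, the number of top-$K$ positions it beats; summing over $y\in\mathcal{P}(\boldsymbol{y})$ this equals $\sum_{k\le K} y_{\sigma(f,k)}\cdot(\text{number of relevant labels ranked above position }k)$ up to the tie-breaking correction. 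So both $\texttt{AP@K}$ and $\texttt{TKPR}$ are, structurally, weighted counts of pairs (relevant label, top position it dominates); the difference is the weight $1/k$ versus the linear-discount weight, which is exactly what produces the constant $\rho$.

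For the upper bound $\texttt{AP@K}(f,\boldsymbol{y}) \le K\cdot\texttt{TKPR}^{\alpha_1}(f,\boldsymbol{y})$, I would bound $\texttt{P@k}(f,\boldsymbol{y}) \le 1$ crudely, giving $\texttt{AP@K}(f,\boldsymbol{y}) \le \frac{1}{N_K(\boldsymbol{y})}\sum_{k=1}^{K} y_{\sigma(f,k)} = \texttt{R@K}(f,\boldsymbol{y})\cdot\frac{N(\boldsymbol{y})}{N_K(\boldsymbol{y})}$, and then relate $\sum_{k=1}^{K} y_{\sigma(f,k)}$ to the top-$K$ contribution of $\texttt{TKPR}^{\alpha_1}$; since $\texttt{TKPR}^{\alpha_1}(f,\boldsymbol{y}) = \frac{1}{K}\texttt{DCG-l@K}(f,\boldsymbol{y}) = \frac{1}{K}\sum_{k=1}^{K}(K+1-k)\,y_{\sigma(f,k)}$ and each linear-discount weight $K+1-k$ is at least $1$, we get $\sum_{k=1}^{K} y_{\sigma(f,k)} \le \texttt{DCG-l@K}(f,\boldsymbol{y}) = K\cdot\texttt{TKPR}^{\alpha_1}(f,\boldsymbol{y})$; absorbing the factor $N(\boldsymbol{y})/N_K(\boldsymbol{y}) \le$ (appropriate bound, or noting $N_K(\boldsymbol{y}) = \min\{K,N(\boldsymbol{y})\}$ so that the ratio is controlled when the relevant block fits inside the top-$K$) closes this direction, possibly after checking the edge case $N(\boldsymbol{y}) > K$ separately.

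For the lower bound $\rho\cdot\texttt{TKPR}^{\alpha_2}(f,\boldsymbol{y}) \le \texttt{AP@K}(f,\boldsymbol{y})$ with $\rho$ controlled in $[1/(K+1),\,K\ln(K+1)]$, I would go the other way: at each relevant position $k$, $\texttt{P@k}(f,\boldsymbol{y}) \ge 1/k$ (there is at least the label at position $k$ itself), so comparing the $1/k$ weighting of $\texttt{AP@K}$ against the linear-discount-over-$N_K$ weighting of $\texttt{TKPR}^{\alpha_2} = \frac{1}{K N_K(\boldsymbol{y})}\texttt{DCG-l@K}$ amounts to comparing $1/k$ with $(K+1-k)/(K N_K(\boldsymbol{y}))$ termwise; the worst-case ratio over $k\in\{1,\dots,K\}$ is what pins down the admissible range of $\rho$, and the harmonic-type sum $\sum_{k=1}^K 1/k$ is where the $\ln(K+1)$ appears, while the single worst term gives the $1/(K+1)$ end. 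The main obstacle I anticipate is the careful bookkeeping of the tie-breaking correction (the $k\le K+1$ versus $k\le K$ in Prop.~\ref{prop:reformulation}) and the boundary behavior when $N(\boldsymbol{y})$ and $K$ are comparable, because in that regime $N_K(\boldsymbol{y})$, $\texttt{IDCG-l@K}$, and the precision-at-relevant-position lower bound all interact, and getting a \emph{clean} constant $\rho$ rather than a messy case-split requires choosing the comparison termwise and then taking the extremal $k$ rather than summing first. I would handle this by splitting into the cases $N(\boldsymbol{y}) \ge K$ and $N(\boldsymbol{y}) < K$ and verifying the extremal configuration (all relevant labels at the very top, resp. interleaved) realizes the stated endpoints.
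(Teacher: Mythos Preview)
Your upper-bound argument is actually cleaner than the paper's once you drop the unnecessary detour through $\texttt{R@K}$: from $\texttt{P@k}\le 1$ you get $\texttt{AP@K}(f,\boldsymbol{y}) \le \frac{1}{N_K(\boldsymbol{y})}\sum_{k=1}^K y_{\sigma(f,k)}$, and since each linear weight satisfies $K+1-k\ge 1$ this is at most $\frac{1}{N_K(\boldsymbol{y})}\sum_{k=1}^K(K+1-k)\,y_{\sigma(f,k)} \le K\cdot\texttt{TKPR}^{\alpha_1}(f,\boldsymbol{y})$, the last step using only $N_K(\boldsymbol{y})\ge 1$. The factor you worry about absorbing is just this $1/N_K$, not $N(\boldsymbol{y})/N_K(\boldsymbol{y})$; no case-split is needed. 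The paper takes a longer route---it drops only the outer $y_{\sigma(f,k)}$, resums to obtain harmonic-tail weights $\sum_{j=k}^K 1/j$, bounds these by $\ln K-\ln(k-1)$ via an integral, and then invokes the Lipschitz estimate $\ln y-\ln x\le y-x$ on $[1,K]$ (Lem.~\ref{lem:ln_lip}) to reach $K+1-k$---but arrives at the same inequality.

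The lower-bound direction has a real gap. Your estimate $\texttt{P@k}\ge 1/k$ at relevant positions gives only $\texttt{AP@K}\ge \frac{1}{N_K}\sum_{i=1}^p 1/k_i$ (writing $k_1<\cdots<k_p$ for the relevant positions among the top $K$), whereas the paper's analysis rests on the \emph{exact} identity $\texttt{AP@K}(f,\boldsymbol{y})=\frac{1}{N_K(\boldsymbol{y})}\sum_{i=1}^p i/k_i$, obtained by a double Abel-type resummation. Your bound discards the factor $i$, and your termwise ratio $\frac{1/k}{(K+1-k)/K}=\frac{K}{k(K+1-k)}$ ranges only in $[4K/(K+1)^2,\,1]$, which reaches neither endpoint of the stated interval. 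The paper instead writes $\sum_i i/k_i = C_1\sum_i 1/k_i$ with $C_1\in[1,p]$, lower-bounds $\sum_i 1/k_i$ by the worst configuration $\{K{-}p{+}1,\dots,K\}$ and hence by $\ln\frac{K+1}{K+1-p}$, and factors the resulting sufficient condition as $\rho\le U_1\cdot U_2$ with $U_1=\frac{C_1}{K+1-C_2}$ and $U_2=\frac{1}{\texttt{P@K}}\ln\frac{1+1/K}{1+1/K-\texttt{P@K}}$; the extremes $U_1\in[1/K,\,K]$ and $U_2\in(K/(K+1),\,\ln(K+1)]$ are what produce exactly $[1/(K+1),\,K\ln(K+1)]$. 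So the $\ln(K+1)$ does come from a harmonic sum as you intuit, but through this global lower bound on $\sum_i 1/k_i$ combined with the $C_1$ amplification, not through a termwise ratio. The tie-breaking bookkeeping from Prop.~\ref{prop:reformulation} and the split on $N(\boldsymbol{y})$ versus $K$ play no role here.
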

\begin{remark}
    \label{rem:bound_rho}
    Abstractly, the upper bound of $\rho$ has two parts:
    \begin{equation}
        \rho \le  U_1 (f, K) \cdot U_2 (f, K), 
    \end{equation}
    where $U_1$ is increasing \textit{w.r.t.} $\texttt{P@K}(f, \boldsymbol{y})$ but is not monotonic \textit{w.r.t.} the ranking performance of the model; $U_2$ is also increasing \textit{w.r.t.} $\texttt{P@K}(f, \boldsymbol{y})$. We present the concrete formulations and the corresponding analysis in Appendix \ref{app:ndcgboundmap}.
\end{remark}

The training process generally improves both \texttt{P@K} and the ranking performance. According to Rem.\ref{rem:bound_rho}, TKPR will bound \texttt{AP@K} more tightly when $\texttt{P@K}$ increases. While this bound might alternate between becoming tighter and looser when the learning process only improves the ranking performance. Fortunately, the empirical results in Fig.\ref{fig:tendency_in_content} show that optimizing TKPR can consistently boost \texttt{AP@K}, and more results can be found in Sec.\ref{sec:experiment}.

\section{ERM Framework for TKPR}
\label{sec:roadmap}
So far, we have known that optimizing TKPR will be compatible with existing ranking-based measures. Thus, it becomes appealing to construct an Empirical Risk Minimization (ERM) framework for TKPR. To this end, we first present the following risk minimization problem  based on Eq.(\ref{eq:op1_}):
\begin{equation}
    \label{eq:op1}
    \begin{aligned}
        & (OP_1)\phantom{|} \min_{f} \mathcal{R}_K^\alpha(f) := \E{\boldsymbol{z} \sim \mathcal{D}} {L_{K}^\alpha(f, \boldsymbol{y})}, \\
        & = \E{\boldsymbol{z} \sim \mathcal{D}} {\frac{1}{\alpha K} \sum_{y \in \mathcal{P}(\boldsymbol{y})} \sum_{k \le K + 1} \ell_{0-1} \left(s_y - s_{[k]}\right)},
    \end{aligned}
\end{equation}
According to Eq.(\ref{eq:op1}), the main challenges for directly minimizing $\mathcal{R}_K^\alpha(f)$ are two-fold: 
\begin{itemize}
    \item[\textbf{(C1)}] The loss function $\ell_{0-1}$ is not differentiable, making graident-based methods infeasible;
    \item[\textbf{(C2)}] The data distribution $\mathcal{D}$ is unavailable, making it impossible to calculate the expectation.
\end{itemize}
Next, we will tackle the challenges \textbf{(C1)} and \textbf{(C2)} in Sec.\ref{sec:consistency} and Sec.\ref{sec:generalization}, respectively.

\subsection{Consistency Analysis of the ERM Framework}
\label{sec:consistency}
To tackle \textbf{(C1)}, one common strategy is to replace $\ell_{0-1}$ with a differentiable surrogate loss $\ell: \mathbb{R} \to \mathbb{R}_+$. Let
\begin{equation}
    L_{K}^{\alpha, \ell}(f, \boldsymbol{y}) := \frac{1}{\alpha K} \sum_{y \in \mathcal{P}(\boldsymbol{y})} \sum_{k \le K + 1} \ell \left(s_y - s_{[k]}\right)
\end{equation}
Then, we have the following surrogate objective: 
\begin{equation}
    \label{eq:op2}
    (OP_2)\phantom{|} \min_{f} \ \mathcal{R}_K^{\alpha, \ell}(f) := \E{(\boldsymbol{x}, \boldsymbol{y}) \sim \mathcal{D}} {L_{K}^{\alpha, \ell}(f, \boldsymbol{y})}.
\end{equation}
    As mentioned in Sec.\ref{subsec:pairwise_measures}, convex surrogate losses are inconsistent with the ranking loss $L_\text{rank}^{\ell}$. In view of this, a question naturally arises: whether a convex surrogate objective $(OP_2)$ is consistent with the original one? In other words, 
    \begin{qbox-tight}
        Given a convex $\ell$, can optimizing $(OP_2)$ recover the solution to $(OP_1)$?
    \end{qbox-tight}
To answer this question, we first the present the definition of TKPR consistency:
\begin{definition}[TKPR Fisher consistency]
    The surrogate loss $\ell: \mathbb{R} \to \mathbb{R}_{+}$ is Fisher consistent with TKPR if for any sequence $\{ f^{(t)} \}_{t=1}^{\infty}$,
    \begin{equation}
        \text{reg}(f^{(t)}; \mathcal{R}_{K}^{\alpha, \ell}) \to 0 \implies \text{reg}(f^{(t)}; \mathcal{R}_{K}^{\alpha}) \to 0,
    \end{equation}
    where 
    $$
        \text{reg}(f; m) := \E{(\boldsymbol{x}, \boldsymbol{y})}{m(f(\boldsymbol{x}), \boldsymbol{y})} - \inf_{g} \E{(\boldsymbol{x}, \boldsymbol{y})}{m(g(\boldsymbol{x}), \boldsymbol{y})}
    $$ 
    represents the regret of $f$ \textit{w.r.t.} the measure $m$.
\end{definition}
    \noindent Next, we present the Bayes optimal solution to $(OP_1)$ in Sec.\ref{subsec:bayes}. On top of this, a sufficient condition for TKPR consistency, which consists of convexity, is established in Sec.\ref{subsec:consistency}.

\subsubsection{TKPR Bayes Optimality}
\label{subsec:bayes}
We first define the Bayes optimal solution to TKPR optimization:
\begin{definition}[TKPR Bayes optimal]
    Given the joint distribution $\mathcal{D}$, the score function $f^*: \mathcal{X} \to \mathbb{R}^{C}$ is TKPR Bayes optimal if 
    \begin{equation}
        \label{eq:bayes_optimal}
        f^* \in \arg \inf_f \mathcal{R}_K^\alpha(f).
    \end{equation}
\end{definition}

In other words, our goal is to find the solution to Eq.(\ref{eq:bayes_optimal}). The following property is necessary for further analysis.
\begin{definition}[Top-$K$ ranking-preserving property with ties] 
    \label{def:rp}
    Given $\boldsymbol{a}, \boldsymbol{b} \in \mathbb{R}^{C}$, we say that $\boldsymbol{b}$ is \textit{top-$K$ ranking-preserving with ties} \textit{w.r.t.} $\boldsymbol{a}$, denoted as $\mathsf{RPT}_K(\boldsymbol{b}, \boldsymbol{a})$, if for any $k \le K - 1$,
    $$
        \texttt{Tie}_k(\boldsymbol{b}) = \texttt{Tie}_k(\boldsymbol{a}),
    $$ 
    and
    $$
        \texttt{Tie}_K(\boldsymbol{b}) \subset \texttt{Tie}_K(\boldsymbol{a}),
    $$
    where $\texttt{Tie}_k(\boldsymbol{a}) := \{i \in \mathcal{L} \mid a_i = a_{[k]}\}$ returns the labels having ties with $a_{[k]}$. 
\end{definition} 

Then, the following proposition reveals the sufficient and necessary condition for TKPR optimization, whose proof can be found in Appendix \ref{app:byestkpr}.
\begin{restatable}[Bayes optimality of TKPR]{proposition}{bayestkpr}
    \label{prop:bayes}
    The score function $f: \mathcal{X} \to \mathbb{R}^{C}$ is TKPR Bayes optimal if and only if for an input $\boldsymbol{x}$, the prediction $f(\boldsymbol{x})$ is top-$K$ ranking-preserving \textit{w.r.t.} $\Delta(\boldsymbol{x}) \in \mathbb{R}^C$, where
    \begin{equation}
        \Delta(\boldsymbol{x})_i := \sum_{\boldsymbol{y}: y_i = 1}\frac{\pp{\boldsymbol{y} \mid \boldsymbol{x}}}{\alpha}.
    \end{equation}
\end{restatable}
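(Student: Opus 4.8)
The plan is to minimize the risk pointwise in $\boldsymbol{x}$ and then solve the resulting finite-dimensional assignment problem by a rearrangement argument.

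\textbf{Step 1 (localization).} Write $\mathcal{R}_K^\alpha(f) = \E{\boldsymbol{x}}{\mathcal{R}_K^\alpha(f \mid \boldsymbol{x})}$ with $\mathcal{R}_K^\alpha(f \mid \boldsymbol{x}) := \E{\boldsymbol{y}\mid\boldsymbol{x}}{L_K^\alpha(f(\boldsymbol{x}),\boldsymbol{y})}$. Since the value of $f$ at one input is unconstrained by its values elsewhere, $f$ is TKPR Bayes optimal if and only if, for $\mathcal{D}$-almost every $\boldsymbol{x}$, the vector $f(\boldsymbol{x})$ minimizes $\boldsymbol{s}\mapsto\E{\boldsymbol{y}\mid\boldsymbol{x}}{L_K^\alpha(\boldsymbol{s},\boldsymbol{y})}$ over $\boldsymbol{s}\in\mathbb{R}^C$. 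So it suffices to characterize, for each fixed $\boldsymbol{x}$, the minimizers of the conditional risk.

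\textbf{Step 2 (rewriting the conditional risk through $\Delta(\boldsymbol{x})$).} Fix $\boldsymbol{x}$ and put $\boldsymbol{s}=f(\boldsymbol{x})$. Interchanging the finite sums over $\boldsymbol{y}$, over $y\in\mathcal{P}(\boldsymbol{y})$ (written as $\sum_i y_i$), and over $k$, and using $\sum_{\boldsymbol{y}:y_i=1}\pp{\boldsymbol{y}\mid\boldsymbol{x}}/\alpha=\Delta(\boldsymbol{x})_i$, one gets
$$
    \mathcal{R}_K^\alpha(f\mid\boldsymbol{x}) = \frac{1}{K}\sum_{i\in\mathcal{L}}\Delta(\boldsymbol{x})_i\,v_i(\boldsymbol{s}), \qquad v_i(\boldsymbol{s}) := \sum_{k\le K+1}\ell_{0-1}\!\left(s_i - s_{[k]}\right).
$$
The coefficient $v_i(\boldsymbol{s})$ depends only on the relative order of the entries of $\boldsymbol{s}$; by Corollary~\ref{coll:wrongly_break_ties}/Assumption~\ref{asm:wrongly_break_ties} (a relevant label is ranked behind every label tied with it in score) it equals $\min\{K+1,\pi_{\boldsymbol{s}}(i)\}$, the worst-case rank of $i$ truncated at $K+1$. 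Hence minimizing the conditional risk amounts to matching the smallest truncated ranks to the labels with the largest $\Delta(\boldsymbol{x})_i$.

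\textbf{Step 3 (the assignment problem and the equality case).} Grouping the entries of $\boldsymbol{s}$ into blocks of equal value, each label in the $m$-th block (cumulative size $c_m$) has $v_i(\boldsymbol{s})=\min\{K+1,c_m\}$; since $c_m$ is no smaller than the rank of any label in that block, the multiset $\{v_i(\boldsymbol{s})\}_{i\in\mathcal{L}}$, sorted increasingly, dominates $(1,2,\dots,K,K{+}1,\dots,K{+}1)$ coordinatewise. Combined with the rearrangement inequality this yields
$$
    \mathcal{R}_K^\alpha(f\mid\boldsymbol{x}) \ \ge\ \frac{1}{K}\sum_{p=1}^{C}\min\{K+1,p\}\,\Delta(\boldsymbol{x})_{[p]},
$$
and this lower bound is attained; the remaining task is to show equality holds exactly for the vectors $\boldsymbol{s}$ with $\mathsf{RPT}_K(\boldsymbol{s},\Delta(\boldsymbol{x}))$. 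For the ``$\Leftarrow$'' direction, substitute the top-$K$ tie structure prescribed by Definition~\ref{def:rp} and verify that $\{v_i(\boldsymbol{s})\}$ realizes the minimizing pattern and is co-sorted with $\{\Delta(\boldsymbol{x})_i\}$. For ``$\Rightarrow$'', if $\boldsymbol{s}$ attains the bound but $\mathsf{RPT}_K$ fails, locate either labels $i,j$ with $\Delta(\boldsymbol{x})_i>\Delta(\boldsymbol{x})_j$ while $j$ is ranked weakly ahead of $i$ within the top $K$, or a tie among the top-$K$ scores of $\boldsymbol{s}$ not matched by a tie of $\Delta(\boldsymbol{x})$ at the same level; perturbing $\boldsymbol{s}$ only on those coordinates (swapping two values, or slightly splitting a tied block) strictly decreases $\sum_i\Delta(\boldsymbol{x})_i v_i(\boldsymbol{s})$, contradicting optimality. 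Together with Step~1 this proves the proposition.

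\textbf{Expected main obstacle.} Everything hinges on the tie bookkeeping in Steps 2--3. One must verify that (i) under Assumption~\ref{asm:tiesexist} ties in $\Delta(\boldsymbol{x})$ genuinely produce the flexibility encoded by the ``$\subset$'' at level $K$ in Definition~\ref{def:rp}, and (ii) the $K{+}1$ truncation inside $v_i$ --- the point where Proposition~\ref{prop:reformulation} enters --- is treated correctly at the boundary, so that the exchange argument does not manufacture a spurious improvement when it reshuffles labels across ranks $K$ and $K{+}1$. Making the equality condition match $\mathsf{RPT}_K$ verbatim, rather than a slightly stronger or weaker relation, is the delicate step.
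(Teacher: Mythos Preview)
Your approach is essentially the paper's: localize to the conditional risk, rewrite it as a linear functional of $\Delta(\boldsymbol{x})$, and invoke the rearrangement inequality (the paper's Lemma~\ref{lem:rearr_ineq}). The one tactical difference is that the paper works from the indicator formulation $\I{i\notin\texttt{Top}_k(f)}$ and sums over $k$ first, obtaining
\[
\mathcal{R}_K^\alpha(f\mid\boldsymbol{x})=\sum_{i}\Delta(\boldsymbol{x})_i-\frac{1}{K}\sum_{k=1}^{K}(K+1-k)\,\Delta(\boldsymbol{x})_{\sigma(f,k)},
\]
so the maximization is a one-line application of rearrangement; your route through $v_i(\boldsymbol{s})=\min\{K{+}1,\text{worst-case rank}\}$ recovers the same thing after the extra coordinatewise-dominance step for tied blocks of $\boldsymbol{s}$. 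Both handle the tie case and the boundary at rank $K$ identically, and the paper's treatment of the equality case is, like yours, a brief exchange argument.
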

\begin{corollary}
    Given $\alpha = \alpha_1$, the Bayes optimal solution of TKPR is top-$K$ ranking preserving \textit{w.r.t.} $\eta(\boldsymbol{x})$. Given $\alpha = \alpha_2$, if the hyperparameter $K$ is large enough, the Bayes optimal solution of TKPR is top-$K$ ranking preserving \textit{w.r.t.} $\eta'(\boldsymbol{x})$, which is defined in Appendix \ref{app:bayesofpr}.
\end{corollary}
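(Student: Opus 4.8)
The plan is to read off both claims directly from Proposition~\ref{prop:bayes}: once the Bayes optimal solutions are characterized as the predictions that are top-$K$ ranking-preserving with ties w.r.t. the vector $\Delta(\boldsymbol{x})$, it suffices to evaluate $\Delta(\boldsymbol{x})_i=\sum_{\boldsymbol{y}:y_i=1}\pp{\boldsymbol{y}\mid\boldsymbol{x}}/\alpha$ for the two prescribed weightings and to recognize the resulting vectors as $\eta(\boldsymbol{x})$ and $\eta'(\boldsymbol{x})$. No new tools beyond a simplification of this defining sum are required.

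For $\alpha=\alpha_1=1$, I would substitute $\alpha=1$ into the formula for $\Delta(\boldsymbol{x})_i$ and note that $\Delta(\boldsymbol{x})_i=\sum_{\boldsymbol{y}:y_i=1}\pp{\boldsymbol{y}\mid\boldsymbol{x}}$, which is by definition $\eta(\boldsymbol{x})_i=\pp{y_i=1\mid\boldsymbol{x}}$. Hence $\Delta(\boldsymbol{x})=\eta(\boldsymbol{x})$, and Proposition~\ref{prop:bayes} immediately yields that $f$ is TKPR Bayes optimal if and only if $f(\boldsymbol{x})$ is top-$K$ ranking-preserving with ties w.r.t. $\eta(\boldsymbol{x})$.

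For $\alpha=\alpha_2=N_K(\boldsymbol{y})=\min\{K,N(\boldsymbol{y})\}$, the weight is sample-dependent and must be kept inside the sum over $\boldsymbol{y}$. I would first make the phrase ``$K$ large enough'' precise, namely $K\ge N(\boldsymbol{y})$ for every $\boldsymbol{y}$ with $\pp{\boldsymbol{y}\mid\boldsymbol{x}}>0$ (for instance $K\ge C$ works unconditionally). Under this hypothesis $N_K(\boldsymbol{y})=N(\boldsymbol{y})$ on the support, so $\Delta(\boldsymbol{x})_i=\sum_{\boldsymbol{y}:y_i=1}\pp{\boldsymbol{y}\mid\boldsymbol{x}}/N(\boldsymbol{y})$; this is well-defined because any $\boldsymbol{y}$ contributing to the sum has $y_i=1$, hence $N(\boldsymbol{y})\ge 1$. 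By construction this is exactly the vector $\eta'(\boldsymbol{x})$ defined in Appendix~\ref{app:bayesofpr}, so a second application of Proposition~\ref{prop:bayes} gives the claim.

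There is no genuinely hard step: the corollary is a specialization of Proposition~\ref{prop:bayes}, and the only two points needing care are (i) keeping the $\boldsymbol{y}$-dependent factor $\alpha_2$ under the summation rather than factoring it out, and (ii) stating the ``$K$ large enough'' condition sharply enough that $\min\{K,N(\boldsymbol{y})\}$ collapses to $N(\boldsymbol{y})$ over the support of $\pp{\cdot\mid\boldsymbol{x}}$. I expect (ii) to be the only place a reader might hesitate, and it is dispatched by a single line.
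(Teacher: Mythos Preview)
Your proposal is correct and follows exactly the approach implicit in the paper: the corollary is an immediate specialization of Proposition~\ref{prop:bayes}, obtained by evaluating $\Delta(\boldsymbol{x})_i$ for $\alpha=\alpha_1$ (yielding $\eta(\boldsymbol{x})_i$) and for $\alpha=\alpha_2$ with $K\ge N(\boldsymbol{y})$ on the support (yielding $\sum_{\boldsymbol{y}:y_i=1}\pp{\boldsymbol{y}\mid\boldsymbol{x}}/N(\boldsymbol{y})=\eta'(\boldsymbol{x})_i$ as derived in Appendix~\ref{app:bayesofpr}). Your careful handling of the $\boldsymbol{y}$-dependent weight and the sharpening of ``$K$ large enough'' are precisely the two points the paper leaves to the reader.
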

Benefiting from the additional consideration on the ranking among the top-K labels, $\mathsf{RPT}$ is stricter than the Bayes optimalities of \texttt{P@K} and \texttt{R@K} described in Appendix \ref{app:bayesofpr}. Thus, this corollary again validates that TKPR is more discriminating than \texttt{P@K} and \texttt{R@K}.

\subsubsection{Consistency of the Surrogate Objective}
\label{subsec:consistency}
So far, we have known the Bayes optimal solution to TKPR optimization. On top of this, we can further present the following sufficient condition for TKPR consistency, which is much easier to check than the top-K ranking-preserving property. Please refer to Appendix \ref{app:suffcondicons} for the details.

\begin{restatable}{theorem}{conditionforconsistency}
    \label{thm:condition_for_consistency}
    The surrogate loss $\ell(t)$ is TKPR Fisher consistent if it is bounded, differentiable, strictly decreasing, and convex.
\end{restatable}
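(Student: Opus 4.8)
The plan is to descend to the conditional (pointwise‑in‑$\boldsymbol{x}$) problem and then prove a \emph{calibration inequality}: a small conditional surrogate regret must force a small conditional TKPR regret. Since the regret of any $f$ w.r.t.\ either $\mathcal{R}_K^{\alpha}$ or $\mathcal{R}_K^{\alpha,\ell}$ is the $\boldsymbol{x}$-average of the corresponding conditional regret of $f(\boldsymbol{x})$ (the infimum over $g$ being attainable pointwise in $\boldsymbol{x}$), it suffices to work with $W(\boldsymbol{s}\mid\boldsymbol{x}):=\E{\boldsymbol{y}\mid\boldsymbol{x}}{L_{K}^{\alpha,\ell}(\boldsymbol{s},\boldsymbol{y})}$ and $V(\boldsymbol{s}\mid\boldsymbol{x}):=\E{\boldsymbol{y}\mid\boldsymbol{x}}{L_{K}^{\alpha}(\boldsymbol{s},\boldsymbol{y})}$, with infima $W^\star_{\boldsymbol{x}}$ and $V^\star_{\boldsymbol{x}}$. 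A preliminary remark sets the stage: grouping the terms of $L_{K}^{\alpha,\ell}$ by label and taking the conditional expectation gives $W(\boldsymbol{s}\mid\boldsymbol{x})=\frac{1}{K}\sum_{i}\Delta(\boldsymbol{x})_i\,h(\boldsymbol{s},i)$ with $h(\boldsymbol{s},i):=\sum_{k\le K+1}\ell(s_i-s_{[k]})$, where $\Delta(\boldsymbol{x})$ is exactly the vector from Prop.\ref{prop:bayes}. Because $\ell$ is decreasing, $h(\boldsymbol{s},i)$ is the sum of the $K+1$ largest values of $\{\ell(s_i-s_j)\}_{j\in\mathcal{L}}$, i.e.\ a pointwise maximum of finite sums of the \emph{convex} maps $\boldsymbol{s}\mapsto\ell(s_i-s_j)$; hence $h(\cdot,i)$, and therefore $W(\cdot\mid\boldsymbol{x})=\frac{1}{K}\sum_i\Delta(\boldsymbol{x})_i h(\cdot,i)$ with $\Delta(\boldsymbol{x})_i\ge 0$, is convex, and $h(\boldsymbol{s},i)$ is strictly decreasing in $s_i$ by strict monotonicity of $\ell$. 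These two facts drive everything below.

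The core is a lemma: \emph{every minimizer $\bar{\boldsymbol{s}}$ of $W(\cdot\mid\boldsymbol{x})$ is TKPR Bayes optimal}, i.e.\ $\mathsf{RPT}_K(\bar{\boldsymbol{s}},\Delta(\boldsymbol{x}))$ holds, so by Prop.\ref{prop:bayes} it attains $V^\star_{\boldsymbol{x}}$. I would establish it in two moves. (i) \textbf{Order.} If $\Delta(\boldsymbol{x})_i>\Delta(\boldsymbol{x})_j$ but $\bar s_i<\bar s_j$, swap the two coordinates: the sorted vector (hence every $s_{[k]}$) is unchanged, so $W$ changes by $\frac{1}{K}(\Delta(\boldsymbol{x})_i-\Delta(\boldsymbol{x})_j)\big(h(\bar{\boldsymbol{s}},j)-h(\bar{\boldsymbol{s}},i)\big)$, which is strictly negative (first factor positive, second negative by strict monotonicity of $h$ in its own coordinate), contradicting optimality; thus $\Delta(\boldsymbol{x})_i>\Delta(\boldsymbol{x})_j\Rightarrow\bar s_i\ge\bar s_j$. (ii) \textbf{Ties up to level $K$.} For $i,j$ with $\Delta(\boldsymbol{x})_i=\Delta(\boldsymbol{x})_j$ the same swap leaves $W$ invariant, so the swapped vector is also a minimizer and, by convexity, so is the midpoint, which equalizes the $i,j$ coordinates; restricting $W$ to that segment one finds (through the terms $\ell(s_i-s_j)+\ell(s_j-s_i)$ and $\Delta(\boldsymbol{x})_i\big(h(\cdot,i)+h(\cdot,j)\big)$, each an even convex function strictly minimized at the midpoint because a nonconstant bounded convex $\ell$ cannot be affine) that no minimizer can split a $\Delta(\boldsymbol{x})$-tie that sits among the top $K$ positions — differentiability keeping $W$ well enough behaved for this comparison — which is precisely $\texttt{Tie}_k(\bar{\boldsymbol{s}})=\texttt{Tie}_k(\Delta(\boldsymbol{x}))$ for $k\le K-1$ and $\texttt{Tie}_K(\bar{\boldsymbol{s}})\subset\texttt{Tie}_K(\Delta(\boldsymbol{x}))$. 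Together (i) and (ii) give $\mathsf{RPT}_K(\bar{\boldsymbol{s}},\Delta(\boldsymbol{x}))$.

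To upgrade the lemma to a calibration bound I would argue by compactness. Both $V(\cdot\mid\boldsymbol{x})$ and $W(\cdot\mid\boldsymbol{x})$ depend on $\boldsymbol{s}$ only through the gaps $s_i-s_j$, so I pass to the shift quotient; because $\ell$ is bounded it has finite limits at $\pm\infty$, so $W(\cdot\mid\boldsymbol{x})$ extends continuously to a compactification of that quotient in which gaps may be infinite, and this is exactly where boundedness is used, since $W$ need not attain its infimum on $\mathbb{R}^{C}$. Fix $\boldsymbol{x}$ and $\epsilon>0$ and suppose, for contradiction, that the calibration function $\delta_{\boldsymbol{x}}(\epsilon):=\inf\{\,W(\boldsymbol{s}\mid\boldsymbol{x})-W^\star_{\boldsymbol{x}} : V(\boldsymbol{s}\mid\boldsymbol{x})-V^\star_{\boldsymbol{x}}\ge\epsilon\,\}$ equals $0$. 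A minimizing sequence then has a limit point $\bar{\boldsymbol{s}}$ in the compactification with $W(\bar{\boldsymbol{s}}\mid\boldsymbol{x})=W^\star_{\boldsymbol{x}}$; since $V(\cdot\mid\boldsymbol{x})$ depends only on the discrete top-$(K+1)$ tie pattern, which stabilizes along a subsequence, and ties are broken worst-case (Asm.\ref{asm:wrongly_break_ties}), the bad inequality passes to the limit, giving $V(\bar{\boldsymbol{s}}\mid\boldsymbol{x})-V^\star_{\boldsymbol{x}}\ge\epsilon$. But the lemma says $\bar{\boldsymbol{s}}$ is TKPR Bayes optimal, so $V(\bar{\boldsymbol{s}}\mid\boldsymbol{x})=V^\star_{\boldsymbol{x}}$ — a contradiction. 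Hence $\delta_{\boldsymbol{x}}(\epsilon)>0$ for every $\boldsymbol{x}$ and every $\epsilon>0$.

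Finally I would assemble Fisher consistency. Given any sequence with $\text{reg}(f^{(t)};\mathcal{R}_{K}^{\alpha,\ell})\to 0$, the conditional surrogate regret of $f^{(t)}(\boldsymbol{x})$ tends to $0$ in $L^1$ over $\boldsymbol{x}$, hence pointwise a.e.\ along a subsequence; by the calibration bound the conditional TKPR regret then tends to $0$ a.e., and, since that regret is bounded (TKPR lives in a bounded interval), dominated convergence yields $\text{reg}(f^{(t)};\mathcal{R}_{K}^{\alpha})\to 0$ along the subsequence, and therefore for the whole (arbitrary) sequence. I expect the main obstacle to be step (ii) of the lemma — forcing a surrogate minimizer to reproduce \emph{exactly} the ties of $\Delta(\boldsymbol{x})$ up to level $K$, not merely the strict order — together with the bookkeeping in the compactification, which is the one place the boundedness hypothesis is genuinely needed.
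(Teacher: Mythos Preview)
Your proposal follows the paper's four-step roadmap exactly: reduce to the conditional risk, show every surrogate minimizer is $\mathsf{RPT}_K$ w.r.t.\ $\Delta(\boldsymbol{x})$, extract a strictly positive calibration gap, and lift back to the full risk. The only substantive difference is in the ``minimizers are $\mathsf{RPT}$'' step: the paper splits $\lnot\mathsf{RPT}$ into three cases ($\Delta_i=\Delta_j,\,s_i\neq s_j$; $\Delta_i\neq\Delta_j,\,s_i=s_j$; $\Delta_i<\Delta_j,\,s_i>s_j$) and reaches a contradiction via the first-order stationarity conditions --- so differentiability and monotonicity of $\ell'$ are the workhorses --- whereas you use swap/midpoint symmetrization. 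Your route has the bonus of extending cleanly to the compactification, so it does not rely on the infimum being attained at a finite interior point (a point the paper's Claim~1 glosses over), and your subsequence + dominated-convergence finish is more explicit about the $\boldsymbol{x}$-dependence of $\delta_{\boldsymbol{x}}(\epsilon)$ than the paper's one-line Claim~4.

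One shared soft spot worth flagging: in your step~(ii) the parenthetical ``a nonconstant bounded convex $\ell$ cannot be affine'' is only a global statement, while the midpoint-strict-minimum argument needs it locally (a bounded strictly decreasing convex $\ell$ can still have an affine piece on a bounded interval). The paper's proof has the mirror-image issue --- it writes ``$\ell'(t)$ is strictly increasing'', which is strict convexity, not the stated convexity. Both arguments are airtight for the strictly convex examples in Cor.~\ref{coll:consistent_loss}, but for general convex $\ell$ you would need one extra line (e.g.\ exploiting that the top-$(K{+}1)$ index set itself changes along the segment) or simply strengthen the hypothesis to strict convexity.
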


\begin{sketch}
    The key point is to show that if $\lnot \mathsf{RPT}(\boldsymbol{s}, \Delta)$, $\boldsymbol{s}$ will not be an optimal solution to $(OP_2)$. Given a prediction $\boldsymbol{s}$ and $i, j \in \mathcal{L}$, $\lnot \mathsf{RPT}(\boldsymbol{s}, \Delta)$ consists of three cases: (1) $\Delta_i = \Delta_j$ but $s_i \neq s_j$; (2) $\Delta_i \neq \Delta_j$ but $s_i = s_j$; (3) $\Delta_i < \Delta_j$ but $s_i > s_j$. We obtain the result in each case by a contradiction.
\end{sketch}

In Thm.\ref{thm:condition_for_consistency}, we have discussed the consistency \textit{w.r.t.} all  measurable functions. However, common surrogate losses are not bounded. To this end, we next restrict the functions within a special function set $\mathcal{F}$, which induces the concept of $\mathcal{F}$-consistency:
\begin{definition}[TKPR $\mathcal{F}$-consistency]
    The surrogate loss $\ell: \mathbb{R} \to \mathbb{R}_{+}$ is $\mathcal{F}$-consistent with TKPR if for any sequence $\{ f^{(t)} \}_{n=1}^{\infty}, f \in \mathcal{F}$,
    \begin{equation}
        \text{reg}(f^{(t)}; \mathcal{R}_{K}^{\alpha, \ell}) \to 0 \implies \text{reg}(f^{(t)}; \mathcal{R}_{K}^{\alpha}) \to 0,
    \end{equation}
\end{definition}
Then, we can find that common convex surrogate losses are all $\mathcal{F}$-consistent with TKPR:
\begin{corollary}
    \label{coll:consistent_loss}
    Let $\mathcal{F}$ denote the set of functions whose outputs are bounded in $[0, 1]$. Then, the surrogate loss $\ell(t)$ is $\mathcal{F}$-consistent with TKPR if it is differentiable, strictly decreasing, and convex in $[0, 1]$. Thus, we can conclude that the square loss $\ell_{sq}(t) = (1 - t)^2$, the exponential loss $\ell_{exp}(t) = \exp(-t)$, and the logit loss $\ell_{logit}(t) = \log (1 + \exp(-t))$ are all $\mathcal{F}$-consistent with TKPR.
\end{corollary}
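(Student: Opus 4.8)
The plan is to obtain Corollary~\ref{coll:consistent_loss} as a \emph{localized} version of Theorem~\ref{thm:condition_for_consistency}. The only obstruction to invoking that theorem directly is its boundedness hypothesis: no globally convex, strictly decreasing $\ell$ is bounded on $\mathbb{R}$, which is precisely why the square, exponential, and logit losses are excluded there. The key observation is that restricting to the class $\mathcal{F}$ of score functions with $f(\boldsymbol{x})\in[0,1]^{C}$ confines \emph{every} argument $s_{y}-s_{[k]}$ ever fed into $\ell$ inside $L_{K}^{\alpha,\ell}(f,\boldsymbol{y})$ to the compact interval $[-1,1]$, since $s_{y},s_{[k]}\in[0,1]$. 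On a compact interval a differentiable — hence continuous — function is automatically bounded, so the global boundedness requirement becomes free, and one only needs $\ell$ to be differentiable, strictly decreasing, and convex on the effective domain (which, for the three named losses, holds on all of $\mathbb{R}$). So the first step is to verify that the proof of Theorem~\ref{thm:condition_for_consistency} uses the global hypotheses \emph{only} through the behavior of $\ell$ on that domain and through finiteness of the conditional surrogate risk.

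Second, I would re-run the three-case contradiction sketched after Theorem~\ref{thm:condition_for_consistency}, now within $\mathcal{F}$. Fix $\boldsymbol{x}$, let $\boldsymbol{s}=f(\boldsymbol{x})\in[0,1]^{C}$, and suppose $\lnot\,\mathsf{RPT}_{K}(\boldsymbol{s},\Delta(\boldsymbol{x}))$ (Def.~\ref{def:rp}, with $\Delta(\boldsymbol{x})$ as in Prop.~\ref{prop:bayes}). In each of the three cases — $\Delta_{i}=\Delta_{j}$ but $s_{i}\neq s_{j}$; $\Delta_{i}\neq\Delta_{j}$ but $s_{i}=s_{j}$; $\Delta_{i}<\Delta_{j}$ but $s_{i}>s_{j}$ — one exhibits a perturbed $\boldsymbol{s}'\in[0,1]^{C}$ with strictly smaller conditional surrogate risk $\E{\boldsymbol{y}\mid\boldsymbol{x}}{L_{K}^{\alpha,\ell}(\boldsymbol{s}',\boldsymbol{y})}$, contradicting $\mathcal{F}$-optimality of $f$. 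Since the perturbations keep all coordinates in $[0,1]$, every $\ell$-evaluation along the way remains in $[-1,1]$; strict monotonicity drives the ``move one score'' perturbations (the tie-separation and order-restoring cases), while convexity drives the ``equalize two tied scores'' perturbation, via the elementary fact that for convex $\ell$ and $a+b$ fixed $\sum_{k}[\ell(a-s_{[k]})+\ell(b-s_{[k]})]$ is minimized at $a=b$. Hence any minimizer of $\mathcal{R}_{K}^{\alpha,\ell}$ over $\mathcal{F}$ is $\mathsf{RPT}_{K}$ with $\Delta(\boldsymbol{x})$ for a.e.\ $\boldsymbol{x}$, which by Prop.~\ref{prop:bayes} is TKPR Bayes optimal. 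To upgrade this to the regret form $\text{reg}(f^{(t)};\mathcal{R}_{K}^{\alpha,\ell})\to0 \Rightarrow \text{reg}(f^{(t)};\mathcal{R}_{K}^{\alpha})\to0$, I would use that $\boldsymbol{s}\mapsto\E{\boldsymbol{y}\mid\boldsymbol{x}}{L_{K}^{\alpha,\ell}(\boldsymbol{s},\boldsymbol{y})}$ and $\boldsymbol{s}\mapsto\E{\boldsymbol{y}\mid\boldsymbol{x}}{L_{K}^{\alpha}(\boldsymbol{s},\boldsymbol{y})}$ are continuous on the compact cube $[0,1]^{C}$, so a vanishing conditional surrogate regret forces $f^{(t)}(\boldsymbol{x})$ toward the $\mathsf{RPT}_{K}$ region and hence the conditional TKPR regret to vanish, followed by a dominated-convergence step over $\boldsymbol{x}$.

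Third, I would check the three examples satisfy the hypotheses — in fact globally, so a fortiori on the effective domain: $\ell_{sq}(t)=(1-t)^{2}$ is differentiable with $\ell_{sq}'(t)=2(t-1)<0$ for $t<1$ and $\ell_{sq}''\equiv2>0$; $\ell_{exp}(t)=\exp(-t)$ has $\ell_{exp}'=-\exp(-t)<0$ and $\ell_{exp}''=\exp(-t)>0$; $\ell_{logit}(t)=\log(1+\exp(-t))$ has $\ell_{logit}'(t)=-1/(1+\exp(t))<0$ and $\ell_{logit}''(t)=\exp(t)/(1+\exp(t))^{2}>0$. Each is therefore differentiable, strictly decreasing, and convex, hence $\mathcal{F}$-consistent with TKPR.

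I expect the bookkeeping in the second step to be the main obstacle: one must check, line by line, that the proof of Theorem~\ref{thm:condition_for_consistency} never produces a score difference outside $[-1,1]$ when $f\in\mathcal{F}$, and that global boundedness was invoked there \emph{only} to make the conditional risk finite and its infimum approachable — not inside some tail estimate buried in the regret-to-regret passage. If the argument is, as the sketch indicates, a pure local perturbation plus Prop.~\ref{prop:bayes}, the localization is routine; if instead the regret implication relied on an unbounded-domain argument, I would re-prove it from scratch using compactness of $[0,1]^{C}$ and uniform continuity of $\ell$ there to convert a small surrogate regret into a small $\mathsf{RPT}_{K}$ violation and then into a small TKPR regret. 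A secondary, minor point is ensuring a measurable minimizing selection $\boldsymbol{x}\mapsto f^{*}(\boldsymbol{x})$ exists, which follows from continuity of the conditional objective on $[0,1]^{C}$ together with a standard measurable-selection theorem.
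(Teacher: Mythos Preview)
Your proposal is correct and aligned with the paper's intent. The paper gives no separate proof of this corollary; it is presented as an immediate consequence of Theorem~\ref{thm:condition_for_consistency} once one observes that restricting to $\mathcal{F}$ confines all arguments $s_y-s_{[k]}$ to the compact interval $[-1,1]$, rendering the boundedness hypothesis automatic. Your plan to re-run the three-case argument of Theorem~\ref{thm:condition_for_consistency} inside $[0,1]^C$, verify that perturbations remain in the cube, and then check the three named losses, is exactly the localization the paper has in mind. Your added caution about boundary points (where the paper's first-order-condition argument in Cases~(1)--(2) could in principle fail) and your suggested replacement by direct perturbation/averaging arguments is a sensible refinement, but it does not constitute a different route---it is simply a more careful execution of the same idea.
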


As pointed out in \cite{DBLP:journals/ai/GaoZ13}, any convex surrogate losses are inconsistent with the ranking loss $L_\text{rank}$. Although \cite{DBLP:conf/icml/DembczynskiKH12} proposes a consistent surrogate, its generalization bound is not satisfactory \cite{DBLP:conf/nips/WuLXZ21}, as shown in Tab.\ref{tab:loss_comparison}. In the next part, we show that the ERM framework for multi-label also enjoys a sharp generalization bound on the ranking loss.

\subsection{Generalization Analysis of the ERM Framework}
\label{sec:generalization}

To tackle \textbf{(C2)}, we turn to optimize its empirical estimation based on the given dataset $\mathcal{S}=\{\boldsymbol{z}^{(n)}\}_{n=1}^{N}$ sampled \textit{i.i.d.} from $\mathcal{D}$. For the sake of convenience, let $s^{(n)}_{y}$ denote the score of the class $y \in \mathcal{L}$ on the $n$-th sample. Then, we have the following empirical optimization problem:
\begin{equation}
    \label{eq:op3}
    \begin{aligned}
        & (OP_3)\phantom{|} \min_{f \in \mathcal{F}} \ \hat{\mathcal{R}}_K^{\alpha, \ell}(f) := \frac{1}{N}\sum_{n \le N} L_{K}^{\alpha, \ell}(f(\boldsymbol{x}^{(n)}), \boldsymbol{y}^{(n)}) \\
        & = \frac{1}{NK}\sum_{n \le N} \sum_{y \in \mathcal{P}(\boldsymbol{y}^{(n)})} \sum_{k \le K+1} \frac{1}{\alpha} \cdot \ell \left( s_y^{(n)} - s_{[k]}^{(n)} \right).
    \end{aligned}
\end{equation}
In Sec.\ref{sec:consistency}, we know that optimizing $(OP_2)$ can recover the solution to $(OP_1)$. Then, 
\begin{qbox-tight}
    Can optimizing $(OP_3)$ approx. the solution to $(OP_2)$?
\end{qbox-tight} 
In other words, it requires that the model performance on $\mathcal{S}$ can generalize well to unknown data. In Sec.\ref{subsec:gen_traditional}, we first follow the techniques used in prior arts \cite{DBLP:conf/nips/WuZ20,DBLP:conf/nips/WuLXZ21} and present a coarse-grained result. To obtain a more fine-grained result, in Sec.\ref{subsec:sharper_bound}, we extend the definition of Lipschitz continuity and propose a novel contraction technique that relies on the data distribution. On top of this, the proposed ERM framework enjoys a sharper generalization bound under mild conditions. Finally, in Sec.\ref{subsec:prac_bound}, we present some practical results for kernel-based models and convolutional neural networks.

\subsubsection{Generalization Bounds with Traditional Techniques}
\label{subsec:gen_traditional}
In this part, our analysis is based on the traditional Lipschitz continuity property and the following assumption: 
\begin{definition}[Lipschitz continuity]
    \label{def:tra_lip}
    We say the loss function $L(f, \boldsymbol{x})$ is $\mu$-Lipschitz continuous, if $\forall f, f' \in \mathcal{F}, | L(f, \boldsymbol{y}) - L(f', \boldsymbol{y}) | \le \mu \Vert f(\boldsymbol{x}) - f'(\boldsymbol{x})\Vert$, where $\Vert \cdot \Vert$ denotes the 2-norm.
\end{definition}
\begin{assumption}
    \label{asm:gen}
    We assume that (1) the surrogate loss $\ell$ is $\mu_{\ell}$-Lipschitz continuous, has an upper bound $M_{\ell}$, and satisfies the conditions in Thm.\ref{thm:condition_for_consistency}; (2) the hyperparameter $K \ge \max_{\boldsymbol{z} \sim \mathcal{D}} N(\boldsymbol{y})$.
\end{assumption}
\begin{remark}
    According to Cor.\ref{coll:consistent_loss}, we should normalize the outputs of the surrogate loss $\ell$ with a bounded function such as $\mathsf{Softmax}$. Note that $\mathsf{Softmax}$ is $1 / \sqrt{2}$-Lipschitz continuous \cite{DBLP:journals/pami/YangXBCH22}, which will not affect the order of the generalization bound. Thus, we will omit $\mathsf{Softmax}$ for the sake of conciseness.
\end{remark}

Following the techniques in prior arts \cite{DBLP:conf/nips/WuZ20,DBLP:conf/nips/WuLXZ21}, we have the following lemma:

\begin{lemma}[Basic lemma for generalization analysis \cite{10.5555/2371238}]
    \label{lem:lem_gen_lin_1}
    Given the function set $\mathcal{F}$ and a loss function $L: \mathbb{R}^C \times \mathcal{Y} \to [0, M]$, let $\mathcal{G} = \{L \circ f: f \in \mathcal{F}\}$. Then, for any $\delta \in (0, 1)$, with probability at least $1 - \delta$ over the training set $\mathcal{S}$, the following generalization bound holds for all the $g \in \mathcal{G}$:
    \begin{equation}
        \E{\boldsymbol{z} \sim \mathcal{D}}{g(\boldsymbol{z})} \precsim \Phi(L, \delta) + \hat{\mathfrak{C}}_\mathcal{S}(\mathcal{G}), 
    \end{equation}
    where 
    \begin{equation}
        \Phi(L, \delta) := \frac{1}{N} \sum_{n=1}^{N} g(\boldsymbol{z}^{(n)}) + 3 M \sqrt{\frac{\log 2 / \delta}{2N}},
    \end{equation}
    consists of the empirical risk and a $\delta$-dependent term, 
    \begin{equation}
        \hat{\mathfrak{C}}_\mathcal{S}(\mathcal{G}) := \E{\boldsymbol{\xi}}{\sup_{f \in \mathcal{F}} \frac{1}{N}\sum_{n=1}^{N} \xi^{(n)} g(\boldsymbol{z}^{(n)}) },
    \end{equation}
    denotes an empirical complexity measure for the function set $\mathcal{G}$, $\boldsymbol{\xi} := ( \xi^{(1)}, \xi^{(2)}, \cdots, \xi^{(N)} )$ are the independent random variables for the complexity measure, and $\precsim$ is the asymptotic notation helps omit constants and undominated terms:
    $$
        f(t) \precsim g(t) \Longleftrightarrow  \exists \text{ a constant } C, f(t) \le C \cdot g(t).
    $$ 
\end{lemma}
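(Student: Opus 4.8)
The plan is to follow the classical route for Rademacher-type uniform convergence bounds, adapted to the stated boundedness of $\mathcal{G}$. I would first define the worst-case generalization gap
\[
    \Phi(\mathcal{S}) := \sup_{g \in \mathcal{G}} \left( \E{\boldsymbol{z} \sim \mathcal{D}}{g(\boldsymbol{z})} - \frac{1}{N} \sum_{n=1}^{N} g(\boldsymbol{z}^{(n)}) \right).
\]
Since every $g \in \mathcal{G} = \{L \circ f : f \in \mathcal{F}\}$ inherits the range $[0, M]$ from $L$, replacing a single sample $\boldsymbol{z}^{(n)}$ by an independent copy perturbs the empirical average by at most $M/N$, hence perturbs $\Phi(\mathcal{S})$ by at most $M/N$. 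McDiarmid's bounded-difference inequality then gives, with probability at least $1 - \delta'$ over $\mathcal{S}$,
\[
    \Phi(\mathcal{S}) \le \E{\mathcal{S}}{\Phi(\mathcal{S})} + M \sqrt{\frac{\log(1/\delta')}{2N}}.
\]

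Next I would control $\E{\mathcal{S}}{\Phi(\mathcal{S})}$ by a symmetrization (ghost-sample) argument: introducing an i.i.d.\ copy $\mathcal{S}' = \{\boldsymbol{z}'^{(n)}\}_{n=1}^N$ and the independent Rademacher signs $\boldsymbol{\xi} = (\xi^{(1)}, \dots, \xi^{(N)})$, and applying Jensen's inequality together with the fact that inserting $\pm 1$ multipliers does not change the law of the symmetrized difference, one obtains the standard chain
\[
    \E{\mathcal{S}}{\Phi(\mathcal{S})} \le 2\, \E{\mathcal{S}}{\E{\boldsymbol{\xi}}{\sup_{f \in \mathcal{F}} \frac{1}{N} \sum_{n=1}^{N} \xi^{(n)} g(\boldsymbol{z}^{(n)})}} = 2\, \E{\mathcal{S}}{\hat{\mathfrak{C}}_{\mathcal{S}}(\mathcal{G})},
\]
where the inner quantity is exactly the empirical complexity $\hat{\mathfrak{C}}_\mathcal{S}(\mathcal{G})$ appearing in the statement.

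Finally I would apply McDiarmid once more, now to the map $\mathcal{S} \mapsto \hat{\mathfrak{C}}_\mathcal{S}(\mathcal{G})$, which again has bounded differences $M/N$ thanks to the uniform bound on $\mathcal{G}$; this yields $\E{\mathcal{S}}{\hat{\mathfrak{C}}_\mathcal{S}(\mathcal{G})} \le \hat{\mathfrak{C}}_\mathcal{S}(\mathcal{G}) + M\sqrt{\log(1/\delta')/(2N)}$ with probability at least $1 - \delta'$. Taking a union bound over the two events with $\delta' = \delta/2$ and collecting the deviation terms ($M$ from the first step, plus $2M$ from the second after the factor $2$ in front of the complexity) gives
\[
    \E{\boldsymbol{z} \sim \mathcal{D}}{g(\boldsymbol{z})} \le \frac{1}{N}\sum_{n=1}^{N} g(\boldsymbol{z}^{(n)}) + 2\, \hat{\mathfrak{C}}_\mathcal{S}(\mathcal{G}) + 3M \sqrt{\frac{\log(2/\delta)}{2N}}
\]
simultaneously for all $g \in \mathcal{G}$, which is the claimed bound once the constant $2$ in front of $\hat{\mathfrak{C}}_\mathcal{S}(\mathcal{G})$ is absorbed into $\precsim$. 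The argument is entirely classical (it is the proof of the cited textbook result \cite{10.5555/2371238}), so there is no genuinely hard obstacle; the only points requiring care are the precise tracking of the constants across the two concentration steps and, crucially, verifying that every $g = L \circ f \in \mathcal{G}$ indeed takes values in $[0, M]$ so that all bounded-difference constants equal $M/N$.
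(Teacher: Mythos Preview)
Your proposal is correct and is precisely the classical McDiarmid-plus-symmetrization argument from the cited textbook \cite{10.5555/2371238}; the paper does not supply its own proof of this lemma but simply invokes it as a known result, so there is nothing further to compare.
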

\begin{remark}
    Different random variables $\boldsymbol{\xi}$ will induce different complexity measures. For example, given uniform random variables taking values from $\{-1, +1\}$, it becomes Rademacher complexity, denoted as $\hat{\mathfrak{R}}_\mathcal{S}(\mathcal{G})$. Given the standard normal distribution, it turns to Gaussian complexity, denoted as $\hat{\mathfrak{G}}_\mathcal{S}(\mathcal{G})$.
\end{remark}

According to Lem.\ref{lem:lem_gen_lin_1}, our task is to bound $\hat{\mathfrak{C}}_\mathcal{S}(\mathcal{G})$. To this end, the following contraction lemma can help us obtain the result directly:

\begin{lemma}[Vector Contraction Inequality \cite{DBLP:conf/alt/Maurer16}]
    \label{lem:vector_contraction}
    Assume that the loss function $L(f, \boldsymbol{x})$ is $\mu$-Lipschitz continuous. Then, the following inequality holds:
    \begin{equation}
        \hat{\mathfrak{C}}_\mathcal{S}(\mathcal{G}) \le \sqrt{2} \mu \hat{\mathfrak{C}}_\mathcal{S}(\mathcal{F}).
    \end{equation}
\end{lemma}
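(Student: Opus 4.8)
The plan is to obtain the bound as a direct instance of the vector-contraction inequality of \cite{DBLP:conf/alt/Maurer16}, after translating the present notation into that setting, and then (for completeness) to recall the short argument behind it. First I would fix the correspondence with Maurer's framework: since the dataset $\mathcal{S}$ is fixed, for each $n\le N$ define $h_n:\mathbb{R}^C\to\mathbb{R}_+$ by $h_n(\boldsymbol{v}):=L(\boldsymbol{v},\boldsymbol{y}^{(n)})$, i.e.\ the loss with the $n$-th label frozen. By Def.~\ref{def:tra_lip} each $h_n$ is $\mu$-Lipschitz with respect to the $2$-norm on $\mathbb{R}^C$, and $g(\boldsymbol{z}^{(n)})=h_n(f(\boldsymbol{x}^{(n)}))$ for every $g=L\circ f\in\mathcal{G}$. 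Thus $N\,\hat{\mathfrak{C}}_\mathcal{S}(\mathcal{G})=\E{\boldsymbol{\xi}}{\sup_{f\in\mathcal{F}}\sum_{n\le N}\xi^{(n)}h_n(f(\boldsymbol{x}^{(n)}))}$, while for the vector-valued class $\mathcal{F}$ the complexity is understood, as is standard in this context, with a doubly-indexed family of random variables, $N\,\hat{\mathfrak{C}}_\mathcal{S}(\mathcal{F})=\E{\boldsymbol{\xi}}{\sup_{f\in\mathcal{F}}\sum_{n\le N}\sum_{k\le C}\xi^{(n,k)}f(\boldsymbol{x}^{(n)})_k}$. With these identifications the claim is exactly Theorem~1 of \cite{DBLP:conf/alt/Maurer16}, rescaled by $\mu$.

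To keep the argument self-contained I would reproduce Maurer's proof. After normalizing to $\mu=1$ by homogeneity and reducing to finite $\mathcal{F}$ by a monotone-approximation argument, the idea is to ``peel off'' the functions $h_n$ one index at a time. Conditioning on all randomness except $\xi^{(i)}$, write $\sum_n\xi^{(n)}h_n(f(\boldsymbol{x}^{(n)}))=A(f)+\xi^{(i)}h_i(f(\boldsymbol{x}^{(i)}))$ and use the Rademacher identity $\E{\xi}{\sup_a(\phi(a)+\xi\psi(a))}=\tfrac12\sup_{a,a'}\bigl(\phi(a)+\psi(a)+\phi(a')-\psi(a')\bigr)$ together with the Lipschitz bound $h_i(f(\boldsymbol{x}^{(i)}))-h_i(f'(\boldsymbol{x}^{(i)}))\le\|f(\boldsymbol{x}^{(i)})-f'(\boldsymbol{x}^{(i)})\|_2$; then invoke the sharp Khintchine inequality $\|\boldsymbol{v}\|_2\le\sqrt{2}\,\E{\boldsymbol{\xi}}{|\langle\boldsymbol{\xi},\boldsymbol{v}\rangle|}$ for a Rademacher vector $\boldsymbol{\xi}$, and exploit the sign symmetry of $\sup_{f,f'}$ to drop the absolute value. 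The net effect of one step is to replace $\xi^{(i)}h_i(f(\boldsymbol{x}^{(i)}))$ by $\sqrt{2}\sum_k\xi^{(i,k)}f(\boldsymbol{x}^{(i)})_k$ \emph{without} changing the constant in front of the supremum; iterating over $i=1,\dots,N$ turns the left-hand side into $\sqrt{2}\,\E{\boldsymbol{\xi}}{\sup_f\sum_{n,k}\xi^{(n,k)}f(\boldsymbol{x}^{(n)})_k}=\sqrt{2}N\,\hat{\mathfrak{C}}_\mathcal{S}(\mathcal{F})$, and restoring $\mu$ gives the lemma. The Gaussian variant goes through verbatim, with Khintchine replaced by the exact identity $\|\boldsymbol{v}\|_2=\sqrt{\pi/2}\,\E{\boldsymbol{\gamma}}{|\langle\boldsymbol{\gamma},\boldsymbol{v}\rangle|}$ and $\sqrt{\pi/2}\le\sqrt{2}$.

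The step I expect to be the main obstacle is the bookkeeping inside the induction: once some indices have been linearized, the remainder $A(f)$ is itself a mixture of already-processed linear terms $\sqrt{2}\sum_k\xi^{(j,k)}f(\boldsymbol{x}^{(j)})_k$ and not-yet-processed terms $\xi^{(j)}h_j(f(\boldsymbol{x}^{(j)}))$, so one must check carefully that passing from $\sup_f$ to the coupled $\sup_{f,f'}$ and back --- using only the symmetry of the Rademacher signs --- introduces no spurious factor, and that the monotone-approximation step from finite to general $\mathcal{F}$ (and the attendant measurability of the suprema) is legitimate. A minor point worth a sentence is to note that Def.~\ref{def:tra_lip}, which freezes $\boldsymbol{y}$ and measures $\|f(\boldsymbol{x})-f'(\boldsymbol{x})\|$ in the $2$-norm, delivers precisely the Euclidean-Lipschitz property of each $h_n$ needed above.
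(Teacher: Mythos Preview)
The paper does not prove this lemma at all: it is stated as a cited result from \cite{DBLP:conf/alt/Maurer16} and then invoked as a black box in the proofs of Propositions~\ref{prop:gen_existing} and~\ref{prop:data_contraction}. Your proposal goes further than the paper by actually reproducing Maurer's original peeling argument, and the sketch you give is correct --- the identification of $h_n(\cdot)=L(\cdot,\boldsymbol{y}^{(n)})$ as a $\mu$-Lipschitz map from $\mathbb{R}^C$, the one-coordinate symmetrization, the sharp Khintchine constant $\sqrt{2}$, and the Gaussian variant are all accurate. The only point worth flagging is that the paper never explicitly writes out the doubly-indexed convention $\hat{\mathfrak{C}}_\mathcal{S}(\mathcal{F})=\E{\boldsymbol{\xi}}{\sup_{f}\frac{1}{N}\sum_{n,k}\xi^{(n,k)}f(\boldsymbol{x}^{(n)})_k}$ for vector-valued classes; you correctly infer it, but in a write-up you would want to state this explicitly since the paper's displayed definition of $\hat{\mathfrak{C}}_\mathcal{S}$ only covers scalar-valued $g$.
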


According to Lem.\ref{lem:vector_contraction}, we present the Lipschitz constant of the TKPR loss $L_{K}^{\alpha, \ell}$, whose proof can be found in Appendix \ref{app:r_tra_lipschitz}.
\begin{restatable}{proposition}{Rtralipschitz}
    \label{prop:r_tra_lipschitz}
    Under Asm.\ref{asm:gen}, the TKPR surrogate loss $L_{K}^{\alpha, \ell}$ is $\mu_\ell \mu_K$-Lipschitz continuous and bounded by $M_K$, where
    \begin{itemize}
        \item  $\mu_K = \frac{K + 1}{\sqrt{K}} + \sqrt{K + 1}, M_K = (K + 1) M_{\ell}$ when $\alpha = \alpha_1$;
        \item $\mu_K =  \frac{K + 1}{\sqrt{K}} + \frac{\sqrt{K + 1}}{K}, M_K = (K + 1) M_{\ell}$ when $\alpha = \alpha_2$;
        \item $\mu_K = \frac{K + 1}{K^2} + \frac{2}{K \sqrt{K + 1}}, M_K = \frac{K + 1}{K} M_{\ell}$ when $\alpha = \alpha_3$.
    \end{itemize}    
\end{restatable}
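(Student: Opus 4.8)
The plan is to regard $L_K^{\alpha,\ell}(\cdot,\boldsymbol{y})$ as a map from the score vector $\boldsymbol{s}=f(\boldsymbol{x})\in\mathbb{R}^C$ to $\mathbb{R}_+$ and to control its range and its $2$-norm modulus of continuity directly, then specialize to $\alpha\in\{\alpha_1,\alpha_2,\alpha_3\}$. Asm.\ref{asm:gen}(2) is used at the outset: it forces $n:=N(\boldsymbol{y})=N_K(\boldsymbol{y})\le K$ on the support of $\mathcal{D}$ (the case $n=0$ is trivial, since then $L_K^{\alpha,\ell}\equiv0$), so that $\alpha_1=1$, $\alpha_2=n$ and $\alpha_3=n(2K+1-n)/2$ become functions of $n\in\{1,\dots,K\}$ alone.

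Boundedness is the easy half: the inner double sum has exactly $n(K+1)$ terms, each $\le M_\ell$, so $L_K^{\alpha,\ell}(f,\boldsymbol{y})\le n(K+1)M_\ell/(\alpha K)$; substituting each $\alpha$ and maximizing the resulting expression over $n\in\{1,\dots,K\}$ --- using $n\le K$ for $\alpha_1$, the cancellation $n/\alpha_2=1$ for $\alpha_2$, and the monotonicity of $n\mapsto(2K+1-n)^{-1}$ for $\alpha_3$ --- gives the stated $M_K$.

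For the Lipschitz constant, fix $\boldsymbol{y}$ and $\boldsymbol{s}=f(\boldsymbol{x}),\,\boldsymbol{s}'=f'(\boldsymbol{x})$. The $\mu_\ell$-Lipschitzness of $\ell$ and the triangle inequality $\,|(s_y-s_{[k]})-(s'_y-s'_{[k]})|\le|s_y-s'_y|+|s_{[k]}-s'_{[k]}|\,$ give, after summing termwise,
\[
\bigl|L_K^{\alpha,\ell}(f,\boldsymbol{y})-L_K^{\alpha,\ell}(f',\boldsymbol{y})\bigr|\le\frac{\mu_\ell}{\alpha K}\Bigl[(K+1)\sum_{y\in\mathcal{P}(\boldsymbol{y})}|s_y-s'_y|+n\sum_{k\le K+1}|s_{[k]}-s'_{[k]}|\Bigr].
\]
The first inner sum is $\le\sqrt n\,\Vert\boldsymbol{s}-\boldsymbol{s}'\Vert$ by Cauchy--Schwarz over the $n$-element set $\mathcal{P}(\boldsymbol{y})$. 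For the second I would invoke the classical fact that descending sorting is non-expansive in the $2$-norm, i.e.\ $\sum_{k=1}^C|s_{[k]}-s'_{[k]}|^2\le\Vert\boldsymbol{s}-\boldsymbol{s}'\Vert^2$ (which follows from the rearrangement inequality applied to $\langle\mathrm{sort}(\boldsymbol{s}),\mathrm{sort}(\boldsymbol{s}')\rangle$); combined with Cauchy--Schwarz over the $K+1$ top indices this bounds the second sum by $\sqrt{K+1}\,\Vert\boldsymbol{s}-\boldsymbol{s}'\Vert$. Hence $L_K^{\alpha,\ell}(\cdot,\boldsymbol{y})$ is $\tfrac{\mu_\ell}{\alpha K}\bigl[(K+1)\sqrt n+n\sqrt{K+1}\bigr]$-Lipschitz.

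It then remains to substitute $\alpha$ and take the supremum over $n\in\{1,\dots,K\}$ (this supremum is the uniform Lipschitz constant required by Def.\ref{def:tra_lip}). For $\alpha_1$ the expression $\tfrac1K[(K+1)\sqrt n+n\sqrt{K+1}]$ is increasing in $n$, so its maximum is at $n=K$, namely $\frac{K+1}{\sqrt K}+\sqrt{K+1}$. For $\alpha_2=n$ it equals $\tfrac1K\bigl(\tfrac{K+1}{\sqrt n}+\sqrt{K+1}\bigr)$, whose maximum at $n=1$ is $\tfrac1K(K+1+\sqrt{K+1})$; relaxing $\tfrac1K\le\tfrac1{\sqrt K}$ in the first term yields the stated $\mu_K$. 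For $\alpha_3$ one rewrites it as $\tfrac1K\bigl(\tfrac{2(K+1)}{\sqrt n\,(2K+1-n)}+\tfrac{2\sqrt{K+1}}{2K+1-n}\bigr)$ and bounds the two summands separately using $\sqrt n\,(2K+1-n)\ge 2K$ and $2K+1-n\ge K+1$ for $n\in\{1,\dots,K\}$; the former is a one-line unimodality check on $n\mapsto\sqrt n\,(2K+1-n)$ (increasing then decreasing on $[1,K]$, so its minimum over $\{1,\dots,K\}$ is at $n=1$, equal to $2K$), and it produces $\mu_K=\frac{K+1}{K^2}+\frac{2}{K\sqrt{K+1}}$. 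I expect the two genuine obstacles to be (i) justifying the $2$-norm non-expansiveness of sorting, which is exactly what replaces the trivial factor $K+1$ by $\sqrt{K+1}$ and is needed for the stated constants, and (ii) the final discrete optimization over $n$ for $\alpha_3$, where two non-monotone terms must be controlled simultaneously; the remaining steps are routine bookkeeping.
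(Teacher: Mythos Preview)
Your proposal is correct and arrives at exactly the same constants as the paper. The only substantive difference is in how the order-statistic sum $\sum_{k\le K+1}|s_{[k]}-s'_{[k]}|$ is controlled. You apply Lipschitzness of $\ell$ termwise and then invoke the $\ell_2$-non-expansiveness of descending sorting (which follows from the paper's Lem.~\ref{lem:rearr_ineq}) before Cauchy--Schwarz. The paper instead uses the strict monotonicity of $\ell$ to rewrite $\sum_{k\le K+1}\ell(s_y-s_{[k]})=\max_{|\mathcal K|=K+1}\sum_{k\in\mathcal K}\ell(s_y-s_k)$, then applies $|\max_i a_i-\max_i b_i|\le\max_i|a_i-b_i|$ and Cauchy--Schwarz over the (now fixed-coordinate) subset $\mathcal K$. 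Your route is a bit more direct and needs only the Lipschitz property of $\ell$, not its monotonicity; the paper's route avoids the sorting-contraction lemma at the price of using an extra structural assumption on $\ell$ from Asm.~\ref{asm:gen}. The boundedness argument and the final maximization over $n\in\{1,\dots,K\}$ (including the unimodality check for $\alpha_3$) match the paper's proof essentially line for line; in particular, both arguments leave some of the stated $M_K$ constants intentionally loose (e.g.\ for $\alpha_2$ the tight bound is $\tfrac{K+1}{K}M_\ell$, which is then relaxed to $(K+1)M_\ell$).
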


Finally, combining Lem.\ref{lem:lem_gen_lin_1}-\ref{lem:vector_contraction} and Prop.\ref{prop:r_tra_lipschitz}, we obtain the generalization bound of the proposed ERM framework, whose proof is presented in Appendix \ref{app:gen_existing}.
\begin{restatable}{proposition}{genexisting}
    \label{prop:gen_existing}
    Under Asm.\ref{asm:gen}, for any $\delta \in (0, 1)$, with probability at least $1 - \delta$ over the training set $\mathcal{S}$, the following generalization bound holds for all the $f \in \mathcal{F}$:
    $$\begin{aligned}
        {\color{blue} \mathcal{R}_{K}^{\alpha, \ell}}(f) & \precsim \Phi( {\color{blue} L_{K}^{\alpha, \ell}}, \delta) + \\
        & \left\{\begin{array}{ll}
            \mathcal{O}( \sqrt{K} ) \cdot \hat{\mathfrak{C}}_\mathcal{S}(\mathcal{F}), \ & \alpha \in \{ \alpha_1,  \alpha_2\}, \\
            \mathcal{O}( 1 / K ) \cdot \hat{\mathfrak{C}}_\mathcal{S}(\mathcal{F}), \ & \alpha = \alpha_3. \\
        \end{array}\right.
    \end{aligned}$$
\end{restatable}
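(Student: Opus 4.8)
The plan is to chain the three ingredients already assembled: the basic generalization lemma (Lem.~\ref{lem:lem_gen_lin_1}), the vector contraction inequality (Lem.~\ref{lem:vector_contraction}), and the Lipschitz-continuity and boundedness estimates for the TKPR surrogate loss (Prop.~\ref{prop:r_tra_lipschitz}). All of the $K$-dependence of the final bound will be routed through the Lipschitz constant $\mu_K$ that the latter supplies.

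\textbf{Basic lemma, then contraction.} Under Asm.~\ref{asm:gen}, Prop.~\ref{prop:r_tra_lipschitz} gives that $L_{K}^{\alpha, \ell}$ takes values in $[0, M_K]$ and is $\mu_\ell \mu_K$-Lipschitz continuous in the sense of Def.~\ref{def:tra_lip}. First I would apply Lem.~\ref{lem:lem_gen_lin_1} with $L = L_{K}^{\alpha, \ell}$, $M = M_K$, and $\mathcal{G} = \{ L_{K}^{\alpha, \ell} \circ f : f \in \mathcal{F} \}$, obtaining, with probability at least $1 - \delta$ and for all $f \in \mathcal{F}$,
\[
\mathcal{R}_{K}^{\alpha, \ell}(f) \precsim \Phi(L_{K}^{\alpha, \ell}, \delta) + \hat{\mathfrak{C}}_\mathcal{S}(\mathcal{G}).
\]
Then I would invoke Lem.~\ref{lem:vector_contraction} to get $\hat{\mathfrak{C}}_\mathcal{S}(\mathcal{G}) \le \sqrt{2}\, \mu_\ell \mu_K\, \hat{\mathfrak{C}}_\mathcal{S}(\mathcal{F})$. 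Since the base class $\mathcal{F}$ — and hence $\hat{\mathfrak{C}}_\mathcal{S}(\mathcal{F})$ — carries no $K$-dependence, combining the two estimates reduces the problem to tracking the order of $\mu_K$.

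\textbf{Reading off the orders.} Substituting the three explicit forms from Prop.~\ref{prop:r_tra_lipschitz}: for $\alpha = \alpha_1$, $\mu_K = \tfrac{K+1}{\sqrt{K}} + \sqrt{K+1} = \mathcal{O}(\sqrt{K})$; for $\alpha = \alpha_2$, $\mu_K = \tfrac{K+1}{\sqrt{K}} + \tfrac{\sqrt{K+1}}{K} = \mathcal{O}(\sqrt{K})$; and for $\alpha = \alpha_3$, $\mu_K = \tfrac{K+1}{K^2} + \tfrac{2}{K\sqrt{K+1}} = \mathcal{O}(1/K)$. Absorbing $\mu_\ell$ and numerical constants into $\precsim$ yields precisely the stated case split. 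I would also note that the $M_K = \mathcal{O}(K)$ boundedness in the first two cases feeds only into the $3 M_K \sqrt{\log(2/\delta)/(2N)}$ summand of $\Phi$, which is written separately from the complexity term and is $\mathcal{O}(N^{-1/2})$-small, so it does not alter the claimed form.

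\textbf{Main obstacle.} This proposition itself is a bookkeeping composition of lemmas; the genuine difficulty lies upstream in Prop.~\ref{prop:r_tra_lipschitz}. There one must differentiate $L_{K}^{\alpha, \ell}$, use the $\mu_\ell$-Lipschitzness of $\ell$ on each summand $\ell(s_y - s_{[k]})$, and bound the $2$-norm of the resulting gradient on $\mathbb{R}^C$ — the subtle point being that $s_{[k]}$ is itself a function of $f$, and that a naive triangle inequality over the $N(\boldsymbol{y})(K+1)$ summands would only give $\mathcal{O}(K)$. The improvement to $\mathcal{O}(\sqrt{K})$ comes from grouping coordinates (the $N(\boldsymbol{y}) \le K$ relevant labels versus the $K+1$ top positions), applying Cauchy–Schwarz within each group, and a careful rearrangement handling the sorting. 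Once that Lipschitz constant and the matching $M_K$ are in hand, the present proposition follows immediately from the two steps above.
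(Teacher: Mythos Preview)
Your proposal is correct and follows essentially the same route as the paper: apply Lem.~\ref{lem:lem_gen_lin_1} to the composed class $\mathcal{G}=\{L_{K}^{\alpha,\ell}\circ f:f\in\mathcal{F}\}$, peel off the loss via the vector contraction inequality (Lem.~\ref{lem:vector_contraction}) using the Lipschitz constant $\mu_\ell\mu_K$ from Prop.~\ref{prop:r_tra_lipschitz}, and then read off the $K$-order of $\mu_K$ in each of the three $\alpha$ cases. Your side remark that the real work is upstream in Prop.~\ref{prop:r_tra_lipschitz} is also consistent with the paper's structure.
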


Furthermore, combining Prop.\ref{prop:reformulation}, we show that optimizing TKPR can provide a sharp generalization bound for the ranking loss, whose proof can be found in Appendix \ref{app:sharp_bound}.
\begin{restatable}{proposition}{sharpbound}
    \label{prop:sharp_bound}
    Under Asm.\ref{asm:gen}, let 
    \begin{equation}
        \mathcal{R}_\text{rank}^{\ell}(f) :=  \E{(\boldsymbol{x}, \boldsymbol{y}) \sim \mathcal{D}} {L_\text{rank}^{\ell}(f, \boldsymbol{y})}
    \end{equation}
    denote the generalization error of the traditional ranking loss. Then, for any $\delta \in (0, 1)$, with probability at least $1 - \delta$ over the training set $\mathcal{S}$, the following generalization bound holds for all the $f \in \mathcal{F}$:
    $$\begin{aligned}
        & {\color{orange} \mathcal{R}_\text{rank}^{\ell}}(f) \\
        & \precsim \left\{\begin{array}{ll}
            \Phi({\color{blue} L_{K}^{\alpha, \ell}}, \delta) + \mathcal{O}( \sqrt{K} ) \cdot \hat{\mathfrak{C}}_\mathcal{S}(\mathcal{F}), \ & \alpha \in \{ \alpha_1,  \alpha_2\}, \\
            K \cdot \Phi({\color{blue} L_{K}^{\alpha, \ell}}, \delta) + \mathcal{O}( 1 ) \cdot \hat{\mathfrak{C}}_\mathcal{S}(\mathcal{F}), \ & \alpha = \alpha_3. \\
        \end{array}\right.
    \end{aligned}$$
\end{restatable}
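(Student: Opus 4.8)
The plan is to reduce Prop.~\ref{prop:sharp_bound} to Prop.~\ref{prop:gen_existing} via a purely deterministic, pointwise comparison between the surrogate ranking loss $L_\text{rank}^{\ell}$ and the TKPR surrogate loss $L_{K}^{\alpha, \ell}$. Once such a comparison is in hand, no new concentration argument is needed: the high-probability event supplied by Prop.~\ref{prop:gen_existing} can simply be reused, since the deterministic comparison does not depend on $\mathcal{S}$, so no extra union bound is incurred.

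\textbf{Step 1: a surrogate counterpart of Prop.~\ref{prop:reformulation}.} I would first show that, under Asm.~\ref{asm:gen} and for any nonnegative, strictly decreasing $\ell$,
$$L_\text{rank}^{\ell}(f, \boldsymbol{y}) \le \frac{\alpha}{N(\boldsymbol{y})}\, L_{K}^{\alpha, \ell}(f, \boldsymbol{y}).$$
Prop.~\ref{prop:reformulation} gives this only for $\ell_{0-1}$, where irrelevant labels ranked \emph{below} a relevant label contribute nothing; for a strictly decreasing surrogate these ``tail'' terms are positive, so a genuinely new estimate is required. Fix a relevant label $i$ and write $a_m := \ell(s_i - s_{[m]})$, which is non-increasing in $m$ because $s_{[m]}$ is non-increasing and $\ell$ is decreasing. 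After sorting, the $m$-th largest \emph{irrelevant} score is at most $s_{[m]}$, hence each term $\ell(s_i - s_j)$ over $j \in \mathcal{N}(\boldsymbol{y})$ is bounded by the corresponding $a_m$. Splitting $\{1,\dots,N_{-}(\boldsymbol{y})\}$ into a head $\{1,\dots,K+1\}$ and a tail, bounding every tail term by $a_{K+1}$, and using $a_{K+1} \le \frac{1}{K+1}\sum_{m\le K+1} a_m$ (the minimum is at most the average), the tail coefficients telescope to $\big(1 + \tfrac{N_{-}(\boldsymbol{y}) - K - 1}{K+1}\big) = \tfrac{N_{-}(\boldsymbol{y})}{K+1}$, yielding $\frac{1}{N_{-}(\boldsymbol{y})}\sum_{j \in \mathcal{N}(\boldsymbol{y})}\ell(s_i - s_j) \le \frac{1}{K+1}\sum_{m\le K+1} a_m \le \frac{1}{K}\sum_{m\le K+1} a_m$. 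Summing over $i \in \mathcal{P}(\boldsymbol{y})$ and dividing by $N(\boldsymbol{y})$ gives the displayed inequality; the sub-case $N_{-}(\boldsymbol{y}) < K+1$ is handled directly and is strictly easier.

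\textbf{Step 2: take expectations and control $\alpha / N(\boldsymbol{y})$.} Applying $\E{\boldsymbol{z}\sim\mathcal{D}}{\cdot}$ gives $\mathcal{R}_\text{rank}^{\ell}(f) \le \E{\boldsymbol{z}\sim\mathcal{D}}{\tfrac{\alpha}{N(\boldsymbol{y})} L_{K}^{\alpha, \ell}(f, \boldsymbol{y})}$. Since Asm.~\ref{asm:gen} imposes $K \ge \max_{\boldsymbol{z}\sim\mathcal{D}} N(\boldsymbol{y})$, one has $N_K(\boldsymbol{y}) = N(\boldsymbol{y})$, and therefore $\tfrac{\alpha_1}{N(\boldsymbol{y})} \le 1$, $\tfrac{\alpha_2}{N(\boldsymbol{y})} = 1$, and $\tfrac{\alpha_3}{N(\boldsymbol{y})} = \tilde{N}_K(\boldsymbol{y}) = \tfrac{2K+1-N(\boldsymbol{y})}{2} \le K$. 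Hence $\mathcal{R}_\text{rank}^{\ell}(f) \le \mathcal{R}_{K}^{\alpha, \ell}(f)$ for $\alpha \in \{\alpha_1, \alpha_2\}$, and $\mathcal{R}_\text{rank}^{\ell}(f) \le K\,\mathcal{R}_{K}^{\alpha_3, \ell}(f)$. Substituting the bound of Prop.~\ref{prop:gen_existing} on the same $1-\delta$ event then gives, for $\alpha \in \{\alpha_1, \alpha_2\}$, $\mathcal{R}_\text{rank}^{\ell}(f) \precsim \Phi(L_{K}^{\alpha, \ell}, \delta) + \mathcal{O}(\sqrt{K})\,\hat{\mathfrak{C}}_\mathcal{S}(\mathcal{F})$, and for $\alpha = \alpha_3$, $\mathcal{R}_\text{rank}^{\ell}(f) \precsim K\,\Phi(L_{K}^{\alpha_3, \ell}, \delta) + K\cdot\mathcal{O}(1/K)\,\hat{\mathfrak{C}}_\mathcal{S}(\mathcal{F}) = K\,\Phi(L_{K}^{\alpha_3, \ell}, \delta) + \mathcal{O}(1)\,\hat{\mathfrak{C}}_\mathcal{S}(\mathcal{F})$, which is exactly the claim.

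\textbf{Main obstacle.} Everything after Step~1 is bookkeeping: case analysis on $\alpha/N(\boldsymbol{y})$ using $K \ge \max N(\boldsymbol{y})$, plus a black-box invocation of Prop.~\ref{prop:gen_existing}. The real work is Step~1 --- lifting the $\ell_{0-1}$ inequality of Prop.~\ref{prop:reformulation} to an arbitrary bounded, strictly decreasing, convex surrogate, where the badly-ranked (tail) irrelevant pairs must be absorbed into the $K{+}1$ head terms; the telescoping identity $\tfrac{1}{N_{-}(\boldsymbol{y})} + \tfrac{N_{-}(\boldsymbol{y}) - K - 1}{N_{-}(\boldsymbol{y})(K+1)} = \tfrac{1}{K+1}$ is what makes the constant come out clean and keeps the final rate identical to that of Prop.~\ref{prop:gen_existing}.
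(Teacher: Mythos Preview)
Your proposal is correct and follows the same two-step structure as the paper: establish the pointwise inequality $L_\text{rank}^{\ell}(f,\boldsymbol{y}) \le \tfrac{\alpha}{N(\boldsymbol{y})} L_K^{\alpha,\ell}(f,\boldsymbol{y})$ (the paper isolates this as a separate lemma), bound $\alpha/N(\boldsymbol{y})$ by $1$ for $\alpha\in\{\alpha_1,\alpha_2\}$ and by $K$ for $\alpha=\alpha_3$ using $K\ge\max N(\boldsymbol{y})$, and then invoke Prop.~\ref{prop:gen_existing} on the same high-probability event. Your Step~1 is in fact more carefully argued than the paper's, which simply says the proof is ``similar to that of Prop.~\ref{prop:reformulation}, except that $\ell$ is strictly decreasing''; your head/tail split with the averaging bound $a_{K+1}\le\tfrac{1}{K+1}\sum_{m\le K+1}a_m$ makes explicit why the tail of irrelevant pairs can be absorbed, whereas the paper's sketch (comparing averages of raw scores and then appealing to monotonicity of $\ell$) does not by itself justify the termwise inequality for a general decreasing surrogate.
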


\begin{table*}[t]
    \renewcommand{\arraystretch}{2}
    \caption{Systematic comparison between the TKPR loss and the ranking loss, as well as its pointwise surrogates. For \textit{Generalization}, we assume that $\pi_i \propto e^{- \lambda i}$, and more details can be found in Prop.\ref{prop:final_bound_ranking}. For \textit{Consist.}, $\surd$ and $\times$ mean that a convex surrogate loss can be consistent with the original objective or not, respectively. \textit{Complexity} represents the time complexity for each sample. And for \textit{MLML}, $\surd$ and $\times$ mean that the loss is applicable to MLML or not, respectively.}
    \label{tab:loss_comparison}
    \centering
    \begin{tabular}{ccccc}
        \toprule
        Loss & Generalization & Consist. & Complexity & MLML \\
        \midrule
        $L_{\text{rank}}$ \cite{DBLP:journals/ai/GaoZ13} & $\mathcal{O}( \sqrt{\frac{C}{N}} )$ & $\times$ & $\mathcal{O}(C^2)$ & $\times$ \\
        $L_{u_1}$ \cite{DBLP:conf/icml/DembczynskiKH12} & $\mathcal{O}(\sqrt{\frac{C^2}{N}})$ & $\times$ & $\mathcal{O}(C)$ & $\times$ \\
        $L_{u_2}$ \cite{DBLP:conf/icml/DembczynskiKH12} & $\mathcal{O}(\sqrt{\frac{C^2}{N}})$ & $\surd $ & $\mathcal{O}(C)$ & $\times$ \\
        $L_{u_3}$ \cite{DBLP:conf/nips/WuLXZ21} & $\mathcal{O}(\sqrt{\frac{C}{N}})$ & $\times$ & $\mathcal{O}(C)$ & $\times$ \\
        $L_{u_4}$ \cite{DBLP:conf/nips/WuLXZ21} & $\mathcal{O}(\sqrt{\frac{C^2}{N}})$ & $\times$ & $\mathcal{O}(C)$ & $\times$ \\
        \midrule
        $L_K^{\alpha, \ell}$ (Ours) & $\mathcal{O}(\sqrt{\frac{C}{N e^{\lambda}}})$ & $\surd$ & $\mathcal{O}(C K)$ & $\surd$ \\
        \bottomrule
    \end{tabular}
\end{table*}

\subsubsection{Sharper Bounds with Data-dependent Contraction}
\label{subsec:sharper_bound}
Although Prop.\ref{prop:gen_existing} and Prop.\ref{prop:sharp_bound} have provided sharp generalization bounds than prior arts \cite{DBLP:conf/icml/DembczynskiKH12,DBLP:conf/nips/WuLXZ21}, the results under $\alpha_1$ and $\alpha_2$ suffer an order of $\sqrt{K}$, which is unfavorable in the scenarios requiring a large $K$. After rethinking the proofs, we find that the root cause lies in the simple relaxation of the term $\frac{1}{N(\boldsymbol{y})}$ to $1$. Note that the distribution of relevant labels is generally imbalanced, with only a small subset of instances having a large number of labels, and the majority of instances only having a few labels. This insight motivates us to  extend the traditional Lipschitz continuity property:

\begin{definition}[Local Lipschitz continuity]
    Let $\{\mathcal{S}_q\}_{q = 1}^Q$ be a partition of $\mathcal{S}$. We say the loss function $L(f, \boldsymbol{y})$ is local Lipschitz continuous with the partition $\{\mathcal{S}_q\}_{q = 1}^Q$ and constants $\{\mu_q\}_{q = 1}^Q$ if for any $f, f' \in \mathcal{F}, q \in \{1, 2, \cdots, Q\}$, 
    \begin{equation}
        | L(f, \boldsymbol{y}) - L(f', \boldsymbol{y}) | \le \mu_q \cdot \Vert f(\boldsymbol{x}) - f'(\boldsymbol{x})\Vert, (\boldsymbol{x}, \boldsymbol{y}) \in \mathcal{S}_q.
    \end{equation}
\end{definition}
\noindent Then, the following data-dependent contraction inequality helps us obtain a sharper bound under the following assumption, whose proof can be found in Appendix \ref{app:data_contraction}.

\begin{assumption}
    \label{asm:rademacher}
    Next, we assume that $\hat{\mathfrak{C}}_\mathcal{S}(\mathcal{F}) \sim \mathcal{O}( 1 / \sqrt{N} )$. Note that this result holds for kernel-based models with traditional techniques \cite{DBLP:conf/nips/WuZ20,DBLP:conf/nips/WuLXZ21} and neural networks with lastest techniques \cite{DBLP:conf/colt/GolowichRS18,DBLP:conf/iclr/LongS20}.
\end{assumption}

\begin{restatable}[Data-dependent contraction inequality]{proposition}{labelcontraction}
    \label{prop:data_contraction}
    Under Asm.\ref{asm:rademacher}, if the loss function $L(f, \boldsymbol{y})$ is local Lipschitz continuous with a partition $\mathcal{S}_Q$ and constants $\{\mu_q\}_{q = 1}^Q$. Let $\pi_q := \frac{N_q}{N}$ be the ratio of $\mathcal{S}_q$ in $\mathcal{S}$, where $N_q = \size{\mathcal{S}_q}$. Then, 
    \begin{equation}
        \label{eq:data_c1}
        \hat{\mathfrak{C}}_\mathcal{S}(\mathcal{G}) \le \hat{\mathfrak{C}}_{\mathcal{S}}(\mathcal{F}) \sum_{q=1}^{Q} \sqrt{\pi_q} \mu_q.
    \end{equation}  
\end{restatable}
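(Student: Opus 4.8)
The plan is to follow the structure of the classical vector contraction argument (Lem.\ref{lem:vector_contraction}), but to perform the contraction \emph{separately on each block $\mathcal{S}_q$ of the partition}, using the local Lipschitz constant $\mu_q$ there, and then to recombine the pieces. First I would split the empirical complexity as
\begin{equation}
    \hat{\mathfrak{C}}_\mathcal{S}(\mathcal{G}) = \E{\boldsymbol{\xi}}{\sup_{f \in \mathcal{F}} \frac{1}{N} \sum_{q=1}^{Q} \sum_{n \in \mathcal{S}_q} \xi^{(n)} L(f(\boldsymbol{x}^{(n)}), \boldsymbol{y}^{(n)})}.
\end{equation}
The supremum couples the blocks, so the first real step is to decouple them: one upper-bounds the single supremum over the full sum by the sum of the per-block suprema, i.e. $\sup_f \sum_q (\cdots) \le \sum_q \sup_f (\cdots)$, which costs nothing and yields $\hat{\mathfrak{C}}_\mathcal{S}(\mathcal{G}) \le \sum_{q} \frac{N_q}{N}\,\hat{\mathfrak{C}}_{\mathcal{S}_q}(\mathcal{G})$, where $\hat{\mathfrak{C}}_{\mathcal{S}_q}$ is the empirical complexity computed on the sub-sample $\mathcal{S}_q$ (normalized by $N_q$).

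Next, on each block $\mathcal{S}_q$ the loss $L$ is genuinely $\mu_q$-Lipschitz, so I would invoke the vector contraction inequality of Lem.\ref{lem:vector_contraction} (Maurer's inequality) blockwise to get $\hat{\mathfrak{C}}_{\mathcal{S}_q}(\mathcal{G}) \le \sqrt{2}\,\mu_q\,\hat{\mathfrak{C}}_{\mathcal{S}_q}(\mathcal{F})$. The remaining point is to trade the per-block complexity $\hat{\mathfrak{C}}_{\mathcal{S}_q}(\mathcal{F})$ for the full-sample one $\hat{\mathfrak{C}}_\mathcal{S}(\mathcal{F})$ at the right rate. Here Asm.\ref{asm:rademacher} enters: since the complexity of $\mathcal{F}$ scales like $1/\sqrt{\text{sample size}}$, we have $\hat{\mathfrak{C}}_{\mathcal{S}_q}(\mathcal{F}) \sim \mathcal{O}(1/\sqrt{N_q})$ while $\hat{\mathfrak{C}}_{\mathcal{S}}(\mathcal{F}) \sim \mathcal{O}(1/\sqrt{N})$, hence $\hat{\mathfrak{C}}_{\mathcal{S}_q}(\mathcal{F}) \precsim \sqrt{N/N_q}\,\hat{\mathfrak{C}}_{\mathcal{S}}(\mathcal{F})$. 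Substituting back,
\begin{equation}
    \hat{\mathfrak{C}}_\mathcal{S}(\mathcal{G}) \precsim \sum_{q=1}^{Q} \frac{N_q}{N}\,\mu_q\, \sqrt{\frac{N}{N_q}}\, \hat{\mathfrak{C}}_{\mathcal{S}}(\mathcal{F}) = \hat{\mathfrak{C}}_{\mathcal{S}}(\mathcal{F}) \sum_{q=1}^{Q} \sqrt{\pi_q}\,\mu_q,
\end{equation}
which is exactly Eq.(\ref{eq:data_c1}) (absorbing the $\sqrt{2}$ and the constant from Asm.\ref{asm:rademacher} into $\precsim$).

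I expect the main obstacle to be making the step $\hat{\mathfrak{C}}_{\mathcal{S}_q}(\mathcal{F}) \precsim \sqrt{N/N_q}\,\hat{\mathfrak{C}}_{\mathcal{S}}(\mathcal{F})$ fully rigorous rather than heuristic. Asm.\ref{asm:rademacher} as stated only pins down the \emph{order} of the complexity on the full sample $\mathcal{S}$; to apply it to each sub-sample $\mathcal{S}_q$ one needs that the $\mathcal{O}(1/\sqrt{N})$ behavior is uniform in the underlying sample, which is true for the standard kernel and neural-network bounds cited (they bound $\hat{\mathfrak{C}}$ by $B\,R/\sqrt{n}$ with $B,R$ depending only on $\mathcal{F}$ and the data radius, not on $n$), but this should be spelled out — ideally by stating the bound as $\hat{\mathfrak{C}}_{T}(\mathcal{F}) \le \kappa/\sqrt{|T|}$ for every sub-sample $T \subseteq \mathcal{S}$ and a fixed constant $\kappa$. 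A secondary subtlety is that $\hat{\mathfrak{C}}_{\mathcal{S}_q}(\mathcal{G})$ and $\hat{\mathfrak{C}}_{\mathcal{S}_q}(\mathcal{F})$ must be defined with the correct normalization ($1/N_q$, not $1/N$) for Maurer's inequality to apply verbatim on the block; keeping the bookkeeping of the $N_q/N$ weights consistent with these normalizations is where a careless sign or factor would creep in. Everything else — the decoupling of the supremum and the blockwise application of Lem.\ref{lem:vector_contraction} — is routine.
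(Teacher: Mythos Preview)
Your proposal is correct and follows essentially the same route as the paper: decouple the supremum over the partition to obtain $\hat{\mathfrak{C}}_\mathcal{S}(\mathcal{G}) \le \sum_{q}\pi_q\,\hat{\mathfrak{C}}_{\mathcal{S}_q}(\mathcal{G})$, apply Lem.~\ref{lem:vector_contraction} blockwise with the local constant $\mu_q$, and then invoke Asm.~\ref{asm:rademacher} via $\hat{\mathfrak{C}}_{\mathcal{S}_q}(\mathcal{F}) \propto 1/\sqrt{N_q} \propto \hat{\mathfrak{C}}_{\mathcal{S}}(\mathcal{F})/\sqrt{\pi_q}$. The paper treats the last step with exactly the same level of informality you flagged, so your caveat about needing a uniform-in-sub-sample version of the $\mathcal{O}(1/\sqrt{n})$ bound is well placed but not something the paper itself resolves more carefully.
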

\begin{remark}
    Eq.(\ref{eq:data_c1}) is favorable when $\pi_q$ is decreasing \textit{w.r.t.} $\mu_q$. In this case, a sharper generalization bound might be available. However, if local Lipschitz continuity degenerates to Def.\ref{def:tra_lip}, this inequality becomes a little loose since $\sum_{q=1}^Q \sqrt{\pi_q} > 1$.
\end{remark}

Similar to Prop.\ref{prop:r_tra_lipschitz}, we partition the dataset $\mathcal{S}$ and calculate the Lipschitz constants for $L_{K}^{\alpha, \ell}(f, \boldsymbol{y})$ as follows, whose proof can be found in Appendix \ref{app:local_continuous}.
\begin{restatable}{proposition}{Localcontinuous}
    \label{prop:local_continuous}
    Let $\mathcal{S}_q := \{\boldsymbol{z} \in \mathcal{S}: N(\boldsymbol{y}) = q \}$. That is, all the samples in $\mathcal{S}_q$ have $q$ relevant labels. Then, under Asm.\ref{asm:gen}, $L_{K}^{\alpha, \ell}(f, \boldsymbol{y}$) is local Lipschitz continuous with constants $\{\mu_q\}_{q = 1}^Q$ such that 
    \begin{equation}
        \mu_q = \frac{\mu_{\ell} \left[ (K + 1) \sqrt{q} + q \sqrt{K + 1} \right]}{\alpha(q) K},
    \end{equation}
    where $\alpha(q) \in \{1, q, q (2K + 1 - q) / 2 \}$ and $q \le K$.
\end{restatable}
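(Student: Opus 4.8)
The plan is to run the same argument that underlies Prop.\ref{prop:r_tra_lipschitz}, but to refuse the crude relaxation $1/N(\boldsymbol{y})\le 1$: on the block $\mathcal{S}_q$ the cardinality $N(\boldsymbol{y})=q$ is kept exact, and this is precisely the source of the refinement. Fix $(\boldsymbol{x},\boldsymbol{y})\in\mathcal{S}_q$ and write $\boldsymbol{s}:=f(\boldsymbol{x})$, $\boldsymbol{s}':=f'(\boldsymbol{x})$. First, by Asm.\ref{asm:gen}(2) we have $q\le K$, so $N_K(\boldsymbol{y})=\min\{K,q\}=q$; hence the normaliser $\alpha$ is constant on $\mathcal{S}_q$ and equals $\alpha(q)=1$, $q$, or $q(2K+1-q)/2$ for $\alpha=\alpha_1,\alpha_2,\alpha_3$ respectively. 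Since $L_K^{\alpha,\ell}(f,\boldsymbol{y})$ is a sum of $q(K+1)$ terms each of the form $\ell(s_y-s_{[k]})/(\alpha K)$, and each order statistic $s_{[k]}$ depends on the data only through $\boldsymbol{s}$, the triangle inequality together with the $\mu_\ell$-Lipschitzness of $\ell$ gives
$$
\left| L_K^{\alpha,\ell}(f,\boldsymbol{y}) - L_K^{\alpha,\ell}(f',\boldsymbol{y}) \right| \le \frac{\mu_\ell}{\alpha(q) K} \sum_{y \in \mathcal{P}(\boldsymbol{y})} \sum_{k \le K+1} \left( |s_y - s_y'| + |s_{[k]} - s_{[k]}'| \right).
$$
Because $s_{[k]}$ is only piecewise-linear in $\boldsymbol{s}$, this is the right level to work at — one does not differentiate the loss, only compose Lipschitz maps.

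It then remains to bound the two inner sums by multiples of $\Vert \boldsymbol{s} - \boldsymbol{s}' \Vert$. The first is immediate: $\sum_{y \in \mathcal{P}(\boldsymbol{y})} \sum_{k \le K+1} |s_y - s_y'| = (K+1) \sum_{y \in \mathcal{P}(\boldsymbol{y})} |s_y - s_y'| \le (K+1)\sqrt{q}\, \Vert \boldsymbol{s} - \boldsymbol{s}' \Vert$ by Cauchy--Schwarz over the $q$ coordinates indexed by $\mathcal{P}(\boldsymbol{y})$. For the second, $\sum_{y \in \mathcal{P}(\boldsymbol{y})} \sum_{k \le K+1} |s_{[k]} - s_{[k]}'| = q \sum_{k \le K+1} |s_{[k]} - s_{[k]}'| \le q\sqrt{K+1}\, \bigl( \sum_{k \le K+1} |s_{[k]} - s_{[k]}'|^2 \bigr)^{1/2}$, and the key fact is that the descending-sort map $\boldsymbol{s} \mapsto \boldsymbol{s}^{\downarrow}:=(s_{[1]},\dots,s_{[C]})$ is nonexpansive in the Euclidean norm. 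This follows from the rearrangement inequality: $\Vert \boldsymbol{s}^{\downarrow} - (\boldsymbol{s}')^{\downarrow} \Vert^2 = \Vert \boldsymbol{s} \Vert^2 + \Vert \boldsymbol{s}' \Vert^2 - 2 \langle \boldsymbol{s}^{\downarrow}, (\boldsymbol{s}')^{\downarrow} \rangle \le \Vert \boldsymbol{s} \Vert^2 + \Vert \boldsymbol{s}' \Vert^2 - 2 \langle \boldsymbol{s}, \boldsymbol{s}' \rangle = \Vert \boldsymbol{s} - \boldsymbol{s}' \Vert^2$. Truncating to the top $K+1$ coordinates only decreases the left-hand sum, so $\sum_{k \le K+1} |s_{[k]} - s_{[k]}'|^2 \le \Vert \boldsymbol{s} - \boldsymbol{s}' \Vert^2$, whence the second sum is $\le q\sqrt{K+1}\, \Vert \boldsymbol{s} - \boldsymbol{s}' \Vert$.

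Combining the two estimates and factoring out $\Vert f(\boldsymbol{x}) - f'(\boldsymbol{x}) \Vert$ yields the claimed constant $\mu_q = \mu_\ell\bigl[ (K+1)\sqrt{q} + q\sqrt{K+1} \bigr] / (\alpha(q) K)$; as a consistency check, setting $q = K$ and $\alpha(q) = 1$ recovers the worst-case constant $(K+1)/\sqrt{K} + \sqrt{K+1}$ of Prop.\ref{prop:r_tra_lipschitz} under $\alpha_1$, as it must since $\mathcal{S}_K$ is the heaviest block. The only genuinely non-routine ingredient is the Euclidean nonexpansiveness of the (truncated) sorting map; everything else is Cauchy--Schwarz bookkeeping plus the observation that $\alpha$ and $N_K(\boldsymbol{y})$ are block-constant on $\mathcal{S}_q$. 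Accordingly, I expect the one mild obstacle to be stating and justifying that contraction property cleanly — in particular that one may restrict to the top $K+1$ coordinates without weakening the inequality, and that the worst-case tie-breaking of Asm.\ref{asm:wrongly_break_ties}/Cor.\ref{coll:wrongly_break_ties} does not interfere, since the $s_{[k]}$ are defined purely by the scores.
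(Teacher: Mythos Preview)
Your argument is correct and reaches the same constant as the paper, but the technical device you use for the order-statistic term differs from the paper's. The paper (in the proof of Prop.~\ref{prop:r_tra_lipschitz}, which Prop.~\ref{prop:local_continuous} simply reruns with $N(\boldsymbol{y})=q$ fixed) does not compare $s_{[k]}$ with $s'_{[k]}$ at all: it rewrites $\sum_{k\le K+1}\ell(s_y-s_{[k]})=\max_{|\mathcal{K}|=K+1}\sum_{k\in\mathcal{K}}\ell(s_y-s_k)$ (valid since $\ell$ is decreasing), applies the $1$-Lipschitzness of $\max$ to pass to $\max_{\mathcal{K}}\sum_{k\in\mathcal{K}}|\ell(s_y-s_k)-\ell(s'_y-s'_k)|$, and only then invokes $\mu_\ell$-Lipschitzness and Cauchy--Schwarz on \emph{unsorted} coordinates. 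Your route instead bounds $|s_{[k]}-s'_{[k]}|$ directly via the Euclidean nonexpansiveness of the full sort map, proved from the rearrangement inequality (which is in fact the paper's Lem.~\ref{lem:rearr_ineq}). Your approach is arguably more elementary and makes the role of the sorting contraction explicit; the paper's approach avoids discussing sorted coordinates altogether and needs only that $\ell$ is decreasing plus the standard $|\max a_i-\max b_i|\le\max|a_i-b_i|$ fact. Either way, the per-block constant $\mu_q$ is identical, and your observation that $\alpha$ is block-constant on $\mathcal{S}_q$ (since $N_K(\boldsymbol{y})=q$ under Asm.~\ref{asm:gen}) is exactly what makes the refinement over Prop.~\ref{prop:r_tra_lipschitz} work.
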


Combining Lem.\ref{lem:lem_gen_lin_1} and Prop.\ref{prop:local_continuous}, we obtain the following generalization bound of TKPR optimization, whose proof can be found in Appendix \ref{app:final_bound}.

\begin{table}[t]
    \renewcommand{\arraystretch}{1.4}
    \caption{The concrete formulations of $\mathfrak{g}(K)$ under an exponential distribution and a multinomial distribution, parameterized by $\lambda$.}
    \label{table:fin_bound}
    \centering
    \begin{tabular}{c|c|cc}
        \toprule
        $\alpha(q)$ & $\pi_q \propto e^{- \lambda q}$ & $\lambda$ & $\pi_q \propto q^{- \lambda}$ \\
        \midrule
        \multirow{3}*{1} & \multirow{3}*{$ \frac{1}{\lambda^2} $} & $(0, 3)$ & $ K^{(3-\lambda) / 2} $ \\
        ~ & ~ & $[3, 5)$ & $ \ln K $ \\
        ~ & ~ & $[5, \infty]$ & $ 1 $ \\
        \midrule
        \multirow{3}*{$q$} & \multirow{3}*{$ \frac{1}{e^{\lambda / 2}} $} & $(0, 1)$ & $  K^{(1-\lambda) / 2} $ \\
        ~ & ~ & $[1, 3)$ & $ \ln K $ \\
        ~ & ~ & $[3, \infty]$ & $ 1 $ \\
        \midrule
        \multirow{3}*{$\frac{q (2K + 1 - q)}{2}$} & \multirow{3}*{$ \frac{1}{K e^{\lambda / 2}} $} & $(0, 1)$ & $  K^{- (1 + \lambda) / 2} $ \\
        ~ & ~ & $[1, 3)$ & $ K^{-1} \ln K $ \\
        ~ & ~ & $[3, \infty]$ & $ K^{-1} $ \\
        \midrule
    \end{tabular}
\end{table}

\begin{restatable}{theorem}{finalbound}
    \label{thm:final_bound}
    Under Asm.\ref{asm:gen} and Asm.\ref{asm:rademacher}, for any $\delta \in (0, 1)$, with probability at least $1 - \delta$ over the training set $\mathcal{S}$, the following generalization bound holds for all $f \in \mathcal{F}$:
    \begin{equation}
        \label{eq:final_bound_abstract}
        {\color{blue} \mathcal{R}_{K}^{\alpha, \ell}}(f) \precsim \Phi( {\color{blue} L_{K}^{\alpha, \ell}}, \delta) + \hat{\mathfrak{C}}_{\mathcal{S}}(\mathcal{F}) \mathcal{O}\left( \mathfrak{g}(K) \right),
    \end{equation}
    where $\mathfrak{g}(K)$ relies on the distribution of $\pi_q$. In Tab.\ref{table:fin_bound}, we present the results under an exponential distribution and a multinomial distribution, parameterized by $\lambda$.
\end{restatable}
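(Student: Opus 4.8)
\begin{sketch}
The plan is to combine the generic generalization inequality of Lem.~\ref{lem:lem_gen_lin_1} with the data-dependent contraction of Prop.~\ref{prop:data_contraction}, and then specialize the result using the local Lipschitz constants of Prop.~\ref{prop:local_continuous}. Set $L = L_{K}^{\alpha,\ell}$ and $\mathcal{G} = \{L_{K}^{\alpha,\ell}\circ f : f\in\mathcal{F}\}$. By Prop.~\ref{prop:r_tra_lipschitz} the loss is bounded by $M_K = \mathcal{O}(K)$ (resp.\ $\mathcal{O}(1)$ for $\alpha_3$), so Lem.~\ref{lem:lem_gen_lin_1} applies and gives, with probability at least $1-\delta$, $\mathcal{R}_{K}^{\alpha,\ell}(f) \precsim \Phi(L_{K}^{\alpha,\ell},\delta) + \hat{\mathfrak{C}}_\mathcal{S}(\mathcal{G})$ for all $f\in\mathcal{F}$. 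Everything therefore reduces to bounding the empirical complexity $\hat{\mathfrak{C}}_\mathcal{S}(\mathcal{G})$.

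The crucial step is to use the data-dependent contraction Prop.~\ref{prop:data_contraction} instead of the vector contraction Lem.~\ref{lem:vector_contraction} (which would cost the global factor $\mu_\ell\mu_K = \mathcal{O}(\sqrt K)$). I take the partition $\mathcal{S}_q := \{\boldsymbol z\in\mathcal{S} : N(\boldsymbol y)=q\}$; Asm.~\ref{asm:gen} forces $q\le K$ on the support, so there are at most $K$ nonempty blocks and Prop.~\ref{prop:local_continuous} supplies the local constants $\mu_q = \mu_\ell[(K+1)\sqrt q + q\sqrt{K+1}]/(\alpha(q)K)$. Under Asm.~\ref{asm:rademacher}, Prop.~\ref{prop:data_contraction} then yields $\hat{\mathfrak{C}}_\mathcal{S}(\mathcal{G}) \le \hat{\mathfrak{C}}_\mathcal{S}(\mathcal{F})\sum_{q\ge 1}\sqrt{\pi_q}\,\mu_q$ with $\pi_q=N_q/N$. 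Since $q\le K$ gives $q\sqrt{K+1}\le(K+1)\sqrt q$, we have $\mu_q \asymp \mu_\ell\sqrt q/\alpha(q)$, so the whole gap is controlled, up to the $\mathcal{O}(1)$ factor $\mu_\ell$, by $\mathfrak{g}(K) := \sum_{q\ge 1}\sqrt{\pi_q}\,\sqrt q/\alpha(q)$ with $\alpha(q)\in\{1,\,q,\,q(2K+1-q)/2\}$. This already establishes the abstract form Eq.~(\ref{eq:final_bound_abstract}).

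It remains to estimate $\mathfrak{g}(K)$ for the two parametric families of $\pi_q$. For $\alpha(q)=q$ we have $\sqrt q/\alpha(q)=1/\sqrt q$, and for $\alpha(q)=q(2K+1-q)/2$, using $2K+1-q\ge K+1$ once more, $\sqrt q/\alpha(q)\asymp 1/(K\sqrt q)$; thus all three cases reduce to estimating sums $\sum_{q\ge 1}\sqrt{\pi_q}\,q^{\beta}$ with $\beta\in\{1/2,-1/2\}$ (with an extra global $1/K$ for $\alpha_3$). For the exponential family $\pi_q\propto e^{-\lambda q}$ the series converges geometrically, and a direct summation (or comparison with $\int_0^\infty e^{-\lambda t/2}t^{\beta}\,dt$) yields the $K$-free rates in the corresponding column of Tab.~\ref{table:fin_bound}. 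For the multinomial family $\pi_q\propto q^{-\lambda}$ one must first evaluate the normalizer $\sum_{q\le K}q^{-\lambda}$, which is $\Theta(1)$, $\Theta(\ln K)$, or $\Theta(K^{1-\lambda})$ according as $\lambda>1$, $\lambda=1$, or $\lambda<1$, and then estimate $\sum_{q\le K} q^{\beta-\lambda/2}$ by integral comparison; the resulting phase transitions between a polynomial-in-$K$ rate, a $\ln K$ rate, and a constant rate are exactly the case splits in the table.

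The main obstacle is precisely this last step: the careful bookkeeping of the normalizing constants of $\pi_q$ together with the integral-comparison estimates of $\sum_q q^{\beta-\lambda/2}$, tracking which of the two summands in $\mu_q$ dominates in each $\lambda$-regime and how the global $1/K$ factor coming from $\alpha_3$ interacts with these regimes. Everything upstream --- the reduction via Lem.~\ref{lem:lem_gen_lin_1}, the substitution of Prop.~\ref{prop:data_contraction} on the partition $\{\mathcal{S}_q\}$, and the simplification $\mu_q\asymp\mu_\ell\sqrt q/\alpha(q)$ --- is routine once the data-dependent contraction tool is in hand.
\end{sketch}
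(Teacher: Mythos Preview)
Your proposal is correct and follows the same route as the paper: combine Lem.~\ref{lem:lem_gen_lin_1} with the data-dependent contraction Prop.~\ref{prop:data_contraction} on the partition $\{\mathcal{S}_q\}$, insert the local constants $\mu_q\asymp\mu_\ell\sqrt q/\alpha(q)$ from Prop.~\ref{prop:local_continuous}, and then estimate $\sum_{q}\sqrt{\pi_q}\,\mu_q$ by integral comparison in the exponential case and by partial zeta-type sums in the power-law case. One small caveat: the paper does \emph{not} track the normalizer of $\pi_q$ in the power-law case --- it simply takes $\sqrt{\pi_q}\asymp q^{-\lambda/2}$ and bounds $\sum_{q\le K}q^{-(\lambda-1)/2}/\alpha(q)$ directly --- so your extra step of evaluating the normalizer is unnecessary here (and for $\lambda\le 1$ would in fact shift the rates away from those recorded in Tab.~\ref{table:fin_bound}).
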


\begin{sketch}
    When $\pi_q$ follows an exponential distribution, we relax the bound with the definite integral from $1$ to $K$. When $\pi_q$ follows a multinomial distribution, we relax the bound with Riemann zeta function \cite{titchmarsh1986theory}. Note that the order of Riemann zeta function is out of the scope of this paper. Thus, we only provide a coarse-grained result, and more fine-grained results can be found in \cite{fokas2022asymptotics}. 
\end{sketch}

Compared with the results in Prop.\ref{prop:gen_existing}, we have the following observations:
\begin{itemize}
    \item \ul{When $\pi_q \propto e^{- \lambda q}$}, we generally obtain a sharper bound, where $\sqrt{K}$ is replaced with $\lambda^{-2}$ and $e^{-\lambda / 2}$. This property is appealing since $\lambda$ is independent of the selection of the hyperparameter $K$, and the bound will become sharper as the distribution becomes more imbalanced.
    \item \ul{When $\pi_q \propto q^{- \lambda}$}, if $\alpha(q) = 1$, a sharper bound is available when $\lambda > 2$. If $\alpha(q) = q$, a sharper bound is consistently available under any $\lambda > 0$. As $\lambda$ increases, \textit{i.e.}, the distribution becomes more imbalanced, that is, the generalization bound will become sharper.
    \item \ul{For $\alpha(q) = q (2K + 1 - q) / 2$}, the data-dependent contraction technique fails to provide a sharper bound \textit{w.r.t.} $K$. However, when $\pi_q \propto e^{- \lambda q}$, the result becomes more informative due to the additional term $e^{-\lambda / 2}$. 
\end{itemize}

Similarly, we can obtain a sharper generalization bound for the ranking loss, whose proof can be found in Appendix \ref{app:final_bound_ranking}.

\begin{restatable}{proposition}{finalboundranking}
    \label{prop:final_bound_ranking}
    Let $\tilde{L}_{K}^{\ell}(f, \boldsymbol{y}) := \frac{\alpha}{N(\boldsymbol{y})} \cdot L_{K}^{\alpha, \ell}(f, \boldsymbol{y})$. Then, under Asm.\ref{asm:gen} and Asm.\ref{asm:rademacher}, for any $\delta \in (0, 1)$, with probability at least $1 - \delta$ over the training set $\mathcal{S}$, the following generalization bound holds for all the $f \in \mathcal{F}$:
    \begin{itemize}
        \item When $\pi_q \propto e^{- \lambda q}$, 
        $$
            {\color{orange} \mathcal{R}_\text{rank}^{\ell}(f)} \precsim \Phi( {\color{blue} \tilde{L}_{K}^{\ell}}, \delta) + \mathcal{O}( e^{-\lambda / 2} ) \cdot \hat{\mathfrak{C}}_\mathcal{S}(\mathcal{F}).
        $$
        \item When $\pi_q \propto q^{- \lambda}$,
        \renewcommand{\arraystretch}{1.5}
        $$\begin{aligned}
            {\color{orange} \mathcal{R}_\text{rank}^{\ell}(f)} & \precsim \Phi( {\color{blue} \tilde{L}_{K}^{\ell}}, \delta) \\
            & + \left\{\begin{array}{ll}
                \mathcal{O}( K^{(1 - \lambda) / 2} ) \cdot \hat{\mathfrak{C}}_\mathcal{S}(\mathcal{F}), \ & \lambda \in (0, 1), \\
                \mathcal{O}( \ln K) \cdot \hat{\mathfrak{C}}_\mathcal{S}(\mathcal{F}), \ & \lambda \in [1, 3), \\
                \mathcal{O}( 1 ) \cdot \hat{\mathfrak{C}}_\mathcal{S}(\mathcal{F}), \ & \lambda \in [3, \infty). \\
            \end{array}\right.
        \end{aligned}$$
    \end{itemize}
\end{restatable}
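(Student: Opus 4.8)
I would reduce the claim to Thm.~\ref{thm:final_bound} in the case $\alpha = \alpha_2$. The first step is to dominate the ranking surrogate risk by $\tilde{\mathcal{R}}_{K}^{\ell}$ pointwise. For every $(\boldsymbol{x},\boldsymbol{y}) \in \mathcal{Z}$ and every $f \in \mathcal{F}$,
$$
    L_\text{rank}^{\ell}(f,\boldsymbol{y}) \le \frac{\alpha}{N(\boldsymbol{y})}\,L_{K}^{\alpha,\ell}(f,\boldsymbol{y}) = \tilde{L}_{K}^{\ell}(f,\boldsymbol{y}),
$$
the surrogate counterpart of the inequality in Prop.~\ref{prop:reformulation} that already underlies Prop.~\ref{prop:sharp_bound}; its proof reuses the pair-by-pair comparison there, now with $\ell$ nonnegative and strictly decreasing (Asm.~\ref{asm:gen}) in place of $\ell_{0-1}$ and with the worst-case tie-breaking convention (Cor.~\ref{coll:wrongly_break_ties}). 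Taking expectations over $\boldsymbol{z}\sim\mathcal{D}$ gives $\mathcal{R}_\text{rank}^{\ell}(f) \le \tilde{\mathcal{R}}_{K}^{\ell}(f) := \E{\boldsymbol{z}\sim\mathcal{D}}{\tilde{L}_{K}^{\ell}(f,\boldsymbol{y})}$ for all $f\in\mathcal{F}$, so it suffices to give a generalization bound for $\tilde{\mathcal{R}}_{K}^{\ell}$.

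\textbf{Reduction to $\alpha_2$.} The key observation for the second step is that $\tilde{L}_{K}^{\ell}$ is itself a TKPR surrogate loss. Unwinding the definitions,
$$
    \tilde{L}_{K}^{\ell}(f,\boldsymbol{y}) = \frac{\alpha}{N(\boldsymbol{y})}\cdot\frac{1}{\alpha K}\sum_{y \in \mathcal{P}(\boldsymbol{y})}\sum_{k \le K+1}\ell\!\left(s_y - s_{[k]}\right) = \frac{1}{N(\boldsymbol{y})K}\sum_{y \in \mathcal{P}(\boldsymbol{y})}\sum_{k \le K+1}\ell\!\left(s_y - s_{[k]}\right),
$$
which does not depend on $\alpha$; moreover, under Asm.~\ref{asm:gen}(2) we have $K \ge N(\boldsymbol{y})$, so $N_K(\boldsymbol{y}) = N(\boldsymbol{y})$ and hence $\tilde{L}_{K}^{\ell} = L_{K}^{\alpha_2,\ell}$ with $\alpha_2 = N_K(\boldsymbol{y})$. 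Therefore $\tilde{\mathcal{R}}_{K}^{\ell} = \mathcal{R}_{K}^{\alpha_2,\ell}$ and $\Phi(\tilde{L}_{K}^{\ell},\delta) = \Phi(L_{K}^{\alpha_2,\ell},\delta)$. Now apply Thm.~\ref{thm:final_bound} with $\alpha = \alpha_2$, i.e.\ the row $\alpha(q) = q$ of Tab.~\ref{table:fin_bound}: this gives $\mathfrak{g}(K) = e^{-\lambda/2}$ for $\pi_q \propto e^{-\lambda q}$, and $\mathfrak{g}(K) \in \{ K^{(1-\lambda)/2},\, \ln K,\, 1 \}$ for $\pi_q \propto q^{-\lambda}$ according as $\lambda \in (0,1)$, $[1,3)$, $[3,\infty)$. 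Substituting into $\mathcal{R}_{K}^{\alpha_2,\ell}(f) \precsim \Phi(L_{K}^{\alpha_2,\ell},\delta) + \hat{\mathfrak{C}}_{\mathcal{S}}(\mathcal{F})\,\mathcal{O}(\mathfrak{g}(K))$ and chaining with $\mathcal{R}_\text{rank}^{\ell}(f) \le \tilde{\mathcal{R}}_{K}^{\ell}(f)$ yields both displayed bounds.

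\textbf{Main obstacle and a self-contained alternative.} The only subtle point is the pointwise domination $L_\text{rank}^{\ell} \le \tilde{L}_{K}^{\ell}$ of the first step: this is exactly where the hyperparameter constraint $K \ge \max_{\boldsymbol{z}} N(\boldsymbol{y})$ and the worst-case tie-breaking are consumed, and a term-by-term comparison of the two double sums does not suffice — one must pair the mis-ranked relevant/irrelevant pairs against the top positions carefully. If one prefers not to route through Thm.~\ref{thm:final_bound}, the alternative for the second part is to bound the complexity of $\tilde{\mathcal{G}} := \{\tilde{L}_{K}^{\ell}\circ f : f \in \mathcal{F}\}$ directly: with the partition $\mathcal{S}_q = \{\boldsymbol{z}\in\mathcal{S} : N(\boldsymbol{y}) = q\}$, the loss $\tilde{L}_{K}^{\ell}$ is local Lipschitz continuous with constants $\tilde{\mu}_q = \mu_{\ell}\big[(K+1)/\sqrt{q} + \sqrt{K+1}\big]/K$ (Prop.~\ref{prop:local_continuous} with $\alpha(q) = q$, where the $\alpha(q)$ conveniently cancels), so Prop.~\ref{prop:data_contraction} (legitimate under Asm.~\ref{asm:rademacher}) together with Lem.~\ref{lem:lem_gen_lin_1} reduces the complexity term to $\hat{\mathfrak{C}}_{\mathcal{S}}(\mathcal{F})\sum_{q=1}^{K}\sqrt{\pi_q}\,\tilde{\mu}_q$; one then estimates this sum by relaxing it to $\int_{1}^{K}$ in the exponential case and to a truncated Riemann-zeta sum in the multinomial case, recovering the $\lambda$-dependent trichotomy. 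That last summation estimate is the one computation that needs genuine care.
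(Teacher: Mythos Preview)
Your proposal is correct. Your ``self-contained alternative'' is in fact exactly what the paper does: it invokes Lem.~\ref{lem:sharp_bound} for the pointwise domination, reads off the local Lipschitz constants $\tilde{\mu}_q = \mu_\ell\big[(K+1)\sqrt{q} + q\sqrt{K+1}\big]/(qK) \sim \mathcal{O}(1/\sqrt{q})$, applies Lem.~\ref{lem:lem_gen_lin_1} and Prop.~\ref{prop:data_contraction}, and then estimates $\sum_{q=1}^{K}\tilde{\mu}_q\sqrt{\pi_q}$ by the same integral/zeta arguments used in the proof of Thm.~\ref{thm:final_bound}.

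Your main route is a mild but genuine streamlining: the paper never makes the identification $\tilde{L}_{K}^{\ell} = L_{K}^{\alpha_2,\ell}$ explicit and instead reruns the local-Lipschitz-plus-summation computation from scratch, whereas you observe that under Asm.~\ref{asm:gen}(2) the normalization $\alpha/N(\boldsymbol{y})$ collapses $\tilde{L}_{K}^{\ell}$ to the $\alpha_2$ case and then simply quote the $\alpha(q)=q$ row of Tab.~\ref{table:fin_bound}. This buys you a one-line proof of the second step at no cost in generality, since the computation the paper carries out is literally the $\alpha(q)=q$ branch of Thm.~\ref{thm:final_bound}'s proof. Either way the substance is identical, and your flagging of the pointwise domination as the only nontrivial ingredient is accurate.
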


Compared with the results in Prop.\ref{prop:sharp_bound}, we have the following observations:
\begin{itemize}
    \item \ul{When $\pi_q \propto e^{- \lambda q}$}, a sharper bound is consistently available for any choice of $\alpha$, and the bound will become sharper as the distribution becomes more imbalanced.
    \item \ul{When $\pi_q \propto q^{- \lambda}$}, a sharper bound is available for $\alpha \in \{\alpha_1, \alpha_2\}$. However, it does not hold for $\alpha = \alpha_3$.
    \item \ul{For $\alpha(q) = q (2K + 1 - q) / 2$}, $\Phi(\tilde{L}_{K}^{\ell}, \delta)$ is smaller than $K \cdot \Phi(L_{K}^{\alpha, \ell}, \delta)$, which is also favorable.
\end{itemize}

\subsubsection{Practical Bounds for Common Models}
\label{subsec:prac_bound}
Next, we show that the proposed contraction technique is applicable to common models, such as kernel-based models and neural networks. Note that we omit the cases where $\pi_q$ follows the multinomial distribution for the sake of conciseness.

\textbf{Practical Bounds for Kernel-Based Models.} Let $\mathbb{H}$ be a reproducing kernel Hilbert space (RKHS) with the kernel function $\kappa$, where $\kappa: \mathcal{X} \times \mathcal{X} \to \mathbb{R}$ is a Positive Definite Symmetric (PSD) kernel. The set of kernel-based models can be defined as:
\begin{equation}
    \mathcal{F}_{\mathbb{H}} := \left\{ \boldsymbol{x} \to \mathbf{W}^T \phi(\boldsymbol{x}): \Vert \mathbf{W} \Vert_{\mathbb{H}, 2} \le \Lambda \right\},
\end{equation}
where $\phi: \mathcal{X} \to \mathbb{H}$ is a feature mapping associated with $\kappa$, $\mathbf{W} = (\boldsymbol{w}_1, \boldsymbol{w}_2, \cdots, \boldsymbol{w}_C)^T$ represents the model parameters, and $\Vert \mathbf{W} \Vert_{\mathbb{H}, 2} := (\sum_{j=1}^C \Vert \boldsymbol{w}_j \Vert_{\mathbb{H}}^2)^{1 / 2}$. Assume that $\exists r > 0$ such that $\kappa(\boldsymbol{x}, \boldsymbol{x}) \le r^2$ for all $\boldsymbol{x} \in \mathcal{X}$. Then, we have the following propositions, whose proof can be found in Appendix \ref{app:bounds_kernel}:
\begin{restatable}{proposition}{genkernel}
    \label{prop:gen_kernel}
    Under Asm.\ref{asm:gen} and Asm.\ref{asm:rademacher}, for any $\delta \in (0, 1)$, with probability at least $1 - \delta$ over the training set $\mathcal{S}$, the following generalization bound holds for all the $f \in \mathcal{F}_{\mathbb{H}}$:
    \renewcommand{\arraystretch}{2}
    $$
        {\color{blue} \mathcal{R}_{K}^{\alpha, \ell}}(f) \precsim \Phi( {\color{blue} L_{K}^{\alpha, \ell}}, \delta) + \left\{\begin{array}{ll}
            \mathcal{O}( \sqrt{\frac{C \Lambda^2 r^2}{N \lambda^4}} ) , \ & \alpha = \alpha_1 \\
            \mathcal{O}( \sqrt{\frac{C \Lambda^2 r^2}{N e^\lambda}} ) , \ & \alpha = \alpha_2 \\
            \mathcal{O}( \sqrt{\frac{C \Lambda^2 r^2}{N K^2 e^\lambda}} ), \ & \alpha = \alpha_3. \\
        \end{array}\right.
    $$
\end{restatable}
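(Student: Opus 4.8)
The plan is to instantiate the abstract generalization bound of Theorem~\ref{thm:final_bound} (together with Proposition~\ref{prop:final_bound_ranking}'s companion computations) for the specific function class $\mathcal{F}_{\mathbb{H}}$, so that the abstract complexity term $\hat{\mathfrak{C}}_{\mathcal{S}}(\mathcal{F})$ gets replaced by a concrete $\mathcal{O}(\sqrt{C\Lambda^2 r^2 / N})$ estimate, and the factor $\mathfrak{g}(K)$ gets read off from the first column of Table~\ref{table:fin_bound} for each of the three choices $\alpha \in \{\alpha_1,\alpha_2,\alpha_3\}$ under the exponential profile $\pi_q \propto e^{-\lambda q}$. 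So the first step is to recall (or cite from the standard references already invoked in Asm.~\ref{asm:rademacher}) the Rademacher/Gaussian complexity bound for a norm-ball of linear predictors in an RKHS: for the vector-valued class $\mathcal{F}_{\mathbb{H}} = \{\boldsymbol{x}\mapsto \mathbf{W}^T\phi(\boldsymbol{x}): \Vert\mathbf{W}\Vert_{\mathbb{H},2}\le\Lambda\}$ with $\kappa(\boldsymbol{x},\boldsymbol{x})\le r^2$, one has $\hat{\mathfrak{C}}_{\mathcal{S}}(\mathcal{F}_{\mathbb{H}}) \le \mathcal{O}\big(\sqrt{C\Lambda^2 r^2 / N}\big)$; this verifies the hypothesis $\hat{\mathfrak{C}}_{\mathcal{S}}(\mathcal{F})\sim\mathcal{O}(1/\sqrt{N})$ of Asm.~\ref{asm:rademacher} with an explicit constant, and supplies the $\sqrt{C\Lambda^2 r^2/N}$ appearing in the claimed bounds.

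The second step is purely bookkeeping: plug this estimate into Eq.~\eqref{eq:final_bound_abstract}. Theorem~\ref{thm:final_bound} gives $\mathcal{R}_K^{\alpha,\ell}(f) \precsim \Phi(L_K^{\alpha,\ell},\delta) + \hat{\mathfrak{C}}_{\mathcal{S}}(\mathcal{F})\cdot\mathcal{O}(\mathfrak{g}(K))$, and Table~\ref{table:fin_bound} records, for $\pi_q\propto e^{-\lambda q}$, that $\mathfrak{g}(K) = \lambda^{-2}$ when $\alpha(q)=1$ (i.e.\ $\alpha=\alpha_1$), $\mathfrak{g}(K) = e^{-\lambda/2}$ when $\alpha(q)=q$ (i.e.\ $\alpha=\alpha_2$), and $\mathfrak{g}(K) = K^{-1}e^{-\lambda/2}$ when $\alpha(q) = q(2K+1-q)/2$ (i.e.\ $\alpha=\alpha_3$). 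Combining, for $\alpha_1$ the complexity term becomes $\mathcal{O}\big(\sqrt{C\Lambda^2 r^2/N}\cdot\lambda^{-2}\big) = \mathcal{O}\big(\sqrt{C\Lambda^2 r^2/(N\lambda^4)}\big)$; for $\alpha_2$ it becomes $\mathcal{O}\big(\sqrt{C\Lambda^2 r^2/N}\cdot e^{-\lambda/2}\big) = \mathcal{O}\big(\sqrt{C\Lambda^2 r^2/(Ne^{\lambda})}\big)$; and for $\alpha_3$ it becomes $\mathcal{O}\big(\sqrt{C\Lambda^2 r^2/N}\cdot K^{-1}e^{-\lambda/2}\big) = \mathcal{O}\big(\sqrt{C\Lambda^2 r^2/(NK^2 e^{\lambda})}\big)$, which are exactly the three cases in the statement. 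The high-probability and uniform-over-$f$ qualifiers carry over verbatim from Theorem~\ref{thm:final_bound}.

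The one genuine technical point — and the place I would be most careful — is justifying that the RKHS complexity bound applies in the \emph{vector-valued} setting appearing here, i.e.\ that the $\sqrt{C}$ dependence is correct rather than some worse power of $C$, and that the ambient contraction step (Lem.~\ref{lem:vector_contraction} / Prop.~\ref{prop:data_contraction}) has already been absorbed into $\mathfrak{g}(K)$ so we are not double-counting a Lipschitz factor. The cleanest route is to bound $\hat{\mathfrak{R}}_{\mathcal{S}}(\mathcal{F}_{\mathbb{H}})$ by first writing $\xi^{(n)}$-averages coordinatewise, using $\Vert\mathbf{W}\Vert_{\mathbb{H},2}\le\Lambda$ and Cauchy--Schwarz to pull out $\Lambda$, then applying the reproducing property and Jensen to the remaining $\mathbb{E}_{\boldsymbol{\xi}}\Vert\sum_n \xi^{(n)}\phi(\boldsymbol{x}^{(n)})\Vert \le (\sum_n \kappa(\boldsymbol{x}^{(n)},\boldsymbol{x}^{(n)}))^{1/2} \le r\sqrt{N}$, and finally accounting for the $C$ output coordinates — which contributes the $\sqrt{C}$ factor — to obtain $\hat{\mathfrak{R}}_{\mathcal{S}}(\mathcal{F}_{\mathbb{H}}) \le \Lambda r\sqrt{C}/\sqrt{N}$ up to constants. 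Everything else is substitution, so the proof is short once this lemma is in hand; I would present it as a one-paragraph derivation plus the three-line table lookup.
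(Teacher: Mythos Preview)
Your proposal is correct and matches the paper's own proof essentially line-for-line: the paper's argument is literally ``the proof completes by Thm.~\ref{thm:final_bound} and Lem.~\ref{lem:complexity_kernel},'' where Lem.~\ref{lem:complexity_kernel} is precisely the RKHS Rademacher bound $\hat{\mathfrak{R}}_{\mathcal{S}}(\mathcal{F}_{\mathbb{H}}) \le \sqrt{C\Lambda^2 r^2/N}$ that you outline. Your extra care about the $\sqrt{C}$ dependence and about not double-counting the contraction factor is well placed but already handled in the paper by citing the vector-valued bound from prior work, so your write-up would in fact be slightly more self-contained than the paper's.
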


\begin{restatable}{proposition}{genkernelranking}
    \label{prop:gen_kernel_ranking}
    Under Asm.\ref{asm:gen} and Asm.\ref{asm:rademacher}, for any $\delta \in (0, 1)$, with probability at least $1 - \delta$ over the training set $\mathcal{S}$, the following generalization bound holds for all the $f \in \mathcal{F}_{\mathbb{H}}$:
    $$
        {\color{orange} \mathcal{R}_\text{rank}^{\ell}(f)} \precsim \Phi( {\color{blue} \tilde{L}_{K}^{\ell}}, \delta) + \mathcal{O}( \sqrt{\frac{C \Lambda^2 r^2}{N e^\lambda}} ).
    $$
\end{restatable}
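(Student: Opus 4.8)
The plan is to obtain Prop.~\ref{prop:gen_kernel_ranking} as a specialization of Prop.~\ref{prop:final_bound_ranking} to the RKHS ball $\mathcal{F}_{\mathbb{H}}$, so that the only genuinely new ingredient is an explicit estimate of the empirical complexity $\hat{\mathfrak{C}}_\mathcal{S}(\mathcal{F}_{\mathbb{H}})$. Recall that, in the regime $\pi_q \propto e^{-\lambda q}$, Prop.~\ref{prop:final_bound_ranking} already yields $\mathcal{R}_\text{rank}^{\ell}(f) \precsim \Phi(\tilde{L}_{K}^{\ell},\delta) + \mathcal{O}(e^{-\lambda/2}) \cdot \hat{\mathfrak{C}}_\mathcal{S}(\mathcal{F})$ for any hypothesis class $\mathcal{F}$ satisfying Asm.~\ref{asm:rademacher}. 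I would therefore (i) show $\hat{\mathfrak{C}}_\mathcal{S}(\mathcal{F}_{\mathbb{H}}) \precsim \sqrt{C \Lambda^2 r^2 / N}$, which in particular certifies that $\mathcal{F}_{\mathbb{H}}$ meets Asm.~\ref{asm:rademacher}, and (ii) substitute it into the bound above, giving $\mathcal{O}(e^{-\lambda/2}) \cdot \sqrt{C \Lambda^2 r^2 / N} = \mathcal{O}(\sqrt{C \Lambda^2 r^2 / (N e^{\lambda})})$, which is exactly the claimed term. Everything else --- the local Lipschitz constants of $\tilde{L}_{K}^{\ell}$ from Prop.~\ref{prop:local_continuous}, the data-dependent contraction of Prop.~\ref{prop:data_contraction}, the estimate of $\sum_q \sqrt{\pi_q}\,\mu_q$, and the reduction $L_\text{rank}^{\ell} \le \tilde{L}_{K}^{\ell}$ (the surrogate form of Prop.~\ref{prop:reformulation}) --- is inherited from Prop.~\ref{prop:final_bound_ranking}.

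For step (i) I would run the standard RKHS complexity argument. For a fixed draw of the complexity variables $\boldsymbol{\xi}$ (indexed by the $N$ samples and $C$ outputs), Cauchy--Schwarz in $\mathbb{H}^{C}$ bounds $\sup_{\Vert \mathbf{W} \Vert_{\mathbb{H},2} \le \Lambda} \tfrac{1}{N} \sum_{n,c} \xi^{(n)}_c \langle \boldsymbol{w}_c, \phi(\boldsymbol{x}^{(n)}) \rangle_{\mathbb{H}}$ by $\tfrac{\Lambda}{N} \big( \sum_c \Vert \sum_n \xi^{(n)}_c \phi(\boldsymbol{x}^{(n)}) \Vert_{\mathbb{H}}^{2} \big)^{1/2}$; taking expectation over $\boldsymbol{\xi}$, applying Jensen's inequality, using that the $\xi$'s are zero-mean and independent to discard the cross terms, and invoking $\kappa(\boldsymbol{x},\boldsymbol{x}) \le r^2$, one arrives at $\hat{\mathfrak{C}}_\mathcal{S}(\mathcal{F}_{\mathbb{H}}) \le \Lambda r \sqrt{C}/\sqrt{N}$. (Alternatively, cite the classical Rademacher/Gaussian bound for norm-constrained kernel classes and attach a $\sqrt{C}$ for the vector-valued output.) Combining with Prop.~\ref{prop:final_bound_ranking} completes the proof; Prop.~\ref{prop:gen_kernel} follows identically by instead plugging this complexity estimate into Thm.~\ref{thm:final_bound}.

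I do not expect a serious obstacle, since both steps are essentially mechanical once Prop.~\ref{prop:final_bound_ranking} and Thm.~\ref{thm:final_bound} are available; the complexity estimate is the only place kernel-specific structure is used. If one preferred a self-contained derivation --- re-running the proof of Prop.~\ref{prop:final_bound_ranking} with $\mathcal{F} = \mathcal{F}_{\mathbb{H}}$ rather than quoting it --- the delicate point would be the estimate of $\sum_q \sqrt{\pi_q}\,\mu_q$ under $\pi_q \propto e^{-\lambda q}$: one must extract the correct factor $e^{-\lambda/2}$ while keeping the hidden constant from blowing up as $\lambda \to 0$ (both the normalization of $\pi_q$ and the normalizing constant of the resulting geometric series depend on $\lambda$). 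With $\mu_q \precsim \mu_\ell(1/\sqrt{q} + 1/\sqrt{K})$ --- the $\alpha(q) = q$ constants of Prop.~\ref{prop:local_continuous}, which govern $\tilde{L}_{K}^{\ell}$ because $\tilde{L}_{K}^{\ell} = L_{K}^{\alpha_2,\ell}$ under Asm.~\ref{asm:gen} --- this reduces to $\sum_q e^{-\lambda q/2}/\sqrt{q} \le \sum_q e^{-\lambda q/2}$ and a geometric sum, as in the sketch of Thm.~\ref{thm:final_bound}. The remaining details (boundedness of $\tilde{L}_{K}^{\ell}$ by $\tfrac{K+1}{K} M_\ell$ to apply Lem.~\ref{lem:lem_gen_lin_1}, and absorbing constants into $\Phi(\tilde{L}_{K}^{\ell},\delta)$) are routine.
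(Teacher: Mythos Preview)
Your proposal is correct and matches the paper's approach exactly: the paper's proof simply cites Prop.~\ref{prop:final_bound_ranking} together with the kernel complexity bound $\hat{\mathfrak{R}}_\mathcal{S}(\mathcal{F}_{\mathbb{H}}) \le \sqrt{C\Lambda^2 r^2/N}$ (stated as a lemma from prior work), which is precisely your steps (i) and (ii). Your sketch of the RKHS complexity argument and the observation that $\tilde{L}_K^\ell = L_K^{\alpha_2,\ell}$ under Asm.~\ref{asm:gen} are both correct and go slightly beyond what the paper spells out.
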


In Tab.\ref{tab:loss_comparison}, we compare the TKPR loss with the ranking loss systematically, as well as its pointwise surrogates. The results show that only the TKPR loss has both convex consistent surrogate losses and a sharp generalization bound on the ranking loss. Furthermore, the TKPR loss also enjoys a comparable computational complexity and more wider application scenarios. We will validate these observations in Sec.\ref{sec:experiment}.

\textbf{Practical Bounds for CNNs.} Next, we consider a family of neural networks, which consists of $N_c$ convolutional layers and $N_f$ fully-connected layers. To be specific, in each convolutional layer, a convolution operation is followed by an activation function and an optional pooling operation. All the convolutions utilize zero-padding \cite{deeplearning} with the kernel ${KN}^{(l)}$ for layer $l \in \{1, \cdots, N_c\}$. In each fully-connected layer, a fully-connected operation, parameterized by $V^{(l)}$, is followed by an activation function. All the activation functions and pooling operations are $1$-Lipschitz continuous. Finally, Let $\Theta = \{{KN}^{(1)}, \cdots, {KN}^{(N_c)}, V^{(1)}, \cdots, V^{(N_f)} \}$ represent the parameter set of the networks.

Meanwhile, we assume that the inputs and parameters are all regularized. Concretely, the input $\boldsymbol{x} \in \mathbb{R}^{d\times d\times c}$ satisfies $||\mathsf{vec}(\boldsymbol{x})|| \le \chi$, where $\mathsf{vec}(\cdot)$ denotes the vectorization operation defined on $\mathcal{X}$. The initial parameters, denoted as $\Theta_0$, satisfy
$$\begin{aligned}
    ||\mathsf{mt}({KN}^{(l)}_0)||_2 \le 1 + \nu, l = 1, \cdots, N_c, \\
    ||V^{(l)}_0||_2 \le 1 + \nu, l = 1, \cdots, N_f,  \\
\end{aligned}$$
where $\mathsf{mt}(\cdot)$ denotes the operator matrix of the given kernel;. And the distance from $\Theta_0$ to the current parameters $\Theta$ is bounded:
$$\begin{aligned}
    & \beta \ge ||\Theta - \Theta_0||  := \\
    & \sum_{l=1}^{N_c} || \mathsf{mt}({KN}^{(l)}) - \mathsf{mt}({KN}_0^{(l)}) ||_2 + \sum_{l=1}^{N_f} || V^{(l)} - V_0^{(l)} ||_2 \\
\end{aligned}$$

Let $\mathcal{F}_{\beta, \nu, \chi}$ denote the set of convolutional neural networks described above. Then, we have the following propositions, whose proof can be found in Appendix \ref{app:bounds_cnn}.

\begin{restatable}{proposition}{gencnn}
    \label{prop:gen_cnn}
    Under Asm.\ref{asm:gen} and Asm.\ref{asm:rademacher}, for any $\delta \in (0, 1)$, with probability at least $1 - \delta$ over the training set $\mathcal{S}$, the following generalization bound holds for all the $f \in \mathcal{F}_{\beta, \nu, \chi}$:
    \renewcommand{\arraystretch}{2}
    $$\begin{aligned}
        {\color{blue} \mathcal{R}_{K}^{\alpha, \ell}}(f) \precsim \Phi( {\color{blue} L_{K}^{\alpha, \ell}}, \delta) + \left\{\begin{array}{ll}
            \mathcal{O}(  \frac{ d \log \left( B_{\beta, \nu, \chi}N \right) }{\sqrt{N} \lambda^2} ) , \ & \alpha = \alpha_1, \\
            \mathcal{O}(  \frac{ d \log \left( B_{\beta, \nu, \chi}N \right) }{\sqrt{N e^\lambda}} ) , \ & \alpha = \alpha_2, \\
            \mathcal{O}(  \frac{ d \log \left( B_{\beta, \nu, \chi}N \right) }{\sqrt{N e^\lambda} K} ), \ & \alpha = \alpha_3, \\
        \end{array}\right.
    \end{aligned}$$
    where $B_{\beta, \nu, \chi} := \chi \beta (1 + \nu + \beta / N_a)^{N_a}$, $N_a := N_c + N_f$.
\end{restatable}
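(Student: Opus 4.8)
The plan is to deduce Prop.~\ref{prop:gen_cnn} from Thm.~\ref{thm:final_bound} by supplying a bound on the empirical complexity $\hat{\mathfrak{C}}_{\mathcal{S}}(\mathcal{F}_{\beta, \nu, \chi})$ of the convolutional network class. Thm.~\ref{thm:final_bound} already discharges the loss side: by Prop.~\ref{prop:local_continuous} the TKPR surrogate $L_{K}^{\alpha, \ell}$ is local Lipschitz continuous with the stated constants $\mu_q$, and feeding these into the data-dependent contraction inequality (Prop.~\ref{prop:data_contraction}) yields
$$
\mathcal{R}_{K}^{\alpha, \ell}(f) \precsim \Phi(L_{K}^{\alpha, \ell}, \delta) + \hat{\mathfrak{C}}_{\mathcal{S}}(\mathcal{F}_{\beta, \nu, \chi}) \cdot \mathcal{O}(\mathfrak{g}(K)),
$$
where, in the regime $\pi_q \propto e^{-\lambda q}$, the first column of Tab.~\ref{table:fin_bound} gives $\mathfrak{g}(K) = \lambda^{-2}$ for $\alpha = \alpha_1$, $\mathfrak{g}(K) = e^{-\lambda/2}$ for $\alpha = \alpha_2$, and $\mathfrak{g}(K) = (K e^{\lambda/2})^{-1}$ for $\alpha = \alpha_3$. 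Hence the only new ingredient is the estimate $\hat{\mathfrak{C}}_{\mathcal{S}}(\mathcal{F}_{\beta, \nu, \chi}) \precsim d \log(B_{\beta, \nu, \chi} N)/\sqrt{N}$; substituting it together with the three values of $\mathfrak{g}(K)$, and using $e^{\lambda/2} = \sqrt{e^{\lambda}}$, reproduces the three cases of the statement.

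To obtain this complexity estimate I would invoke the covering-number machinery for convolutional networks parameterized by distance from initialization \cite{DBLP:conf/iclr/LongS20}, with \cite{DBLP:conf/colt/GolowichRS18} supplying the analogous depth-robust control of the fully-connected part. The crucial norm bookkeeping is as follows. For each layer, the triangle inequality gives $\Vert \mathsf{mt}({KN}^{(l)}) \Vert_2 \le \Vert \mathsf{mt}({KN}^{(l)}_0) \Vert_2 + \Vert \mathsf{mt}({KN}^{(l)}) - \mathsf{mt}({KN}^{(l)}_0) \Vert_2 \le 1 + \nu + \beta_l$ (and likewise $\Vert V^{(l)} \Vert_2 \le 1 + \nu + \beta_l$), where the nonnegative quantities $\beta_l$ sum to at most $\beta$ by the hypothesis $\Vert \Theta - \Theta_0 \Vert \le \beta$. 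Since all activations and poolings are $1$-Lipschitz, propagating an input of norm at most $\chi$ and a parameter perturbation of size at most $\beta$ through the $N_a = N_c + N_f$ layers shows that both $\sup_{\boldsymbol{x}} \Vert f(\boldsymbol{x}) \Vert$ and the Lipschitz modulus of $\Theta \mapsto f_{\Theta}(\boldsymbol{x})$ are controlled by $\chi \beta \prod_{l=1}^{N_a} (1 + \nu + \beta_l)$, and the AM--GM inequality bounds $\prod_{l=1}^{N_a}(1 + \nu + \beta_l) \le (1 + \nu + \beta/N_a)^{N_a}$, so this quantity does not exceed $B_{\beta, \nu, \chi}$. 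Plugging these operator-norm and magnitude bounds into the CNN covering-number estimate and running Dudley's entropy integral (truncated at scale $\sim 1/N$) then yields $\hat{\mathfrak{C}}_{\mathcal{S}}(\mathcal{F}_{\beta, \nu, \chi}) \precsim d \log(B_{\beta, \nu, \chi} N)/\sqrt{N}$: the spatial dimension $d$ enters through the conversion between a convolution kernel's norm and its operator-matrix norm, and $\log(B_{\beta, \nu, \chi} N)$ collects the logarithmic chaining terms. In particular this is $\mathcal{O}(1/\sqrt{N})$ up to logarithmic factors, so Asm.~\ref{asm:rademacher} is satisfied for $\mathcal{F}_{\beta, \nu, \chi}$ with an explicit constant and Thm.~\ref{thm:final_bound} applies.

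The main obstacle is the second step: producing the CNN covering-number bound with precisely this dependence. One must (i) show the parameterization map $\Theta \mapsto f_{\Theta}(\boldsymbol{x})$ is Lipschitz in $\Theta$ with constant of order $B_{\beta, \nu, \chi}$, which requires carefully propagating a size-$\beta$ perturbation layer by layer through the interleaved convolutions, fully-connected maps, $1$-Lipschitz activations, and poolings while tracking the spectral norms; (ii) convert this into a covering number of $\mathcal{F}_{\beta, \nu, \chi}$ at resolution $\epsilon$ that is polynomial in $1/\epsilon$ and in $B_{\beta, \nu, \chi}$, with $d$ appearing linearly from the kernel-to-matrix conversion; and (iii) discharge the entropy integral with the truncation level chosen $\sim 1/N$ so that the $\log(1/\epsilon)$ chaining factors collapse into the single $\log(B_{\beta, \nu, \chi} N)$ term. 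Everything else --- the loss-side local Lipschitz constants, the data-dependent contraction, and the $\mathfrak{g}(K)$ table --- is already available from Thm.~\ref{thm:final_bound} and its supporting results, so the remainder of the argument amounts to substituting the network complexity into that master bound.
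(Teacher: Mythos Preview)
Your overall strategy matches the paper's: combine the data-dependent contraction machinery behind Thm.~\ref{thm:final_bound} with a complexity estimate for $\mathcal{F}_{\beta,\nu,\chi}$, then read off the three cases from Tab.~\ref{table:fin_bound} under $\pi_q \propto e^{-\lambda q}$. Two points, however, deserve comment.

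First, you spend most of your effort re-deriving the CNN complexity bound $\hat{\mathfrak{C}}_{\mathcal{S}}(\mathcal{F}_{\beta,\nu,\chi}) \precsim d\log(B_{\beta,\nu,\chi}N)/\sqrt{N}$ from scratch via layerwise Lipschitz propagation, covering numbers, and Dudley's integral. The paper does none of this: it simply records the bound as Lem.~\ref{lem:complexity_cnn}, imported wholesale from \cite{DBLP:conf/iclr/LongS20,DBLP:journals/pami/WangXYHCH23}.

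Second, and more substantively, you gloss over the one place where the argument \emph{does} require care. Prop.~\ref{prop:data_contraction} (and hence Thm.~\ref{thm:final_bound}) is proved under Asm.~\ref{asm:rademacher}, i.e.\ $\hat{\mathfrak{C}}_{\mathcal{S}}(\mathcal{F}) \sim \mathcal{O}(1/\sqrt{N})$, and its proof uses the proportionality $\hat{\mathfrak{C}}_{\mathcal{S}_q}(\mathcal{F}) \propto 1/\sqrt{N_q} = (1/\sqrt{\pi_q})\,\hat{\mathfrak{C}}_{\mathcal{S}}(\mathcal{F})$. For CNNs the bound carries the extra factor $\log(B_{\beta,\nu,\chi}N)$, which depends on $N$, so your claim that ``Asm.~\ref{asm:rademacher} is satisfied \ldots\ with an explicit constant'' is not literally correct, and Thm.~\ref{thm:final_bound} cannot be invoked as a black box. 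The paper explicitly flags this and reruns the data-dependent contraction argument: applying Lem.~\ref{lem:complexity_cnn} to each subset $\mathcal{S}_q$ gives $\hat{\mathfrak{C}}_{\mathcal{S}_q}(\mathcal{F}_{\beta,\nu,\chi}) \precsim d\log(B_{\beta,\nu,\chi}N\pi_q)/\sqrt{N\pi_q}$, and the monotonicity $\log(B_{\beta,\nu,\chi}N\pi_q) \le \log(B_{\beta,\nu,\chi}N)$ (since $\pi_q \le 1$) is what allows one to factor out the common logarithm and recover $\hat{\mathfrak{C}}_{\mathcal{S}}(\mathcal{G}) \precsim \hat{\mathfrak{C}}_{\mathcal{S}}(\mathcal{F}_{\beta,\nu,\chi})\sum_q \sqrt{\pi_q}\,\mu_q$. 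Only after this step does the $\mathfrak{g}(K)$ table apply. This monotonicity observation is the one genuinely new ingredient in the paper's proof of Prop.~\ref{prop:gen_cnn}, and your plan omits it.
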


\section{Experiment}
\label{sec:experiment}
  In this section, we perform a series of experiments on benchmark datasets to validate the effectiveness of the proposed framework and the theoretical results. The induced learning algorithm, which contains a warm-up strategy, is summarized in Appendix \ref{app:more_imp_details}.

\begin{table*}[htbp] \small
    \centering
    \caption{Empirical results of time complexity, where the seconds are averaged over 200 trials.}
    \begin{tabular}{l|cccc|cccc}
    \toprule
    \multicolumn{1}{c}{\multirow{2}[4]{*}{}} & Forward & Loss  & Backward & \multicolumn{1}{c}{Total} &  Forward & Loss  & Backward & Total \\
    \cmidrule{2-9}    \multicolumn{1}{c}{} & \multicolumn{4}{c}{ResNet50, C=100} & \multicolumn{4}{c}{Swin-transformer, C=100} \\
    \midrule
    torch.ml & 0.0170  & 0.0001  & 0.0334  & 0.0504  & 0.1506  & 0.0003  & 0.3077  & 0.4586  \\
    $L_{\text{rank}}$ & 0.0166  & 0.0066  & 0.0377  & 0.0609  & 0.1515  & 0.0067  & 0.3136  & 0.4718  \\
    $L_{u_1}$ & 0.0170  & 0.0002  & 0.0335  & 0.0508  & 0.1510  & 0.0004  & 0.3087  & 0.4601  \\
    $L_{u_2}$ & 0.0170  & 0.0002  & 0.0336  & 0.0508  & 0.1507  & 0.0004  & 0.3078  & 0.4589  \\
    $L_{u_3}$ & 0.0170  & 0.0003  & 0.0336  & 0.0510  & 0.1512  & 0.0005  & 0.3092  & 0.4609  \\
    $L_{u_4}$ & 0.0171  & 0.0002  & 0.0336  & 0.0509  & 0.1519  & 0.0013  & 0.3099  & 0.4631  \\
    \midrule
    $L_K^{\alpha, \ell}$ (Ours) & 0.0167  & 0.0062  & 0.0369  & 0.0598  & 0.1512  & 0.0063  & 0.3125  & 0.4700  \\
    \midrule
    \multicolumn{1}{r}{} & \multicolumn{4}{c}{ResNet50, C=1,000} & \multicolumn{4}{c}{Swin-transformer, C=1,000} \\
    \midrule
    torch.ml & 0.0169  & 0.0013  & 0.0351  & 0.0533  & 0.1506  & 0.0014  & 0.3092  & 0.4611  \\
    $L_{\text{rank}}$ & 0.0166  & 0.0067  & 0.0410  & 0.0643  & 0.1516  & 0.0068  & 0.3157  & 0.4741  \\
    $L_{u_1}$ & 0.0170  & 0.0002  & 0.0336  & 0.0508  & 0.1507  & 0.0004  & 0.3081  & 0.4592  \\
    $L_{u_2}$ & 0.0171  & 0.0002  & 0.0336  & 0.0510  & 0.1510  & 0.0005  & 0.3088  & 0.4602  \\
    $L_{u_3}$ & 0.0171  & 0.0003  & 0.0337  & 0.0511  & 0.1516  & 0.0005  & 0.3099  & 0.4619  \\
    $L_{u_4}$ & 0.0171  & 0.0002  & 0.0336  & 0.0510  & 0.1509  & 0.0005  & 0.3086  & 0.4600  \\
    \midrule
    $L_K^{\alpha, \ell}$ (Ours) & 0.0167  & 0.0062  & 0.0369  & 0.0598  & 0.1505  & 0.0063  & 0.3102  & 0.4671  \\
    \midrule
    \multicolumn{1}{r}{} & \multicolumn{4}{c}{ResNet50, C=5,000} & \multicolumn{4}{c}{Swin-transformer, C=5,000} \\
    \midrule
    torch.ml & 0.0165  & 0.0256  & 0.0706  & 0.1127  & 0.1502  & 0.0270  & 0.3408  & 0.5180  \\
    $L_{\text{rank}}$ & 0.0165  & 0.0103  & 0.0894  & 0.1162  & 0.1509  & 0.0109  & 0.3607  & 0.5226  \\
    $L_{u_1}$ & 0.0170  & 0.0002  & 0.0336  & 0.0509  & 0.1513  & 0.0004  & 0.3083  & 0.4600  \\
    $L_{u_2}$ & 0.0171  & 0.0002  & 0.0337  & 0.0511  & 0.1514  & 0.0004  & 0.3088  & 0.4607  \\
    $L_{u_3}$ & 0.0171  & 0.0003  & 0.0338  & 0.0512  & 0.1517  & 0.0005  & 0.3093  & 0.4615  \\
    $L_{u_4}$ & 0.0171  & 0.0002  & 0.0338  & 0.0512  & 0.1518  & 0.0004  & 0.3095  & 0.4617  \\
    \midrule
    $L_K^{\alpha, \ell}$ (Ours) & 0.0167  & 0.0064  & 0.0378  & 0.0609  & 0.1515  & 0.0067  & 0.3122  & 0.4704  \\
    \midrule
    \multicolumn{1}{r}{} & \multicolumn{4}{c}{ResNet50, C=10,000} & \multicolumn{4}{c}{Swin-transformer, C=10,000} \\
    \midrule
    torch.ml & 0.0164  & 0.0966  & 0.1777  & 0.2907  & 0.1496  & 0.0989  & 0.4388  & 0.6873  \\
    $L_{\text{rank}}$ & 0.0165  & 0.0254  & 0.1887  & 0.2306  & 0.1505  & 0.0261  & 0.4623  & 0.6389  \\
    $L_{u_1}$ & 0.0171  & 0.0002  & 0.0337  & 0.0510  & 0.1507  & 0.0004  & 0.3073  & 0.4584  \\
    $L_{u_2}$ & 0.0171  & 0.0002  & 0.0338  & 0.0512  & 0.1514  & 0.0005  & 0.3090  & 0.4608  \\
    $L_{u_3}$ & 0.0172  & 0.0003  & 0.0340  & 0.0514  & 0.1518  & 0.0005  & 0.3098  & 0.4621  \\
    $L_{u_4}$ & 0.0172  & 0.0002  & 0.0339  & 0.0513  & 0.1511  & 0.0005  & 0.3083  & 0.4598  \\
    \midrule
    $L_K^{\alpha, \ell}$ (Ours) & 0.0168  & 0.0070  & 0.0394  & 0.0632  & 0.1508  & 0.0074  & 0.3130  & 0.4712  \\
    \bottomrule
    \end{tabular}%
  \label{tab:time_test}%
\end{table*}%

  \subsection{Efficiency Validation}
  \label{subsec:experiment_efficiency}
  In this part, we aim to validate this argument, we conduct an additional experiment, and such a superiority is also observed. Specifically, we use ResNet50 and swin-transformer as the backbone. For ResNet50, we set the input size as $448 \times 448$, the batch size as $16$, and $K$ as $15$. For swin-transformer, we adjust the input size to $384 \times 384$. The number of classes $C$ is set as $\{100, 1000, 5000, 10000\}$. To exclude the impact of implementation, we also report the results of the official pytorch loss \texttt{torch.nn.MultiLabelMarginLoss}, whose complexity is also $\mathcal{O}(C^2)$, denoted by \texttt{torch.ml}. We run 300 trials and report the results averaged over the latter 200 ones. All the experiments are conduct on an Nvidia(R) A100 GPU with a fixed random seed and a synchronized setup. Tab.\ref{tab:time_test} presents the time of forward-propagation, loss computation, and back-propagation, from which we have the following observations: 
  \begin{itemize}
    \item The complexity has little effect on the time of forward-propagation but has a significant impact on the loss computation and back-propagation. 
    \item For small models such as ResNet50, the effect of complexity is significant even with small $C$.
    \item For large models such as swin-transformer, the effect is not so significant until $C$ is large enough. 
    \item The computation time of pointwise surrogates, \textit{i.e.}, $L_{u_1}, L_{u_2}, L_{u_3}, L_{u_4}$ is insensitive to $C$. The time with $C = 10,000$ is almost the same as that with $C = 100$.
    \item The proposed TKPR loss $L_K^{\alpha, \ell}$, which enjoys a complexity of $\mathcal{O}(CK)$, achieves comparable results with the pointwise surrogates.
    \item The ranking loss $L_{\text{rank}}$ and the official implementation \texttt{torch.ml} suffer from the high complexity. Their time of loss computation and back-propagation scales fast as $C$ increases, becoming a significant part of the overall training time. 
    \item Our implementation is efficient since the ranking loss $L_{\text{rank}}$ has comparable or better results than the official implementation \texttt{torch.ml}.
  \end{itemize}

\renewcommand{\thefootnote}{\fnsymbol{footnote}}
\begin{table*}[t]
  \centering
    \caption{The empirical results of the ranking-bases losses and TKPR on MS-COCO, where the backbone is ResNet101. The best and runner-up results on each metric are marked with {\color{Top1}red} and {\color{Top2}blue}, respectively. The best competitor on each measure is marked with \underline{underline}.}
    \label{tab:ranking_loss_coco_MLC}%
\renewcommand\arraystretch{1.5}
  \tiny 
  \newcommand{\tabincell}[2]{\begin{tabular}{@{}#1@{}}#2\end{tabular}}
  \begin{tabular}{c|c|cc|cc|cc|cc|cc|cc|cc|c}
    \multicolumn{1}{c|}{\multirow{2}[4]{*}{Type}} & Metrics & \multicolumn{2}{c|}{P@K} & \multicolumn{2}{c|}{R@K} & \multicolumn{2}{c|}{mAP@K} & \multicolumn{2}{c|}{NDCG@K} & \multicolumn{2}{c|}{$\text{TKPR}^{\alpha_1}$} & \multicolumn{2}{c|}{$\text{TKPR}^{\alpha_2}$} & \multicolumn{2}{c|}{$\text{TKPR}^{\alpha_3}$} & \multicolumn{1}{c}{\multirow{2}[4]{*}{\tabincell{c}{Ranking \\ Loss}}} \\
\cmidrule{2-16}          & K & 3     & 5     & 3     & 5     & 3     & 5     & 3     & 5     & 3     & 5     & 3     & 5     & 3     & 5     &  \\
    \toprule
    \multicolumn{1}{c|}{\multirow{7}[2]{*}{\tabincell{c}{Ranking \\ Loss}}} & $L_{\text{rank}}$ & .540 & .424 & .673 & .821 & .473 & .514	 & .687 & .734 & 1.143 & 1.492 & .533 & .618 & .226 & .148 & .034 \\
          & $L_{u_1}$ & .571  & .424  & .705  & .817  & .609  & .621  & .706  & .730  & 1.204 & 1.538 & .549  & .626  & .234  & .150  & .037 \\
          & $L_{u_2}$ & .535  & .419  & .668  & .814  & .473  & .512  & .680  & .726  & 1.135 & 1.478 & .527  & .611  & .223  & .146  & .035 \\
          & $L_{u_3}$ & .536 & .414 & .663 & .804 & .511 & .542 & .676 & .717 & 1.138 & 1.472 & .524 & .605 & .223 & .145 & .033\\
          & $L_{u_4}$ & \underline{\textcolor[rgb]{ .886,  .42,  .039}{.615}} & \underline{\textcolor[rgb]{ .886,  .42,  .039}{.443}} & \underline{\textcolor[rgb]{ .886,  .42,  .039}{.753}} & \underline{\textcolor[rgb]{ .886,  .42,  .039}{.846}} & \underline{.663}  & \underline{.665}  & \underline{.753}  & \underline{.765}  & \underline{1.288} & \underline{1.631} & \underline{.591}  & \underline{.664}  & \underline{.251}  & \underline{.159}  & \underline{.032} \\
          & $L_{\text{LSEP}}$ & .522  & .420  & .648  & .816  & .419  & .469  & .650  & .711  & 1.078 & 1.444 & .498  & .594  & .211  & .142  & .045 \\
          & $L_{\text{TKML}}$ & .522  &	.393  &	.654  &	.775  &	.540  &	.554  &	.667  &	.697  &	1.135  &	1.435  &	.521  &	.594  &	.222  &	.141  &	.045  \\

    \toprule
    \multicolumn{1}{c|}{\multirow{3}[2]{*}{\tabincell{c}{TKPR \\ (Ours)}}} & $\alpha_1$   & .578  &	.423  &	.712  &	.816  &	{\color{Top2} .730}  &	{\color{Top2} .733}  &	{\color{Top2} .795}  &	{\color{Top2} .811}  &	{\color{Top2} 1.319}  &	{\color{Top2} 1.605}  &	{\color{Top2} .629}  &	{\color{Top2} .672}  &	{\color{Top2} .264}  &	{\color{Top2} .160}  &	.029 \\
          & $\alpha_2$   & {\color{Top2} .587}  & {\color{Top2} .432}  & {\color{Top2} .724}  & {\color{Top2} .832}  & \textcolor[rgb]{ .886,  .42,  .039}{.752} & \textcolor[rgb]{ .886,  .42,  .039}{.758} & \textcolor[rgb]{ .886,  .42,  .039}{.813} & \textcolor[rgb]{ .886,  .42,  .039}{.831} & \textcolor[rgb]{ .886,  .42,  .039}{1.356} & \textcolor[rgb]{ .886,  .42,  .039}{1.646} & \textcolor[rgb]{ .886,  .42,  .039}{.645} & \textcolor[rgb]{ .886,  .42,  .039}{.688} & \textcolor[rgb]{ .886,  .42,  .039}{.271} & \textcolor[rgb]{ .886,  .42,  .039}{.163} & \textcolor[rgb]{ .886,  .42,  .039}{.024} \\
          & $\alpha_3$   & .575  & .428  & .710  & .825  & .716  & .726  & .782  & .805  & 1.310 & 1.608 & .619  & .670  & .260  & .159  & {\color{Top2} .025} \\
    \end{tabular}%
\end{table*}%

\begin{table*}[t]
  \centering
    \caption{The empirical results of state-of-the-art MLC methods and TKPR on MS-COCO, where the backbone is ResNet101. The best and runner-up results on each metric are marked with {\color{Top1}red} and {\color{Top2}blue}, respectively. The best competitor on each measure is marked with \underline{underline}.}
    \label{tab:sota_coco_MLC}%
  \renewcommand\arraystretch{1.5}
  \tiny 
  \newcommand{\tabincell}[2]{\begin{tabular}{@{}#1@{}}#2\end{tabular}}
    \begin{tabular}{c|c|cc|cc|cc|cc|cc|cc|cc|c}
    \multicolumn{1}{c|}{\multirow{2}[4]{*}{Type}} & Metrics & \multicolumn{2}{c|}{P@K} & \multicolumn{2}{c|}{R@K} & \multicolumn{2}{c|}{mAP@K} & \multicolumn{2}{c|}{NDCG@K} & \multicolumn{2}{c|}{$\text{TKPR}^{\alpha_1}$} & \multicolumn{2}{c|}{$\text{TKPR}^{\alpha_2}$} & \multicolumn{2}{c|}{$\text{TKPR}^{\alpha_3}$} & \multicolumn{1}{c}{\multirow{2}[4]{*}{\tabincell{c}{Ranking \\ Loss}}} \\
    \cmidrule{2-16}          & K     & 3     & 5     & 3     & 5     & 3     & 5     & 3     & 5     & 3     & 5     & 3     & 5     & 3     & 5     &  \\
    \toprule
    \multicolumn{1}{c|}{\multirow{5}[2]{*}{\tabincell{c}{Loss \\ Oriented}}} & ASL\footnotemark[2]   & .668  & .474  & .800  & .885  & .879  & .868  & .910  & .910  & 1.536 & 1.841 & .722  & .754  & .305  & .181  & .015 \\
    & DB-Loss & \underline{.676}  & \underline{.475}  & \underline{.807}  & \underline{.886}  & \underline{.892}  & \underline{.877}  & \underline{.919} & \underline{.915} & \underline{1.554} & \underline{1.856} & \underline{.730}  & \underline{.759} & \underline{.308} & \underline{.182} & \underline{.015} \\
    & CCD\footnotemark[2]   & .654  & .463  & .783  & .868  & .860  & .848  & .894  & .893  & 1.510 & 1.803 & .709  & .740  & .299  & .177  & .018 \\
    & \multicolumn{1}{c|}{Hill\footnotemark[2]} & .643  & .462  & .775  & .868  & .829  & .826  & .874  & .881  & 1.467 & 1.774 & .692  & .731  & .292  & .175  & .019 \\
    & \multicolumn{1}{c|}{SPLC\footnotemark[2]} & .619  & .457  & .757  & .866  & .754  & .768  & .835  & .855  & 1.389 & 1.715 & .660  & .711  & .277  & .170  & .020 \\
    \toprule
    \multicolumn{1}{c|}{\multirow{3}[2]{*}{\tabincell{c}{TKPR \\ (Ours)}}} & $\alpha_1$ & {.678} & {.476} & \textcolor[rgb]{ .886,  .42,  .039}{.810} & {.889} & \textcolor[rgb]{ .886,  .42,  .039}{.895} & {.880} & {.889} & {.918} & {1.558} & \textcolor[rgb]{ .886,  .42,  .039}{1.862} & {.732} & {.762} & {.309} & {.183} & {.014} \\
    & $\alpha_2$ & {\color{Top2} .678} & {\color{Top2} .477} & .810 & {\color{Top2} .889} & .894 & {\color{Top2} .880} & {\color{Top2} .922} & {\color{Top2} .918} & {\color{Top2} 1.558} & 1.860 & \textcolor[rgb]{ .886,  .42,  .039}{.733} & {\color{Top2} .762} & {\color{Top2} .309} & {\color{Top2} .183} & {\color{Top2} .015} \\
    & $\alpha_3$ & \textcolor[rgb]{ .886,  .42,  .039}{.678} & \textcolor[rgb]{ .886,  .42,  .039}{.477} & {\color{Top2} .810} & \textcolor[rgb]{ .886,  .42,  .039}{.890} & \textcolor[rgb]{ .886,  .42,  .039}{.895} & \textcolor[rgb]{ .886,  .42,  .039}{.881} & \textcolor[rgb]{ .886,  .42,  .039}{.922} & \textcolor[rgb]{ .886,  .42,  .039}{.918} & \textcolor[rgb]{ .886,  .42,  .039}{1.558} & {\color{Top2} 1.862} & {\color{Top2} .732} & \textcolor[rgb]{ .886,  .42,  .039}{.762} & \textcolor[rgb]{ .886,  .42,  .039}{.309} & \textcolor[rgb]{ .886,  .42,  .039}{.183} & \textcolor[rgb]{ .886,  .42,  .039}{.014} \\
    \end{tabular}%
\end{table*}%

\subsection{Multi-Label Classification}
\label{subsec:experiment_mlc}
\subsubsection{Protocols}
  \noindent \textbf{Datasets.} We conduct the MLC experiments on three benchmark datasets:
  \begin{itemize}
      \item Pascal VOC 2007 \cite{DBLP:journals/ijcv/EveringhamGWWZ10} is a widely-used multi-label dataset for computer vision tasks. This dataset consists of 10K images coming from 20 different categories. There are 5,011 images and 4,952 images in the training set and the test set, respectively. Each image in the training set contains an average of 1.4 labels, with a maximum of 6 labels.
      \item MS-COCO \cite{DBLP:conf/eccv/LinMBHPRDZ14} is another popular dataset for multi-label recognization tasks, which consists of 122,218 images and 80 object categories. The dataset is split into a training set, consisting of 82,081 images, and a test set, consisting of 40,137 images. On average, each image in the training set contains 2.9 labels, with a maximum of 13 labels.
      \item NUSWDIE \cite{DBLP:conf/civr/ChuaTHLLZ09} is a large-scale multi-label dataset containing 269,648 Flickr images and 81 object categories. On average, each image in the training set contains 2.4 labels, with a maximum of 11 labels.
  \end{itemize}

  \noindent \textbf{Backbone and Optimization Method.} For CNN backbone, we utilize ResNet101 \cite{DBLP:conf/cvpr/HeZRS16} pre-trained on ImageNet \cite{DBLP:conf/cvpr/DengDSLL009} as the backbone, as used in \cite{DBLP:conf/eccv/WuH0WL20,DBLP:conf/iccv/RidnikBZNFPZ21}. The model is optimized by Stochastic Gradient Descent (SGD) with Nesterov momentum of 0.9 and a weight decay value of 1e-4 \cite{DBLP:conf/icml/SutskeverMDH13}. And an 1-cycle learning rate policy is utilized with the max learning rate searched in \{0.1, 0.01, 0.001, 0.0001\}. All input images are rescaled to $448 \times 448$, and the batch size is searched in \{32, 64, 128\}. For transformer backbone, we select swin-transformer \cite{DBLP:conf/iccv/LiuL00W0LG21} pre-trained on ImageNet-22k as the backbone. As suggested by \cite{DBLP:conf/cvpr/LiuLLHYY22}, we use Adam as the optimizer with a weight decay of 1e-4, a batch size of 32, a learning rate searched in \{5e-5, 1e-6\} with an 1-cycle policy, and the input images are rescaled to $384 \times 384$. More details can be found in Appendix \ref{app:more_imp_details}.

  \textbf{Evaluation Metric.} We evaluate the model performances on $\texttt{TKPR}^{\alpha}$, where $\alpha \in \{\alpha_1, \alpha_2, \alpha_3\}$. Meanwhile, we also report the results on \texttt{P@K}, \texttt{R@K}, \texttt{mAP@K}, \texttt{NDCG@K}, and the ranking loss, where $K \in \{3, 5\}$. Note that \texttt{NDCG@K} adopts a logarithmic discount function, and \texttt{mAP@K} summarizes the \texttt{AP@K} performance on different samples.

  \noindent \textbf{Competitors.} On one hand, we focus on ranking-based losses. Note that the pointwise surrogates of the ranking loss are also selected to validate Prop.\ref{prop:reformulation} and the generalization analyses in Sec.\ref{sec:generalization}:

\begin{itemize}
    \item $L_{\text{rank}}$ \cite{DBLP:journals/ai/GaoZ13} is exactly the loss defined in Eq.(\ref{eq:ranking_loss}).
    \item $L_{u_1}$ \cite{DBLP:conf/icml/DembczynskiKH12} has an abstract formulation $\frac{1}{C}\sum_{i} \ell_i$, where $\ell_i$ denotes the loss on the $i$-th class. While primarily designed for optimizing the Hamming loss, this loss also serves as a traditional surrogate for $L_{\text{rank}}$.
    \item $L_{u_2}$ \cite{DBLP:conf/icml/DembczynskiKH12} is a consistent surrogate for $L_{\text{rank}}$, which can be formulated as $\frac{1}{N(\boldsymbol{y}) N_{-}(\boldsymbol{y})} \sum_{i} \ell_i$.
    \item $L_{u_3}, L_{u_4}$ \cite{DBLP:conf/nips/WuLXZ21} are two reweighted surrogate pointwise losses, which can be formulated as $\frac{\sum_{i} \ell_i^+}{N(\boldsymbol{y})} + \frac{\sum_{i} \ell_i^-}{N_{-}(\boldsymbol{y})}$ and $\frac{\sum_{i} \ell_i}{\min \{N(\boldsymbol{y}), N_{-}(\boldsymbol{y})\}}$, respectively, where $\ell_i^-$ and $\ell_i^+$ denote the loss on relevant and irrelevant labels, respectively.
    \item LSEP \cite{DBLP:conf/cvpr/LiSL17} employs a smooth approximation of pairwise ranking to make the objective easier to optimize. Additionally, negative sampling techniques are used to reduce the computational complexity.
    \item TKML \cite{DBLP:conf/nips/HuY0L20} aims to maximize the score difference between the top-($k+1$) score of all the labels and the lowest prediction score of all the relevant labels.
\end{itemize}
On the other hand, we also compare several state-of-the-art loss-oriented MLC methods:
\begin{itemize}
    \item DB-Loss \cite{DBLP:conf/eccv/WuH0WL20} modifies the binary cross-entropy loss to tackle the imbalanced label distribution, which is achieved by rebalancing the weights of the co-occurrence of labels and restraining the dominance of negative labels.
    \item The ASymmetric Loss (ASL) \cite{DBLP:conf/iccv/RidnikBZNFPZ21} dynamically adjusts the weights to focus on hard-negative samples while keeping attention on positive samples.
    \item Hill, SPLC \cite{DBLP:journals/corr/abs-2112-07368} are two simple loss functions designed for both MLC and MLML. The Hill loss is a robust loss that can re-weight negatives to avoid the effect of false negatives. And the Self-Paced Loss Correction (SPLC) is derived from the maximum likelihood criterion under an approximate distribution of missing labels.
    \item The Causal Context Debiasing (CCD) \cite{DBLP:conf/cvpr/LiuLLHYY22} incorporates the casual inference to eliminate the contextual hard-negative objects and highlight the hard-positive objects.
\end{itemize}
\footnotetext[2]{Official implementation.}

\begin{figure}[t]
  \centering
  \subfigure[MS-COCO]{
    \label{fig:COCO_normal_distribution}
    \includegraphics[width=0.46\linewidth]{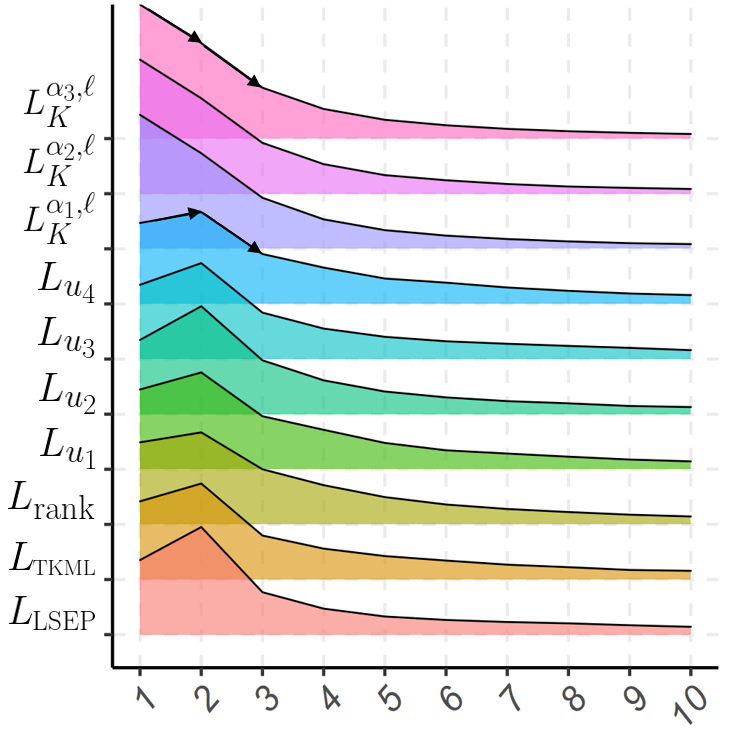}}
  \subfigure[Pascal VOC 2007]{
    \label{fig:VOC_normal_distribution}
    \includegraphics[width=0.46\linewidth]{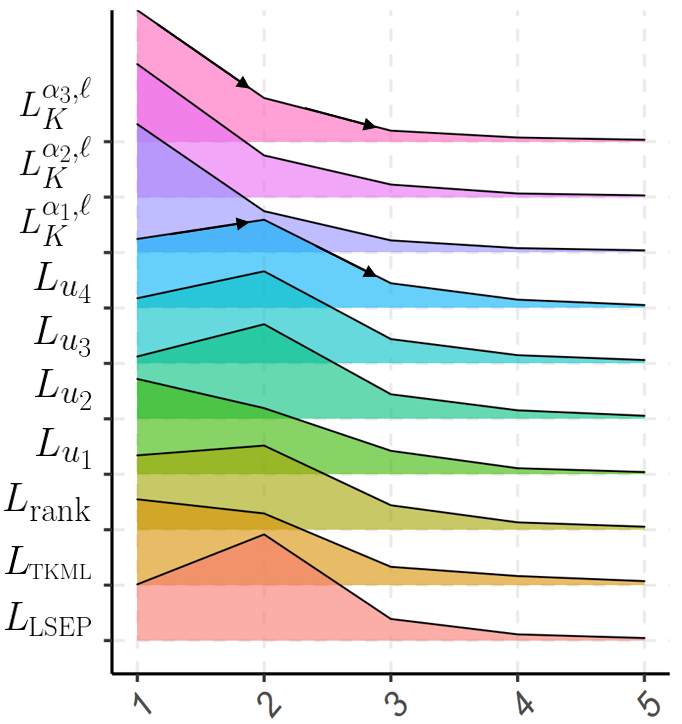}}
\caption{According to the model predictions, we visualize the rank distributions of relevant labels on MS-COCO and Pascal VOC 2007. The results show that the proposed methods can rank more relevant labels on the top-1 position, which explains why the proposed methods can improve the ranking-based measures.}
\label{fig:COCO_distribution}
\end{figure}

\subsubsection{Overall Performance}
  Here, we report the results on MS-COCO with the CNN backbone. The results on the transformer backbone and those on Pascal VOC 2007 and NUSWDIE can be found in Appendix \ref{app:more_results}. From Tab.\ref{tab:ranking_loss_coco_MLC} and Tab.\ref{tab:sota_coco_MLC}, we have the following observations:

  Compared with the ranking-based losses, the proposed algorithm for TKPR optimization can significantly improve the model performances on \texttt{mAP@K}, \texttt{NDCG@K}, the ranking loss, and the TKPR measures. For example, the performance gap between the proposed methods and the runner-up method is 9.3\%, 6.3\%, 5.4\%, and 6\textperthousand in terms of \texttt{mAP@3}, \texttt{NDCG@3}, $\texttt{TKPR}^{\alpha_2}\texttt{@3}$, and ranking loss, respectively. Notably, the performance gains on \texttt{mAP@K}, \texttt{NDCG@K}, the ranking loss not only validate our theoretical analyses in Sec.\ref{sec:tkpr} and Sec.\ref{sec:generalization} but also show the proposed learning algorithm can be a promising solution to optimizing existing ranking-based measures.

However, the performances on \texttt{P@K} and \texttt{R@K} are not so impressive. To explain this counter-intuitive phenomenon, we visualize the rank distribution of relevant labels in Fig.\ref{fig:COCO_distribution}. It is clear that the proposed methods rank most relevant labels at the top-1 position, while the competitors rank most relevant labels at the top-2 position. On one hand, this difference explains how the pairwise/listwise measures are improved under similar \texttt{P@K} and \texttt{R@K} performances. On the other hand, the rank distribution is consistent with the Bayes optimality presented in Sec.\ref{subsec:bayes}, which again validate the effectiveness of the proposed framework.

Compared with the state-of-the-art methods, the proposed methods achieve the best performances consistently on all the measures, which again validate the effectiveness of the proposed framework.

\begin{table*}[t]
  \centering
    \caption{The empirical results of the ranking-based losses and TKPR on MS-COCO-MLML, where the backbone is ResNet50. The best and runner-up results on each metric are marked with {\color{Top1}red} and {\color{Top2}blue}, respectively. The best competitor on each measure is marked with \underline{underline}.}
    \renewcommand\arraystretch{1.5}
    \tiny 
    \newcommand{\tabincell}[2]{\begin{tabular}{@{}#1@{}}#2\end{tabular}}
    \begin{tabular}{c|c|cc|cc|cc|cc|cc|cc|cc|c}
      \multicolumn{1}{c|}{\multirow{2}[4]{*}{Type}} & Metrics & \multicolumn{2}{c|}{P@K} & \multicolumn{2}{c|}{R@K} & \multicolumn{2}{c|}{mAP@K} & \multicolumn{2}{c|}{NDCG@K} & \multicolumn{2}{c|}{$\text{TKPR}^{\alpha_1}$} & \multicolumn{2}{c|}{$\text{TKPR}^{\alpha_2}$} & \multicolumn{2}{c|}{$\text{TKPR}^{\alpha_3}$} & \multicolumn{1}{c}{\multirow{2}[4]{*}{\tabincell{c}{Ranking \\ loss}}} \\
      \cmidrule{2-16}          & K     & 3     & 5     & 3     & 5     & 3     & 5     & 3     & 5     & 3     & 5     & 3     & 5     & 3     & 5     &  \\
      \toprule
      \multicolumn{1}{c|}{\multirow{7}[2]{*}{\tabincell{c}{Ranking \\ loss}}} & $L_{\text{rank}}$ & .517 & .402 & .639 & .781 & .544 & .570 & .660 & .702 & 1.102 & 1.426 & .513 & .590 & .217 & .141 & .0417 \\
      & $L_{u_1}$ & .519 & .360 & .658 & .728 & .686 & .673 & .748 & .749 & 1.246 & 1.447 & .602 & .626 & .251 & .148 & .0643 \\
      & $L_{u_2}$ & .512 & .358 & .652 & .724 & .677 & .668 & .738 & .742 & 1.225 & 1.431 & .594 & .620 & .247 & .146 & .0671 \\
      & $L_{u_3}$ & \underline{.570} & \underline{.426} & \underline{.700} & \underline{.815} & .664 & .677 & .745 & \underline{.770} & 1.252 & \underline{1.569} & .585 & \underline{.646} & .247 & \underline{.155} & \underline{.0312} \\
      & $L_{u_4}$ & .531 & .371 & .671 & .745 & \underline{.700} & \underline{.688} & \underline{.761} & .763 & \underline{1.268} & 1.482 & \underline{.611} & .637 & \underline{.255} & .150 & .0714 \\
      & $L_{\text{LSEP}}$ & .542 & .414 & .672 & .800 & .575 & .596 & .696 & .730 & 1.168 & 1.493 & .543 & .617 & .230 & .147 & .0346 \\
      & $L_{\text{TKML}}$ & .536 & .387 & .668 & .764 & .575 & .575 & .687 & .701 & 1.167 & 1.442 & .540 & .600 & .229 & .143 & .0452 \\
      \toprule
      \multicolumn{1}{c|}{\multirow{3}[2]{*}{\tabincell{c}{TKPR \\ (Ours)}}}         & $\alpha_1$ & \color{Top2}.627 & \color{Top2}.449 & \color{Top2}.763 & \color{Top2}.854 & \color{Top2}.820 & \color{Top2}.814 & \color{Top2}.864 & \color{Top2}.871 & \color{Top2}1.451 & \color{Top2}1.741 & \color{Top1}.688 & \color{Top2}.722 & \color{Top2}.289 & \color{Top2}.172 & \color{Top1}.0228 \\
      & $\alpha_2$ & .626 & .446 & .762 & .850 & .820 & .813 & .864 & .869 & 1.450 & 1.734 & .687 & .721 & .288 & .171 & .0249 \\
      & $\alpha_3$ & \color{Top1}.628 & \color{Top1}.450 & \color{Top1}.763 & \color{Top1}.854 & \color{Top1}.821 & \color{Top1}.815 & \color{Top1}.865 & \color{Top1}.871 & \color{Top1}1.453 & \color{Top1}1.741 & \color{Top2}.687 & \color{Top1}.722 & \color{Top1}.289 & \color{Top1}.172 & \color{Top2}.0234 \\
    \end{tabular}%
  \label{tab:ranking_loss_coco_MLML}%
\end{table*}%

\subsubsection{Fine-grained Analysis}
To further validate the theoretical results in Sec.\ref{sec:tkpr}, we visualize the normalized measures \textit{w.r.t.} the training epoch on the Pascal VOC 2007 dataset. Specifically, given a sequence of measures $\mathcal{M} := \{m^{(t)}\}_{t = 1}^{T}$, where $T$ is the maximum number of epochs, we normalize these measures by 
$$
  \tilde{m}^{(t)} := \frac{\max\{\mathcal{M}\} - m^{(t)}}{\max\{\mathcal{M}\} - \min\{\mathcal{M}\}}.
$$
As shown in Fig.\ref{fig:tendency_normal}, we have the following observations: (1) For the competitors, \textit{i.e.}, in (a)-(c), different measures reach the peak values at different epochs. In other words, the tendency of different measures are inconsistent when optimizing these losses. (2) Optimizing the ranking loss and its surrogates fails to guarantee the model performances on the other measures. For example, in (a), \texttt{mAP@5} even shows a decreasing trend at the late epochs. (3) By contrast, when optimizing TKPR, all the measures display a similar increasing trend, which is consistent with the our analyses in Sec.\ref{sec:tkpr}.

\begin{figure}[t]
  \centering
\subfigure[Optimize $L_{\text{rank}}$]{
  \label{fig:tendency_L_rank_normal}
  \includegraphics[width=0.46\linewidth]{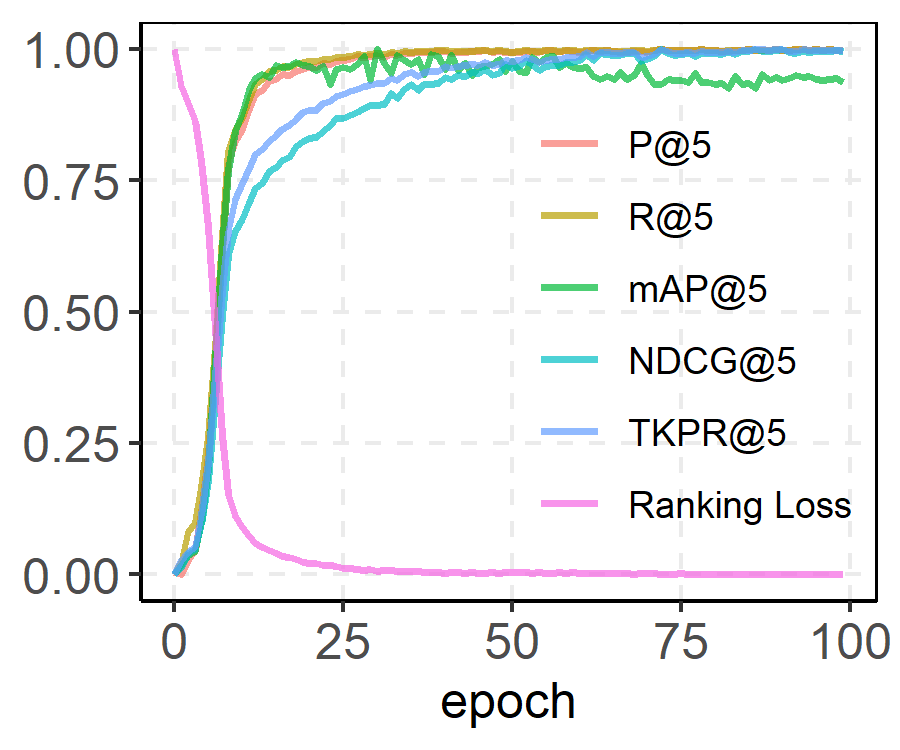}}
\subfigure[Optimize $L_{u_4}$]{
    \label{fig:tendency_L_u4_normal}
    \includegraphics[width=0.46\linewidth]{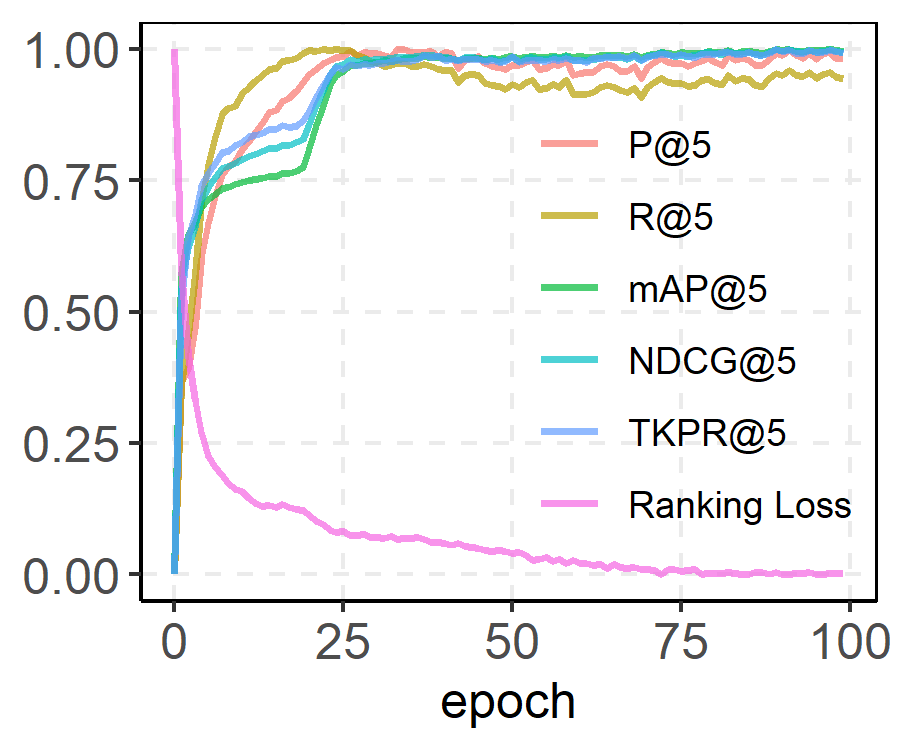}}
\subfigure[Optimize $L_{\text{LSEP}}$]{
    \label{fig:tendency_LSEP_normal}
    \includegraphics[width=0.46\linewidth]{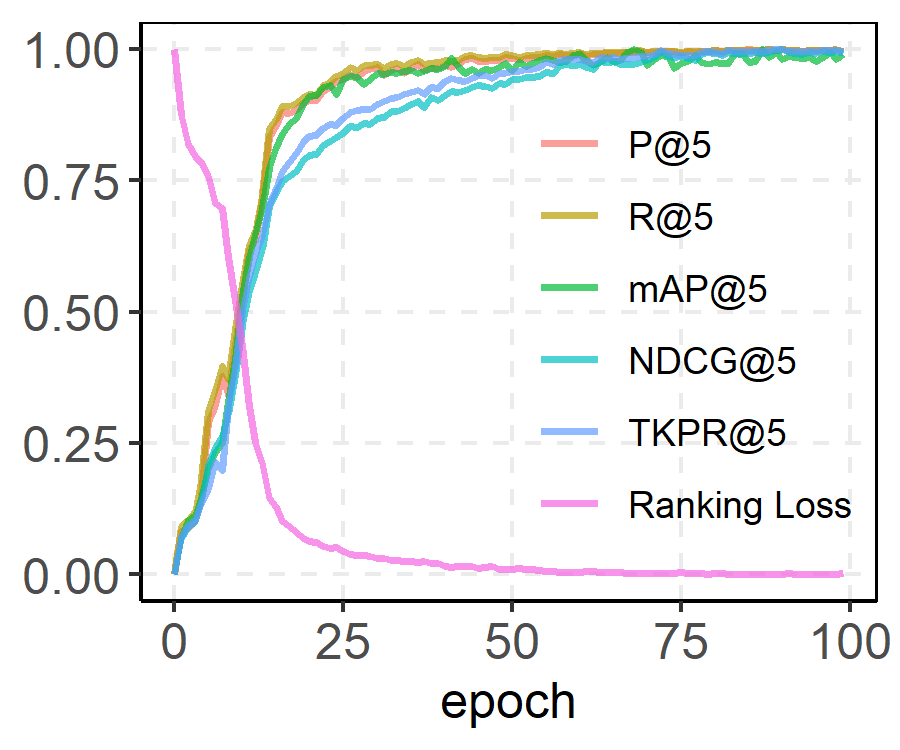}}
\subfigure[Optimize $L_K^{\alpha, \ell}$ (Ours)]{
  \label{fig:tendency_TKPR_normal}
  \includegraphics[width=0.46\linewidth]{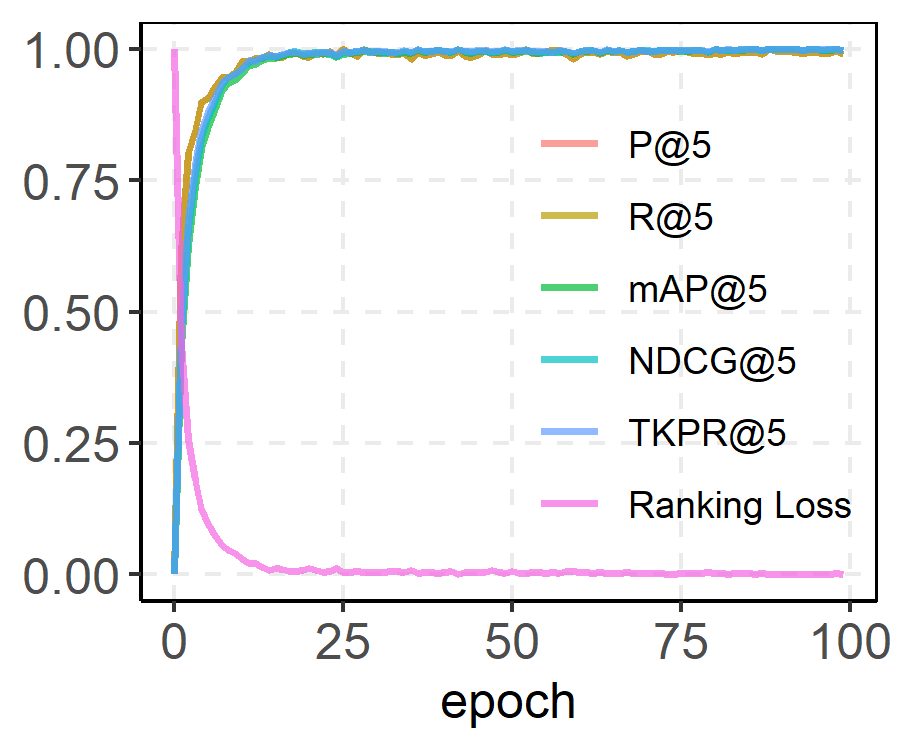}}
\caption{Normalized ranking-based measures \textit{w.r.t.} the training epoch on the Pascal VOC 2007 dataset in the MLC setting. (a)-(c) When optimizing the competitors, the changes in different measures are inconsistent. (d) By contrast, when optimizing TKPR, the changes are highly consistent, which validates our analyses in Sec.\ref{sec:tkpr}.}
\label{fig:tendency_normal}
\end{figure}

\begin{table*}[t]
  \centering
    \caption{The empirical results of state-of-the-art MLML methods and TKPR on MS-COCO-MLML, where the backbone is ResNet50. The best and runner-up results on each metric are marked with {\color{Top1}red} and {\color{Top2}blue}, respectively. The best competitor on each measure is marked with \underline{underline}.}
    \renewcommand\arraystretch{1.5}
    \tiny 
    \newcommand{\tabincell}[2]{\begin{tabular}{@{}#1@{}}#2\end{tabular}}
    \begin{tabular}{c|c|cc|cc|cc|cc|cc|cc|cc|c}
      \multicolumn{1}{c|}{\multirow{2}[4]{*}{Type}} & Metrics & \multicolumn{2}{c|}{P@K} & \multicolumn{2}{c|}{R@K} & \multicolumn{2}{c|}{mAP@K} & \multicolumn{2}{c|}{NDCG@K} & \multicolumn{2}{c|}{$\text{TKPR}^{\alpha_1}$} & \multicolumn{2}{c|}{$\text{TKPR}^{\alpha_2}$} & \multicolumn{2}{c|}{$\text{TKPR}^{\alpha_3}$} & \multicolumn{1}{c}{\multirow{2}[4]{*}{\tabincell{c}{Ranking \\ loss}}} \\
      \cmidrule{2-16}  &   K     & 3     & 5     & 3     & 5     & 3     & 5     & 3     & 5     & 3     & 5     & 3     & 5     & 3     & 5     &  \\
      \toprule
      \multicolumn{1}{c|}{\multirow{7}[4]{*}{\tabincell{c}{Loss \\ Oriented}}}    
      & \multicolumn{1}{c|}{ROLE\footnotemark[2]} & .618 & .439 & .752 & .837 & .812 & .801 & .858 & .860 & 1.441 & 1.715 & .683 & .713 & .287 & .170 & .0415 \\
      & \multicolumn{1}{c|}{EM+APL\footnotemark[2]} & \underline{.638} & \underline{.456} & \underline{.772} & \underline{.862} & \underline{.838} & \underline{.831} & \underline{.879} & \underline{.883} & \underline{1.478} & \underline{1.770} & \underline{.699} & \underline{.732} & \underline{.294} & \underline{.175} & .0265 \\
      & \multicolumn{1}{c|}{Hill\footnotemark[2]} & .594 & .424 & .726 & .816 & .775 & .768 & .825 & .832 & 1.386 & 1.652 & .658 & .690 & .276 & .164 & .0296 \\
      & SPLC\footnotemark[2] & .599 & .424 & .731 & .814 & .784 & .773 & .833 & .836 & 1.402 & 1.662 & .664 & .693 & .279 & .165 & .0319 \\
      & LL-R\footnotemark[2] & .501 & .349 & .761 & .850 & .708 & .741 & .768 & .807 & 1.184 & 1.389 & .637 & .710 & .252 & .158 & \underline{.0250} \\
      & LL-Ct\footnotemark[2] & .500 & .348 & .761 & .849 & .709 & .741 & .768 & .807 & 1.185 & 1.389 & .638 & .710 & .253 & .158 & .0251 \\
      & LL-Cp\footnotemark[2] & .495 & .345 & .755 & .842 & .702 & .734 & .762 & .801 & 1.177 & 1.378 & .633 & .704 & .251 & .157 & .0272 \\ 
      \toprule
      \multicolumn{1}{c|}{\multirow{3}[2]{*}{\tabincell{c}{TKPR \\ (Ours)}}} & $\alpha_1$ & \color{Top2}.642 & \color{Top2}.456 & \color{Top2}.776 & \color{Top2}.863 & \color{Top2}.843 & \color{Top2}.834 & \color{Top2}.882 & \color{Top2}.885 & \color{Top2}1.485 & \color{Top2}1.775 & \color{Top2}.701 & \color{Top2}.733 & \color{Top2}.295 & \color{Top2}.175 & \color{Top2}.0200 \\
      & $\alpha_2$ & \color{Top1}.643 & \color{Top1}.457 & \color{Top1}.777 & \color{Top1}.864 & \color{Top1}.845 & \color{Top1}.835 & \color{Top1}.883 & \color{Top1}.885 & \color{Top1}1.488 & \color{Top1}1.778 & \color{Top1}.702 & \color{Top1}.734 & \color{Top1}.296 & \color{Top1}.176 & \color{Top1}.0196 \\
      & $\alpha_3$ & .640 & .453 & .774 & .859 & .842 & .831 & .881 & .882 & 1.483 & 1.769 & .701 & .732 & .295 & .175 & .0215 \\
    \end{tabular}%
  \label{tab:sota_coco_MLML}%
\end{table*}%

\subsection{Multi-Label Classification with Missing Labels}
\label{subsec:experiment_mlml}
  \subsubsection{Protocols}
  \label{subsubsec:protocol_mlml}
  \noindent \textbf{Datasets.} The experiments are conducted on MS-COCO \cite{DBLP:conf/eccv/LinMBHPRDZ14} and Pascal VOC 2012 \cite{DBLP:journals/ijcv/EveringhamGWWZ10}. VOC 2012 contains 20 classes and 5,717 training images, which are non-overlapping with the Pascal VOC 2007 dataset. To simulate the MLML setting, we randomly select one positive label for each training example, as performed in \cite{DBLP:conf/cvpr/ColeALPMJ21,DBLP:conf/eccv/ZhouCWCH22, DBLP:conf/cvpr/KimKA022}.

  \noindent \textbf{Backbone and Optimization Method.} Following the prior arts \citep{DBLP:conf/cvpr/ColeALPMJ21,DBLP:conf/eccv/ZhouCWCH22, DBLP:conf/cvpr/KimKA022}, we use ResNet50 \citep{DBLP:conf/cvpr/HeZRS16} pre-trained on ImageNet \citep{DBLP:conf/cvpr/DengDSLL009} as the backbone and Adam as the optimizer, with a weight decay of 1e-4, a batch size of 64, a learning rate searched in \{1e-4, 1e-5\} with an 1-cycle policy. The input images are rescaled to $448 \times 448$.  More details can be found in Appendix \ref{app:more_imp_details}.

\noindent \textbf{Competitors.} Besides the ranking-based losses, Hill, and SPLC, we select the following state-of-the-art MLML methods as the competitors:
\begin{itemize}
  \item ROLE \cite{DBLP:conf/cvpr/ColeALPMJ21} proposes a regularization that constrains the number of expected relevant labels to tackle the single-relevant problem.
  \item EM+APL \cite{DBLP:conf/eccv/ZhouCWCH22} combines the entropy-maximization (EM) loss and an asymmetric pseudo-labeling (APL) scheme to address the single-relevant problem.
  \item LL-R, LL-Ct, and LL-Cp \cite{DBLP:conf/cvpr/KimKA022} belong to the Large-Loss-Matter framework, where the missing labels are regarded as noises. Based on this observation, these methods reject or correct samples with large losses to prevent the model from memorizing the noisy labels. 
\end{itemize}

\begin{figure*}[t]
    \centering
    \subfigure[Optimize $L_K^{\alpha_1, \ell}$]{
      \label{fig:voc2012_K_M}
      \includegraphics[width=0.28\linewidth]{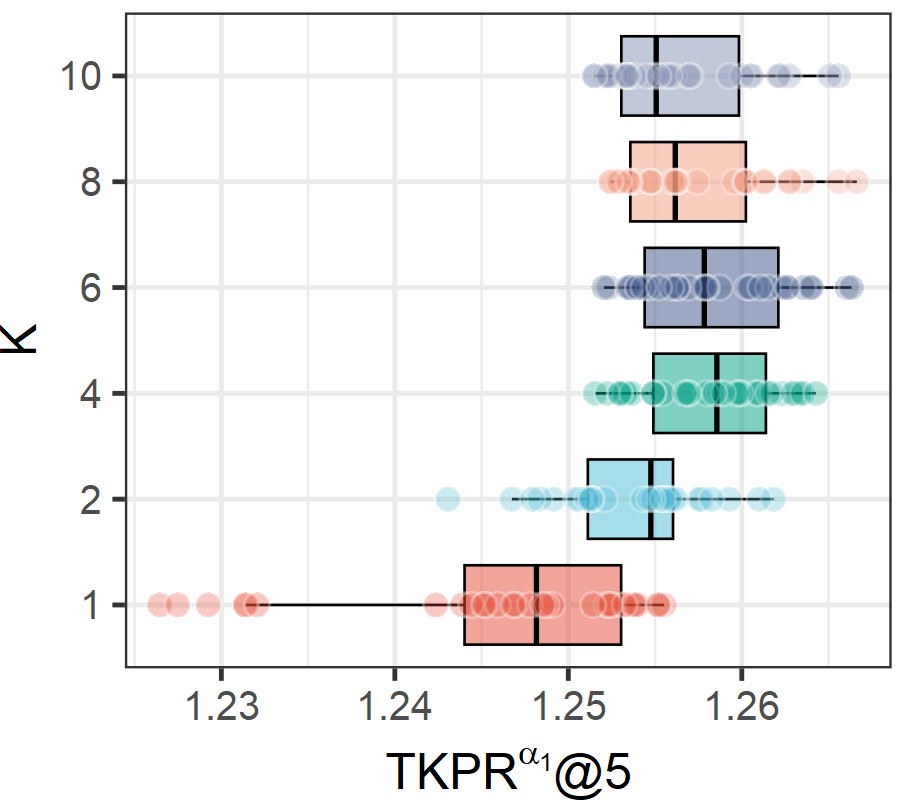}}
    \subfigure[Optimize $L_K^{\alpha_2, \ell}$]{
        \label{fig:voc2012_K_L}
        \includegraphics[width=0.28\linewidth]{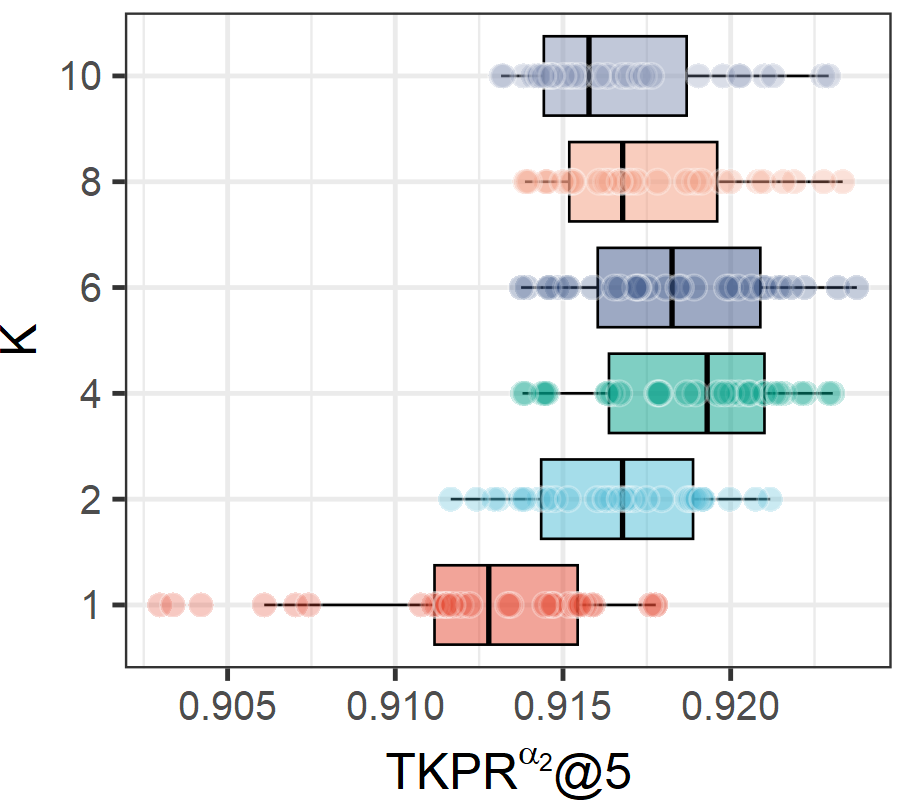}}
    \subfigure[Optimize $L_K^{\alpha_3, \ell}$]{
        \label{fig:voc2012_K_Q}
        \includegraphics[width=0.28\linewidth]{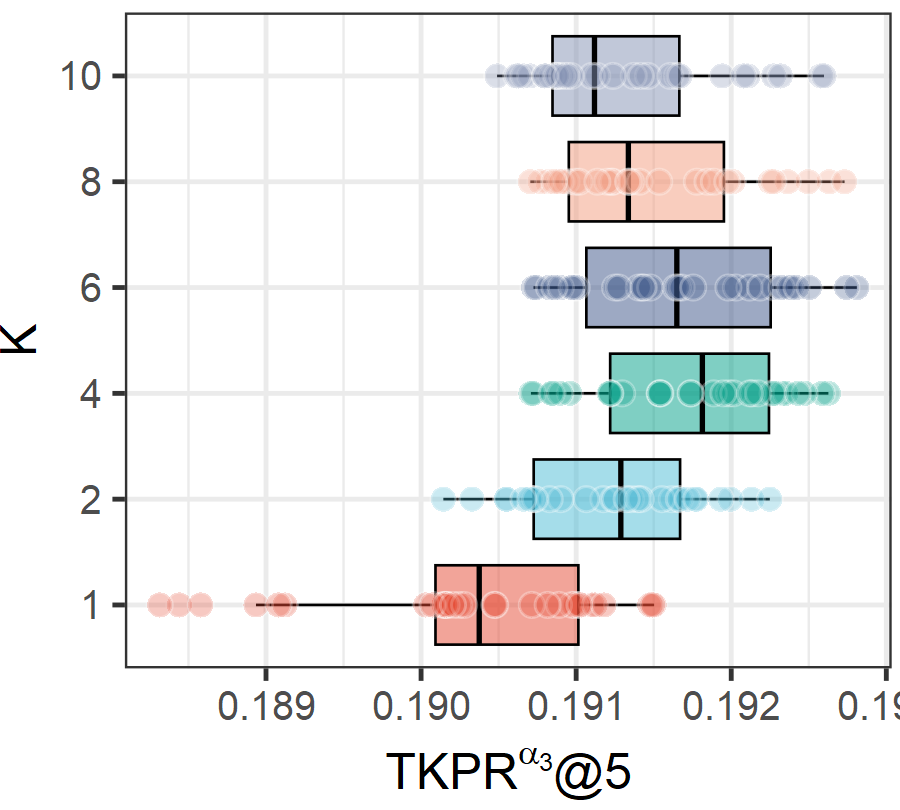}}
    \caption{Sensitivity analysis of the proposed methods on Pascal VOC 2012-MLML. The y-axis denotes the values of the hyperparameter $K$, and the x-axis represents the value of \texttt{TKPR@5} under the corresponding $K$.}
    \label{fig:voc2012_K}
\end{figure*}

\begin{figure*}[t]
    \centering
    \subfigure[Optimize $L_K^{\alpha_1, \ell}$]{
      \label{fig:voc2012_Ew_M}
      \includegraphics[width=0.28\linewidth]{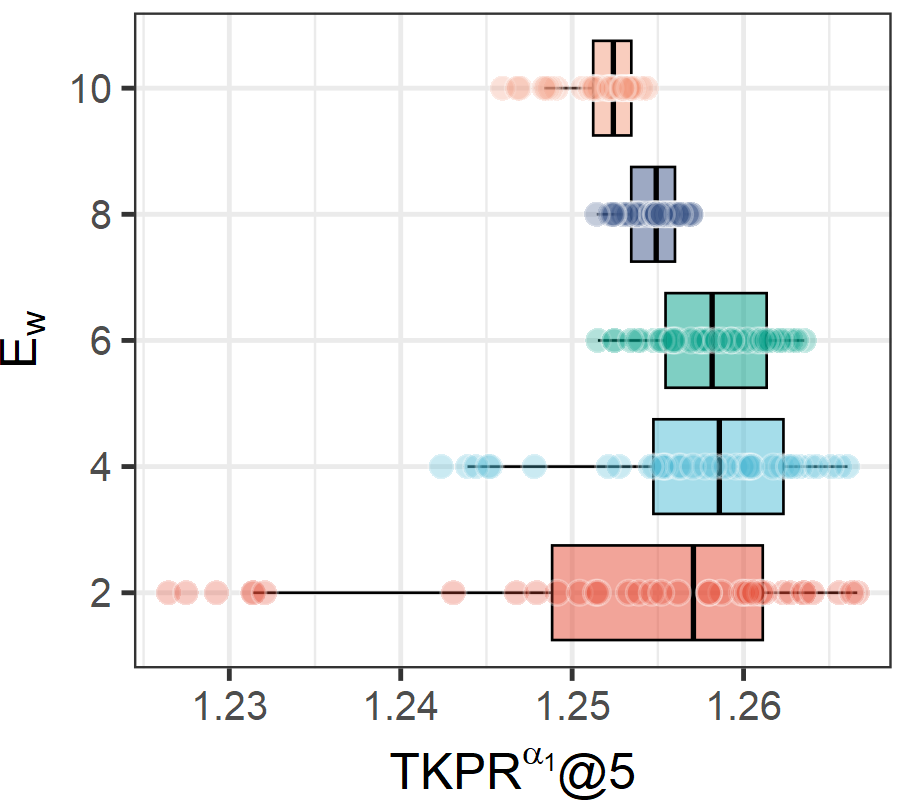}}
    \subfigure[Optimize $L_K^{\alpha_2, \ell}$]{
        \label{fig:voc2012_Ew_L}
        \includegraphics[width=0.28\linewidth]{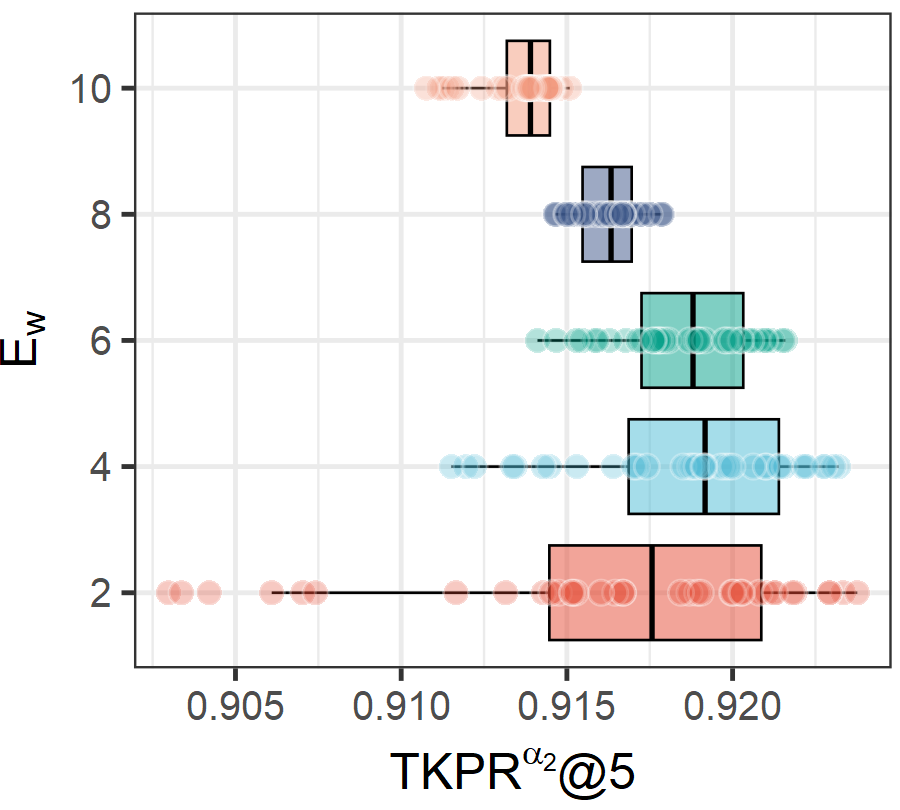}}
    \subfigure[Optimize $L_K^{\alpha_3, \ell}$]{
        \label{fig:voc2012_Ew_Q}
        \includegraphics[width=0.28\linewidth]{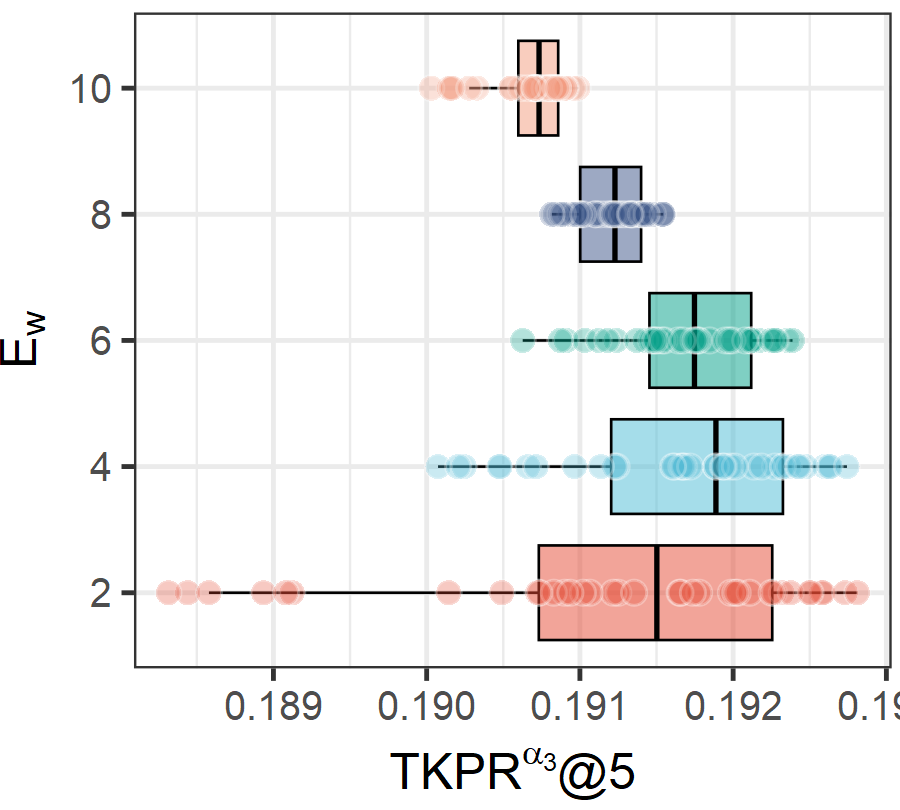}}
    \caption{Sensitivity analysis of the proposed methods on Pascal VOC 2012-MLML. The y-axis denotes the values of the hyperparameter $E_w$, and the x-axis represents the value of \texttt{TKPR@5} under the corresponding $E_w$.}
    \label{fig:voc2012_Ew}
\end{figure*}

\subsubsection{Overall Performance}
  Here, we report the results on MS-COCO. The results on Pascal VOC 2012 and the corresponding fine-grained analyses can be found in Appendix \ref{app:pascal12}. From Tab.\ref{tab:ranking_loss_coco_MLML} and Tab.\ref{tab:sota_coco_MLML}, we have the following observations:

  Compared with the ranking-based losses, the proposed methods demonstrate consistent improvements. For example, the performance gap between the proposed methods and the runner-up method increases to 12.1\%, 10.4\%, 7.7\%, 8.4\% in terms of \texttt{mAP@3}, \texttt{NDCG@3}, $\texttt{TKPR}^{\alpha_2}\texttt{@3}$, and the ranking loss, respectively. This phenomenon is not surprising since the competitors explicitly require irrelevant labels as inputs. Consequently, the missing relevant labels will bias the learning process and degenerate the model performance. By contrast, the proposed framework only takes the relevant labels as inputs and thus alleviates the negative impact of the missing labels. 
  
  Compared with the state-of-the art MLML methods, the proposed methods also achieve consistent improvements. For example, the performance gap between the proposed methods and the runner-up method is 0.7\%, 0.4\%, 0.3\%, and 5.4\% in terms of \texttt{mAP@3}, \texttt{NDCG@3}, $\texttt{TKPR}^{\alpha_2}\texttt{@3}$, and ranking loss, respectively. It is worth mentioning that although EM+APL achieves superior performance on top-ranking measures, it fails to outperform LL-R and LL-Ct on the ranking loss. This phenomenon validates the inconsistency among different measures. Hence, it is necessary to optimize compatible measures such as the proposed TKPR.

\subsubsection{Sensitivity Analysis}
  Next, we investigate the sensitivity of the proposed methods to the hyperparameters $K$ and $E_w$. The results are shown in Fig.\ref{fig:voc2012_K} and Fig.\ref{fig:voc2012_Ew}, where we adopt $L_\text{rank} $ as the warm-up loss. From the results, we have the following observations:

  For $K$, the best performances of the proposed methods are achieved when $K \in \{4, 6\}$. A small $K$ will lead to a lack of supervision, while a large $K$ means ranking positions that are out of interest are optimized. Hence, both of them will degrade the performance. Besides, the best $\texttt{TKPR@K}$ performances are achieved when $K$ is close to 5, which is consistent with the expectation of the proposed framework.  

  For $E_w$, a larger $E_w$ leads to more stable performances. However, as $E_w$ increases, the effect of the proposed methods becomes weaker, inducing a significant performance degradation. Hence, a moderate $E_w$ is preferred to balance the learning between the global pattern and the local ranking. 


\section{Conclusion, Limitation, and Future Work}
This paper proposes a novel measure for multi-label ranking named TKPR. A series of analyses show that optimizing TKPR is compatible with existing ranking-based measures. In view of this, an empirical surrogate risk minimization framework is further established for TKPR with theoretical support. On one hand, this ERM framework enjoys convex surrogate losses. On the other hand, a novel technique, named data-dependent contraction, helps the proposed framework achieve a sharp generalization bound on both TKPR and the ranking loss. Finally, experimental results on different benchmark datasets and settings speak to the effectiveness of the proposed framework, as well as the theoretical analyses.

    TKPR is closely related to: 1) the scenarios where the top-ranking performance is of interest. As shown in Sec.4, TKPR is more discriminating than \texttt{P@K} and \texttt{R@K}. Although one can select a suitable $K$ such that \texttt{P@K} and \texttt{R@K} can find the performance difference on a specific sample, the number of ground-truth labels differs among samples, making the selection of $K$ rather challenging. Hence, TKPR is a better choice in such scenarios. Of course, \texttt{AP@K} and \texttt{NDCG@K} are also reasonable options, but it is challenging to perform direct optimization on these measures. 2) the scenarios where multiple ranking-based measures are important. As shown in Sec.4, if the model achieves a good performance on TKPR, it tends to perform well on the others. Hence, we can reduce a multi-objective problem to a single-objective problem, which is rather convenient. 

    The proposed learning algorithm is useful in: 1) the scenarios where the label distribution is highly imbalanced such as medical image classification and remote-sensing image classification. This is because the proposed algorithm can induce better performance on the ranking loss and $\texttt{AP@K}$, which are popular measures in imbalanced learning. 2) the scenarios where the ranking of the top-$K$ labels is of interest. The proposed algorithm can effectively optimize the top-$K$ ranking list. 3) the scenarios where some ground-truth labels are missing, which is universal in multi-learning since full annotations are rather expensive and time-consuming. As shown in Sec.6.3, the proposed algorithm shows superior performance in such scenarios.

    Theoretically, this work focuses on ranking-based measures and does not discuss the connection between TKPR and threshold-based measures such as the Hamming loss, the subset accuracy, and the F-measure. Methodologically, it might be a promising direction to replace the na\"ive ranking operators in the TKPR objective with differentiable ranking operators \cite{NEURIPS2020_ec24a54d}, such that the loss computation can be more efficient. Additionally, the application of TKPR in other learning tasks such as retrieval and recommendation is also an interesting direction.

\section*{Declarations}
\subsection*{Availability of data and materials}
    The datasets that support the experiments in Sec.\ref{subsec:experiment_mlc} are available in Pascal VOC 2007\footnote{\tiny \url{http://host.robots.ox.ac.uk/pascal/VOC/voc2007/index.html}} \cite{DBLP:journals/ijcv/EveringhamGWWZ10}, MS-COCO\footnote{\tiny \url{https://cocodataset.org}} \cite{DBLP:conf/eccv/LinMBHPRDZ14}, and NUSWDIE\footnote{\tiny \url{https://lms.comp.nus.edu.sg/wp-content/uploads/2019/research/nuswide/NUS-WIDE.html}} \cite{DBLP:conf/civr/ChuaTHLLZ09}.

    Furthermore, the datasets that support the experiments in Sec.\ref{subsec:experiment_mlml} are also generated from Pascal VOC 2012\footnote{\tiny \url{http://host.robots.ox.ac.uk/pascal/VOC/voc2012/}}\cite{DBLP:journals/ijcv/EveringhamGWWZ10} and MS-COCO \cite{DBLP:conf/eccv/LinMBHPRDZ14} based on the protocol described in \cite{DBLP:conf/cvpr/ColeALPMJ21}, which we have reviewed in Sec.\ref{subsubsec:protocol_mlml}.

\subsection*{Acknowledgment}
This work was supported in part by the National Key R\&D Program of China under Grant 2018AAA0102000, in part by National Natural Science Foundation of China: 62236008, U21B2038, U23B2051, U2001202, 61931008,  62122075 and 61976202, in part by Youth Innovation Promotion Association CAS, in part by the Strategic Priority Research Program of the Chinese Academy of Sciences, Grant No. XDB0680000, in part by the Innovation Funding of ICT, CAS under Grant No.E000000, in part by the China National Postdoctoral Program for Innovative Talents under Grant BX20240384.

\begin{appendices}
\onecolumn
\section{Bayes optimality of P@K and R@K Optimization}
\label{app:bayesofpr}

\cite{DBLP:conf/nips/X19} has presented the Bayes Optimality of \texttt{P@K} and \texttt{R@K} under the no-tie assumption. Here, we extend the results under the with-tie assumption, \textit{i.e.}, Asm.\ref{asm:tiesexist}, which is necessary for the further comparison in Sec.\ref{subsec:bayes}.
\begin{restatable}[Bayes optimality of \texttt{P@K} and \texttt{R@K}]{proposition}{bayesofpr}
    \label{prop:bayes_of_pr}
    Under Asm.\ref{asm:tiesexist}, we have
    $$\begin{aligned}
        & f^* \in \arg\max_{f} \texttt{P@K}(f) \Leftrightarrow \forall \boldsymbol{x} \in \mathcal{X}, \widetilde{\texttt{Top}}_K(\eta) \subset \texttt{Top}_K(f^*), \\
        & \texttt{Top}_K(f^*) - \widetilde{\texttt{Top}}_K(\eta) \subset \texttt{Tie}_K(\eta), \\
    \end{aligned}$$
    where$ \widetilde{\texttt{Top}}_K(\eta) := \texttt{Top}_K(\eta) - \texttt{Tie}_K(\eta)$, $\texttt{Top}_K(\eta)$ denotes the indices of the top-K entries of $\eta(\boldsymbol{x}) \in \mathbb{R}^C$. Meanwhile, let 
    \begin{equation}
        \eta'(\boldsymbol{x})_i := \eta(\boldsymbol{x})_i \cdot \underset{\boldsymbol{y}_{\setminus i} \mid \boldsymbol{x}, y_i=1}{\mathbb{E}}\left[\frac{1}{1 + N(\boldsymbol{y}_{\setminus i})}\right],
    \end{equation}
    and $\boldsymbol{y}_{\setminus i} \in \{0, 1\}^{C-1}$ denotes the vector of all but the $i$-th label. Then, we have
    $$\begin{aligned}
        & f^* \in \arg\max_{f} \texttt{R@K}(f) \Leftrightarrow \forall \boldsymbol{x} \in \mathcal{X}, \widetilde{\texttt{Top}}_K(\eta') \subset \texttt{Top}_K(f^*), \\
        & \texttt{Top}_K(f^*) - \widetilde{\texttt{{Top}}}_K(\eta') \subset \texttt{Tie}_K(\eta'). \\
    \end{aligned}$$ 
\end{restatable}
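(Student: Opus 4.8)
The plan is to reduce the Bayes optimality problem to a per-input combinatorial optimization, exactly as one does for TKPR in Prop.\ref{prop:bayes}, and then to characterize the maximizers of a \emph{linear} functional over $K$-element index sets. Concretely, write $\texttt{P@K}(f) = \mathbb{E}_{\boldsymbol{x}}[\texttt{P@K}(f\mid\boldsymbol{x})]$ and likewise for \texttt{R@K}. Since $f$ may be chosen independently on each input, $f^*$ maximizes $\texttt{P@K}(f)$ (resp.\ $\texttt{R@K}(f)$) iff, for every $\boldsymbol{x}$, the vector $\boldsymbol{s}:=f^*(\boldsymbol{x})$ maximizes the conditional measure; so it suffices to solve the conditional problem for a fixed $\boldsymbol{x}$. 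Here Asm.\ref{asm:tiesexist} is what makes the refined $\texttt{Tie}_K$ terms potentially nonempty, so the characterization below is genuinely stronger than the no-tie version of \cite{DBLP:conf/nips/X19}.

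\textbf{Linearizing the conditional measures.} For \texttt{P@K}, the top positions $\sigma(f,1),\dots,\sigma(f,K)$ depend only on $f(\boldsymbol{x})$ and on $\eta(\boldsymbol{x})$ (through the worst-case tie-breaking of Asm.\ref{asm:wrongly_break_ties}), hence pulling $\mathbb{E}_{\boldsymbol{y}\mid\boldsymbol{x}}$ through the sum and using $\mathbb{E}_{\boldsymbol{y}\mid\boldsymbol{x}}[y_i]=\eta(\boldsymbol{x})_i$ gives $\texttt{P@K}(f\mid\boldsymbol{x}) = \frac{1}{K}\sum_{i\in\texttt{Top}_K(f(\boldsymbol{x}))}\eta(\boldsymbol{x})_i$. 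For \texttt{R@K}, the same interchange produces a sum of terms $\mathbb{E}_{\boldsymbol{y}\mid\boldsymbol{x}}[y_i/N(\boldsymbol{y})]$; conditioning on the event $y_i=1$ and using $N(\boldsymbol{y})=1+N(\boldsymbol{y}_{\setminus i})$ there rewrites each such term as $\eta(\boldsymbol{x})_i\cdot\mathbb{E}_{\boldsymbol{y}_{\setminus i}\mid\boldsymbol{x},\,y_i=1}[1/(1+N(\boldsymbol{y}_{\setminus i}))] = \eta'(\boldsymbol{x})_i$, so $\texttt{R@K}(f\mid\boldsymbol{x}) = \sum_{i\in\texttt{Top}_K(f(\boldsymbol{x}))}\eta'(\boldsymbol{x})_i$. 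Both conditional problems therefore reduce to maximizing $\sum_{i\in T}v_i$ over index sets $T$ realizable as $\texttt{Top}_K(\boldsymbol{s})$, with $\boldsymbol{v}=\eta(\boldsymbol{x})$ for \texttt{P@K} and $\boldsymbol{v}=\eta'(\boldsymbol{x})$ for \texttt{R@K}.

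\textbf{Combinatorial maximization and tie-handling.} Every $K$-element set $T$ is realizable as a top-$K$ set (e.g.\ take $s_i=\I{i\in T}$, which has no ambiguity at the cutoff), so the realizability constraint is vacuous. A one-line exchange argument then shows $\sum_{i\in T}v_i$ is maximal exactly when $T\supset\{i:v_i>v_{[K]}\}=\widetilde{\texttt{Top}}_K(\boldsymbol{v})$ and the remaining elements of $T$ satisfy $v_i=v_{[K]}$, i.e.\ $T-\widetilde{\texttt{Top}}_K(\boldsymbol{v})\subset\texttt{Tie}_K(\boldsymbol{v})$. For the converse direction, Asm.\ref{asm:wrongly_break_ties} is used only to make $\texttt{Top}_K(f^*)$ well defined for a given $f^*$, after which $f^*$ is Bayes optimal iff the realized set $T=\texttt{Top}_K(f^*)$ is one of these maximizers. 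Substituting $\boldsymbol{v}\in\{\eta(\boldsymbol{x}),\eta'(\boldsymbol{x})\}$ and $T=\texttt{Top}_K(f^*)$ yields the two displayed equivalences.

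\textbf{Main obstacle.} The crux is the linearization step for \texttt{R@K}: one must justify interchanging summation and conditional expectation, isolate the $1/N(\boldsymbol{y})$ factor by conditioning on $y_i=1$, and verify that the resulting expression is precisely the stated $\eta'(\boldsymbol{x})_i$. Everything else — the pointwise reduction, the linearity of the reduced objective, and the exchange argument for the optimal $K$-set — is routine, and the tie-related subtleties amount to the observation that $\texttt{Tie}_K$ collects exactly the indices that can be swapped in or out of an optimal top-$K$ set without changing its value.
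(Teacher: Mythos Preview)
Your proposal is correct and follows essentially the same approach as the paper's proof: both reduce to the conditional problem, linearize $\texttt{P@K}(f\mid\boldsymbol{x})$ and $\texttt{R@K}(f\mid\boldsymbol{x})$ into $\sum_{i\in\texttt{Top}_K(f)}\eta(\boldsymbol{x})_i$ and $\sum_{i\in\texttt{Top}_K(f)}\eta'(\boldsymbol{x})_i$ respectively (the latter via the same conditioning on $y_i=1$), and then characterize the maximizing $K$-element index sets under ties. Your treatment is slightly more explicit about realizability of arbitrary $K$-sets and the exchange argument, but the logical structure is identical to the paper's.
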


\begin{proof}
    Since the sample $(\boldsymbol{x}, \boldsymbol{y})$ is \textit{i.i.d.} sampled from $\mathcal{D}$, we next consider the following conditional formulations:
    $$\begin{aligned}
        \texttt{P@K}(f \mid \boldsymbol{x}) = \E{\boldsymbol{y} \mid \boldsymbol{x}}{\frac{1}{K} \sum_{k=1}^{K} y_{\sigma(f, k)}}, \\
        \texttt{R@K}(f \mid \boldsymbol{x}) = \E{\boldsymbol{y} \mid \boldsymbol{x}}{\frac{1}{N(\boldsymbol{y})} \sum_{k=1}^{K} y_{\sigma(f,  k)}}. \\
    \end{aligned}$$
    For \texttt{P@K}, we have
    $$\begin{aligned}
        \texttt{P@K}(f \mid \boldsymbol{x}) & = \frac{1}{K} \sum_{\boldsymbol{y} \in \mathcal{Y}} \pp{\boldsymbol{y} \mid \boldsymbol{x}} \sum_{k=1}^{K} y_{\sigma(f,  k)} \\
        & = \frac{1}{K} \sum_{\boldsymbol{y} \in \mathcal{Y}} \pp{\boldsymbol{y} \mid \boldsymbol{x}} \sum_{i \in \mathcal{L}} y_i \cdot \I{i \in \texttt{Top}_K(f)} \\
        & = \frac{1}{K} \sum_{i \in \texttt{Top}_K(f)} \sum_{\boldsymbol{y}: y_i = 1} \pp{\boldsymbol{y} \mid \boldsymbol{x}} \\
        & = \frac{1}{K} \sum_{i \in \texttt{Top}_K(f)} \eta(\boldsymbol{x})_i \\
    \end{aligned}$$
    Note that $\size{\texttt{Top}_K(\eta) \cup \texttt{Tie}_K(\eta)}$ might be greater than $K$. To maximize $\texttt{P@K}(f \mid \boldsymbol{x})$, $\texttt{Top}_K(f)$ must consist of two parts: all the labels in $\tilde{\texttt{Top}}_K(\eta)$ and $K - \size{\tilde{\texttt{Top}}_K(\eta)}$ labels from $\texttt{Tie}_K(\eta)$. That is, 
    $$
        \tilde{\texttt{Top}}_K(\eta) \subset \texttt{Top}_K(f^*),  \texttt{Top}_K(f^*) - \tilde{\texttt{Top}}_K(\eta) \subset \texttt{Tie}_K(\eta).
    $$
    For \texttt{R@K}, we have 
    $$\begin{aligned}
        \texttt{R@K}(f \mid \boldsymbol{x}) & = \sum_{i \in \texttt{Top}_K(f)} \E{\boldsymbol{y} \mid \boldsymbol{x}}{\frac{y_i}{N(\boldsymbol{y})}} \\
        & = \sum_{i \in \texttt{Top}_K(f)} \E{y_i}{\E{\boldsymbol{y}_{\setminus i} \mid \boldsymbol{x}, y_i}{\frac{y_i}{N(\boldsymbol{y})}}} \\
        & = \sum_{i \in \texttt{Top}_K(f)} \eta(\boldsymbol{x})_i \cdot \underset{\boldsymbol{y}_{\setminus i} \mid \boldsymbol{x}, y_i=1}{\mathbb{E}}\left[\frac{1}{1 + N(\boldsymbol{y}_{\setminus i})}\right] \\
        & = \sum_{i \in \texttt{Top}_K(f)} \eta'(\boldsymbol{x})_i
    \end{aligned}$$
    Then, the proof follows the above analysis of \texttt{P@K}.

    As shown in \cite{DBLP:conf/nips/X19}, the order of $\eta(\boldsymbol{x})$ and $\eta'(\boldsymbol{x})$ are generally inconsistent. Thus, optimizing one measure cannot guarantee the performance on the other one.
\end{proof}

\section{Comparison Between TKPR and Other Metrics}
\subsection{TKPR v.s. NDCG@K (Proof of Prop.\ref{prop:tkprtondcg})}
\label{app:tkprtondcg}
\tkprtondcg*
\begin{proof}
    According to the definition of TKPR,
    \begin{equation}
        \label{eq:tkprtondcg}
        \begin{aligned}
            \texttt{TKPR}(f, \boldsymbol{y}) & = \frac{1}{\alpha K} \sum_{y \in \mathcal{P}(\boldsymbol{y})} \sum_{k \le K} \I{\pi_{f}(y) \le k} \\
            & = \frac{1}{\alpha K} \sum_{k \le K} \sum_{i \in \mathcal{L}} y_i \I{i \in \texttt{Top}_k(f)} \\ 
            & = \frac{1}{\alpha K} \sum_{k \le K} \sum_{i \in \texttt{Top}_k(f)} y_i \\ 
            & = \frac{1}{\alpha K} \sum_{k \le K} (K + 1 - k) y_{\sigma(f, k)}, \\ 
        \end{aligned}
    \end{equation}
    where the last equation comes from
    \renewcommand{\arraystretch}{2}
    $$\begin{array}{ccl}
        k = 1, & \sum_{i \in \texttt{Top}_1(f)} y_i = & {\color{blue} y_{\sigma(f, 1)}}, \\
        k = 2, & \sum_{i \in \texttt{Top}_2(f)} y_i = & {\color{blue} y_{\sigma(f, 1)}} + {\color{red} y_{\sigma(f, 2)}}, \\
        \vdots & \vdots & {\color{blue} y_{\sigma(f, 1)}} + {\color{red} y_{\sigma(f, 2)}} + \cdots + {\color{orange} y_{\sigma(f, k)}}, \\
        k = K, & \sum_{i \in \texttt{Top}_K(f)} y_i = & {\color{blue} y_{\sigma(f, 1)}} + {\color{red} y_{\sigma(f, 2)}} + \cdots + {\color{orange} y_{\sigma(f, k)}} + \cdots + {\color{purple} y_{{\sigma(f, K)}}}. \\
    \end{array}$$
    Meanwhile, benefiting from the linear discount function, it is clear that 
    $$
        \texttt{IDCG-l@K}(\boldsymbol{y}) = \frac{1}{2} \cdot N_K(\boldsymbol{y}) (2K + 1 - N_K(\boldsymbol{y})) = N_{K}(\boldsymbol{y}) \tilde{N}_K(\boldsymbol{y}), 
    $$
    which ends the proof.
\end{proof}

\subsection{TKPR v.s. P@K and R@K (Proof of Thm.\ref{thm:conanddis})}
\label{app:conanddis}

\conanddis*
\begin{proof}
    Without loss of generality, we assume that $\alpha = 1$. Frist of all, the following listwise formulation is useful:
    $$
        \texttt{TKPR}(f, \boldsymbol{y})  = \frac{1}{K} \sum_{k \le K} (K + 1 - k) y_{\sigma(f, k)},
    $$
    whose derivation can be found in the proof of Prop.\ref{prop:tkprtondcg}. Then, we define the following partition number, where $D$ is written as the sum of $b$ distinct bounded positive integers:
    $$
        pn_K(a, b) := \size{\left\{ (r_1, r_2, \cdots, r_b) \in \mathbb{N}_{+}^{b} \mid a = r_1 + r_2 + \cdots + r_b, K \ge r_1 > r_2 > \cdots > r_b \ge 1 \right\}}.
    $$
    Besides, if $a \le 0$ or $b \le 0$, $pn_K(a, b) = 0$. Given the input sample $(\boldsymbol{x}, \boldsymbol{y})$, it is clear that $pn_K(a, b)$ is exactly the number of predictions $\boldsymbol{s}$ such that 
    $$
        \texttt{P@K}(\boldsymbol{s}, \boldsymbol{y}) = b / K, \texttt{TKPR}(\boldsymbol{s}, \boldsymbol{y}) = a / K.
    $$
    On one hand, we have 
    $$\begin{aligned}
        \size{P} & = \size{\{ (\boldsymbol{s}, \boldsymbol{s}') \mid \texttt{TKPR}(\boldsymbol{s}, \boldsymbol{y}) > \texttt{TKPR}(\boldsymbol{s}', \boldsymbol{y}), \texttt{P@K}(\boldsymbol{s}, \boldsymbol{y}) = \texttt{P@K}(\boldsymbol{s}', \boldsymbol{y}) \}} \\
        & = \sum_{b=0}^K \size{\{ (\boldsymbol{s}, \boldsymbol{s}') \mid \texttt{TKPR}(\boldsymbol{s}, \boldsymbol{y}) > \texttt{TKPR}(\boldsymbol{s}', \boldsymbol{y}), \texttt{P@K}(\boldsymbol{s}, \boldsymbol{y}) = \texttt{P@K}(\boldsymbol{s}', \boldsymbol{y}) = b / K \}} \\
        & = \sum_{b=0}^K \frac{1}{2} \left[ \tbinom{K}{b}^2 - \size{\{ (\boldsymbol{s}, \boldsymbol{s}') \mid \texttt{TKPR}(\boldsymbol{s}, \boldsymbol{y}) = \texttt{TKPR}(\boldsymbol{s}', \boldsymbol{y}), \texttt{P@K}(\boldsymbol{s}, \boldsymbol{y}) = \texttt{P@K}(\boldsymbol{s}', \boldsymbol{y}) = b / K \}} \right] \\
        & = \frac{1}{2} \sum_{b=0}^K \tbinom{K}{b}^2 - \frac{1}{2} \sum_{b=0}^K \sum_{a = 0}^{\overline{a}} pn_K(a, b)^2, \\
    \end{aligned}$$
    where $\tbinom{K}{b}$ denotes the combination number, which exactly equals to the number of predictions with $\texttt{P@K}(f, \boldsymbol{y}) = b / K$. 
    
    On the other hand, let $\overline{a} := \frac{K(K + 1)}{2}$ denote the maximum value of $K \cdot \texttt{TKPR}(f, \boldsymbol{y})$. And $ pn_K(a) := \sum_{b = 0}^ K pn_K(a, b)$ denotes the number of predictions whose $\texttt{TKPR}(f, \boldsymbol{y})$ equals to $a / K$. Then,
    $$\begin{aligned}
        \size{S} & = \size{\{ (\boldsymbol{s}, \boldsymbol{s}') \mid \texttt{TKPR}(\boldsymbol{s}, \boldsymbol{y}) = \texttt{TKPR}(\boldsymbol{s}', \boldsymbol{y}), \texttt{P@K}(\boldsymbol{s}, \boldsymbol{y}) > \texttt{P@K}(\boldsymbol{s}', \boldsymbol{y}) \}} \\
        & = \sum_{a = 0}^{\overline{a}} \size{\{ (\boldsymbol{s}, \boldsymbol{s}') \mid \texttt{TKPR}(\boldsymbol{s}, \boldsymbol{y}) = \texttt{TKPR}(\boldsymbol{s}', \boldsymbol{y}) = a / K, \texttt{P@K}(\boldsymbol{s}, \boldsymbol{y}) > \texttt{P@K}(\boldsymbol{s}', \boldsymbol{y}) \}} \\
        & = \sum_{a = 0}^{\overline{a}} \frac{1}{2} \left[ pn_K(a)^2 - \size{\{ (\boldsymbol{s}, \boldsymbol{s}') \mid \texttt{TKPR}(\boldsymbol{s}, \boldsymbol{y}) = \texttt{TKPR}(\boldsymbol{s}', \boldsymbol{y}) = a / K, \texttt{P@K}(\boldsymbol{s}, \boldsymbol{y}) = \texttt{P@K}(\boldsymbol{s}', \boldsymbol{y}) \}} \right]\\
        & = \frac{1}{2} \sum_{a=0}^{\overline{a}} pn_K(a)^2 - \frac{1}{2} \sum_{a=0}^{\overline{a}} \sum_{b=0}^{K} pn_K(a, b)^2,
    \end{aligned}$$
    Thus, we have 
    \begin{equation}
        \label{eq:p_s_dis}
        2 (\size{P} - \size{S}) = \underbrace{\sum_{b=0}^K \tbinom{K}{b}^2 }_{(I)} - \underbrace{\sum_{a=0}^{\overline{a}} pn_K(a)^2}_{(II)}.
    \end{equation}
    Notice that the following number equals to $pn_K(a, b)$, where $t_i = r_i - 1, i \in \{1, 2, \cdots, b\}$:
    $$
        \size{\left\{ (t_1, t_2, \cdots, t_b) \in \mathbb{N}^{b} \mid a - b = t_1 + t_2 + \cdots + t_b, K - 1 \ge t_1 > t_2 > \cdots > t_b \ge 0 \right\}}.
    $$
    Thus, we have the following recurrence formula:
    $$
        pn_K(a, b) = \underbrace{pn_{K - 1}(a - b, b)}_{{(a)}} + \underbrace{pn_{K - 1}(a - b, b-1)}_{(b)},
    $$
    where $(a)$ and $(b)$ correspond to the case where $t_b > 0$ and $t_b = 0$, respectively. Notice that in each recurrence, $K \to K - 1, a \to a - b, b \to \{b, b - 1\}$. Thus, by applying this recurrence iteratively, the coefficients will be exactly the Pascal's triangle:
    $$\begin{aligned}
        pn_K(a, b) & = {\color{blue}1} \cdot pn_{K - 1}(a - {\color{red}b}, {\color{orange}b}) + {\color{blue}1} \cdot pn_{K - 1}(a - {\color{red}b}, {\color{orange}b - 1}) \\
        & = {\color{blue}1} \cdot pn_{K - 2}(a - {\color{red}2b}, {\color{orange}b}) + {\color{blue}2} \cdot pn_{K - 2}(a - {\color{red}2b}, {\color{orange}b - 1}) + {\color{blue}1} \cdot pn_{K - 2}(a - {\color{red}2b}, {\color{orange}b - 2}) \\
        & = {\color{blue}1} \cdot pn_{K - 3}(a - {\color{red}3b}, {\color{orange}b}) + {\color{blue}3} \cdot pn_{K - 3}(a - {\color{red}3b}, {\color{orange}b - 1}) + {\color{blue}3} \cdot pn_{K - 3}(a - {\color{red}3b}, {\color{orange}b - 2}) + {\color{blue}1} \cdot pn_{K - 3}(a - {\color{red}3b}, {\color{orange}b - 3}) \\
        & \cdots \\
        & = \sum_{t=0}^{K - 1} {\color{blue} \tbinom{K - 1}{t}} \cdot pn_1(a - {\color{red}b(K - 1)}, {\color{orange}b - t}).
    \end{aligned}$$
    Although the last term consists of $K - 1$ combination numbers and $K - 1$ partition numbers, it is fortunate that only when $b - t = 1$, $pn_1(a - b(K - 1), b - t) > 0$. Thus, 
    $$
        pn_K(a) = \sum_{b=0}^K pn_K(a, b) = \sum_{b = 0}^K \tbinom{K - 1}{b - 1} pn_1(a - b(K - 1), 1).
    $$
    Similarly, only when $a - b(K - 1) = 1$, $pn_1(a - b(K - 1), 1) = 1$. Thus, for any $a$ such that $\exists b \in \mathbb{N}_+, a - b(K - 1) = 1$, 
    $$
        pn_K(a) = \tbinom{K - 1}{(a - K) / (K - 1)}.
    $$
    On top of this, we have
    $$
        (II) = \sum_{a=0}^{\overline{a}} pn_K(a)^2 = \sum_{b = 1}^{\lfloor K \rfloor / 2 + 1} \tbinom{K - 1}{b - 1}^2,
    $$
    where $\lfloor K \rfloor / 2 + 1$ comes from solve the equation $\overline{a} - b(K - 1) = 1$. Finally, since $\tbinom{K}{b} = \frac{K}{b} \tbinom{K - 1}{b - 1} \ge \tbinom{K - 1}{b - 1}$, and $K \ge \lfloor K \rfloor / 2 + 1$, we have $(I) \ge (II)$, where the equality holds only if $K = 1$. 
    
    Besides, this result is clearly applicable to \texttt{R@K} since the weighting term $\alpha$ does not affect the performance comparison between models. 
\end{proof}

\subsection{TKPR v.s. the ranking loss (Proof of Prop.\ref{prop:reformulation})}
\label{app:reformulation}
\reformulation*
\begin{proof}
    We consider the following two situations:
    \begin{itemize}
        \item \textbf{Case (a):} For each relevant label $y$ that is ranked at $k \le K$. In this case, we should not punish the term $\ell_{0-1}(s_y - s_{[k]})$:
        $$
            \sum_{k \le K} \I{\pi_{\boldsymbol{s}}(y) > k} = -1 + \sum_{k \le K} \I{s_y \le s_{[k]}}.
        $$
        Note that if $s_{[K]} = s_{[K + 1]}$, according to Asm.\ref{asm:wrongly_break_ties}, we have $s_y > s_{[K]} = s_{[K + 1]}$. And if $s_{[K]} > s_{[K + 1]}$, we have $s_y \ge s_{[K]} > s_{[K + 1]}$. Hence, we can conclude that $s_y > s_{[K + 1]}$ in this case.
        \item \textbf{Case (b):} For each relevant label $y$ that is ranked lower than $K$, it is clear that
        $$
            \sum_{k \le K} \I{\pi_{\boldsymbol{s}}(y) > k} = K = \sum_{k \le K} \I{s_y \le s_{[k]}}.
        $$
        Note that $s_y \le s_{[K + 1]}$ in this case.
    \end{itemize}
    To sum up, we have the following equation for any relevant labels $y$:
    $$
        \sum_{k \le K} \I{\pi_{\boldsymbol{s}}(y) > k} = - 1 + \I{s_y \le s_{[K + 1]}} + \sum_{k \le K} \I{s_y \le s_{[k]}} = - 1 + \sum_{k \le K + 1} \I{s_y \le s_{[k]}}.
    $$
    Thus, we have
    $$
        \sum_{y \in \mathcal{P}(\boldsymbol{y})} \sum_{k \le K} \I{\pi_{\boldsymbol{s}}(y) > k} = - N(\boldsymbol{y}) + \sum_{y \in \mathcal{P}(\boldsymbol{y})} \sum_{k \le K + 1} \I{s_y \le s_{[k]}} = - N(\boldsymbol{y}) + \sum_{y \in \mathcal{P}(\boldsymbol{y})} \sum_{k \le K + 1} \ell_{0-1}(s_y - s_{[k]}).
    $$
    Since $N(\boldsymbol{y})$ is independent with the score function, the two formulations are essentially equivalent.
    
    Furthermore, it is clear that 
    $$\frac{1}{K} \sum_{k \le K + 1} s_{[k]} \ge \frac{1}{K} \sum_{k \le K} s_{[k]} \ge \frac{1}{N_{-}(\boldsymbol{y})} \sum_{j \in \mathcal{N}(\boldsymbol{y})} s_j,$$
    Since $\ell_{0-1}$ is non-increasing, we have 
    $$
        L_\text{rank}(f, \boldsymbol{y}) \le \frac{1}{N(\boldsymbol{y}) K} \sum_{y \in \mathcal{P}(\boldsymbol{y})} \sum_{k \le K + 1} \ell_{0-1}(s_y - s_{[k]}) = \frac{\alpha}{N(\boldsymbol{y})} L_{K}^{\alpha}(f, \boldsymbol{y}).
    $$
\end{proof}

\subsection{TKPR v.s. AP@K (Proof of Thm.\ref{thm:ndcgboundmap})}
\label{app:ndcgboundmap}
\begin{lemma}
    \label{lem:ln_lip}
    For $0 \le \beta \le x \le y$, $\ln{y} - \ln{x} \le \frac{1}{\beta} (y - x)$.
\end{lemma}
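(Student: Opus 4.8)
The plan is to prove the inequality via the integral representation of the logarithm, which makes the monotonicity of $1/t$ do all the work. First I would dispose of the degenerate cases: if $x = y$ then both sides equal $0$ and there is nothing to prove; and if $\beta = 0$ the right-hand side is $+\infty$ under the usual convention, so again the claim is trivial. Hence I may assume $0 < \beta \le x < y$, in which case $\ln x$ and $\ln y$ are well defined and finite.

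Next, I would write $\ln y - \ln x = \int_x^y \frac{1}{t}\,dt$. On the interval of integration every $t$ satisfies $t \ge x \ge \beta > 0$, so $\frac{1}{t} \le \frac{1}{\beta}$ pointwise. Integrating this bound over $[x,y]$ yields $\ln y - \ln x = \int_x^y \frac{1}{t}\,dt \le \int_x^y \frac{1}{\beta}\,dt = \frac{1}{\beta}(y - x)$, which is exactly the asserted inequality. (Equivalently, one can invoke the mean value theorem for $\ln$ on $[x,y]$: there is $\xi \in (x,y)$ with $\ln y - \ln x = \frac{1}{\xi}(y-x)$, and since $\xi > x \ge \beta$ and $y - x \ge 0$ we get $\frac{1}{\xi}(y-x) \le \frac{1}{\beta}(y-x)$.)

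There is essentially no obstacle in this lemma; it is a one-line calculus fact. The only point requiring a moment of care is the handling of the boundary value $\beta = 0$ (and implicitly $x = 0$), which is the reason the statement quantifies over $0 \le \beta$ rather than $0 < \beta$: in the applications the lemma is invoked with a strictly positive lower bound on the scores, so in practice $\beta > 0$ and the integral bound applies directly.
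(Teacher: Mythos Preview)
Your proof is correct and essentially equivalent to the paper's. The paper writes $\ln y - \ln x = \ln\bigl(1 + \tfrac{y}{x} - 1\bigr) \le \tfrac{y}{x} - 1 = \tfrac{1}{x}(y-x) \le \tfrac{1}{\beta}(y-x)$, using the standard inequality $\ln(1+u)\le u$ in place of your integral bound; both are one-line calculus arguments hinging on $1/t \le 1/\beta$ for $t \ge \beta$.
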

\begin{proof}
    It is clear that $$\ln{y} - \ln{x} = \ln{(1 + \frac{y}{x} - 1)} \le \frac{y}{x} - 1 = \frac{1}{x} \cdot (y - x) \le \frac{1}{\beta} \cdot (y - x).$$
\end{proof}

\ndcgboundmap*
\begin{proof}
    For the left inequality, given $(\boldsymbol{x}, \boldsymbol{y}) \in \mathcal{Z}$, let $p := K \cdot \texttt{P@K}(f, \boldsymbol{y}) > 0$ and $\mathcal{K} := \{k_i\}_{i=1}^{p}$ such that $1 \le k_1 < k_2 < \cdots < k_p \le K$ and $\forall k \in \mathcal{K}, y_{ \sigma(f, k)} = 1$. Then, we have 
    \begin{equation}
        \label{eq:reformulate_ap}
        \begin{aligned}
            \texttt{AP@K}(f, \boldsymbol{y}) & = \frac{1}{N_K(\boldsymbol{y})} \sum_{k=1}^{K} \frac{y_{\sigma(f, k)}}{k} \sum_{i=1}^{k} y_{\sigma(f, i)} \\
            & \overset{(a)}{=} \frac{1}{N_K(\boldsymbol{y})} \sum_{k=1}^{K} \left[ \frac{y_{\sigma(f, k)}}{k} + \frac{y_{\sigma(f, k+1)}}{k + 1} + \cdots + \frac{y_{\sigma(f, K)}}{K} \right] y_{\sigma(f, k)} \\
            & \overset{(b)}{=} \frac{1}{N_K(\boldsymbol{y})} \sum_{i = 1}^{p} \sum_{j=k_i}^K \frac{y_{\sigma(f, j)}}{j} \\
            & \overset{(c)}{=} \frac{1}{N_K(\boldsymbol{y})} \sum_{i = 1}^{p} \sum_{j=i}^p \frac{1}{k_j} \\
            & \overset{(d)}{=} \frac{1}{N_K(\boldsymbol{y})} \sum_{i = 1}^{p} \frac{i}{k_i},
        \end{aligned}
    \end{equation}
    where $(a)$ holds since
    \renewcommand{\arraystretch}{2}
    $$\begin{array}{ccl}
        k = 1, & \frac{y_{\sigma(f, 1)}}{1}: & {\color{blue} y_{\sigma(f, 1)}}, \\
        k = 2, & \frac{y_{\sigma(f, 2)}}{2}: & {\color{blue} y_{\sigma(f, 1)}} + {\color{red} y_{\sigma(f, 2)}}, \\
        \vdots & \vdots & {\color{blue} y_{\sigma(f, 1)}} + {\color{red} y_{\sigma(f, 2)}} + \cdots + {\color{orange} y_{\sigma(f, k)}}, \\
        k = K, & \frac{y_{\sigma(f, K)}}{K}: & {\color{blue} y_{\sigma(f, 1)}} + {\color{red} y_{\sigma(f, 2)}} + \cdots + {\color{orange} y_{\sigma(f, k)}} + \cdots + {\color{purple} y_{{\sigma(f, K)}}}, \\
    \end{array}$$
    $(b)$ is induced by replacing $\sum_{k=1}^K y_{\sigma(f, k)}$ with $\sum_{i=1}^p y_{\sigma(f, k_i)}$, $(c)$ is the same as $(b)$, and $(d)$ follows the fact
    $$\begin{array}{cr}
        i = 1, & {\color{blue} \frac{1}{k_1}} + {\color{red} \frac{1}{k_2}} + \cdots + {\color{orange} {\frac{1}{k_j}}} + \cdots + {\color{purple} \frac{1}{k_p}}, \\
        i = 2, & {\color{red} \frac{1}{k_2}} + \cdots + {\color{orange} {\frac{1}{k_j}}} + \cdots + {\color{purple} \frac{1}{k_p}}, \\
        \vdots & {\color{orange} {\frac{1}{k_j}}} + \cdots + {\color{purple} \frac{1}{k_p}}, \\
        i = p, & {\color{purple} \frac{1}{k_p}}. \\
    \end{array}$$
    Meanwhile, we have
    $$\begin{aligned}
        \texttt{TKPR}^{\alpha_2}(f, \boldsymbol{y}) & = \frac{1}{N_{K}(\boldsymbol{y})  K} \sum_{k = 1}^{K} (K + 1 - k) y_{\sigma(f, k)} \\
        & = \frac{1}{N_K(\boldsymbol{y})} \sum_{k \in \mathcal{K}} \frac{K + 1 - k}{ K} \\
        & = \frac{1}{N_K(\boldsymbol{y})} \sum_{i = 1}^{p} \frac{K + 1 - k_i}{ K}.
    \end{aligned}$$
    Since $\texttt{AP@K}(f, \boldsymbol{y})$ and $\texttt{TKPR}^{\alpha_2}(f, \boldsymbol{y})$ have the same number of bounded summation terms, there must exist a constant $\rho > 0$ such that 
    $$
        \texttt{AP@K}(f, \boldsymbol{y}) \ge \rho \cdot \texttt{TKPR}^{\alpha_2}(f, \boldsymbol{y}).
    $$
    To obtain the range of $\rho$'s upper bounds, let
    $$
        C_1 := \frac{\sum_{i=1}^{p} \frac{i}{k_i}}{\sum_{i=1}^{p} \frac{1}{k_i}} \in [1, p], C_2 := \frac{1}{p} \sum_{i=1}^{p} k_i \in [k_1, k_p].
    $$
    Then, 
    $$\begin{aligned}
        \texttt{AP@K}(f, \boldsymbol{y}) - \rho \cdot \texttt{TKPR}^{\alpha_2}(f, \boldsymbol{y}) & = \frac{1}{N_K(\boldsymbol{y})} \left[ C_1 \sum_{i=1}^{p} \frac{1}{k_i} - \rho \cdot \frac{p(K + 1 - C_2)}{ K} \right] \\
        & \ge \frac{1}{N_K(\boldsymbol{y})} \left[ C_1 \sum_{i=1}^{p} \frac{1}{K + 1 - i} - \rho \cdot \frac{p(K + 1 - C_2)}{ K} \right] \\
        & \overset{(a)}{\ge} \frac{1}{N_K(\boldsymbol{y})} \left[ C_1 \ln{\frac{K + 1}{K + 1 - p}} - \rho \cdot \frac{p(K + 1 - C_2)}{ K} \right], \\
    \end{aligned}$$
    where $(a)$ is induced by the fact that for any $1 \le k \le K$,
    $$
        \frac{1}{k} + \cdots + \frac{1}{K} \ge \int_{k}^{K + 1} \frac{1}{t} \, dt = \ln{(K + 1)} - \ln{k} = \ln \frac{K + 1}{k}.
    $$
    In other words, when
    $$\begin{aligned}
        \rho & \le \frac{ C_1 \cdot  K }{ p (K + 1 - C_2) } \ln{\frac{K + 1}{K + 1 - p}} \\
        & = \underbrace{\frac{ C_1 }{ K + 1 - C_2 }}_{U_1} \cdot \underbrace{\frac{1}{\texttt{P@K}(f, \boldsymbol{y})} \ln{\frac{1 + 1/K}{1 + 1/K - \texttt{P@K}(f, \boldsymbol{y})}}}_{U_2}, \\
    \end{aligned}$$
    $\texttt{AP@K}(f, \boldsymbol{y}) \ge \rho \cdot \texttt{TKPR}^{\alpha_2}(f, \boldsymbol{y})$ will hold. Next, we consider this upper bound from the two parts: 
    \begin{itemize}
        \item $U_1$ is increasing \textit{w.r.t.} $C_1$ and $C_2$ that are determined by $\mathcal{K}$. Since $C_1 \in [1, p], C_2 \in [k_1, k_p]$, we have 
        $$
            \frac{1}{K} \le \frac{1}{K + 1 - k_1} \le (II) \le \frac{p}{K + 1 - k_p} \le p \le K.
        $$
        \item $U_2$ depends on the model performance on \texttt{P@K}. Let 
        $$
            g(t) := \frac{1}{t} \ln\frac{1}{1 - C_g t}, t \in (0, 1], C_g := \frac{1}{1 + \frac{1}{K}} \in [0.5, 1).
        $$ 
        We have 
        $$
            g'(t) = \frac{1}{t^2} \cdot \ln(1 - C_g t) + \frac{1}{t} \cdot \frac{C_g}{1 - C_g t} = \frac{(1 - C_g t) \ln(1 - C_g t) + C_g t}{t^2 (1 - C_g t)}.
        $$
        Let $$
            h(t) = (1 - t) \ln(1 - t) + t, t \in [0, 1).
        $$ 
        Then, we have 
        $$
            h'(t) = - \ln(1 - t) - 1 + 1 \ge 0.
        $$
        Hence, $h(t) \ge h(0) = 0$, that is, $g'(t) \ge 0$. In other words, $U_2$ is increasing \textit{w.r.t.} $\texttt{P@K}(f, \boldsymbol{y})$. As $t \to 0$, $g(t) \to C_g = \frac{K}{1 + K}$. And $g(1) = \ln\frac{1}{1 - C_g} = \ln(K + 1)$.
    \end{itemize}
    To sum up, the upper bound of $\rho$ is bounded in $\left[ 1 / (K + 1), K \ln(K + 1) \right]$.

    \noindent \rule[2pt]{\linewidth}{0.1em}
    For the right inequality, 
    \begin{equation}
        \begin{aligned}
            \texttt{AP@K}(f, \boldsymbol{y}) & = \frac{1}{N_K(\boldsymbol{y})} \sum_{k=1}^{K} y_{\sigma(f, k)} \cdot \frac{1}{k} \sum_{i=1}^{k} y_{\sigma(f, i)} \\
            & \le \frac{1}{N_K(\boldsymbol{y})} \sum_{k=1}^{K} \frac{1}{k} \sum_{i=1}^{k} y_{\sigma(f, i)} \\
            & \overset{(a)}{=} \frac{1}{N_K(\boldsymbol{y})} \sum_{k=1}^{K} \left[ \frac{1}{k} + \frac{1}{k + 1} + \cdots + \frac{1}{K} \right] y_{\sigma(f, k)} \\
            & \overset{(b)}{\le} \frac{1}{N_K(\boldsymbol{y})} \left[(1 + \ln{K}) y_{\sigma(f, 1)} + \sum_{k=2}^{K} \left[ \ln{K} - \ln{(k-1)} \right] y_{\sigma(f, k)} \right]  \\
            & \overset{(c)}{\le} \frac{1}{N_K(\boldsymbol{y})} \left[K \cdot y_{\sigma(f, 1)} + \sum_{k=2}^{K} \left[ K - (k-1) \right] y_{\sigma(f, k)} \right]  \\
            & = \frac{1}{N_K(\boldsymbol{y})} \left[\sum_{k=1}^{K} \left[ K + 1 - k \right] y_{\sigma(f, k)} \right]  \\
            & = K \cdot \texttt{TKPR}^{\alpha_1}(f, \boldsymbol{y})) \\
        \end{aligned}
    \end{equation}
    where $(a)$ follows the derivation of Eq.(\ref{eq:reformulate_ap}), $(b)$ comes from the fact that for any $k \in \{2, 3, \cdots, K\}$,
    $$\begin{aligned}
        \frac{1}{k} + \cdots + \frac{1}{K} & \le \int_{k-1}^{K} \frac{1}{t} \, dt = \ln{K} - \ln{(k-1)}, \\
        1 + \frac{1}{2} + \cdots + \frac{1}{K} & \le 1 + \ln{K} - \ln{(2-1)} = 1 + \ln{K}, \\
    \end{aligned}$$ 
    and $(c)$ is induced by Lem.\ref{lem:ln_lip} and the fact that $x \ge 1 + \ln x$.
\end{proof}

\textbf{Further discussion about the monotonicity of the upper bound of $\rho$.} So far, we have known that $U_1$ is increasing \textit{w.r.t.} $\texttt{P@K}(f, \boldsymbol{y})$. Next, we analyze the monotonicity of $U_2$. Note that two factors of $\mathcal{K}$ can affect the value of $U_2$: the number of elements $p$ and the value of each $k_i$. Hence, we next consider the following two orthogonal cases:
\begin{itemize}
    \item Two score functions share the same ranking performance but difference performances on \texttt{P@K}. Formally, let $\mathcal{K}' := \{ k'_i \}_{i=1}^{p+1}$ such that $\forall i \le p, k_i = k'_i < k'_{p+1}$. On one hand, since $k'_{p+1} / 1 = k'_{p+1} > \sum_{i=1}^{p} k_i / p$,
    $$
        C_2(\mathcal{K}') - C_2(\mathcal{K}) = \frac{k'_{p+1} + \sum_{i=1}^{p} k_i}{1 + p} - \frac{\sum_{i=1}^{p} k_i}{p} > 0.
    $$
    On the other hand, since $\frac{p+1}{k'_{p + 1}} / \frac{1}{k'_{p + 1}} = p + 1 > \sum_{i=1}^{p} \frac{i}{k_i} / \sum_{i=1}^{p} \frac{1}{k_i}$,
    $$
        C_1(\mathcal{K}') - C_1(\mathcal{K}) =  \frac{\frac{p+1}{k'_{p + 1}} + \sum_{i=1}^{p} \frac{i}{k_i}}{\frac{1}{k'_{p + 1}} + \sum_{i=1}^{p} \frac{1}{k_i}} - \frac{\sum_{i=1}^{p} \frac{i}{k_i}}{\sum_{i=1}^{p} \frac{1}{k_i}} > 0.
    $$
    Since $U_2$ is increasing \textit{w.r.t.} $C_1$ and $C_2$, we have $U_2(\mathcal{K}') > U_2(\mathcal{K})$. In other words, $U_2$ is increasing \textit{w.r.t.} $\texttt{P@K}(f, \boldsymbol{y})$ under the given the ranking performance.
    \item Two score functions share the same performance on \texttt{P@K} but difference ranking performance. It is a pity that in this case, \ul{$U_2$ is not necessarily increasing \textit{w.r.t.} the ranking performance}. For example, assume that $p = 1, K \ge 2$, and let $\mathcal{K}_1 = \{1\}, \mathcal{K}_2 = \{2\}$. It is clear that $\mathcal{K}_1$ has a better ranking performance. However, we have 
    $$
        U_2(\mathcal{K}_1) = \frac{1}{K} < \frac{1}{K - 1} = U_2(\mathcal{K}_2).
    $$
    Meanwhile, $U_2$ is not necessarily decreasing \textit{w.r.t.} the ranking performance. Assume that $p = 2, K = 10$, and let $\mathcal{K}_3 = \{1, 2\}, \mathcal{K}_4 = \{1, 3\}$. It is clear that $\mathcal{K}_3$ has a better ranking performance. Meanwhile, 
    $$\begin{aligned}
        U_2(\mathcal{K}_3) & = \frac{\frac{1}{1} + \frac{2}{2}}{\left( \frac{1}{1} + \frac{1}{2} \right) \left( 10 + 1 - \frac{1 + 2}{2} \right) } = \frac{2}{1.5 \cdot 9.5} \approx 0.1404, \\
        U_2(\mathcal{K}_4) & = \frac{\frac{1}{1} + \frac{2}{3}}{\left( \frac{1}{1} + \frac{1}{3} \right) \left( 10 + 1 - \frac{1 + 3}{2} \right) } = \frac{5}{36} \approx 0.1389. \\
    \end{aligned}$$
    Thus, we have $U_2(\mathcal{K}_3) > U_2(\mathcal{K}_4)$.
\end{itemize}
To sum up, as one improves the precision performance, $U_2$ tends to increase. But when one improves the ranking performance under the given precision performance, $U_2$ does not necessarily increase.

\section{Consistency Analysis of TKPR Optimization}
\subsection{Bayes Optimality of TKPR (Proof of Prop.\ref{prop:bayes})}
\label{app:byestkpr}
\begin{lemma}[Rearrangement inequality \cite{hardy1952inequalities}]
    \label{lem:rearr_ineq}
    For any two real number sets $\{a_i\}_{i = 1}^n$ and $\{b_i\}_{i = 1}^n$, 
    $$
        \sum_{i = 1}^n a_{[i]} b_{[i]} \ge \sum_{i = 1}^n a_i b_i.
    $$
\end{lemma}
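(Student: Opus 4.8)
The statement is the classical rearrangement inequality: pairing the two sequences in matching descending-sorted order $\{a_{[i]}\}$, $\{b_{[i]}\}$ maximizes the sum of products, and hence in particular dominates the sum $\sum_i a_i b_i$ induced by the original (arbitrary) pairing. The plan is to prove it by an exchange argument whose engine is the elementary two-term identity: for any reals with $a_p \ge a_q$ and $b_r \ge b_s$,
\begin{equation*}
    a_p b_r + a_q b_s - \left( a_p b_s + a_q b_r \right) = (a_p - a_q)(b_r - b_s) \ge 0,
\end{equation*}
so that assigning the larger $b$ to the larger $a$ never decreases the sum.

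First I would reduce to a cleaner form. Relabel the index set by a permutation $\tau$ that sorts the first sequence, so that after relabeling $a_1 \ge a_2 \ge \cdots \ge a_n$. Since both $\sum_i a_i b_i$ and $\sum_i a_{[i]} b_{[i]}$ depend only on the underlying multisets (or are sums over all indices), this relabeling leaves the values of both sums unchanged. The target then becomes the claim that among all permutations $\pi$ of $\{1, \dots, n\}$, the quantity $\sum_{i=1}^n a_i b_{\pi(i)}$ is maximized by the permutation that lists $\{b_i\}$ in descending order. Here $\sum_i a_i b_i$ is the value at the identity permutation and $\sum_i a_{[i]} b_{[i]}$ is the value at the sorting permutation, so proving the maximization claim immediately yields the inequality.

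For the maximization claim I would induct on $n$ (equivalently, run a finite bubble sort). Take any permutation $\pi$ attaining the maximum and let $j$ be an index with $b_{\pi(j)} = \max_k b_{\pi(k)}$. Swapping the assignments at positions $1$ and $j$ changes the sum by $(a_1 - a_j)(b_{\pi(j)} - b_{\pi(1)}) \ge 0$, using $a_1 \ge a_j$ and the two-term identity above; hence without loss of generality $a_1$ is paired with the largest $b$. Deleting position $1$ and applying the inductive hypothesis to the remaining $n-1$ terms completes the argument, the base case $n = 1$ being trivial.

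Since the result is classical, there is no deep obstacle; the only point requiring care, rather than a genuine difficulty, is the handling of ties. When some of the $a_i$ or $b_i$ coincide, neither the sorted sequence $\{a_{[i]}\}$ nor the maximizing permutation is unique, but in that case every relevant swap is an equality (one of the two factors vanishes), so the exchange argument is insensitive to how ties are broken and the inequality holds regardless. The entire effort therefore lies in organizing the exchange/induction bookkeeping cleanly.
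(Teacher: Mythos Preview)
Your proof is correct and follows the standard exchange/induction argument for the rearrangement inequality. Note, however, that the paper does not actually prove this lemma: it is stated with a citation to Hardy--Littlewood--P\'olya and used as a black box in the proof of Proposition~\ref{prop:bayes}, so there is nothing in the paper to compare your argument against beyond the fact that both treat it as the classical result it is.
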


\bayestkpr*
\begin{proof}
    We consider the conditional TKPR risk:
    \begin{equation}
        \begin{aligned}
            \mathcal{R}_K^\alpha(f \mid \boldsymbol{x}) & := \E{\boldsymbol{y} \mid \boldsymbol{x}}{\frac{1}{\alpha K} \sum_{i \in \mathcal{L}} \sum_{k=1}^{K} y_i \I{i \notin \texttt{Top}_k(f)}} \\
            & = \E{\boldsymbol{y} \mid \boldsymbol{x}}{ \frac{1}{\alpha K} \sum_{i \in \mathcal{L}} \sum_{k=1}^{K} y_i \left[ 1 - \I{i \in \texttt{Top}_k(f)} \right] } \\
            & = \frac{1}{K} \sum_{k=1}^K \sum_{i \in \mathcal{L}} \E{\boldsymbol{y} \mid \boldsymbol{x}}{\frac{y_i}{\alpha}} - \frac{1}{K} \E{\boldsymbol{y} \mid \boldsymbol{x}}{\sum_{k=1}^{K} \sum_{i \in \texttt{Top}_k(f)} \frac{y_i}{\alpha}} \\
            & = \sum_{i \in \mathcal{L}} \E{\boldsymbol{y} \mid \boldsymbol{x}}{\frac{y_i}{\alpha}} - \frac{1}{K} \sum_{k=1}^{K} \sum_{i \in \texttt{Top}_k(f)} \E{\boldsymbol{y} \mid \boldsymbol{x}}{\frac{y_i}{\alpha}} \\
            & = \sum_{i \in \mathcal{L}} \Delta(\boldsymbol{x})_i - \frac{1}{K} \sum_{k=1}^{K} \sum_{i \in \texttt{Top}_k(f)} \Delta(\boldsymbol{x})_i \\
            & = \sum_{i \in \mathcal{L}} \Delta(\boldsymbol{x})_i - \frac{1}{K} \sum_{k=1}^{K} (K + 1 - k) \Delta(\boldsymbol{x})_{\sigma(f, k)}, \\
        \end{aligned}
    \end{equation}
    where the last equation follows the derivation of Eq.(\ref{eq:tkprtondcg}).

    Note that $K + 1 - k$ is decreasing \textit{w.r.t.} $k$. Thus, when no ties exist in $\Delta(\boldsymbol{x})_i$, according to Lem.\ref{lem:rearr_ineq}, it is clear that the optimal solution $f^*$ should satisfy
    $$
        \forall k \le K, \sigma(f^*, k) = \sigma(\Delta, k).
    $$
    When ties exists, that is, $\exists k \in \mathcal{L}, \Delta(\boldsymbol{x})_{\sigma(f, k)} = \Delta(\boldsymbol{x})_{\sigma(f, k + 1)}$, $f^*$ can further exchange the value of $f(\boldsymbol{x})_{[k]}$ and $f(\boldsymbol{x})_{[k+1]}$. Since $\size{\texttt{Top}_K(\Delta) \cup \texttt{Tie}_K(\Delta)}$ might be greater than $K$, under Asm.\ref{asm:wrongly_break_ties}, $f^*$ should satisfy
    $$
        \forall k \le K - 1, \texttt{Tie}_k(f^*) = \texttt{Tie}_k(\Delta), \texttt{Tie}_K(f^*) \subset \texttt{Tie}_K(\Delta).
    $$
\end{proof}

\subsection{Sufficient Condition for TKPR Consistency (Proof of Thm.\ref{thm:condition_for_consistency})}
\label{app:suffcondicons}
\conditionforconsistency*
\begin{proof}
    We first define the conditional risk and the optimal conditional risk of the surrogate loss:
    \begin{equation}
        \begin{aligned}
            \mathcal{R}_K^{\alpha, \ell} (\boldsymbol{s} \mid \boldsymbol{x}) & := \E{\boldsymbol{y} \mid \boldsymbol{x}} {\frac{1}{\alpha K} \sum_{y \in \mathcal{P}(\boldsymbol{x})}\sum_{k \le K+1} \ell \left(s_y - s_{[k]}\right)}, \\
            \mathcal{R}_K^{\alpha, \ell, *} (\boldsymbol{x}) & := \inf_{\boldsymbol{s} \in \mathbb{R}^C} \mathcal{R}_K^{\alpha, \ell} (\boldsymbol{s} \mid \boldsymbol{x}). \\
        \end{aligned}
    \end{equation}
    Then, we prove the theorem by the following steps.

    \noindent \rule[2pt]{\linewidth}{0.1em}
    \textbf{Claim 1.} If $\boldsymbol{s}^* \in \arg \inf_{\boldsymbol{s}} \mathcal{R}_K^{\alpha, \ell}(\boldsymbol{s} \mid \boldsymbol{x})$, then $\mathsf{RPT}(\boldsymbol{s}^*, \Delta(\boldsymbol{x}))$.

    \noindent \rule[2pt]{\linewidth}{0.1em}
    According to the definition, we have
    \begin{equation}
        \begin{aligned}
            \mathcal{R}_K^{\alpha, \ell} (\boldsymbol{s} \mid \boldsymbol{x}) & =  \frac{1}{K} \sum_{\boldsymbol{y} \in \mathcal{Y}} \frac{\pp{\boldsymbol{y} \mid \boldsymbol{x}}}{\alpha} \sum_{i \in \mathcal{P}(\boldsymbol{x})}\sum_{k \le K+1} \ell \left(s_i - s_{[k]}\right) \\
            & =  \frac{1}{K} \sum_{i \in \mathcal{L}} \sum_{k \le K+1} \ell \left(s_i - s_{[k]}\right) \sum_{\boldsymbol{y}: y_i = 1} \frac{\pp{\boldsymbol{y} \mid \boldsymbol{x}}}{\alpha} \\
            & =  \frac{1}{K} \sum_{i \in \mathcal{L}} \sum_{k \le K+1} \Delta_i \ell \left(s_i - s_{[k]}\right) \\
        \end{aligned}
    \end{equation}
    Next, we show this claim by showing that if $\lnot \mathsf{RPT}(\boldsymbol{s}, \Delta)$, then $\mathcal{R}_K^{\alpha, \ell} (\boldsymbol{s} \mid \boldsymbol{x}) > \mathcal{R}_K^{\alpha, \ell, *} (\boldsymbol{s} \mid \boldsymbol{x})$. Note that $\lnot \mathsf{RPT}(\boldsymbol{s}, \Delta)$ consists of the following cases:
    \begin{itemize}
        \item \textbf{Case (1):} Give $\boldsymbol{s} \in \mathbb{R}^C$ and $i, j \in \mathcal{L}$ such that $\Delta_i = \Delta_j$ but $s_i \neq s_j$, where $\pi_{\boldsymbol{s}}(i), \pi_{\boldsymbol{s}}(j) \le K + 1$. Without loss of generality, we assume that $s_i < s_j$. Then, the claim can be obtained by a contradiction. To be specific, we assume that $\mathcal{R}_K^{\alpha, \ell}(\boldsymbol{s} \mid \boldsymbol{x}) = \mathcal{R}_K^{\alpha, \ell, *}(\boldsymbol{s} \mid \boldsymbol{x})$. According to the first-order condition, we have
        $$
            \frac{\partial}{\partial s_i} \mathcal{R}_K^{\alpha, \ell}(\boldsymbol{s} \mid \boldsymbol{x}) = \frac{\partial}{\partial s_j} \mathcal{R}_K^{\alpha, \ell}(\boldsymbol{s} \mid \boldsymbol{x}) = 0.
        $$
        That is, 
        $$\begin{aligned}
            \underbrace{\Delta_i \sum_{k=1, [k] \neq i}^{K+1} \ell'(s_{i} - s_{[k]})}_{(I)} & = \underbrace{\sum_{y \neq i} \Delta_{y} \ell'(s_y - s_{i})}_{(II)}; \\
            \underbrace{\Delta_j \sum_{k=1, [k] \neq j}^{K+1} \ell'(s_{j} - s_{[k]})}_{(III)} & = \underbrace{\sum_{y \neq j} \Delta_{y} \ell'(s_y - s_{j})}_{(IV)},
        \end{aligned}$$
        where $[k] \neq i$ means that the calculation of the derivative will be skipped when $\pi_{\boldsymbol{s}}(i) = k$. Since $\Delta_i = \Delta_j$, we have
        $$\begin{aligned}
            (I) & = \Delta_i \ell'(s_i - s_j) + \Delta_i \sum_{k=1, [k] \notin \{i, j\}}^{K+1} \ell'(s_i - s_{[k]}), \\
            (II) & = \Delta_i\ell'(s_j - s_i) + \sum_{y \notin \{i, j\}} \Delta_{y} \ell'(s_y - s_i), \\
            (III) & = \Delta_i \ell'(s_j - s_i) + \Delta_i \sum_{k=1, [k] \notin \{i, j\}}^{K+1} \ell'(s_j - s_{[k]}), \\
            (IV) & = \Delta_i\ell'(s_i - s_j) + \sum_{y \notin \{i, j\}} \Delta_{y} \ell'(s_y - s_j). \\
        \end{aligned}$$
        Then, since $(I) - (III) = (II) - (IV)$, we have 
        $$\begin{aligned}
            \underbrace{2\Delta_i \ell'(s_i - s_j)}_{(I')} & + \underbrace{\Delta_i \sum_{k=1, [k] \notin \{i, j\}}^{K+1} \left[ \ell'(s_i - s_{[k]}) - \ell'(s_j - s_{[k]}) \right]}_{(II')} \\
            & = \underbrace{2 \Delta_i\ell'(s_j - s_i)}_{(III')} + \underbrace{\sum_{y \notin \{i, j\}} \Delta_{y} \left[ \ell'(s_y - s_i) - \ell'(s_y - s_j) \right]}_{(IV')}. 
        \end{aligned}$$
        Since $\ell'(t)$ is strictly increasing and $s_i < s_j$, it is clear that $(I') < (III'), (II') < 0$, and $(IV') > 0$. That is, $(I') + (II') < (III') + (IV')$, which induces contradiction.
        \item \textbf{Case (2):} Give $\boldsymbol{s} \in \mathbb{R}^C$ and $i, j \in \mathcal{L}$ such that $\Delta_i \neq \Delta_j$ but $s_i = s_j$, where $\pi_{\boldsymbol{s}}(i), \pi_{\boldsymbol{s}}(j) \le K + 1$. The claim can also be obtained by a contradiction. Similarly, According to the first-order condition and $s_i = s_j$, we have 
        $$\begin{aligned}
            (I) & = \Delta_i \ell'(0) + \Delta_i \sum_{k=1, [k] \notin \{i, j\}}^{K+1} \ell'(s_i - s_{[k]}), \\
            (II) & = \Delta_j \ell'(0) + \sum_{y \notin \{i, j\}} \Delta_{y} \ell'(s_y - s_i), \\
            (III) & = \Delta_j \ell'(0) + \Delta_j \sum_{k=1, [k] \notin \{i, j\}}^{K+1} \ell'(s_i - s_{[k]}), \\
            (IV) & = \Delta_i\ell'(0) + \sum_{y \notin \{i, j\}} \Delta_{y} \ell'(s_y - s_i). \\
        \end{aligned}$$
        Then, since $(I) - (III) = (II) - (IV)$, we have 
        $$
            (\Delta_i - \Delta_j) \sum_{k=1, [k] \notin \{i, j\}}^{K+1} \ell'(s_i - s_{[k]}) = 2 (\Delta_j - \Delta_i) \ell'(0)
        $$
        which leads to a contradiction since $\ell(t) < 0$ and $\Delta_i \neq \Delta_j$.
        \item \textbf{Case (3):} Give $\boldsymbol{s}_1 \in \mathbb{R}^C$ and $i, j \in \mathcal{L}$ such that $\Delta_i < \Delta_j$ but $s_{1, i} > s_{1, j}$, where $\pi_{\boldsymbol{s}}(i), \pi_{\boldsymbol{s}}(j) \le K + 1$. Next, we obtain the claim by showing that any swapping breaking the ranking of $\mathcal{T}_{\Delta}$ will induce a larger conditional risk. To be specific, let $\boldsymbol{s}_2 \in \mathbb{R}^C$ sucht that $s_{2, i} = s_{1, j}, s_{2, j} = s_{1, i}$ and $\forall k \notin \{i, j\}, s_{2, k} = s_{1, k}$. Then, we have 
        $$\begin{aligned}
            & K \left[ \mathcal{R}_K^{\alpha, \ell} (\boldsymbol{s}_1 \mid \boldsymbol{x}) - \mathcal{R}_K^{\alpha, \ell} (\boldsymbol{s}_2 \mid \boldsymbol{x})\right] \\
            & = \left[ \Delta_{i} \sum_{k=1}^{K+1} \ell(s_{1, i} - s_{1, [k]}) + \Delta_{j} \sum_{k=1}^{K+1} \ell(s_{1, j} - s_{1, [k]}) \right] - \left[ \Delta_{i} {\color{orange} \sum_{k=1}^{K+1} \ell(s_{2, i} - s_{2, [k]})} + \Delta_{j} {\color{blue} \sum_{k=1}^{K+1} \ell(s_{2, j} - s_{2, [k]}) }  \right] \\
            & = \left[ \Delta_{i} \sum_{k=1}^{K+1} \ell(s_{1, i} - s_{1, [k]}) + \Delta_{j} \sum_{k=1}^{K+1} \ell(s_{1, j} - s_{1, [k]}) \right] - \left[ \Delta_{i} {\color{orange} \sum_{k=1}^{K+1} \ell(s_{1, j} - s_{1, [k]})} + \Delta_{j} {\color{blue} \sum_{k=1}^{K+1} \ell(s_{1, i} - s_{1, [k]}) }  \right] \\
            & = (\Delta_j - \Delta_i) \sum_{k=1}^{K+1} \left[ \ell(s_{1, j} - s_{1, [k]}) - \ell(s_{1, i} - s_{1, [k]}) \right] \\
            & > 0,
        \end{aligned}$$
        where the inequality is induced by $\Delta_j > \Delta_i, s_{1, j} < s_{2, j}$ and the surrogate loss $\ell$ is strictly decreasing. 
    \end{itemize}
    Given any $\boldsymbol{a} \in \mathbb{R}^C$, let $\mathcal{T}_{\boldsymbol{a}} := \{\texttt{Tie}_{k}(\boldsymbol{a})\}_{k=1}^{K + 1} = \{\mathcal{T}_1, \mathcal{T}_2, \cdots, \mathcal{T}_{\size{\mathcal{T}_{\boldsymbol{a}}}}\}$ denote the set of tie sets. Note that $\size{\mathcal{T}_{\boldsymbol{a}}} \le C$ due to the possible ties in $\boldsymbol{a}$. From the analysis in \textbf{Case (1)} and \textbf{Case (2)}, we know that $\mathcal{T}_{\boldsymbol{s}^*} = \mathcal{T}_{\Delta}$. Without loss of generality, we assume that $\forall i \le \size{\mathcal{T}_\Delta}, \mathcal{T}_{\Delta, i} = \mathcal{T}_{\boldsymbol{s}^*, i}$. Definet the partial ranking between tie sets as 
    $$
        \mathcal{T}_i \prec \mathcal{T}_j \Leftrightarrow \forall y_1 \in \mathcal{T}_i, y_2 \in \mathcal{T}_j, s_{y_1} < s_{y_2}.
    $$
    Then, from the analysis in \textbf{Case (3)}, we know that if $\mathcal{T}_{\Delta, 1} \prec \mathcal{T}_{\Delta, 2} \prec \cdots \prec \mathcal{T}_{\Delta, \size{\mathcal{T}_{\Delta}}}$, $\mathcal{T}_{\boldsymbol{s}^*, 1} \prec \mathcal{T}_{\boldsymbol{s}^*, 2} \prec \cdots \prec \mathcal{T}_{\boldsymbol{s}^*, \size{\mathcal{T}_{\boldsymbol{s}^*}}}$, which obtains the claim.

    \noindent \rule[2pt]{\linewidth}{0.1em}
    \textbf{Claim 2.}
    $$
        \inf_{\boldsymbol{s}: \lnot \mathsf{RPT}(\boldsymbol{s},  \boldsymbol{x})} \mathcal{R}_K^{\alpha, \ell}(\boldsymbol{s} \mid \boldsymbol{x}) > \inf_{\boldsymbol{s}: \mathsf{RPT}(\boldsymbol{s},  \boldsymbol{x})} \mathcal{R}_K^{\alpha, \ell}(\boldsymbol{s} \mid \boldsymbol{x})
    $$
    
    \noindent \rule[2pt]{\linewidth}{0.1em}
    It is clear that \textbf{Claim 2} follows \textbf{Claim 1}.

    \noindent \rule[2pt]{\linewidth}{0.1em}
    \textbf{Claim 3.} For any sequence $\{\boldsymbol{s}^{(t)}\}_{t \in \mathbb{N}_+}$, $\boldsymbol{s}^{(t)} \in \mathbb{R}^C$,
    $$
        \mathcal{R}_K^{\alpha, \ell}(\boldsymbol{s}^{(t)}\mid \boldsymbol{x}) \to \inf_{\boldsymbol{s} \in \mathbb{R}^C} \mathcal{R}_K^{\alpha, \ell}(\boldsymbol{s}\mid \boldsymbol{x}) \Rightarrow \mathcal{R}_{K}^\alpha(\boldsymbol{s}^{(t)}\mid \boldsymbol{x}) \to \inf_{\boldsymbol{s} \in \mathbb{R}^C} \mathcal{R}_{K}^\alpha(\boldsymbol{s}\mid \boldsymbol{x}).
    $$
    \noindent \rule[2pt]{\linewidth}{0.1em}
    According to Prop.\ref{prop:bayes} and \textbf{Claim 1}, we only need to show that when $t \to \infty$, $\mathsf{RPT}(\boldsymbol{s}^{(t)}, \Delta(\boldsymbol{x}))$. Define 
    $$
        \delta := \inf_{\boldsymbol{s}: \lnot \mathsf{RPT}(\boldsymbol{s}, \Delta({x}))} \mathcal{R}_K^{\alpha, \ell}(\boldsymbol{s} \mid \boldsymbol{x}) - \inf_{\boldsymbol{s} \in \mathbb{R}^C} \mathcal{R}_K^{\alpha, \ell}(\boldsymbol{s} \mid \boldsymbol{x}).
    $$
    According to \textbf{Claim 2}, $0 < \delta < \infty$. Suppose that when $t \to \infty$, $\lnot \mathsf{RPT}(\boldsymbol{s}^{(t)}, \Delta(\boldsymbol{x}))$. Then, there exists a large enough $T$ such that
    $$
        \mathcal{R}_K^{\alpha, \ell}(\boldsymbol{s}^{(t)} \mid \boldsymbol{x}) - \inf_{\boldsymbol{s} \in \mathbb{R}^C} \mathcal{R}_K^{\alpha, \ell}(\boldsymbol{s} \mid \boldsymbol{x}) > \delta,
    $$
    which is contradicts with $\mathcal{R}_K^{\alpha, \ell}(\boldsymbol{s}^{(t)} \mid \boldsymbol{x}) \to \inf_{\boldsymbol{s} \in \mathbb{R}^C} \mathcal{R}_K^{\alpha, \ell}(\boldsymbol{s} \mid \boldsymbol{x})$. Thus, we obtain \textbf{Claim 3}.

    \noindent \rule[2pt]{\linewidth}{0.1em}
    \textbf{Claim 4.} For any sequence of score functions $\{f^{(t)}\}_{t \in \mathbb{N}_+}$,
    $$
        \mathcal{R}_K^{\alpha, \ell}(f^{(t)}) \to \inf_{f} \mathcal{R}_K^{\alpha, \ell}(f) \Rightarrow \mathcal{R}_{K}^\alpha(f^{(t)}) \to \inf_{f} \mathcal{R}_{K}^\alpha(f).
    $$

    \noindent \rule[2pt]{\linewidth}{0.1em}
    It is clear that \textbf{Claim 4} holds with \textbf{Claim 3} and 
    $$
        \mathcal{R}_K^{\alpha, \ell}(f^{(t)}) = \E{\boldsymbol{x}}{ \mathcal{R}_K^{\alpha, \ell}(f^{(t)} \mid \boldsymbol{x}) }.
    $$
    Then, the proof of Thm.\ref{thm:condition_for_consistency} ends.

\end{proof}

\section{Generalization Analysis for TKPR Optimization}
\subsection{The Generalization Bound by Traditional Techniques}
\subsubsection{Lipschitz Property of the TKPR Surrogate Loss (Proof of Prop.\ref{prop:r_tra_lipschitz})}
\label{app:r_tra_lipschitz}

\Rtralipschitz*
\begin{proof}
    Given two score functions $f, f' \in \mathcal{F}_K$, then we have
    $$\begin{aligned}
        & | L_{K}^{\alpha, \ell}(f, \boldsymbol{y}) - L_{K}^{\alpha, \ell}(f', \boldsymbol{y}) | \\
        & = \frac{1}{\alpha K} \left| \sum_{y \in \mathcal{P}(\boldsymbol{x})} \sum_{k \le K+1} \ell(s_y - s_{[k]}) - \sum_{y \in \mathcal{P}(\boldsymbol{x})} \sum_{k \le K+1} \ell(s'_y - s'_{[k]}) \right| \\
        & \overset{(a)}{=} \frac{1}{\alpha K} \left| \sum_{y \in \mathcal{P}(\boldsymbol{x})} \sum_{k \le K+1} \ell(s_y - \boldsymbol{s}')_{[k]} - \sum_{y \in \mathcal{P}(\boldsymbol{x})} \sum_{k \le K+1} \ell(s'_y - \boldsymbol{s}')_{[k]} \right| \\
        & \overset{(b)}{=}  \frac{1}{\alpha K} \left| \sum_{y \in \mathcal{P}(\boldsymbol{x})} \max_{\mathcal{K} \subset [C] \atop \size{\mathcal{K}} = K + 1}\sum_{k \in \mathcal{K}} \ell(s_y - s_k) - \sum_{y \in \mathcal{P}(\boldsymbol{x})} \max_{\mathcal{K} \subset [C] \atop \size{\mathcal{K}} = K + 1} \sum_{k \in \mathcal{K}} \ell(s'_y - s'_k) \right| \\
        & \overset{(c)}{\le} \frac{1}{\alpha K} \sum_{y \in \mathcal{P}(\boldsymbol{x})} \max_{\mathcal{K} \subset [C] \atop \size{\mathcal{K}} = K + 1} \left| \sum_{k \in \mathcal{K}} \left[ \ell(s_y - s_k) - \ell(s'_y - s'_k) \right] \right| \\
        & \le \frac{1}{\alpha K} \sum_{y \in \mathcal{P}(\boldsymbol{x})} \max_{\mathcal{K} \subset [C] \atop \size{\mathcal{K}} = K + 1} \sum_{k \in \mathcal{K}} \left| \ell(s_y - s_k) - \ell(s'_y - s'_k) \right|. \\
    \end{aligned}$$
    In this process, $(a)$ houds since $\ell$ is strictly decreasing. $(b)$ holds since
    $$
        \sum_{k=1}^{K}t_{[k]} = \max_{k \le K} \sum_{k=1}^{K}t_{k}, \forall \boldsymbol{t} \in \mathbb{R}^{C}.
    $$
    And $(c)$ holds since
    $$
        \left|\max \left\{a_{1}, \ldots, a_{K}\right\}-\max \left\{b_{1}, \ldots, b_{K}\right\}\right| \le \max \left\{\left|a_{1}-b_{1}\right|, \ldots,\left|a_{K}-b_{K}\right|\right\}, \quad \forall \boldsymbol{a}, \boldsymbol{b} \in \mathbb{R}^{K}.
    $$
    Furthermore, since $\ell$ is $\mu_\ell$-Lipschitz continuous, the last term is bounded by 
    $$\begin{aligned}
        & \frac{\mu_{\ell}}{\alpha K} \sum_{y \in \mathcal{P}(\boldsymbol{x})} \max_{\mathcal{K} \subset [C] \atop \size{\mathcal{K}} = K + 1} \sum_{k \in \mathcal{K}} \left|(s_y - s'_y) - (s_k - s'_k) \right| \\
        & \le \frac{\mu_{\ell}(K + 1)}{\alpha K} \sum_{y \in \mathcal{P}(\boldsymbol{x})} \left| s_y - s'_y \right| + \frac{\mu_{\ell}N(\boldsymbol{y})}{\alpha K} \max_{\mathcal{K} \subset [C] \atop \size{\mathcal{K}} = K + 1} \sum_{k \in \mathcal{K}} \left| s_k - s'_k \right| \\
        & \le \frac{\mu_{\ell}(K + 1) \sqrt{N(\boldsymbol{y})}}{\alpha K} \left[ \sum_{y \in \mathcal{P}(\boldsymbol{x})} (s_y - s'_y )^{2} \right]^{\frac{1}{2}} + \frac{\mu_{\ell}N(\boldsymbol{y}) \sqrt{K + 1}}{\alpha K} \max_{\mathcal{K} \subset [C] \atop \size{\mathcal{K}} = K + 1} \left[ \sum_{k \in \mathcal{K}} (s_k - s'_k )^{2} \right]^{\frac{1}{2}} \\
        & \le \frac{\mu_{\ell} \left[ (K + 1) \sqrt{N(\boldsymbol{y})} + N(\boldsymbol{y}) \sqrt{K + 1} \right]}{\alpha K} \left[ \sum_{y \in \mathcal{L}} (s_y - s'_y )^{2} \right]^{\frac{1}{2}} \\
        & = \frac{\mu_\ell \left[ (K + 1) \sqrt{N(\boldsymbol{y})} + N(\boldsymbol{y}) \sqrt{K + 1} \right]}{\alpha K} \Vert \boldsymbol{s} - \boldsymbol{s}' \Vert\\
    \end{aligned}$$
    When $\alpha = \alpha_1$, it is clear that 
    $$
        | L_{K}^{\alpha, \ell}(f, \boldsymbol{y}) - L_{K}^{\alpha, \ell}(f', \boldsymbol{y}) | \le \mu_\ell \left( \frac{K + 1}{\sqrt{K}} + \sqrt{K + 1} \right) \Vert \boldsymbol{s} - \boldsymbol{s}' \Vert,
    $$
    and 
    $$
        L_{K}^{\alpha, \ell}(f, \boldsymbol{y}) \le \frac{1}{K} \sum_{y \in \mathcal{P}(\boldsymbol{x})} \sum_{k \le K+1} M_\ell \le (K + 1) M_\ell.
    $$
    When $\alpha = \alpha_2$, it is clear that 
    $$
        | L_{K}^{\alpha, \ell}(f, \boldsymbol{y}) - L_{K}^{\alpha, \ell}(f', \boldsymbol{y}) | = \mu_\ell \left( \frac{K + 1}{\sqrt{N(\boldsymbol{y}) K}} + \frac{\sqrt{K + 1}}{K} \right) \Vert \boldsymbol{s} - \boldsymbol{s}' \Vert \le \mu_\ell \left( \frac{K + 1}{\sqrt{K}} + \frac{\sqrt{K + 1}}{K} \right) \Vert \boldsymbol{s} - \boldsymbol{s}' \Vert,
    $$
    and 
    $$
        L_{K}^{\alpha, \ell}(f, \boldsymbol{y}) \le \frac{1}{N(\boldsymbol{x}) K} \sum_{y \in \mathcal{P}(\boldsymbol{x})} \sum_{k \le K+1} M_\ell \le (K + 1) M_\ell.
    $$
    When $\alpha = \alpha_3$, we have
    $$
        | L_{K}^{\alpha, \ell}(f, \boldsymbol{y}) - L_{K}^{\alpha, \ell}(f', \boldsymbol{y}) | = \frac{2 \mu_\ell}{K} \left[ \frac{K + 1}{\sqrt{N(\boldsymbol{y})} (2K + 1 - N(\boldsymbol{y})) } + \frac{\sqrt{K + 1}}{2K + 1 - N(\boldsymbol{y})} \right] \Vert \boldsymbol{s} - \boldsymbol{s}' \Vert.
    $$
    Let $g(t) := \sqrt{t} (2 K + 1 - t), t \in [1, K]$. Then, it is clear that $g'(t) > 0 $ when $t \in [1, \frac{2K + 1}{3}]$, and $g'(t) < 0 $ when $t \in [\frac{2K + 1}{3}, K]$. Since $2K \le \sqrt{K} (K + 1)$, we have 
    $$
        | L_{K}^{\alpha, \ell}(f, \boldsymbol{y}) - L_{K}^{\alpha, \ell}(f', \boldsymbol{y}) | \le \frac{2 \mu_\ell}{K} \left( \frac{K + 1}{2K } + \frac{\sqrt{K + 1}}{K + 1} \right) \Vert \boldsymbol{s} - \boldsymbol{s}' \Vert = \mu_\ell \left( \frac{K + 1}{K^2} + \frac{2}{K \sqrt{K + 1}} \right) \Vert \boldsymbol{s} - \boldsymbol{s}' \Vert.
    $$
    Meanwhile, we have 
    $$
        L_{K}^{\alpha, \ell}(f, \boldsymbol{y}) \le \frac{2}{N(\boldsymbol{y}) (2K + 1 - N(\boldsymbol{y})) K} \sum_{y \in \mathcal{P}(\boldsymbol{x})} \sum_{k \le K+1} M_\ell \le \frac{K + 1}{K} M_\ell. 
    $$
\end{proof}

\subsubsection{Generalization Bound of TKPR Optimization (Proof of Prop.\ref{prop:gen_existing})}
\label{app:gen_existing}
\genexisting*
\begin{proof}
    According to Lem.\ref{lem:lem_gen_lin_1}, let $\mathcal{G}_{K}^{\ell} := \{L_{K}^{\alpha, \ell} \circ f: f \in \mathcal{F}\}$. Then, with probability at least $1 - \delta$ over the training set $\mathcal{S}$, the following generalization bound holds for all the $f \in \mathcal{F}$, we have
    $$
        \mathcal{R}_K^{\alpha, \ell}(f) \le \Phi(L_{K}^{\alpha, \ell}, \delta) + 2 \hat{\mathfrak{C}}_\mathcal{S}(\mathcal{G}_{K}^{\ell}).
    $$
    When $\alpha = \alpha_1$, based on Lem.\ref{lem:vector_contraction} and Prop.\ref{prop:r_tra_lipschitz}, we have 
    $$
        \hat{\mathfrak{C}}_\mathcal{S}(\mathcal{G}_{K}^{\ell}) \le \sqrt{2} \mu_\ell \left( \frac{K + 1}{\sqrt{K}} + \sqrt{K + 1} \right) \hat{\mathfrak{C}}_\mathcal{S}(\mathcal{F}) \sim \mathcal{O}( \sqrt{K} ) \cdot \hat{\mathfrak{C}}_\mathcal{S}(\mathcal{F}).
    $$
    Similarly, when $\alpha = \alpha_2$ we have
    $$
        \hat{\mathfrak{C}}_\mathcal{S}(\mathcal{G}_{K}^{\ell}) \le \sqrt{2} \mu_\ell \left( \frac{K + 1}{\sqrt{K}} + \frac{\sqrt{K + 1}}{K} \right) \hat{\mathfrak{C}}_\mathcal{S}(\mathcal{F}) \sim \mathcal{O}( \sqrt{K} ) \cdot \hat{\mathfrak{C}}_\mathcal{S}(\mathcal{F}).
    $$
    Similarly, when $\alpha = \alpha_3$ we have
    $$
        \hat{\mathfrak{C}}_\mathcal{S}(\mathcal{G}_{K}^{\ell}) \le \sqrt{2} \mu_\ell \left( \frac{K + 1}{K^2} + \frac{2}{K \sqrt{K + 1}} \right) \hat{\mathfrak{C}}_\mathcal{S}(\mathcal{F}) \sim\mathcal{O}( \frac{1}{K} ) \cdot \hat{\mathfrak{C}}_\mathcal{S}(\mathcal{F}).
    $$
\end{proof}

\subsubsection{Generalization Bound of the Traditional Ranking Loss (Proof of Prop.\ref{prop:sharp_bound})}
\begin{lemma}
    \label{lem:sharp_bound}
    The following inequality holds:
    \begin{equation}
        L_\text{rank}^{\ell}(f, \boldsymbol{y}) \le \frac{\alpha}{N(\boldsymbol{y})} L_{K}^{\alpha, \ell}(f, \boldsymbol{y}).
    \end{equation}
\end{lemma}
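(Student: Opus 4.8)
The plan is to establish the inequality pointwise — for every $(\boldsymbol{x},\boldsymbol{y})\in\mathcal{Z}$ and every $f$ — after which the expected-risk version needed in Prop.\ref{prop:sharp_bound} follows by taking expectations, exactly as in the proof of Prop.\ref{prop:reformulation}. Write $\boldsymbol{s}=f(\boldsymbol{x})$, $n=N(\boldsymbol{y})$, $n_-=N_-(\boldsymbol{y})$. Since the factor $\alpha$ cancels, $\tfrac{\alpha}{n}L_K^{\alpha,\ell}(f,\boldsymbol{y})=\tfrac1{nK}\sum_{i\in\mathcal{P}(\boldsymbol{y})}\sum_{k\le K+1}\ell(s_i-s_{[k]})$, so what must be shown is
\[
\frac1{n_-}\sum_{i\in\mathcal{P}(\boldsymbol{y})}\sum_{j\in\mathcal{N}(\boldsymbol{y})}\ell(s_i-s_j)\ \le\ \frac1K\sum_{i\in\mathcal{P}(\boldsymbol{y})}\sum_{k=1}^{K+1}\ell(s_i-s_{[k]}).
\]

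The core observation is an order-statistics reinterpretation of the right-hand side. For a fixed relevant label $i$, because $\ell$ is non-increasing and $s_{[1]}\ge s_{[2]}\ge\cdots$, the numbers $\ell(s_i-s_{[1]}),\dots,\ell(s_i-s_{[K+1]})$ are precisely the $K+1$ largest entries of the multiset $\{\ell(s_i-s_m):m\in\mathcal{L}\}$ — ties between a relevant and an irrelevant label being resolved against the relevant one by Cor.\ref{coll:wrongly_break_ties}. Hence $\tfrac1K\sum_{k\le K+1}\ell(s_i-s_{[k]})$ is $\tfrac{K+1}{K}$ times the mean of the $K+1$ largest values of $\ell(s_i-\cdot)$ over $\mathcal{L}$, while the left inner average is the mean of $\ell(s_i-\cdot)$ over the size-$n_-$ subset $\mathcal{N}(\boldsymbol{y})\subseteq\mathcal{L}$.

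When $n_-\ge K+1$ this finishes the job termwise: the mean over $\mathcal{N}(\boldsymbol{y})$ is at most the mean of the $n_-$ largest values of $\ell(s_i-\cdot)$, which, since $K+1\le n_-$, is at most the mean of the $K+1$ largest, which is at most $\tfrac{K+1}{K}$ times that mean; summing over $i$ gives the claim. The remaining case $n_-\le K$ is where the real work sits: now $\texttt{Top}_{K+1}(\boldsymbol{s})$ necessarily contains relevant labels, and I would split it into its irrelevant part $B\subseteq\mathcal{N}(\boldsymbol{y})$ and a relevant remainder of size $K+1-|B|\ge K+1-n_-\ge1$ (using $K\ge n$ from Asm.\ref{asm:gen}). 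The part $B$ carries the $|B|$ largest values of $\ell(s_i-\cdot)$ over $\mathcal{N}(\boldsymbol{y})$, so $\sum_{j\in B}\ell(s_i-s_j)\ge\tfrac{|B|}{n_-}\sum_{j\in\mathcal{N}(\boldsymbol{y})}\ell(s_i-s_j)$; moreover every relevant label in the remainder has score $\ge s_{[K+1]}\ge s_j$ for all $j\in\mathcal{N}(\boldsymbol{y})\setminus B$, so by monotonicity of $\ell$ each such label contributes at least the largest $\ell$-value over $\mathcal{N}(\boldsymbol{y})\setminus B$. Adding these two lower bounds and summing over $i\in\mathcal{P}(\boldsymbol{y})$ should supply precisely the slack needed to pass from $\tfrac1{n_-}$ to $\tfrac1K$. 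I expect this case to be the main obstacle: the clean averaging argument only controls the irrelevant mass inside the top list, and one must argue tightly — using only that the forced-in relevant labels sit above the cut level $s_{[K+1]}$ — that their $\ell$-mass compensates the change of normalization, while keeping the tie-breaking bookkeeping from Cor.\ref{coll:wrongly_break_ties} consistent throughout; everything else is routine.
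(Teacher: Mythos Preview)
Your handling of the case $n_-\ge K+1$ is correct and is the rigorous version of the paper's one-line proof (which simply refers back to Prop.~\ref{prop:reformulation} and invokes that $\ell$ is decreasing): the paper's argument there is that the top scores dominate the irrelevant ones on average, and your order-statistics reformulation is the clean way to push this through for a general monotone $\ell$. In fact your argument already works for $n_-\ge K$, since $\sum_{k\le K}\ell(s_i-s_{[k]})\le\sum_{k\le K+1}\ell(s_i-s_{[k]})$ by nonnegativity of $\ell$.

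The case $n_-<K$, however, is not merely ``where the real work sits'': the inequality is \emph{false} there, so your splitting sketch cannot be completed. Take $C=3$, $K=2$, $\mathcal{P}(\boldsymbol{y})=\{1,2\}$, $\mathcal{N}(\boldsymbol{y})=\{3\}$, scores $s_1=0$, $s_2=0.01$, $s_3=1$, and $\ell(t)=e^{-t}$ (which satisfies all hypotheses in Cor.~\ref{coll:consistent_loss} and Asm.~\ref{asm:gen}, including $K\ge N(\boldsymbol{y})$). Then $L_\text{rank}^{\ell}(f,\boldsymbol{y})=\tfrac12(e+e^{0.99})\approx 2.70$, whereas $\tfrac{\alpha}{N(\boldsymbol{y})}L_K^{\alpha,\ell}(f,\boldsymbol{y})=\tfrac14\sum_{i\in\{1,2\}}\sum_{k=1}^{3}\ell(s_i-s_{[k]})\approx 2.35$, reversing the claim. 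Your sketch fails precisely because the ``relevant remainder'' inside $\texttt{Top}_{K+1}(\boldsymbol{s})$ contributes terms $\ell(s_i-s_{i'})$ with $i,i'\in\mathcal{P}(\boldsymbol{y})$, and these can be much smaller than the dominant irrelevant term $\ell(s_i-s_3)$; there is no slack to trade the normalization $1/n_-$ for $1/K$. The paper does not address this regime either; the lemma should be read under the (in MLC entirely natural) implicit assumption $N_-(\boldsymbol{y})\ge K$, and you should simply state that hypothesis rather than pursue the degenerate case.
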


\begin{proof}
    The proof is similar to that of Prop.\ref{prop:reformulation}, except that $\ell$ is strictly decreasing.
\end{proof}

\label{app:sharp_bound}
\sharpbound*
\begin{proof}
    According to Prop.\ref{prop:gen_existing} and Lem.\ref{lem:sharp_bound}, when $\alpha = \alpha_1$, we have
    $$\begin{aligned}
        \mathcal{R}_\text{rank}^{\ell}(f) & = \E{(\boldsymbol{x}, \boldsymbol{y}) \sim \mathcal{D}} {L_\text{rank}^{\ell}(f, \boldsymbol{y})} \le \E{(\boldsymbol{x}, \boldsymbol{y}) \sim \mathcal{D}} {\frac{1}{N(\boldsymbol{y})} \cdot L_{K}^{\alpha, \ell}(f, \boldsymbol{y})} \\ 
        & \le \E{(\boldsymbol{x}, \boldsymbol{y}) \sim \mathcal{D}} {L_{K}^{\alpha, \ell}(f, \boldsymbol{y})} = \mathcal{R}_K^{\alpha, \ell}(f) \precsim \Phi(L_{K}^{\alpha, \ell}, \delta) + \mathcal{O}( \sqrt{K} ) \cdot \hat{\mathfrak{C}}_\mathcal{S}(\mathcal{F}). \\
    \end{aligned}$$
    Similarly, when $\alpha = \alpha_2$ we have
    $$
        \mathcal{R}_\text{rank}^{\ell}(f) = \E{(\boldsymbol{x}, \boldsymbol{y}) \sim \mathcal{D}} {L_\text{rank}^{\ell}(f, \boldsymbol{y})} \le \E{(\boldsymbol{x}, \boldsymbol{y}) \sim \mathcal{D}} {L_{K}^{\alpha, \ell}(f, \boldsymbol{y})} = \mathcal{R}_K^{\alpha, \ell}(f) \precsim \Phi(L_{K}^{\alpha, \ell}, \delta) + \mathcal{O}( \sqrt{K} ) \cdot \hat{\mathfrak{C}}_\mathcal{S}(\mathcal{F}). \\
    $$
    Similarly, when $\alpha = \alpha_3$ we have
    $$\begin{aligned}
        \mathcal{R}_\text{rank}^{\ell}(f) & = \E{(\boldsymbol{x}, \boldsymbol{y}) \sim \mathcal{D}} {L_\text{rank}^{\ell}(f, \boldsymbol{y})} \le \E{(\boldsymbol{x}, \boldsymbol{y}) \sim \mathcal{D}} {\left[ 2K + 1 - N(\boldsymbol{y}) \right] / 2 \cdot L_{K}^{\alpha, \ell}(f, \boldsymbol{y})} \\ 
        & \le K \E{(\boldsymbol{x}, \boldsymbol{y}) \sim \mathcal{D}} {L_{K}^{\alpha, \ell}(f, \boldsymbol{y})} = K \cdot \mathcal{R}_K^{\alpha, \ell}(f) \precsim K \cdot \Phi(L_{K}^{\alpha, \ell}, \delta) + \mathcal{O}( 1 ) \cdot \hat{\mathfrak{C}}_\mathcal{S}(\mathcal{F}). \\
    \end{aligned}$$
\end{proof}

\subsection{The Generalization Bound by the Data-dependent Contraction Technique}
\subsubsection{Data-Dependent Contraction Inequality (Proof of Prop.\ref{prop:data_contraction})}
\label{app:data_contraction}

\labelcontraction*
\begin{proof}
    Let $\{\boldsymbol{\xi}_q\}_{q=1}^Q$ be a partition of $\boldsymbol{\xi}$ such that $\forall q \in \{1, 2, \cdots, Q\}, \boldsymbol{\xi}_q \in \{-1, +1\}^{N_q}$. Then, according to the definition of the complexity measure, we have
    $$\begin{aligned}
        \hat{\mathfrak{C}}_\mathcal{S}(\mathcal{G}) & = \E{\boldsymbol{\xi}}{\sup_{g \in \mathcal{G}} \frac{1}{N}\sum_{n=1}^{N} \xi^{(n)} g(\boldsymbol{z}^{(n)}) } = \frac{1}{N} \E{\boldsymbol{\xi}}{\sup_{g \in \mathcal{G}} \sum_{q=1}^{Q} \sum_{n=1}^{N_q} \xi_q^{(n)} g(\boldsymbol{z}_q^{(n)}) } \le \frac{1}{N} \sum_{q=1}^{Q} \E{\boldsymbol{\xi}_q}{\sup_{g \in \mathcal{G}} \sum_{n=1}^{N_q} \xi_q^{(n)} g(\boldsymbol{z}_q^{(n)}) } \\
        & = \sum_{q=1}^{Q}\pi_q \hat{\mathfrak{C}}_{\mathcal{S}_q}(\mathcal{G}) \overset{(a)}{\le} \sqrt{2} \sum_{q=1}^{Q}\pi_q \mu_q \hat{\mathfrak{C}}_{\mathcal{S}_q}(\mathcal{F}) \overset{(b)}{\sim} \mathcal{O}\left( \hat{\mathfrak{C}}_{\mathcal{S}}(\mathcal{F}) \cdot \sum_{q=1}^{Q} \mu_q \sqrt{\pi_q} \right), 
    \end{aligned}$$
    where $(a)$ comes from Lem.\ref{lem:vector_contraction}, and $(b)$ is induced by $\hat{\mathfrak{C}}_{\mathcal{S}_q}(\mathcal{F}) \propto \sqrt{\frac{1}{N_q}} = \sqrt{\frac{1}{N \pi_q}} \propto \sqrt{\frac{1}{\pi_q}} \hat{\mathfrak{C}}_{\mathcal{S}}(\mathcal{F})$.
\end{proof}

\subsubsection{Local Lipschitz Continuity of the TKPR Loss (Proof of Prop.\ref{prop:local_continuous})}
\label{app:local_continuous}
\Localcontinuous*
\begin{proof}
    The proof follows that of Prop.\ref{prop:r_tra_lipschitz}
\end{proof}

\subsubsection{Generalization Bound of $\mathcal{R}_K^{\alpha, \ell}(f)$ Induced by Data-dependent Contraction (Proof of Thm.\ref{thm:final_bound})}
\label{app:final_bound}
\finalbound*
\begin{proof}
    We can obtain Eq.(\ref{eq:final_bound_abstract}) directly by Lem.\ref{lem:lem_gen_lin_1} and Prop.\ref{prop:data_contraction}, and Prop.\ref{prop:local_continuous}. Next, we focus on the two concrete cases.
    
    \noindent \rule[2pt]{\linewidth}{0.1em}
    If $\pi_q \propto e^{- \lambda q}$, since $\mu_q \sim \mathcal{O}(\sqrt{q} / \alpha(q))$, we have 
    $$
        \sum_{q=1}^{K} \mu_q \sqrt{\pi_q} \sim \mathcal{O} \left( \sum_{q=1}^{K} \frac{\sqrt{q}}{e^{\lambda q / 2} \cdot \alpha(q)} \right).
    $$
    When $\alpha(q) = 1$, we have
    $$
        \sum_{q=1}^{K} \frac{\sqrt{q}}{e^{\lambda q / 2}} \le \sum_{q=1}^{K} \frac{q}{e^{\lambda q / 2}} \le \int_{0}^{K} \frac{t}{e^{\lambda t / 2}} \, dt = \frac{4}{\lambda^2} - \frac{2\lambda K + 4}{\lambda^2 e^{\lambda K /2}} \sim \mathcal{O}\left( \frac{1}{\lambda^2} \right).
    $$
    When $\alpha(q) = q$, we have
    $$\begin{aligned}
        \sum_{q=1}^{K} \frac{1}{e^{\lambda q / 2} \sqrt{q}} & \le \frac{1}{e^{\lambda / 2}} + \int_{1}^{K} \frac{1}{e^{\lambda t / 2}\sqrt{t}} \, dt \le \frac{1}{e^{\lambda / 2}} + \int_{1}^{K} \frac{1}{e^{\lambda t / 2}} \, dt \\
        & = \frac{1}{e^{\lambda / 2}} + \frac{2}{\lambda e^{\lambda / 2}} - \frac{2}{\lambda e^{\lambda K / 2}} \sim \mathcal{O}\left( \frac{1}{e^{\lambda / 2}} \right)
    \end{aligned}$$
    When $\alpha(q) = \frac{q (2K + 1 - q)}{2}$, we have
    $$
        \sum_{q=1}^{K} \frac{2}{e^{\lambda q / 2} \sqrt{q} (2K + 1 - q)} \le \frac{1}{K + 1} \sum_{q=1}^{K} \frac{2}{e^{\lambda q / 2} \sqrt{q}} \precsim \mathcal{O}\left( \frac{1}{K e^{\lambda / 2} } \right).
    $$

    \noindent \rule[2pt]{\linewidth}{0.1em}
    If $\pi_q \propto q^{- \lambda}, \lambda > 0$, since $\mu_q \sim \mathcal{O}(\sqrt{q} / \alpha(q))$, we have 
    $$
        \sum_{q=1}^{K} \mu_q \sqrt{\pi_q} \sim \mathcal{O} \left( \sum_{q=1}^{K} \frac{1}{q^{(\lambda - 1) / 2} \cdot \alpha(q)} \right).
    $$
    When $\alpha(q) = 1$, we have
    $$
        \sum_{q=1}^{K} \mu_q \sqrt{\pi_q} \sim \mathcal{O} \left( \sum_{q=1}^{K} \frac{1}{q^{(\lambda - 1) / 2}} \right).
    $$
    Essentially, this bound is a cut-off version of Riemann zeta function \cite{titchmarsh1986theory}. Since the order of Riemann zeta function is out of the scope of this paper, we next provide a coarse-grained result, and the fine-grained results can be found in \cite{fokas2022asymptotics}.
    $$\begin{aligned}
        \lambda \in (0, 3): & \sum_{q=1}^{K} \frac{1}{q^{(\lambda - 1) / 2}} \le \int_{0}^{K} \frac{1}{t^{(\lambda - 1) / 2}} \, dt = \frac{2 \cdot K^{\frac{3-\lambda}{2}} - 2}{3 - \lambda}\sim \mathcal{O}\left( K^{\frac{3-\lambda}{2}} \right), \\
        \lambda \in [3, 5): & \sum_{q=1}^{K} \frac{1}{q^{(\lambda - 1) / 2}} \le \sum_{q=1}^{K} \frac{1}{q} \sim \mathcal{O}\left( \ln K \right), \\
        \lambda \in [5, \infty]: & \sum_{q=1}^{K} \frac{1}{q^{(\lambda - 1) / 2}} \le \sum_{q=1}^{K} \frac{1}{q^2} \le \sum_{q=1}^{\infty} \frac{1}{q^2} = \frac{\pi^2}{6} \sim \mathcal{O}\left( 1 \right).
    \end{aligned}$$
    When $\alpha(q) = q$, we have
    $$
        \sum_{q=1}^{K} \mu_q \sqrt{\pi_q} \sim \mathcal{O} \left( \sum_{q=1}^{K} \frac{1}{q^{(\lambda + 1) / 2}} \right).
    $$
    Similarly, 
    $$\begin{aligned}
        \lambda \in (0, 1): & \sum_{q=1}^{K} \frac{1}{q^{(\lambda + 1) / 2}} \le \int_{0}^{K} \frac{1}{t^{(\lambda + 1) / 2}} \, dt = \frac{2 \cdot K^{\frac{1-\lambda}{2}} - 2}{1 - \lambda}\sim \mathcal{O}\left( K^{\frac{1 - \lambda}{2}} \right), \\
        \lambda \in [1, 3): & \sum_{q=1}^{K} \frac{1}{q^{(\lambda + 1) / 2}} \le \sum_{q=1}^{K} \frac{1}{q} \sim \mathcal{O}\left( \ln K \right), \\
        \lambda \in [3, \infty]: & \sum_{q=1}^{K} \frac{1}{q^{(\lambda + 1) / 2}} \le \sum_{q=1}^{K} \frac{1}{q^2} \le \sum_{q=1}^{\infty} \frac{1}{q^2} = \frac{\pi^2}{6} \sim \mathcal{O}\left( 1 \right).
    \end{aligned}$$
    When $\alpha(q) = \frac{q (2K + 1 - q)}{2}$, we have
    $$
        \sum_{q=1}^{K} \mu_q \sqrt{\pi_q} \sim \mathcal{O} \left( \sum_{q=1}^{K} \frac{1}{(2K + 1 - q) q^{(\lambda + 1) / 2}} \right).
    $$
    Similarly, 
    $$\begin{aligned}
        \lambda \in (0, 1): & \sum_{q=1}^{K} \frac{1}{(2K + 1 - q) q^{(\lambda + 1) / 2}} \le \frac{1}{K+1} \int_{0}^{K} \frac{1}{t^{(\lambda + 1) / 2}} \, dt \sim \mathcal{O}\left( \frac{1}{ K^{(\lambda+1)/2}} \right), \\
        \lambda \in [1, 3): & \sum_{q=1}^{K} \frac{1}{(2K + 1 - q) q^{(\lambda + 1) / 2}} \precsim \mathcal{O}\left( \frac{\ln K}{K} \right), \\
        \lambda \in [3, \infty]: & \sum_{q=1}^{K} \frac{1}{(2K + 1 - q) q^{(\lambda + 1) / 2}} \precsim \mathcal{O}\left( \frac{1}{K} \right).
    \end{aligned}$$
\end{proof}

\subsubsection{Generalization Bound of $\mathcal{R}_{\text{rank}}^\ell(f)$ Induced by Data-dependent Contraction (Proof of Prop.\ref{prop:final_bound_ranking})}
\label{app:final_bound_ranking}

\finalboundranking*
\begin{proof}
    According to Lem.\ref{lem:sharp_bound}, we have
    $$
        \mathcal{R}_\text{rank}^{\ell}(f) = \E{(\boldsymbol{x}, \boldsymbol{y}) \sim \mathcal{D}} {L_\text{rank}^{\ell}(f, \boldsymbol{y})} \le \E{(\boldsymbol{x}, \boldsymbol{y}) \sim \mathcal{D}} {\frac{\alpha}{N(\boldsymbol{y})} \cdot L_{K}^{\alpha, \ell}(f, \boldsymbol{y})} = \E{(\boldsymbol{x}, \boldsymbol{y}) \sim \mathcal{D}} {\tilde{L}_{K}^{\ell}(f, \boldsymbol{y})}
    $$
    Similar to Prop.\ref{prop:local_continuous}, it is clear that $\tilde{L}_{K}^{\ell}(f, \boldsymbol{y})$ is local Lipschitz continuous with constants $\{\tilde{\mu}_q\}_{q=1}^Q$ such that 
    $$
        \tilde{\mu}_q = \frac{\mu_{\ell} \left[ (K + 1) \sqrt{q} + q \sqrt{K + 1} \right]}{q K} \sim \mathcal{O}\left( \frac{1}{\sqrt{q}} \right),
    $$
    Then, let $\tilde{\mathcal{G}}_{K}^{\ell} := \{\tilde{L}_{K}^{\ell} \circ f: f \in \mathcal{F}\}$. According to Lem.\ref{lem:lem_gen_lin_1}, with probability at least $1 - \delta$ over the training set $\mathcal{S}$, the following generalization bound holds for any $f \in \mathcal{F}$, we have
    $$
        \mathcal{R}_\text{rank}^{\ell}(f) \le \Phi(\tilde{L}_{K}^{\ell}, \delta) + 2 \hat{\mathfrak{C}}_\mathcal{S}(\tilde{\mathcal{G}}_{K}^{\ell}).
    $$
    Furthermore, according to Prop.\ref{prop:data_contraction},
    $$
        \hat{\mathfrak{C}}_\mathcal{S}(\tilde{\mathcal{G}}_{K}^{\ell}) \precsim \hat{\mathfrak{C}}_\mathcal{S}(\mathcal{F}) \cdot \sum_{q=1}^{K} \tilde{\mu}_q \sqrt{\pi_q}.
    $$

    \noindent \rule[2pt]{\linewidth}{0.1em}
    If $\pi_q \propto e^{- \lambda q}$, we have
    $$
        \sum_{q=1}^{K} \tilde{\mu}_q \sqrt{\pi_q} \sim \mathcal{O} \left( \sum_{q=1}^{K} \frac{1}{e^{\lambda q / 2} \sqrt{q}} \right).
    $$
    Following the proof of Thm.\ref{thm:final_bound},
    $$
        \sum_{q=1}^{K} \frac{1}{e^{\lambda q / 2} \sqrt{q}} \precsim \mathcal{O}( e^{-\lambda / 2} ).
    $$
    Thus, we have 
    $$
        \mathcal{R}_\text{rank}^{\ell}(f) \precsim \Phi(\tilde{L}_{K}^{\ell}, \delta) + \mathcal{O}( e^{-\lambda / 2} ) \cdot \hat{\mathfrak{C}}_\mathcal{S}(\mathcal{F}).
    $$

    \noindent \rule[2pt]{\linewidth}{0.1em}
    If $\pi_q \propto q^{- \lambda}$, we have 
    $$
        \sum_{q=1}^{K} \tilde{\mu}_q \sqrt{\pi_q} \sim \mathcal{O} \left( \sum_{q=1}^{K} \frac{1}{q^{(\lambda + 1) / 2}} \right).
    $$
    Following the proof of Thm.\ref{thm:final_bound},
    \renewcommand{\arraystretch}{1.5}
    $$
        \sum_{q=1}^{K} \frac{1}{q^{(\lambda + 1) / 2}} \precsim \left\{\begin{array}{ll}
            \mathcal{O}\left( K^{\frac{1 - \lambda}{2}} \right) \cdot \hat{\mathfrak{C}}_\mathcal{S}(\mathcal{F}), \ & \lambda \in (0, 1), \\
            \mathcal{O}\left( \ln K \right) \cdot \hat{\mathfrak{C}}_\mathcal{S}(\mathcal{F}), \ & \lambda \in [1, 3), \\
            \mathcal{O}\left( 1 \right) \cdot \hat{\mathfrak{C}}_\mathcal{S}(\mathcal{F}), \ & \lambda \in [3, \infty). \\
        \end{array}\right.
    $$
    Thus, we have 
    $$
        \mathcal{R}_\text{rank}^{\ell}(f) \precsim \Phi(\tilde{L}_{K}^{\ell}, \delta) + \left\{\begin{array}{ll}
            \mathcal{O}\left( K^{\frac{1 - \lambda}{2}} \right) , \ & \lambda \in (0, 1), \\
            \mathcal{O}\left( \ln K \right), \ & \lambda \in [1, 3), \\
            \mathcal{O}\left( 1 \right), \ & \lambda \in [3, \infty). \\
        \end{array}\right.
    $$
\end{proof}

\subsection{Practical Generalization Bounds}
\subsubsection{Practical Bounds for Kernel-Based Models (Proof of Prop.\ref{prop:gen_kernel} and Prop.\ref{prop:gen_kernel_ranking})}
\label{app:bounds_kernel}
\begin{lemma}[The Rademacher complexity of the kernel-based models \cite{DBLP:conf/nips/WuZ20}]
    \label{lem:complexity_kernel}
    The Rademacher complexity of kernel-based models has the following upper bound:
    \begin{equation}
        \hat{\mathfrak{R}}_\mathcal{S}(\mathcal{F}_{\mathbb{H}}) \le \sqrt{\frac{C \Lambda^2 r^2}{N}}.
    \end{equation}
\end{lemma}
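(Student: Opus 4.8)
The plan is to bound the empirical Rademacher complexity of the vector-valued class $\mathcal{F}_{\mathbb{H}}$ directly from its definition, exploiting the reproducing-kernel structure in the same way as the classical argument for an RKHS ball. As is customary for vector-valued hypothesis classes in this line of work (and consistent with the setting of the vector contraction inequality Lem.\ref{lem:vector_contraction}), the complexity $\hat{\mathfrak{R}}_\mathcal{S}(\mathcal{F}_{\mathbb{H}})$ is taken with Rademacher signs $\xi^{(n)}_c$ indexed jointly by the sample index $n$ and the coordinate $c$, so that
\[
    \hat{\mathfrak{R}}_\mathcal{S}(\mathcal{F}_{\mathbb{H}}) = \E{\boldsymbol{\xi}}{\sup_{\Vert \mathbf{W} \Vert_{\mathbb{H}, 2} \le \Lambda} \frac{1}{N} \sum_{n=1}^{N} \sum_{c=1}^{C} \xi^{(n)}_c \langle \boldsymbol{w}_c, \phi(\boldsymbol{x}^{(n)}) \rangle_{\mathbb{H}}}.
\]
First I would swap the order of the two sums and pull the sample sum inside each inner product, rewriting the quantity inside the supremum as $\frac{1}{N}\sum_{c=1}^{C} \langle \boldsymbol{w}_c, \boldsymbol{u}_c \rangle_{\mathbb{H}}$, where $\boldsymbol{u}_c := \sum_{n=1}^{N} \xi^{(n)}_c \phi(\boldsymbol{x}^{(n)}) \in \mathbb{H}$.

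Next I would apply Cauchy--Schwarz twice: once inside $\mathbb{H}$ for each coordinate, $\langle \boldsymbol{w}_c, \boldsymbol{u}_c \rangle_{\mathbb{H}} \le \Vert \boldsymbol{w}_c \Vert_{\mathbb{H}} \Vert \boldsymbol{u}_c \Vert_{\mathbb{H}}$, and once across the $C$ coordinates, $\sum_c \Vert \boldsymbol{w}_c \Vert_{\mathbb{H}} \Vert \boldsymbol{u}_c \Vert_{\mathbb{H}} \le \big( \sum_c \Vert \boldsymbol{w}_c \Vert_{\mathbb{H}}^2 \big)^{1/2} \big( \sum_c \Vert \boldsymbol{u}_c \Vert_{\mathbb{H}}^2 \big)^{1/2} \le \Lambda \big( \sum_c \Vert \boldsymbol{u}_c \Vert_{\mathbb{H}}^2 \big)^{1/2}$, where the last step uses the block norm constraint $\Vert \mathbf{W} \Vert_{\mathbb{H},2} \le \Lambda$. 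Since this bound no longer depends on $\mathbf{W}$, the supremum is eliminated and $\hat{\mathfrak{R}}_\mathcal{S}(\mathcal{F}_{\mathbb{H}}) \le \frac{\Lambda}{N} \E{\boldsymbol{\xi}}{ \big( \sum_{c=1}^{C} \Vert \boldsymbol{u}_c \Vert_{\mathbb{H}}^2 \big)^{1/2} }$.

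Finally I would use Jensen's inequality to push the expectation inside the square root, expand $\Vert \boldsymbol{u}_c \Vert_{\mathbb{H}}^2 = \sum_{n,m} \xi^{(n)}_c \xi^{(m)}_c \kappa(\boldsymbol{x}^{(n)}, \boldsymbol{x}^{(m)})$, and take the expectation: independence and the zero-mean, $\pm 1$ property of the signs kill every cross term, leaving $\E{\boldsymbol{\xi}}{\Vert \boldsymbol{u}_c \Vert_{\mathbb{H}}^2} = \sum_{n=1}^{N} \kappa(\boldsymbol{x}^{(n)}, \boldsymbol{x}^{(n)}) \le N r^2$ by the assumption $\kappa(\boldsymbol{x}, \boldsymbol{x}) \le r^2$. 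Summing over the $C$ coordinates gives $\E{\boldsymbol{\xi}}{\sum_c \Vert \boldsymbol{u}_c \Vert_{\mathbb{H}}^2} \le C N r^2$, hence $\hat{\mathfrak{R}}_\mathcal{S}(\mathcal{F}_{\mathbb{H}}) \le \frac{\Lambda}{N}\sqrt{C N r^2} = \sqrt{C \Lambda^2 r^2 / N}$, which is the claim. The only real subtlety --- and therefore the main obstacle --- is the book-keeping for the vector-valued complexity: getting the joint indexing of the Rademacher signs over sample and class right, and correctly matching the $\ell_2$-block structure of the constraint $\Vert \mathbf{W} \Vert_{\mathbb{H},2}$ against the coordinate-wise Cauchy--Schwarz step; once that is in place the rest is just two Cauchy--Schwarz applications, Jensen, and the kernel-diagonal bound.
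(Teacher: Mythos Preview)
The paper does not prove this lemma; it is quoted from \cite{DBLP:conf/nips/WuZ20} and used as a black box in Appendix~\ref{app:bounds_kernel}. Your argument is the standard one for an RKHS ball with a block-$\ell_2$ constraint --- Cauchy--Schwarz in $\mathbb{H}$, Cauchy--Schwarz across coordinates, Jensen, and the diagonal bound $\kappa(\boldsymbol{x},\boldsymbol{x})\le r^2$ --- and it is correct, including your handling of the doubly-indexed Rademacher signs needed for the vector-valued class (which is exactly the convention behind Lem.~\ref{lem:vector_contraction}).
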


\genkernel* 
\begin{proof}
    The proof completes by Thm.\ref{thm:final_bound} and Lem.\ref{lem:complexity_kernel}.
\end{proof}

\genkernelranking*
\begin{proof}
    The proof completes by Prop.\ref{prop:final_bound_ranking} and Lem.\ref{lem:complexity_kernel}.
\end{proof}

\subsubsection{Practical Bounds for Convolutional Neural Networks (Proof of Prop.\ref{prop:gen_cnn})}
\label{app:bounds_cnn}
\begin{lemma}[The Gaussian complexity of Convolutional Neural Networks \cite{DBLP:conf/iclr/LongS20,DBLP:journals/pami/WangXYHCH23}]
    \label{lem:complexity_cnn}
    The Rademacher complexity of $\mathcal{F}_\nu$ has the following upper bound:
    \begin{equation}
        \hat{\mathfrak{G}}_\mathcal{S}(\mathcal{F}_{\beta, \nu, \chi}) \precsim \mathcal{O} \left( \frac{ d \log \left( B_{\beta, \nu, \chi}N \right) }{\sqrt{N}} \right).
    \end{equation}
\end{lemma}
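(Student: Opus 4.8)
The plan is to establish the Gaussian complexity bound for the convolutional class $\mathcal{F}_{\beta,\nu,\chi}$ by the standard covering-number / Dudley-chaining route, specialized to convolutions so that weight sharing is exploited; this is essentially the argument of \cite{DBLP:conf/iclr/LongS20} (already transcribed into the present notation in \cite{DBLP:journals/pami/WangXYHCH23}), so the lemma may be invoked almost verbatim, but I sketch its three ingredients for completeness. Throughout I use that Gaussian and Rademacher complexities coincide up to a $\sqrt{\log N}$ factor, which is anyway subsumed into the logarithmic term of the final bound.

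First I would prove a \emph{Lipschitz-in-parameters} estimate: for $f,f'\in\mathcal{F}_{\beta,\nu,\chi}$ with parameter sets $\Theta,\Theta'$, one has $\Vert f(\boldsymbol{x})-f'(\boldsymbol{x})\Vert\le\Gamma\cdot\Vert\Theta-\Theta'\Vert$ for every admissible $\boldsymbol{x}$. The inputs are: all activations and poolings are $1$-Lipschitz, so layer $l$ contracts by at most its operator norm $\Vert\mathsf{mt}(KN^{(l)})\Vert_2\le\Vert\mathsf{mt}(KN_0^{(l)})\Vert_2+\rho_l\le 1+\nu+\rho_l$, where $\rho_l$ is its deviation from initialization and $\sum_l\rho_l\le\beta$; perturbing one layer at a time and telescoping produces the factor $\prod_{l=1}^{N_a}(1+\nu+\rho_l)$, which by AM--GM is maximized at $\rho_l=\beta/N_a$, giving $(1+\nu+\beta/N_a)^{N_a}$; and the input norm is at most $\chi$. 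Hence $\Gamma\precsim\chi(1+\nu+\beta/N_a)^{N_a}$.

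Next I would bound the empirical $L_2$-covering number of $\mathcal{F}_{\beta,\nu,\chi}$ by covering the parameter ball $\{\Theta:\Vert\Theta-\Theta_0\Vert\le\beta\}$ at scale $\epsilon/\Gamma$ and pushing the cover through the $\Gamma$-Lipschitz map above; note $\Gamma\cdot\beta=B_{\beta,\nu,\chi}$ is the effective covering radius. The delicate point — and the main obstacle — is that covering the operator matrices $\mathsf{mt}(KN^{(l)})$ directly would make the log-covering-number scale with their enormous ambient dimension; one must instead cover the kernel tensors and translate kernel-norm balls into operator-matrix-norm balls, so that the effective parameter count scales only with the spatial size $d$ (and the kernel dimensions), yielding $\log\mathcal{N}(\epsilon,\mathcal{F}_{\beta,\nu,\chi},L_2(\mathcal{S}))\precsim d\cdot\mathrm{polylog}\cdot\log(B_{\beta,\nu,\chi}/\epsilon)$. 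This weight-sharing bookkeeping is the technical heart of \cite{DBLP:conf/iclr/LongS20}; the remaining steps are routine.

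Finally, feeding this into Dudley's entropy integral $\hat{\mathfrak{G}}_\mathcal{S}(\mathcal{F}_{\beta,\nu,\chi})\precsim\inf_{\gamma>0}\big(\gamma+\tfrac{1}{\sqrt{N}}\int_\gamma^{\infty}\sqrt{\log\mathcal{N}(\epsilon,\cdot)}\,d\epsilon\big)$ and choosing $\gamma\sim 1/N$ makes the integral contribute a factor of order $d\log(B_{\beta,\nu,\chi}N)$, hence $\hat{\mathfrak{G}}_\mathcal{S}(\mathcal{F}_{\beta,\nu,\chi})\precsim d\log(B_{\beta,\nu,\chi}N)/\sqrt{N}$ as claimed. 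With the lemma in hand, Prop.\ref{prop:gen_cnn} is then immediate: this bound is $\mathcal{O}(1/\sqrt{N})$ up to the logarithmic factor — as are the subsampled complexities $\hat{\mathfrak{G}}_{\mathcal{S}_q}$ with $N$ replaced by $N_q$ — so Asm.\ref{asm:rademacher} holds, and substituting $\hat{\mathfrak{C}}_\mathcal{S}(\mathcal{F})\precsim d\log(B_{\beta,\nu,\chi}N)/\sqrt{N}$ into Thm.\ref{thm:final_bound} with $\mathfrak{g}(K)$ read off Tab.\ref{table:fin_bound} under $\pi_q\propto e^{-\lambda q}$ — namely $\lambda^{-2}$, $e^{-\lambda/2}$ and $K^{-1}e^{-\lambda/2}$ for $\alpha_1,\alpha_2,\alpha_3$, together with $e^{-\lambda/2}=1/\sqrt{e^\lambda}$ — reproduces the three stated bounds.
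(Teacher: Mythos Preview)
The paper does not actually prove this lemma: it is stated as a cited result from \cite{DBLP:conf/iclr/LongS20,DBLP:journals/pami/WangXYHCH23} and then invoked verbatim in the proof of Prop.~\ref{prop:gen_cnn}. You correctly recognize this (``the lemma may be invoked almost verbatim'') and go further than the paper by sketching the underlying argument from those references; the Lipschitz-in-parameters estimate, the weight-sharing covering-number bound, and the Dudley integral are indeed the three ingredients of the Long--Sedghi proof, so your reconstruction is faithful. Your final paragraph, deriving Prop.~\ref{prop:gen_cnn} from the lemma, is strictly outside the scope of the stated lemma but matches the paper's own proof of Prop.~\ref{prop:gen_cnn} --- including the observation that the extra $\log N$ factor forces a minor reworking of Prop.~\ref{prop:data_contraction} (the paper handles this explicitly by noting $\log(B_{\beta,\nu,\chi}N\pi_q)\le\log(B_{\beta,\nu,\chi}N)$, which is exactly what your remark about the subsampled complexities $\hat{\mathfrak{G}}_{\mathcal{S}_q}$ amounts to).
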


\gencnn*
\begin{proof}
    Due to the additional term $\log N$ in the upper bound of $\hat{\mathfrak{G}}_\mathcal{S}(\mathcal{F}_{\beta, \nu, \chi})$, we need to make minor adjustments to Prop.\ref{prop:data_contraction}. Let $\mathcal{G}_{\beta, \nu, \chi} := \{L_K^\ell \circ f: f \in \mathcal{F}_{\beta, \nu, \chi}\}$. Then, we have 
    $$\begin{aligned}
        \hat{\mathfrak{C}}_\mathcal{S}(\mathcal{G}_{\beta, \nu, \chi}) & = \sum_{q=1}^{Q}\pi_q \hat{\mathfrak{C}}_{\mathcal{S}_q}(\mathcal{G}_{\beta, \nu, \chi}) \le \sqrt{2} \sum_{q=1}^{Q} \pi_q \mu_q \hat{\mathfrak{C}}_{\mathcal{S}_q}(\mathcal{F}_{\beta, \nu, \chi}) \sim \mathcal{O}\left( \sum_{q=1}^{Q} \pi_q \mu_q \hat{\mathfrak{C}}_{\mathcal{S}_q}(\mathcal{F}_{\beta, \nu, \chi}) \right), \\
        & = \mathcal{O} \left( \sum_{q=1}^{Q}\pi_q \mu_q \frac{ d \log \left( B_{\beta, \nu, \chi} N \pi_q \right) }{\sqrt{N \pi_q}} \right) \le \mathcal{O} \left( \sum_{q=1}^{Q} \sqrt{\pi_q} \mu_q \frac{ d \log \left( B_{\beta, \nu, \chi} N \right) }{\sqrt{N}} \right) \\
        & = \mathcal{O} \left( \hat{\mathfrak{C}}_{\mathcal{S}}(\mathcal{F}_{\beta, \nu, \chi}) \sum_{q=1}^{Q} \sqrt{\pi_q} \mu_q \right)
    \end{aligned}$$
    Then the proof ends by Prop.\ref{prop:final_bound_ranking} and Lem.\ref{lem:complexity_cnn}.
\end{proof}

\clearpage
\section{More Implementation Details}
\label{app:more_imp_details}
  \textbf{Infrastructure.} For CNN backbone, we carry out the experiments on an ubuntu 16.04 server equipped with Intel(R) Xeon(R) Silver 4110 CPU and an Nvidia(R) TITAN RTX GPU. The version of CUDA is 10.2 with GPU driver version 440.44. Our codes are implemented via \texttt{python} (v-3.8.11) with the main third-party packages including \texttt{pytorch} \cite{DBLP:conf/nips/PaszkeGMLBCKLGA19} (v-1.9.0), \texttt{numpy} (v-1.20.3), \texttt{scikit-learn} (v-0.24.2) and \texttt{torchvision} (v-0.10.0). 
  
  For transformer backbone, we carry out the experiments on an ubuntu 20.04 server equipped with AMD(R) EPYC(R) 7763 CPU and an Nvidia(R) A100 GPU. The version of CUDA is 11.6 with GPU driver version 510.108.03. The packages are same as those in the CNN backbone.

  \textbf{Warm-up strategy.} Note that TKPR focuses on the predictions of top-ranked labels. However, focusing on a few labels at the early training process ignores the learning of the other labels and can bring a high risk of over-fitting. To address this issue, we adopt a warm-up training strategy that encourages the model learning the global information. To be specific, in the first $E_w$ epochs, the models is trained by a global loss. Afterwards, the warm-up loss is replaced with the proposed TKPR losses. The whole learning process is summarized in Alg.\ref{alg:tkpr}.

  In MLC, for the comparison with the ranking-based losses, all the methods adopt the warm-up strategy with the warm-up loss $L_{\text{rank}}$. For the comparison with state-of-the-art methods, the proposed method uses DB-loss/ASL as the warm-up loss for the CNN/transformer backbones, respectively. For the CNN backbones, we set $E=100, 200$ on Pascal VOC 2007 and MS-COCO, respectively, and $E_w$ is searched in \{10, 20, 30, 40, 50, 60\}. For the transformer backbones, we set $E = 80$, and $E_w$ is searched in \{5, 10, 15, 20, 25\}.

  In MLML, for the comparison with the ranking-based losses, all the methods use $L_{\text{rank}}$ as the warm-up loss. For the comparison with state-of-the-art methods, the proposed method uses SPLC as the warm-up loss. We set $E = 80$, and $E_w$ is searched in \{2, 4, 6, 8, 10\}.

  \textbf{Implementation of competitors.} It is not a trivial task to evaluate the competitors on the proposed TKPR measures due to the significant differences among their settings. In the official setting, $L_{\text{rank}}, L_{u_1}, L_{u_2}, L_{u_3}, L_{u_4}$ and TKML conduct the experiments on traditional datasets such as  emotions, bibtex, delicious \cite{DBLP:journals/jmlr/TsoumakasXVV11}, which are no longer popular benchmark datasets for modern backbones. LSEP uses VGG16 as the backbone, which is somewhat outdated. DB-Loss trains the model on a modified version of MS-COCO and Pascal VOC. Fortunately, ASL, CCD, Hill, and SPLC share the same protocols and implementation details, and EM+APL, ROLE, LL-R, LL-Ct, and LL-Cp share the other implementation. Hence, we adopt the following implementation strategy for fair comparison:

  \begin{itemize}
    \item For $L_{\text{rank}}, L_{u_1}, L_{u_2}, L_{u_3}, L_{u_4}$, TKML, LSEP, and DB-Loss, we re-implement the methods based on the code released by ASL (\url{https://github.com/Alibaba-MIIL/ASL/blob/main/train.py}). For fair comparison, we align the training details with ours, including warm-up, optimizer, batch size, learning rate, input size and so on. The hyperparameters are also searched as suggested in the original paper.
    \item For ASL, CCD, Hill, SPLC, EM+APL, ROLE, LL-R, LL-Ct, and LL-Cp, we adopt the official implementation and the hyperparameters suggested in the original paper. The mAP performance is also evaluated to guarantee our checkpoints share similar performances as those reported in the original paper.
  \end{itemize}
  Besides, as analyzed in \citep{DBLP:journals/ai/GaoZ13} and \citep{DBLP:conf/nips/WuLXZ21}, $\ell_\text{arc}(t) = - \texttt{arctan}(t)$ and $\ell_\text{exp}$ are consistent surrogates for the ranking loss and $L_{u_2}$, respectively. Our implementation follows these theoretical results.

  \textbf{Hyper-parameter search.} We withhold 20\% of the training set for hyper-parameter search. After this, the model is trained on the full training set with the best hyper-parameters. For fair comparison, the common hyper-parameters, such as learning rate and batch size, are searched for competitors in the same space. The specific hyper-parameters of each method are also searched as suggested in the original paper. If the original paper has provided the optimal hyper-parameters, we will use them directly. Since the search space is somewhat large, we adopt an early stopping strategy with 5 patience epochs. The hyper-parameters of the warm-up loss follows those of the main loss, requiring no explicit validation set.

\begin{algorithm}[th]
      \caption{Learning Algorithm of the ERM Framework}
      \label{alg:tkpr}
      \begin{algorithmic}[1]
          \Require Training set $\mathcal{S}$, the model parameterized by $\Theta$, the hyperparameters $\{\alpha, \ell, K\}$, the warm-up loss $L_w$.
          \State Initialize the model parameters $\Theta$.
          \For{$e = 1, 2, \cdots, E$}
              \State $\mathcal{B} \leftarrow \text{SampleMiniBatch}(\mathcal{S}, m)$
              \If{$e \le E_w$}
                  \State Update $\Theta$ by minimizing $L_w$.
              \Else
                  \State Update $\Theta$ by minimizing $L_{K}^{\alpha, \ell}$.
              \EndIf
              \State Optional: anneal the learning rate $\eta$.
          \EndFor
      \end{algorithmic}
\end{algorithm}

\clearpage
\section{More Empirical Results}
\label{app:more_results}
\subsection{MLC experiments on Pascal VOC 2007}
\label{app:pascal07}
\textbf{Overall Performance.} Tab.\ref{tab:ranking_loss_voc_MLC} presents the comparison results with the ranking-based losses in the MLC setting, from which we have the following observations: 
  \begin{itemize}
    \item The proposed methods demonstrate consistent improvements on \texttt{mAP@K}, \texttt{NDCG@K}, the ranking loss, and the TKPR measures. All these performance gains again validate our theoretical analyses in Sec.\ref{sec:tkpr} and Sec.\ref{sec:generalization}.
    \item Similar to the results on MS-COCO, the improvement on \texttt{P@K} and \texttt{R@K} is not so significant. This shows that the performance enhancement comes from the improvement on ranking of predictions, which is consistent with the label distributions presented in Fig.\ref{fig:VOC_normal_distribution}.
    \item The performances of the competitors are  inconsistent on different measures. For example,$L_{\text{rank}}$ achieve the best \texttt{NDCG@K} and \texttt{TKPR} performance. However, on \texttt{P@K}, \texttt{mAP@K}, and the ranking loss, $L_{u_4}$ and $L_{\text{LSEP}}$ are the best. By contrast, the superior performance of TKPR optimization is consistent, which again validates the necessity of the proposed framework.
  \end{itemize}
  Tab.\ref{tab:sota_voc_MLC} presents the comparison results with the state-of-the art loss-oriented methods in the MLC setting, from which we have the following observations: 
  \begin{itemize}
    \item Compared with the state-of-the-art methods, the proposed methods achieve the best performances consistently on all the measures, which again validate the effectiveness of the proposed learning framework.
    \item The DB-Loss and ASL are competitive in this setting. Their success might come from the enphasis on inherent imbalanced label distributions in multi-label learning. However, the proposed methods still outperform the two methods, as the analysis in Sec.\ref{sec:generalization} show that the proposed method can also generalize well on imbalanced label distributions.
    \item Similarly, performances of the competitors are inconsistent on different measures while the proposed methods achieve the best performances consistently. 
  \end{itemize}

  \textbf{Sensitivity Analysis.} In Fig.\ref{fig:VOC_sensitivity}, we present the sensitivity of the proposed framework \textit{w.r.t.} hyperparameter $K$ on Pascal VOC 2007 with $E_w = 50$. From the results, we have the following observations:
  \begin{itemize}
    \item The model tends to achieve the best median performance with $K \in \{3, 4, 5\}$, which suggests that an appropriately larger $K$ is necessary for the proposed framework.
    \item The performance degeneration under $K = 2$ is more significant when $\alpha = \alpha_3$. In other words, TKPR optimization with $\alpha = \alpha_3$ is more sensitive than those with $\alpha \in \{\alpha_1, \alpha_2\}$.
  \end{itemize}

\begin{table*}[ht]
  \centering
    \caption{The empirical results of the ranking-based losses and TKPR on Pascal VOC 2007, where the backbone is ResNet101. The best and runner-up results on each metric are marked with {\color{Top1}red} and {\color{Top2}blue}, respectively. The best competitor on each measure is marked with \underline{underline}.}
    \label{tab:ranking_loss_voc_MLC}%
  \renewcommand\arraystretch{1.5}
  \tiny 
  \newcommand{\tabincell}[2]{\begin{tabular}{@{}#1@{}}#2\end{tabular}}
  \begin{tabular}{c|c|cc|cc|cc|cc|cc|cc|cc|c}
    \multicolumn{1}{c|}{\multirow{2}[3]{*}{Type}} & Metrics & \multicolumn{2}{c|}{P@K} & \multicolumn{2}{c|}{R@K} & \multicolumn{2}{c|}{mAP@K} & \multicolumn{2}{c|}{NDCG@K} & \multicolumn{2}{c|}{$\text{TKPR}^{\alpha_1}$} & \multicolumn{2}{c|}{$\text{TKPR}^{\alpha_2}$} & \multicolumn{2}{c|}{$\text{TKPR}^{\alpha_3}$} & \multicolumn{1}{c}{\multirow{2}[3]{*}{\tabincell{c}{Ranking \\ Loss}}} \\
    \cmidrule{2-16}          & K     & 3     & 5     & 3     & 5     & 3     & 5     & 3     & 5     & 3     & 5     & 3     & 5     & 3     & 5     &  \\
    \toprule
    \multicolumn{1}{c|}{\multirow{7}[2]{*}{\tabincell{c}{Ranking \\ Loss}}} & \multicolumn{1}{c|}{$L_{\text{rank}}$} & .444  & .278 & .959 &	.989 & .613  & .625  & \underline{.826}  & \underline{.840}  & \underline{1.045} & \underline{1.181} & \underline{.762}  & \underline{.850}  & \underline{.273}  & \underline{.177}  & .043 \\
    & $L_{u1}$ & .443  & .278 & .960  & .989 & .653  & .665  & .766  & .780  & .985  & 1.144 & .708  & .818  & .255  & .171  & .041 \\
    & $L_{u2}$ & .443  & {\color{Top2} .278} & .958	& .988 & .652  & .665  & .764  & .778  & .984  & 1.144 & .707  & .817  & .255  & .171  & .041 \\
    & $L_{u3}$ & .443  & .279 & .956 & {\color{Top2} .991} & .650  & .664  & .796  & .812  & 1.008 & 1.160 & .730  & .832  & .262  & .174  & .040 \\
    & $L_{u4}$ & \underline{.446}  & .278 & \underline{{\color{Top2} .965}} & .989  & \underline{.674}  & \underline{.683}  & .785  & .796  & 1.015 & 1.162 & .733  & .833  & .264  & .174  & \underline{.038} \\
    & $L_{\text{LSEP}}$ & .442  & \underline{{\color{Top1} .279}} & .959 & \underline{{\color{Top1} .991}} & .626  & .641  & .817  & .833  & 1.033 & 1.175 & .752  & .845  & .270  & .176  & .042 \\
    & $L_{\text{TKML}}$ & .422  & .272  & .915 & .972  & .614  & .634 & .788  & .815  & .998  & 1.138 & .729  & .823  & .262  & .171  & .052 \\
    \toprule
    \multicolumn{1}{c|}{\multirow{3}[1]{*}{\tabincell{c}{TKPR \\ (Ours)}}} & $\alpha_1$ & .437  & .272  & .945  & .969  & .912  & .921  & .930  & .941  & 1.154 & 1.233 & .864  & .904  & .308  & .188  & .021 \\
    & $\alpha_2$  & {\color{Top1} .450} & .277  & {\color{Top1} .970} & .987  & {\color{Top2} .947} & {\color{Top1} .954} & {\color{Top1} .960} & {\color{Top1} .968} & {\color{Top1} 1.190} & {\color{Top1} 1.267} & {\color{Top1} .892} & {\color{Top1} .928} & {\color{Top1} .318} & {\color{Top1} .193} & {\color{Top1} .010} \\
    &  $\alpha_3$  & {\color{Top2} .447} & .277  & .948 & .965  & {\color{Top1} .985} & {\color{Top2} .947} & {\color{Top2} .953} & {\color{Top2} .963} & {\color{Top2} 1.185} & {\color{Top2} 1.262} & {\color{Top2} .886} & {\color{Top2} .924} & {\color{Top2} .316} & {\color{Top2} .192} & {\color{Top2} .011} \\
\end{tabular}%
\end{table*}%

\begin{table*}[ht]
  \centering
    \caption{The empirical results of state-of-the-art MLC methods and TKPR on Pascal VOC 2007, where the backbone is ResNet101. The best and runner-up results on each metric are marked with {\color{Top1}red} and {\color{Top2}blue}, respectively. The best competitor on each measure is marked with \underline{underline}.}
    \label{tab:sota_voc_MLC}%
  \renewcommand\arraystretch{1.5}
  \tiny 
  \newcommand{\tabincell}[2]{\begin{tabular}{@{}#1@{}}#2\end{tabular}}
  \begin{tabular}{c|c|cc|cc|cc|cc|cc|cc|cc|c}
    \multicolumn{1}{c|}{\multirow{2}[4]{*}{Type}} & Metrics & \multicolumn{2}{c|}{P@K} & \multicolumn{2}{c|}{R@K} & \multicolumn{2}{c|}{mAP@K} & \multicolumn{2}{c|}{NDCG@K} & \multicolumn{2}{c|}{$\text{TKPR}^{\alpha_1}$} & \multicolumn{2}{c|}{$\text{TKPR}^{\alpha_2}$} & \multicolumn{2}{c|}{$\text{TKPR}^{\alpha_3}$} & \multicolumn{1}{c}{\multirow{2}[4]{*}{\tabincell{c}{Ranking \\ Loss}}} \\
    \cmidrule{2-16}          & K     & 3     & 5     & 3     & 5     & 3     & 5     & 3     & 5     & 3     & 5     & 3     & 5     & 3     & 5     &  \\
    \toprule
    \multicolumn{1}{c|}{\multirow{5}[2]{*}{\tabincell{c}{Loss \\ Oriented}}} & ASL\footnotemark[2]   & .452  & .278  & \underline{.973}  & .989  & \underline{.956}  & \underline{.963}  & .966  & .973  & 1.203 & 1.277 & .899  & .933  & .321  & .194  & .008 \\
    & DB-Loss & \underline{.453}  & \underline{.279}  & .973  & .988  & .956  & .962  & \underline{.966}  & \underline{.973}  & \underline{1.205} & \underline{1.278} & \underline{.900}  & \underline{.933}  & \underline{.321}  & \underline{.195}  & \underline{.008} \\
    & CCD\footnotemark[2]   & .451  & .279  & .969  & \underline{{\color{Top2} .990}}  & .948  & .956  & .960  & .970  & 1.194 & 1.272 & .893  & .930  & .319  & .194  & .009 \\
    & \multicolumn{1}{c|}{Hill\footnotemark[2]} & .446  & .276  & .960  & .981  & .932  & .940  & .948  & .957  & 1.179 & 1.256 & .881  & .919  & .315  & .192  & .014 \\
    & \multicolumn{1}{c|}{SPLC\footnotemark[2]} & .451  & .278  & .969  & .986  & .947  & .954  & .959  & .967  & 1.195 & 1.271 & .892  & .928  & .319  & .193  & .010 \\
    \toprule
    \multicolumn{1}{c|}{\multirow{3}[2]{*}{\tabincell{c}{TKPR \\ (Ours)}}} & $\alpha_1$ & {\color{Top2} .455} & {\color{Top1} .281} & .975 & .989 & {\color{Top2} .961} & {\color{Top2} .967} & {\color{Top2} .971} & {\color{Top2} .977} & {\color{Top2} 1.210} & {\color{Top1} 1.283} & .903 & {\color{Top2} .937} & {\color{Top2} .322} & {\color{Top2} .196} & {\color{Top2} .007} \\
    & $\alpha_2$ & .454 & .279 & {\color{Top2} .976} & {\color{Top1} .990} & .961 & .966 & .971 & .977 & 1.208 & 1.282 & {\color{Top2} .904} & .936 & .322 & .196 & .007 \\
    & $\alpha_3$ & {\color{Top1} .455} & {\color{Top2} .279} & {\color{Top1} .977} & .989 & {\color{Top1} .962} & {\color{Top1} .967} & {\color{Top1} .972} & {\color{Top1} .977} & {\color{Top1} 1.211} & {\color{Top2} 1.283} & {\color{Top1} .905} & {\color{Top1} .937} & {\color{Top1} .322} & {\color{Top1} .196} & {\color{Top1} .007} \\
\end{tabular}%
\end{table*}%

\begin{figure}[ht]
    \centering
    \subfigure{
        \label{fig:VOC_ALPHA1}
        \includegraphics[width=0.8\linewidth]{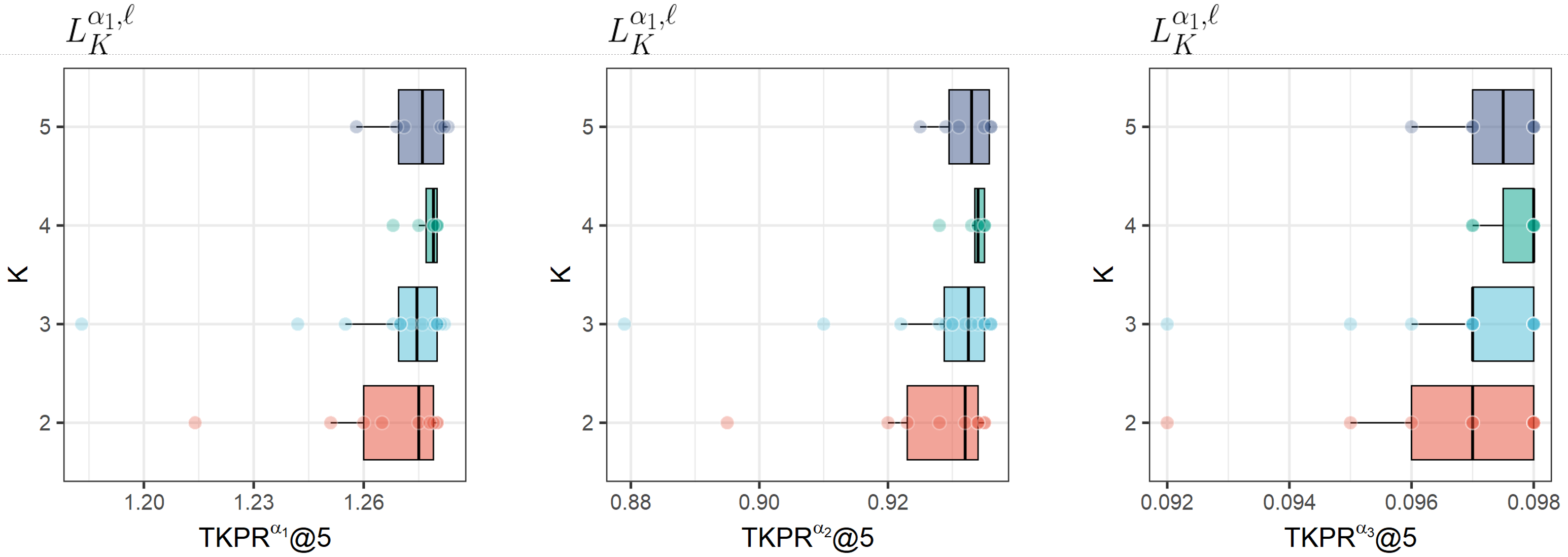}}
    \subfigure{
        \label{fig:VOC_ALPHA2}
        \includegraphics[width=0.8\linewidth]{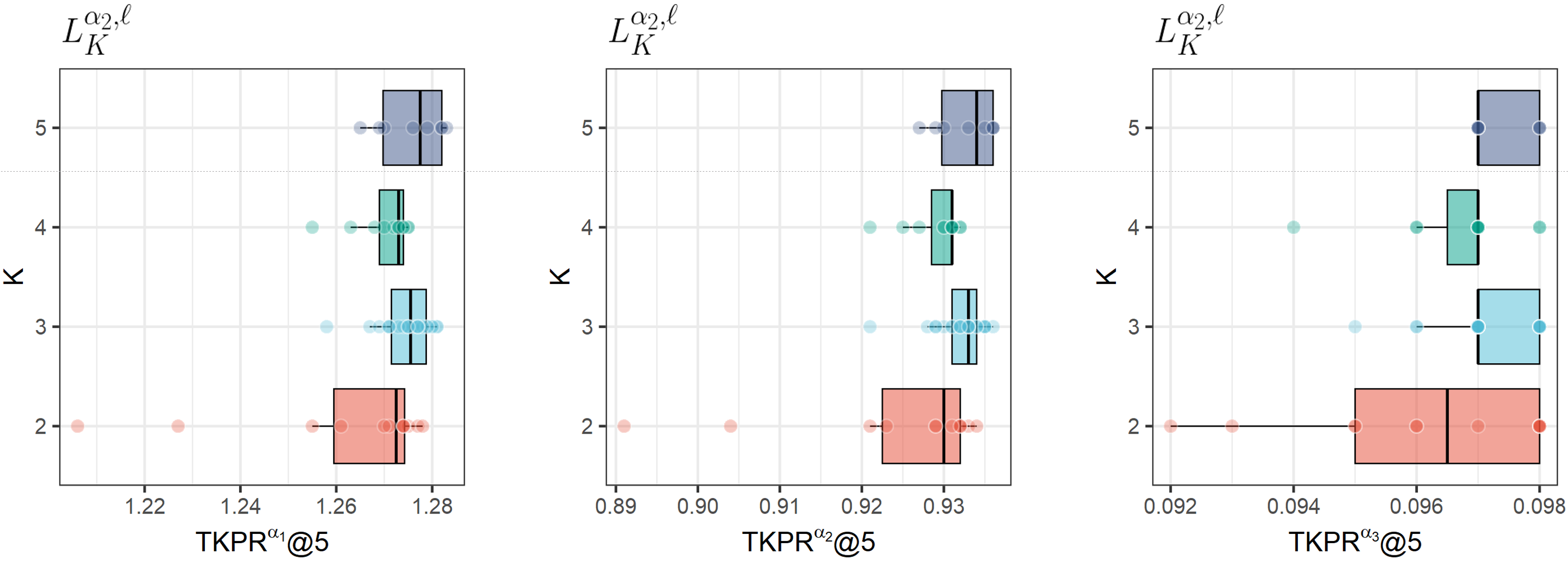}}
    \subfigure{
        \label{fig:VOC_ALPHA3}
        \includegraphics[width=0.8\linewidth]{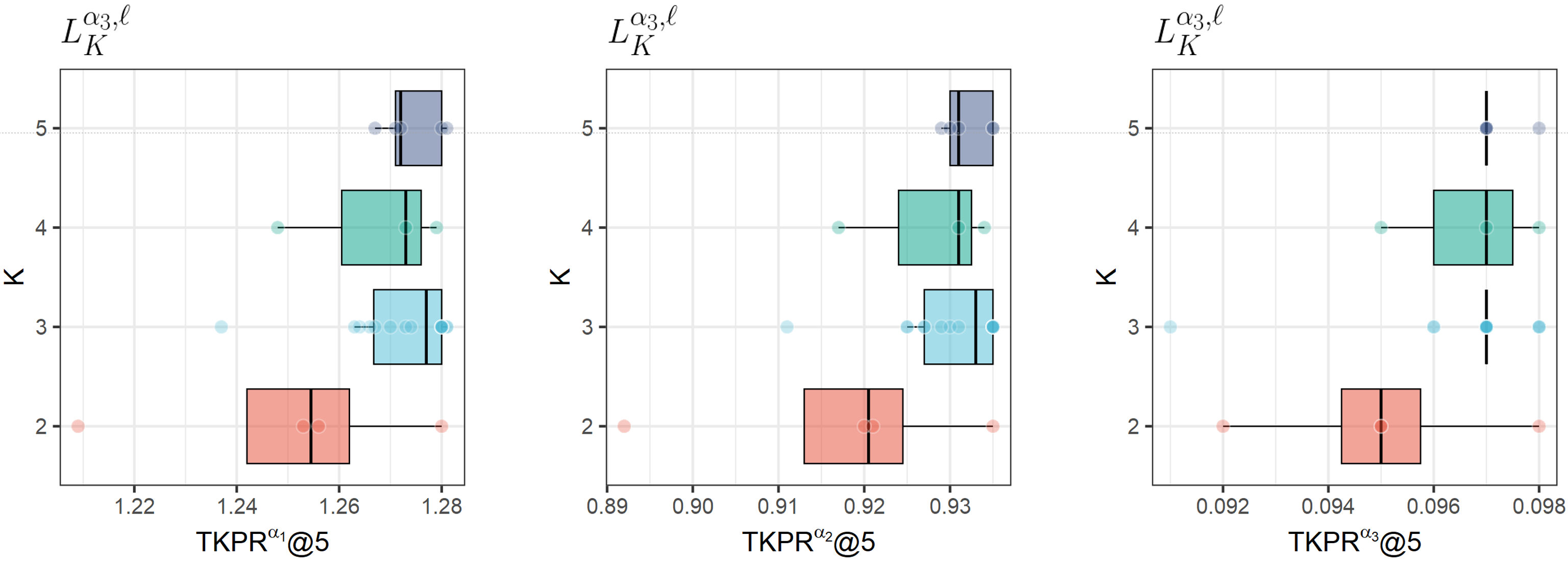}}
    \caption{Sensitivity analysis of the proposed methods on Pascal VOC 2007. The y-axis denotes the values of the hyperparameter $K$, and the x-axis represents the value of \texttt{TKPR@5} under the corresponding $K$.}
    \label{fig:VOC_sensitivity}
\end{figure}

\clearpage
\subsection{MLC experiments with Transformer Backbone}
\label{app:transformer}
  \textbf{Overall Performance.} Tab.\ref{tab:sota_coco_MLC_transformer} and Tab.\ref{tab:sota_nus_MLC_transformer} present the comparison results with the state-of-the-art loss-oriented methods with the swin-transformer backbone in the MLC setting, from which we have the following observations: 
  \begin{itemize}
    \item The proposed methods still outperform the competitors consistently on all the measures, which validates the robustness of the proposed framework.
    \item Compared with the improvements with the CNN backbone, the performance gains on the transformer backbone are more significant on MS-COCO. This phenomenon shows that the proposed framework can better exploit the potential of powerful pre-trained backbones.
    \item Benefiting from the powerful backbones, CCD achieves the best performance among the competitors. However, its performances on the ranking-based measures with $K=3$ are comparable/inferior to ASL. In other words, CDD pay more attention to the whole ranking list, rather than the top-ranked ones.
  \end{itemize}

  \textbf{Case study.} Fig.\ref{fig:case_coco} and Fig.\ref{fig:case_nus} present visual results on the MS-COCO and NUS-WIDE datasets, respectively, where the proposed methods perform better than all the competitors. For each input image, we present the ground-truth labels and the top-5 predicted labels of the proposed methods and the competitors. All the ground-truth labels are marked with {\color[RGB]{26, 184, 98} \textbf{green}}. From the results, we have the following observations:
  \begin{itemize}
    \item TKPR tends to rank the ground-truth labels higher than the competitors. This improvement brings the enhancement on \texttt{mAP@K}, \texttt{NDCG@K}, \texttt{TKPR}, and the ranking loss, which is consistent with the Bayes optimality of TKPR and again validates the effectiveness of the proposed framework.
    \item For some cases, TKPR ranks all the ground-truth labels in the top-5 list, which brings the improvement on \texttt{P@K} and \texttt{R@K}.
  \end{itemize}

\begin{table*}[ht]
  \centering
    \caption{The empirical results of state-of-the-art MLC methods and TKPR on MS-COCO, where the backbone is swin-transformer. The best and runner-up results on each metric are marked with {\color{Top1}red} and {\color{Top2}blue}, respectively. The best competitor on each measure is marked with \underline{underline}.}
    \label{tab:sota_coco_MLC_transformer}%
    \renewcommand\arraystretch{1.5}
    \tiny 
    \newcommand{\tabincell}[2]{\begin{tabular}{@{}#1@{}}#2\end{tabular}}
    \begin{tabular}{c|c|cc|cc|cc|cc|cc|cc|cc|c}
      \multicolumn{1}{c|}{\multirow{2}[4]{*}{Type}} & Metrics & \multicolumn{2}{c|}{P@K} & \multicolumn{2}{c|}{R@K} & \multicolumn{2}{c|}{mAP@K} & \multicolumn{2}{c|}{NDCG@K} & \multicolumn{2}{c|}{$\text{TKPR}^{\alpha_1}$} & \multicolumn{2}{c|}{$\text{TKPR}^{\alpha_2}$} & \multicolumn{2}{c|}{$\text{TKPR}^{\alpha_3}$} & \multicolumn{1}{c}{\multirow{2}[4]{*}{\tabincell{c}{Ranking \\ Loss}}} \\
      \cmidrule{2-16}          & K     & 3     & 5     & 3     & 5     & 3     & 5     & 3     & 5     & 3     & 5     & 3     & 5     & 3     & 5     &  \\
      \toprule
      \multicolumn{1}{c|}{\multirow{5}[2]{*}{\tabincell{c}{Loss \\ Oriented}}} & ASL\footnotemark[2] & .705 & .496 & .838 & .917 & .932 & .919 & .951 & .947 & 1.606 & 1.928 & .754 & .786 & .318 & .189 & .0096 \\
      & DB-Loss & .698 & .489 & .831 & .907 & .924 & .908 & .944 & .938 & 1.595 & 1.908 & .749 & .779 & .316 & .187 & .0125 \\
      & CCD\footnotemark[2]   & \underline{.705} & \underline{.498} & \underline{.839} & \underline{.920} & \underline{.933} & \underline{.921} & \underline{.951} & \underline{.948} & \underline{1.606} & \underline{1.930} & \underline{.754} & \underline{.787} & \underline{.318} & \underline{.189} & \color{Top2}\underline{.0094} \\
      & \multicolumn{1}{c|}{Hill\footnotemark[2]} & .696 & .493 & .832 & .915 & .918 & .909 & .940 & .940 & 1.585 & 1.909 & .745 & .780 & .314 & .187 & .0101 \\
      & \multicolumn{1}{c|}{SPLC\footnotemark[2]} & .676 & .487 & .815 & .909 & .883 & .883 & .914 & .922 & 1.532 & 1.862 & .724 & .765 & .305 & .183 & .0116 \\
      \toprule
      \multicolumn{1}{c|}{\multirow{3}[2]{*}{\tabincell{c}{TKPR \\ (Ours)}}} & $\alpha_1$ & \color{Top1}.712 & \color{Top1}.503 & \color{Top1}.847 & \color{Top1}.928 & \color{Top1}.940 & \color{Top1}.929 & \color{Top1}.956 & \color{Top1}.953 & \color{Top1}1.616 & \color{Top1}1.953 & \color{Top1}.759 & \color{Top1}.798 & \color{Top1}.321 & \color{Top1}.192 & .0096 \\
      & $\alpha_2$ & \color{Top2}.710 & \color{Top2}.501 & \color{Top2}.844 & \color{Top2}.925 & \color{Top2}.937 & \color{Top2}.926 & \color{Top2}.955 & \color{Top2}.952 & \color{Top2}1.612 & \color{Top2}1.941 & \color{Top2}.756 & \color{Top2}.790 & \color{Top2}.319 & \color{Top2}.190 & \color{Top1}.0092 \\
      & $\alpha_3$ & .707 & .499 & .842 & .922 & .935 & .923 & .953 & .950 & 1.609 & 1.935 & .755 & .788 & .319 & .189 & .0096 \\
    \end{tabular}%
\end{table*}%

\begin{table*}[ht]
  \centering
    \caption{The empirical results of state-of-the-art MLC methods and TKPR on NUS-WIDE, where the backbone is swin-transformer. The best and runner-up results on each metric are marked with {\color{Top1}red} and {\color{Top2}blue}, respectively. The best competitor on each measure is marked with \underline{underline}.}
    \label{tab:sota_nus_MLC_transformer}%
    \renewcommand\arraystretch{1.5}
    \tiny 
    \newcommand{\tabincell}[2]{\begin{tabular}{@{}#1@{}}#2\end{tabular}}
    \begin{tabular}{c|c|cc|cc|cc|cc|cc|cc|cc|c}
      \multicolumn{1}{c|}{\multirow{2}[4]{*}{Type}} & Metrics & \multicolumn{2}{c|}{P@K} & \multicolumn{2}{c|}{R@K} & \multicolumn{2}{c|}{mAP@K} & \multicolumn{2}{c|}{NDCG@K} & \multicolumn{2}{c|}{$\text{TKPR}^{\alpha_1}$} & \multicolumn{2}{c|}{$\text{TKPR}^{\alpha_2}$} & \multicolumn{2}{c|}{$\text{TKPR}^{\alpha_3}$} & \multicolumn{1}{c}{\multirow{2}[4]{*}{\tabincell{c}{Ranking \\ Loss}}} \\
      \cmidrule{2-16}          & K     & 3     & 5     & 3     & 5     & 3     & 5     & 3     & 5     & 3     & 5     & 3     & 5     & 3     & 5     &  \\
      \toprule
      \multicolumn{1}{c|}{\multirow{5}[2]{*}{\tabincell{c}{Loss \\ Oriented}}} & ASL\footnotemark[2] & .573 & .420 & .799 & .906 & \underline{.817} & .830 & \underline{.855} & .876 & \underline{1.318} & 1.601 & \underline{.708} & .757 & \underline{.284} & .175 & .0132 \\
      & DB-Loss & .572 & .419 & .798 & .904 & .815 & .828 & .853 & .875 & 1.316 & 1.597 & .707 & .756 & .283 & .174 & .0149 \\
      & CCD\footnotemark[2]   & \underline{.574} & \underline{.422} & \underline{.801} & \underline{.910} & .816 & \underline{.831} & .854 & \underline{.877} & 1.317 & \underline{1.602} & .707 & \underline{.758} & .284 & \underline{.175} & \underline{.0127} \\
      & \multicolumn{1}{c|}{Hill\footnotemark[2]} & .570 & .420 & .798 & .908 & .803 & .820 & .844 & .869 & 1.299 & 1.589 & .699 & .753 & .280 & .174 & .0132 \\
      & \multicolumn{1}{c|}{SPLC\footnotemark[2]} & .564 & .420 & .793 & .908 & .789 & .810 & .833 & .862 & 1.278 & 1.574 & .689 & .747 & .276 & .172 & .0137 \\
      \toprule
      \multicolumn{1}{c|}{\multirow{3}[2]{*}{\tabincell{c}{TKPR \\ (Ours)}}} & $\alpha_1$ & \color{Top2}.579 & \color{Top2}.424 & \color{Top2}.807 & \color{Top2}.914 & \color{Top2}.824 & \color{Top2}.838 & \color{Top2}.861 & \color{Top2}.883 & \color{Top2}1.328 & \color{Top2}1.613 & \color{Top2}.713 & \color{Top2}.763 & \color{Top2}.286 & \color{Top2}.176 & \color{Top2}.0126 \\
      & $\alpha_2$ & \color{Top1}.581 & \color{Top1}.425 & \color{Top1}.809 & \color{Top1}.916 & \color{Top1}.827 & \color{Top1}.840 & \color{Top1}.864 & \color{Top1}.885 & \color{Top1}1.336 & \color{Top1}1.617 & \color{Top1}.716 & \color{Top1}.767 & \color{Top1}.287 & \color{Top1}.177 & \color{Top1}.0122 \\
      & $\alpha_3$ & .578 & .423 & .806 & .912 & .824 & .837 & .861 & .882 & 1.328 & 1.612 & .713 & .762 & .286 & .176 & .0128 \\
  \end{tabular}%
\end{table*}%

\begin{figure}[htbp]
    \centering
    \includegraphics[width=\linewidth]{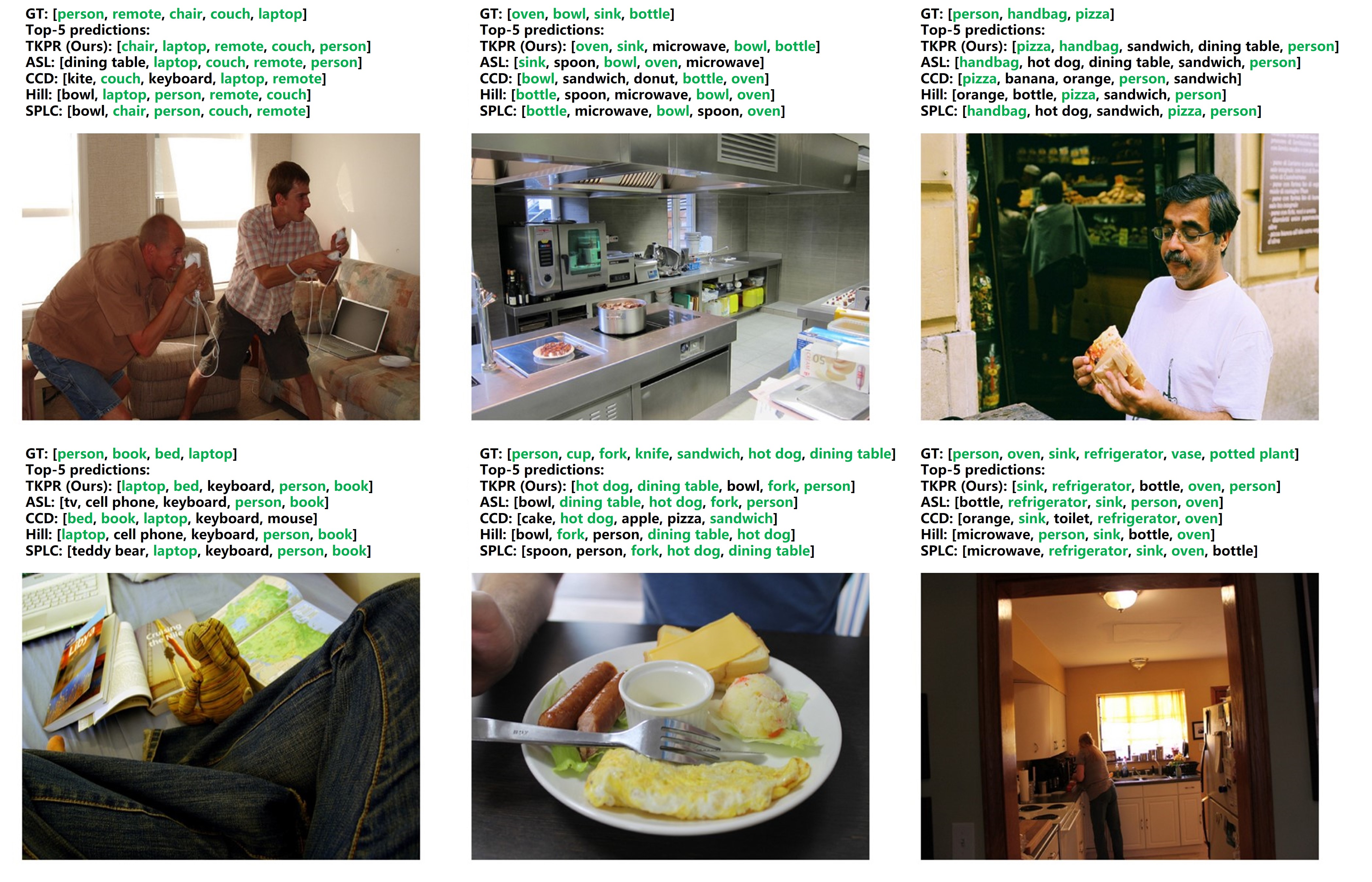}
    \caption{Case study on the COCO dataset with the swin-transformer backbone, where the proposed methods rank the relevant labels higher than the competitors do.}
    \label{fig:case_coco}
\end{figure}

\begin{figure}[htbp]
    \centering
    \includegraphics[width=\linewidth]{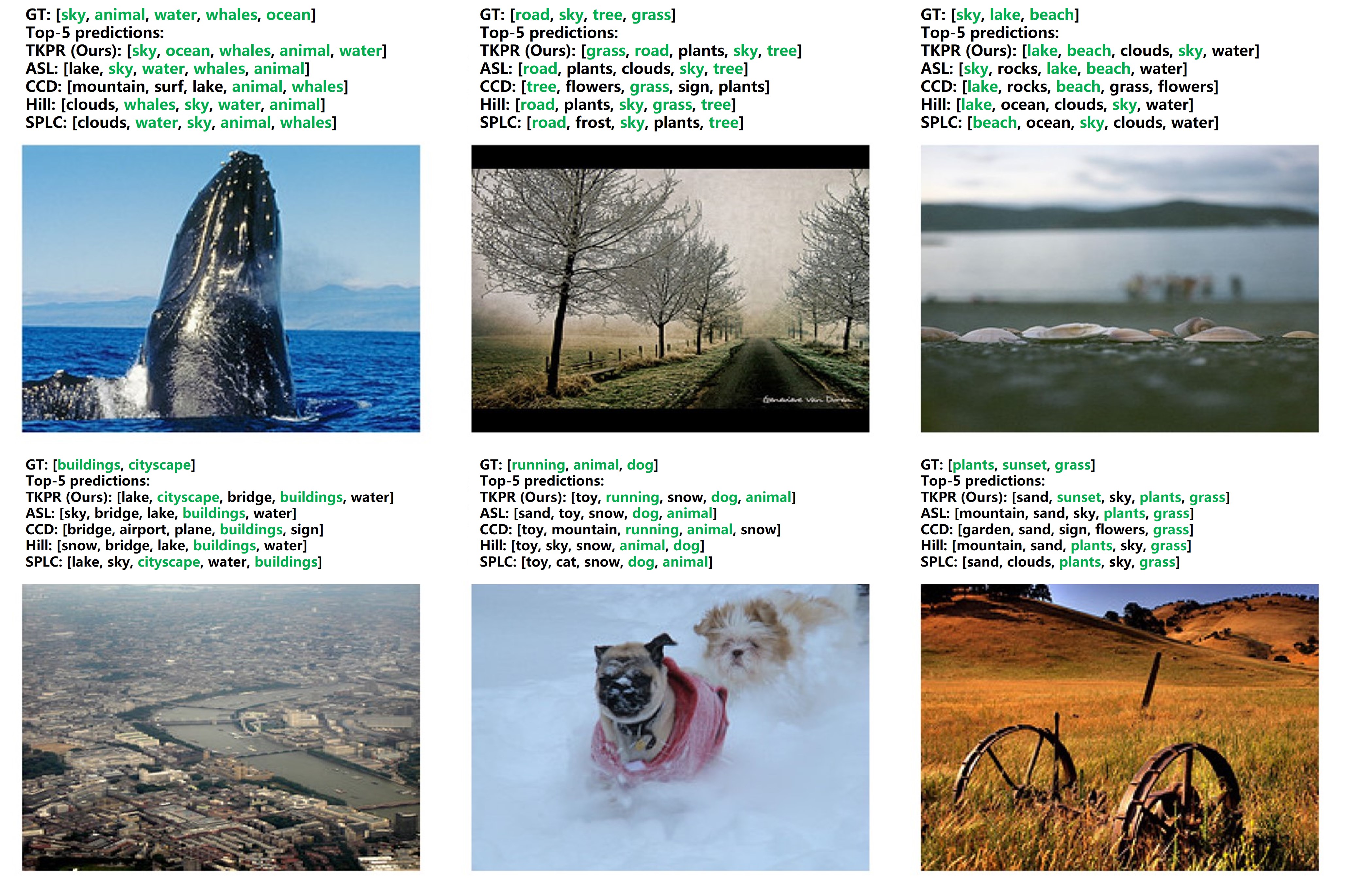}
    \caption{Case study on the NUS-WIDE dataset with the swin-transformer backbone, where the proposed methods rank the relevant labels higher than the competitors do.}
    \label{fig:case_nus}
\end{figure}

\clearpage
\subsection{More MLML experiments on Pascal VOC 2012}
\label{app:pascal12}

  In Tab.\ref{tab:ranking_loss_voc_MLML} and Tab.\ref{tab:sota_voc_MLML}, we present the results on Pascal VOC 2012 in the MLML setting. From these results, we have the following observations:
  \begin{itemize}
    \item Compared with the ranking-based losses, the proposed methods also demonstrate significant improvements on \texttt{mAP@K}, \texttt{NDCG@K}, the ranking loss, and the TKPR measures. Note that for the competitors, the performance degeneration induced by missing labels seems insignificant. This phenomenon is not surprising since the relevant labels are inherently sparse in Pascal VOC 2012.
    \item Compared with state-of-the-art methods, the proposed methods achieve consistent improvements on all the measures. Note that these performance gains are not so impressive than those in MS-COCO, \textit{i.e.}, Tab.\ref{tab:sota_coco_MLML}, which might again comes from the sparse property of the dataset.
    \item The best competitors differ on different measures, which again validates the necessity of the proposed framework.
  \end{itemize}

\begin{table*}[h]
    \centering
    \caption{The empirical results of the ranking-based losses and TKPR on Pascal VOC 2012-MLML, where the backbone is ResNet50. The best and runner-up results on each metric are marked with {\color{Top1}red} and {\color{Top2}blue}, respectively. The best competitor on each measure is marked with \underline{underline}.}
    \renewcommand\arraystretch{1.5}
    \tiny 
    \newcommand{\tabincell}[2]{\begin{tabular}{@{}#1@{}}#2\end{tabular}}
    \begin{tabular}{c|c|cc|cc|cc|cc|cc|cc|cc|c}
      \multicolumn{1}{c|}{\multirow{2}[4]{*}{Type}} & Metrics & \multicolumn{2}{c|}{P@K} & \multicolumn{2}{c|}{R@K} & \multicolumn{2}{c|}{mAP@K} & \multicolumn{2}{c|}{NDCG@K} & \multicolumn{2}{c|}{$\text{TKPR}^{\alpha_1}$} & \multicolumn{2}{c|}{$\text{TKPR}^{\alpha_2}$} & \multicolumn{2}{c|}{$\text{TKPR}^{\alpha_3}$} & \multicolumn{1}{c}{\multirow{2}[4]{*}{\tabincell{c}{Ranking \\ Loss}}} \\
      \cmidrule{2-16}          & K     & 3     & 5     & 3     & 5     & 3     & 5     & 3     & 5     & 3     & 5     & 3     & 5     & 3     & 5     &  \\
      \toprule
      \multicolumn{1}{c|}{\multirow{7}[2]{*}{\tabincell{c}{Ranking \\ loss}}} & $L_{\text{rank}}$ & .438 & .277 & .943 & .976 & .893 & .906 & .917 & .932 & 1.131 & 1.227 & .850 & .897 & .304 & .187 & .0199 \\
      & $L_{u1}$ & .444 & .279 & .955 & .983 & \underline{.926} & \underline{.938} & \underline{.944} & \underline{.957} & \underline{1.166} & \underline{1.253} & \underline{.877} & \underline{.916} & \underline{.313} & \underline{.191} & \underline{.0136} \\
      & $L_{u2}$ & .444 & .279 & .955 & .983 & .926 & .937 & .944 & .956 & 1.166 & 1.253 & .877 & .916 & .313 & .191 & .0137 \\
      & $L_{u3}$ & \underline{.445} & \underline{.280}& \underline{.956} & \underline{.984} & .919 & .930 & .939 & .951 & 1.160 & 1.251 & .871 & .913 & .311 & .191 & .0139 \\
      & $L_{u4}$ & .444 & .279 & .955 & .983 & .926 & .937 & .944 & .956 & 1.166 & 1.253 & .877 & .916 & .313 & .191 & .0137 \\
      & $L_{\text{LSEP}}$ & .444 & .280 & .954 & .983 & .915 & .927 & .936 & .949 & 1.157 & 1.249 & .869 & .911 & .310 & .190 & .0148 \\
      & $L_{\text{TKML}}$ & .443 & .280 & .951 & .983 & .910 & .923 & .931 & .946 & 1.149 & 1.244 & .863 & .908 & .308 & .189 & .0158 \\
      \toprule
      \multicolumn{1}{c|}{\multirow{3}[2]{*}{\tabincell{c}{TKPR \\ (Ours)}}} & $\alpha_1$ & \color{Top1}.450 & \color{Top1}.282 & \color{Top1}.963 & \color{Top1}.987 & \color{Top1}.939 & \color{Top2}.947 & \color{Top1}.954 & \color{Top2}.963 & \color{Top1}1.181 & \color{Top1}1.266 & \color{Top2}.886 & \color{Top1}.924 & \color{Top1}.317 & \color{Top2}.193 & \color{Top1}.0112 \\
      & $\alpha_2$ & \color{Top2}.448 & \color{Top2}.280 & \color{Top2}.961 & \color{Top2}.985 & \color{Top2}.938 & \color{Top1}.948 & \color{Top2}.953 & \color{Top1}.964 & \color{Top2}1.180 & \color{Top2}1.264 & \color{Top1}.887 & \color{Top2}.923 & \color{Top2}.317 & \color{Top1}.193 & \color{Top2}.0116 \\
      & $\alpha_3$ & .447 & .280 & .961 & .985 & .938 & .947 & .953 & .963 & 1.180 & 1.264 & .886 & .923 & .316 & .192 & .0117 \\
    \label{tab:ranking_loss_voc_MLML}%
    \end{tabular}%
\end{table*}%

\begin{table*}[h]
  \centering
    \caption{The empirical results of state-of-the-art MLML methods and TKPR on Pascal VOC 2012-MLML, where the backbone is ResNet50. The best and runner-up results on each metric are marked with {\color{Top1}red} and {\color{Top2}blue}, respectively. The best competitor on each measure is marked with \underline{underline}.}
    \renewcommand\arraystretch{1.5}
    \tiny 
    \newcommand{\tabincell}[2]{\begin{tabular}{@{}#1@{}}#2\end{tabular}}
    \begin{tabular}{c|c|cc|cc|cc|cc|cc|cc|cc|c}
      \multicolumn{1}{c|}{\multirow{2}[4]{*}{Type}} & Metrics & \multicolumn{2}{c|}{P@K} & \multicolumn{2}{c|}{R@K} & \multicolumn{2}{c|}{mAP@K} & \multicolumn{2}{c|}{NDCG@K} & \multicolumn{2}{c|}{$\text{TKPR}^{\alpha_1}$} & \multicolumn{2}{c|}{$\text{TKPR}^{\alpha_2}$} & \multicolumn{2}{c|}{$\text{TKPR}^{\alpha_3}$} & \multicolumn{1}{c}{\multirow{2}[4]{*}{\tabincell{c}{Ranking \\ Loss}}} \\
      \cmidrule{2-16}          & K     & 3     & 5     & 3     & 5     & 3     & 5     & 3     & 5     & 3     & 5     & 3     & 5     & 3     & 5     &  \\
      \toprule
      \multicolumn{1}{c|}{\multirow{7}[2]{*}{\tabincell{c}{Loss \\ Oriented}}} & ROLE\footnotemark[2] & .443 & .276 & .955 & .978 & .935 & .944 & .951 & .961 & 1.175 & 1.254 & .885 & .920 & .316 & .192 & .0163 \\
      & \multicolumn{1}{c|}{EM+APL\footnotemark[2]} & .446 & .277 & .959 & .979 & \underline{.937} & \underline{.944} & \underline{.952} & \underline{.961} & \underline{1.179} & \underline{1.259} & \underline{.886} & .920 & \underline{.316} & .192 & .0157\\
      & \multicolumn{1}{c|}{Hill\footnotemark[2]} & .444 & .279 & .956 & .984 & .924 & .935 & .942 & .955 & 1.165 & 1.253 & .876 & .916 & .313 & .191 & .0137 \\
      & \multicolumn{1}{c|}{SPLC\footnotemark[2]} & \underline{.446} & \underline{.280} & .958 & .984 & .929 & .939 & .947 & .958 & 1.170 & 1.257 & .879 & .918 & .314 & \underline{.192} & .0137 \\ 
      & \multicolumn{1}{c|}{LL-R\footnotemark[2]} & .427 & .266 & .962 & .986 & .924 & .934 & .942 & .954 & 1.132 & 1.207 & .883 & .922 & .311 & .191 & .0131 \\
      & LL-Ct\footnotemark[2] & .427 & .266 & .962 & \underline{\color{Top2}.987} & .925 & .937 & .943 & .956 & 1.134 & 1.209 & .884 & \underline{.924} & .312 & .191 & \underline{.0128} \\
      & LL-Cp\footnotemark[2] & .428 & .265 & \underline{.963} & .986 & .926 & .936 & .944 & .955 & 1.137 & 1.210 & .885 & .924 & .312 & .191 & .0131 \\
      \toprule
      \multicolumn{1}{c|}{\multirow{3}[2]{*}{\tabincell{c}{TKPR \\ (Ours)}}} & $\alpha_1$ & \color{Top1}.451 & \color{Top1}.281 & \color{Top2}.965 & \color{Top1}.987 & \color{Top1}.942 & \color{Top1}.950 & \color{Top1}.957 & \color{Top1}.966 & \color{Top1}1.183 & \color{Top1}1.268 & \color{Top2}.887 & \color{Top1}.925 & \color{Top1}.318 & \color{Top1}.193 & \color{Top1}.0112 \\
      & $\alpha_2$ & \color{Top2}.450 & \color{Top2}.280 & \color{Top1}.966 & .986 & \color{Top2}.941 & \color{Top2}.949 & \color{Top2}.956 & \color{Top2}.965 & \color{Top2}1.183 & 1.267 & \color{Top1}.888 & \color{Top2}.925 & \color{Top2}.317 & \color{Top2}.193 & \color{Top2}.0114 \\
      & $\alpha_3$ & .450 & .280 & .965 & .986 & .941 & .949 & .956 & .965 & 1.183 & \color{Top2}1.268 & .887 & .924 & .317 & .192 & .0114 \\
      \label{tab:sota_voc_MLML}%
    \end{tabular}%
\end{table*}%
\end{appendices}


\clearpage
\bibliography{sn-bibliography}

\begin{thebibliography}{102}
\providecommand{\natexlab}[1]{#1}
\providecommand{\url}[1]{{#1}}
\providecommand{\urlprefix}{URL }
\providecommand{\doi}[1]{\url{https://doi.org/#1}}
\providecommand{\eprint}[2][]{\url{#2}}
 \bibcommenthead

\bibitem[{Aljundi et~al(2023)Aljundi, Patel, Sulc, Chumerin, and Reino}]{DBLP:conf/aaai/AljundiPSCR23}
Aljundi R, Patel Y, Sulc M, et~al (2023) Contrastive classification and representation learning with probabilistic interpretation. In: {AAAI} Conference on Artificial Intelligence, pp 6675--6683

\bibitem[{Andrews(1998)}]{andrews_1984}
Andrews GE (1998) The Theory of Partitions. Cambridge University Press

\bibitem[{Baruch et~al(2022)Baruch, Ridnik, Friedman, Ben{-}Cohen, Zamir, Noy, and Zelnik{-}Manor}]{DBLP:conf/cvpr/BaruchRFBZNZ22}
Baruch EB, Ridnik T, Friedman I, et~al (2022) Multi-label classification with partial annotations using class-aware selective loss. In: {IEEE/CVF} Conference on Computer Vision and Pattern Recognition, pp 4754--4762

\bibitem[{Boutell et~al(2004)Boutell, Luo, Shen, and Brown}]{DBLP:journals/pr/BoutellLSB04}
Boutell MR, Luo J, Shen X, et~al (2004) Learning multi-label scene classification. Pattern Recognit 37:1757--1771

\bibitem[{Brown et~al(2020)Brown, Xie, Kalogeiton, and Zisserman}]{DBLP:conf/eccv/BrownXKZ20}
Brown A, Xie W, Kalogeiton V, et~al (2020) Smooth-ap: Smoothing the path towards large-scale image retrieval. In: European Conference on Computer Vision, pp 677--694

\bibitem[{Carneiro et~al(2007)Carneiro, Chan, Moreno, and Vasconcelos}]{DBLP:journals/pami/CarneiroCMV07}
Carneiro G, Chan AB, Moreno PJ, et~al (2007) Supervised learning of semantic classes for image annotation and retrieval. {IEEE} Trans Pattern Anal Mach Intell 29:394--410

\bibitem[{Chen et~al(2023)Chen, Zhao, and Li}]{DBLP:journals/ijcv/ChenZL23}
Chen C, Zhao Y, Li J (2023) Semantic contrastive bootstrapping for single-positive multi-label recognition. Int J Comput Vis 131:3289--3306

\bibitem[{Chen et~al(2019)Chen, Wei, Wang, and Guo}]{DBLP:conf/cvpr/ChenWWG19}
Chen Z, Wei X, Wang P, et~al (2019) Multi-label image recognition with graph convolutional networks. In: {IEEE} Conference on Computer Vision and Pattern Recognition, pp 5177--5186

\bibitem[{Chua et~al(2009)Chua, Tang, Hong, Li, Luo, and Zheng}]{DBLP:conf/civr/ChuaTHLLZ09}
Chua T, Tang J, Hong R, et~al (2009) {NUS-WIDE:} a real-world web image database from national university of singapore. In: {ACM} International Conference on Image and Video Retrieval

\bibitem[{Clare and King(2001)}]{DBLP:conf/pkdd/ClareK01}
Clare A, King RD (2001) Knowledge discovery in multi-label phenotype data. In: European Conference on Principles of Data Mining and Knowledge Discovery, pp 42--53

\bibitem[{Cole et~al(2021)Cole, Aodha, Lorieul, Perona, Morris, and Jojic}]{DBLP:conf/cvpr/ColeALPMJ21}
Cole E, Aodha OM, Lorieul T, et~al (2021) Multi-label learning from single positive labels. In: {IEEE} Conference on Computer Vision and Pattern Recognition, pp 933--942

\bibitem[{Dai et~al(2023)Dai, Xu, Yang, Cao, and Huang}]{DBLP:conf/nips/DaiX0CH23}
Dai S, Xu Q, Yang Z, et~al (2023) {DRAUC:} an instance-wise distributionally robust {AUC} optimization framework. In: Annual Conference on Neural Information Processing Systems, pp 44658--44670

\bibitem[{Davis and Goadrich(2006)}]{DBLP:conf/icml/DavisG06}
Davis J, Goadrich M (2006) The relationship between precision-recall and {ROC} curves. In: International Conference on Machine Learning, pp 233--240

\bibitem[{Dembczynski et~al(2010)Dembczynski, Cheng, and H{\"{u}}llermeier}]{DBLP:conf/icml/DembczynskiCH10}
Dembczynski K, Cheng W, H{\"{u}}llermeier E (2010) Bayes optimal multilabel classification via probabilistic classifier chains. In: International Conference on Machine Learning, pp 279--286

\bibitem[{Dembczynski et~al(2012{\natexlab{a}})Dembczynski, Kotlowski, and H{\"{u}}llermeier}]{DBLP:conf/icml/DembczynskiKH12}
Dembczynski K, Kotlowski W, H{\"{u}}llermeier E (2012{\natexlab{a}}) Consistent multilabel ranking through univariate losses. In: International Conference on Machine Learning, pp 1--8

\bibitem[{Dembczynski et~al(2012{\natexlab{b}})Dembczynski, Waegeman, Cheng, and H{\"{u}}llermeier}]{DBLP:journals/ml/DembczynskiWCH12}
Dembczynski K, Waegeman W, Cheng W, et~al (2012{\natexlab{b}}) On label dependence and loss minimization in multi-label classification. Mach Learn 88:5--45

\bibitem[{Deng et~al(2009)Deng, Dong, Socher, Li, Li, and Fei{-}Fei}]{DBLP:conf/cvpr/DengDSLL009}
Deng J, Dong W, Socher R, et~al (2009) Imagenet: {A} large-scale hierarchical image database. In: {IEEE} Conference on Computer Vision and Pattern Recognition, pp 248--255

\bibitem[{Ding et~al(2023)Ding, Wang, Chen, Zhang, Liu, Bao, Yan, and Han}]{DBLP:conf/cvpr/DingWCZLBYH23}
Ding Z, Wang A, Chen H, et~al (2023) Exploring structured semantic prior for multi label recognition with incomplete labels. In: {IEEE/CVF} Conference on Computer Vision and Pattern Recognition, pp 3398--3407

\bibitem[{Elisseeff and Weston(2001)}]{DBLP:conf/nips/ElisseeffW01}
Elisseeff A, Weston J (2001) A kernel method for multi-labelled classification. In: Annual Conference on Neural Information Processing Systems, pp 681--687

\bibitem[{Everingham et~al(2010)Everingham, Gool, Williams, Winn, and Zisserman}]{DBLP:journals/ijcv/EveringhamGWWZ10}
Everingham M, Gool LV, Williams CKI, et~al (2010) The pascal visual object classes {(VOC)} challenge. International Journal of Computer Vision 88:303--338

\bibitem[{Fokas and Lenells(2022)}]{fokas2022asymptotics}
Fokas AS, Lenells J (2022) On the asymptotics to all orders of the riemann zeta function and of a two-parameter generalization of the riemann zeta function. Mem Amer Math Soc 275:1--62

\bibitem[{F{\"{u}}rnkranz et~al(2008)F{\"{u}}rnkranz, H{\"{u}}llermeier, Menc{\'{\i}}a, and Brinker}]{DBLP:journals/ml/FurnkranzHMB08}
F{\"{u}}rnkranz J, H{\"{u}}llermeier E, Menc{\'{\i}}a EL, et~al (2008) Multilabel classification via calibrated label ranking. Mach Learn 73:133--153

\bibitem[{Gao and Zhou(2013)}]{DBLP:journals/ai/GaoZ13}
Gao W, Zhou Z (2013) On the consistency of multi-label learning. Artif Intell 199-200:22--44

\bibitem[{Gerych et~al(2021)Gerych, Hartvigsen, Buquicchio, Agu, and Rundensteiner}]{DBLP:conf/nips/GerychHBAR21}
Gerych W, Hartvigsen T, Buquicchio L, et~al (2021) Recurrent bayesian classifier chains for exact multi-label classification. In: Annual Conference on Neural Information Processing Systems, pp 15981--15992

\bibitem[{Golowich et~al(2018)Golowich, Rakhlin, and Shamir}]{DBLP:conf/colt/GolowichRS18}
Golowich N, Rakhlin A, Shamir O (2018) Size-independent sample complexity of neural networks. In: Conference On Learning Theory, pp 297--299

\bibitem[{Goodfellow et~al(2016)Goodfellow, Bengio, and Courville}]{deeplearning}
Goodfellow I, Bengio Y, Courville A (2016) Deep learning. MIT press

\bibitem[{Hardy et~al(1952)Hardy, Littlewood, P{\'o}lya, P{\'o}lya et~al}]{hardy1952inequalities}
Hardy GH, Littlewood JE, P{\'o}lya G, et~al (1952) Inequalities. Cambridge university press

\bibitem[{He et~al(2016)He, Zhang, Ren, and Sun}]{DBLP:conf/cvpr/HeZRS16}
He K, Zhang X, Ren S, et~al (2016) Deep residual learning for image recognition. In: {IEEE} Conference on Computer Vision and Pattern Recognition, pp 770--778

\bibitem[{Hu et~al(2020)Hu, Ying, Wang, and Lyu}]{DBLP:conf/nips/HuY0L20}
Hu S, Ying Y, Wang X, et~al (2020) Learning by minimizing the sum of ranked range. In: Annual Conference on Neural Information Processing Systems, pp 1--11

\bibitem[{Huang et~al(2015)Huang, Wu, Liang, Mitra, and Giles}]{DBLP:conf/aaai/HuangWCMG15}
Huang W, Wu Z, Liang C, et~al (2015) A neural probabilistic model for context based citation recommendation. In: {AAAI} Conference on Artificial Intelligence, pp 2404--2410

\bibitem[{Ibrahim et~al(2020)Ibrahim, Epure, Peeters, and Richard}]{DBLP:conf/mir/IbrahimEPR20}
Ibrahim KM, Epure EV, Peeters G, et~al (2020) Confidence-based weighted loss for multi-label classification with missing labels. In: International Conference on Multimedia Retrieval, pp 291--295

\bibitem[{Jernite et~al(2017)Jernite, Choromanska, and Sontag}]{DBLP:conf/icml/JerniteCS17}
Jernite Y, Choromanska A, Sontag DA (2017) Simultaneous learning of trees and representations for extreme classification and density estimation. In: International Conference on Machine Learning, pp 1665--1674

\bibitem[{Kalina and Krzysztof(2018)}]{kalina2018bayes}
Kalina J, Krzysztof D (2018) Bayes optimal prediction for ndcg@k in extreme multi-label classification. In: Workshop on Multiple Criteria Decision Aid to Preference Learning, pp 1--4

\bibitem[{Khosla et~al(2020)Khosla, Teterwak, Wang, Sarna, Tian, Isola, Maschinot, Liu, and Krishnan}]{DBLP:conf/nips/KhoslaTWSTIMLK20}
Khosla P, Teterwak P, Wang C, et~al (2020) Supervised contrastive learning. In: Annual Conference on Neural Information Processing Systems, pp 18661--18673

\bibitem[{Kim et~al(2022)Kim, Kim, Akata, and Lee}]{DBLP:conf/cvpr/KimKA022}
Kim Y, Kim J, Akata Z, et~al (2022) Large loss matters in weakly supervised multi-label classification. In: {IEEE/CVF} Conference on Computer Vision and Pattern Recognition

\bibitem[{Kim et~al(2023)Kim, Kim, Jeong, Schmid, Akata, and Lee}]{DBLP:conf/cvpr/KimKJSA023}
Kim Y, Kim J, Jeong J, et~al (2023) Bridging the gap between model explanations in partially annotated multi-label classification. In: {IEEE/CVF} Conference on Computer Vision and Pattern Recognition, pp 3408--3417

\bibitem[{Lapin et~al(2015)Lapin, Hein, and Schiele}]{DBLP:conf/nips/LapinHS15}
Lapin M, Hein M, Schiele B (2015) Top-k multiclass {SVM}. In: Annual Conference on Neural Information Processing Systems, pp 325--333

\bibitem[{Lapin et~al(2016)Lapin, Hein, and Schiele}]{DBLP:conf/cvpr/Lapin0S16}
Lapin M, Hein M, Schiele B (2016) Loss functions for top-k error: Analysis and insights. In: {IEEE} Conference on Computer Vision and Pattern Recognition, pp 1468--1477

\bibitem[{Lapin et~al(2018)Lapin, Hein, and Schiele}]{DBLP:journals/pami/LapinHS18}
Lapin M, Hein M, Schiele B (2018) Analysis and optimization of loss functions for multiclass, top-k, and multilabel classification. {IEEE} Trans Pattern Anal Mach Intell 40:1533--1554

\bibitem[{Li et~al(2017{\natexlab{a}})Li, Gao, and Xu}]{DBLP:conf/cikm/LiGX17}
Li T, Gao S, Xu Y (2017{\natexlab{a}}) Deep multi-similarity hashing for multi-label image retrieval. In: {ACM} Conference on Information and Knowledge Management, pp 2159--2162

\bibitem[{Li et~al(2017{\natexlab{b}})Li, Song, and Luo}]{DBLP:conf/cvpr/LiSL17}
Li Y, Song Y, Luo J (2017{\natexlab{b}}) Improving pairwise ranking for multi-label image classification. In: {IEEE} Conference on Computer Vision and Pattern Recognition, pp 1837--1845

\bibitem[{Lin et~al(2014)Lin, Maire, Belongie, Hays, Perona, Ramanan, Doll{\'{a}}r, and Zitnick}]{DBLP:conf/eccv/LinMBHPRDZ14}
Lin T, Maire M, Belongie SJ, et~al (2014) Microsoft {COCO:} common objects in context. In: European Conference on Computer Vision, pp 740--755

\bibitem[{Ling et~al(2003)Ling, Huang, and Zhang}]{DBLP:conf/ijcai/LingHZ03}
Ling CX, Huang J, Zhang H (2003) {AUC:} a statistically consistent and more discriminating measure than accuracy. In: International Joint Conference on Artificial Intelligence, pp 519--526

\bibitem[{Liu et~al(2023)Liu, Xu, Lv, and Geng}]{DBLP:conf/icml/Liu0LG23}
Liu B, Xu N, Lv J, et~al (2023) Revisiting pseudo-label for single-positive multi-label learning. In: International Conference on Machine Learning, pp 22249--22265

\bibitem[{Liu et~al(2017)Liu, Chang, Wu, and Yang}]{DBLP:conf/sigir/LiuCWY17}
Liu J, Chang W, Wu Y, et~al (2017) Deep learning for extreme multi-label text classification. In: International {ACM} {SIGIR} Conference on Research and Development in Information Retrieval, pp 115--124

\bibitem[{Liu et~al(2022{\natexlab{a}})Liu, Liu, Li, Hou, Yu, and Yang}]{DBLP:conf/cvpr/LiuLLHYY22}
Liu R, Liu H, Li G, et~al (2022{\natexlab{a}}) Contextual debiasing for visual recognition with causal mechanisms. In: {IEEE/CVF} Conference on Computer Vision and Pattern Recognition, pp 12745--12755

\bibitem[{Liu(2009)}]{DBLP:journals/ftir/Liu09}
Liu T (2009) Learning to rank for information retrieval. Found Trends Inf Retr 3:225--331

\bibitem[{Liu et~al(2022{\natexlab{b}})Liu, Wang, Shen, and Tsang}]{DBLP:journals/pami/LiuWST22}
Liu W, Wang H, Shen X, et~al (2022{\natexlab{b}}) The emerging trends of multi-label learning. {IEEE} Trans Pattern Anal Mach Intell 44:7955--7974

\bibitem[{Liu et~al(2021)Liu, Lin, Cao, Hu, Wei, Zhang, Lin, and Guo}]{DBLP:conf/iccv/LiuL00W0LG21}
Liu Z, Lin Y, Cao Y, et~al (2021) Swin transformer: Hierarchical vision transformer using shifted windows. In: {IEEE/CVF} International Conference on Computer Vision, pp 9992--10002

\bibitem[{Long and Sedghi(2020)}]{DBLP:conf/iclr/LongS20}
Long PM, Sedghi H (2020) Generalization bounds for deep convolutional neural networks. In: International Conference on Learning Representations, pp 1--15

\bibitem[{Maurer(2016)}]{DBLP:conf/alt/Maurer16}
Maurer A (2016) A vector-contraction inequality for rademacher complexities. In: International Conference on Algorithmic Learning Theory, pp 3--17

\bibitem[{Menon et~al(2019)Menon, Rawat, Reddi, and Kumar}]{DBLP:conf/nips/X19}
Menon AK, Rawat AS, Reddi SJ, et~al (2019) Multilabel reductions: what is my loss optimising? In: Annual Conference on Neural Information Processing Systems, pp 10599--10610

\bibitem[{Mohapatra et~al(2018)Mohapatra, Rol{\'{\i}}nek, Jawahar, Kolmogorov, and Kumar}]{DBLP:conf/cvpr/MohapatraRJKK18}
Mohapatra P, Rol{\'{\i}}nek M, Jawahar CV, et~al (2018) Efficient optimization for rank-based loss functions. In: {IEEE} Conference on Computer Vision and Pattern Recognition, pp 3693--3701

\bibitem[{Mohri et~al(2012)Mohri, Rostamizadeh, and Talwalkar}]{10.5555/2371238}
Mohri M, Rostamizadeh A, Talwalkar A (2012) Foundations of Machine Learning. The MIT Press

\bibitem[{Paszke et~al(2019)Paszke, Gross, Massa, Lerer, Bradbury, Chanan, Killeen, Lin, Gimelshein, Antiga, Desmaison, K{\"{o}}pf, Yang, DeVito, Raison, Tejani, Chilamkurthy, Steiner, Fang, Bai, and Chintala}]{DBLP:conf/nips/PaszkeGMLBCKLGA19}
Paszke A, Gross S, Massa F, et~al (2019) Pytorch: An imperative style, high-performance deep learning library. In: Annual Conference on Neural Information Processing Systems, pp 8024--8035

\bibitem[{Prabhu and Varma(2014)}]{DBLP:conf/kdd/PrabhuV14}
Prabhu Y, Varma M (2014) Fastxml: a fast, accurate and stable tree-classifier for extreme multi-label learning. In: {ACM} {SIGKDD} International Conference on Knowledge Discovery and Data Mining, pp 263--272

\bibitem[{Qiu et~al(2022)Qiu, Hu, Zhong, Zhang, and Yang}]{DBLP:conf/icml/QiuHZZY22}
Qiu Z, Hu Q, Zhong Y, et~al (2022) Large-scale stochastic optimization of {NDCG} surrogates for deep learning with provable convergence. In: International Conference on Machine Learning, pp 18122--18152

\bibitem[{Radlinski and Craswell(2010)}]{DBLP:conf/sigir/RadlinskiC10}
Radlinski F, Craswell N (2010) Comparing the sensitivity of information retrieval metrics. In: {ACM} {SIGIR} Conference on Research and Development in Information Retrieval, pp 667--674

\bibitem[{Ramzi et~al(2021)Ramzi, Thome, Rambour, Audebert, and Bitot}]{DBLP:conf/nips/RamziTRAB21}
Ramzi E, Thome N, Rambour C, et~al (2021) Robust and decomposable average precision for image retrieval. In: Annual Conference on Neural Information Processing Systems, pp 23569--23581

\bibitem[{Ridnik et~al(2021)Ridnik, Baruch, Zamir, Noy, Friedman, Protter, and Zelnik{-}Manor}]{DBLP:conf/iccv/RidnikBZNFPZ21}
Ridnik T, Baruch EB, Zamir N, et~al (2021) Asymmetric loss for multi-label classification. In: {IEEE/CVF} International Conference on Computer Vision, pp 82--91

\bibitem[{Russakovsky et~al(2015)Russakovsky, Deng, Su, Krause, Satheesh, Ma, Huang, Karpathy, Khosla, Bernstein, Berg, and Fei{-}Fei}]{DBLP:journals/ijcv/RussakovskyDSKS15}
Russakovsky O, Deng J, Su H, et~al (2015) Imagenet large scale visual recognition challenge. Int J Comput Vis 115:211--252

\bibitem[{Sun et~al(2010)Sun, Zhang, and Zhou}]{DBLP:conf/aaai/SunZZ10}
Sun Y, Zhang Y, Zhou Z (2010) Multi-label learning with weak label. In: {AAAI} Conference on Artificial Intelligence, pp 593--598

\bibitem[{Sutskever et~al(2013)Sutskever, Martens, Dahl, and Hinton}]{DBLP:conf/icml/SutskeverMDH13}
Sutskever I, Martens J, Dahl GE, et~al (2013) On the importance of initialization and momentum in deep learning. In: International Conference on Machine Learning, pp 1139--1147

\bibitem[{Swezey et~al(2021)Swezey, Grover, Charron, and Ermon}]{DBLP:conf/nips/SwezeyGCE21}
Swezey RME, Grover A, Charron B, et~al (2021) Pirank: Scalable learning to rank via differentiable sorting. In: Annual Conference on Neural Information Processing Systems, pp 21644--21654

\bibitem[{Tang et~al(2020)Tang, Jiang, Xia, Pitera, Welser, and Chawla}]{DBLP:conf/aaai/Tang0XPWC20}
Tang P, Jiang M, Xia BN, et~al (2020) Multi-label patent categorization with non-local attention-based graph convolutional network. In: {AAAI} Conference on Artificial Intelligence, pp 9024--9031

\bibitem[{Titchmarsh et~al(1986)Titchmarsh, Heath-Brown, Titchmarsh et~al}]{titchmarsh1986theory}
Titchmarsh EC, Heath-Brown DR, Titchmarsh ECT, et~al (1986) The theory of the Riemann zeta-function. Oxford university press

\bibitem[{Tsoumakas et~al(2011{\natexlab{a}})Tsoumakas, Katakis, and Vlahavas}]{DBLP:journals/tkde/TsoumakasKV11}
Tsoumakas G, Katakis I, Vlahavas IP (2011{\natexlab{a}}) Random k-labelsets for multilabel classification. {IEEE} Trans Knowl Data Eng 23:1079--1089

\bibitem[{Tsoumakas et~al(2011{\natexlab{b}})Tsoumakas, Xioufis, Vilcek, and Vlahavas}]{DBLP:journals/jmlr/TsoumakasXVV11}
Tsoumakas G, Xioufis ES, Vilcek J, et~al (2011{\natexlab{b}}) {MULAN:} {A} java library for multi-label learning. J Mach Learn Res 12:2411--2414

\bibitem[{Waegeman et~al(2014)Waegeman, Dembczynski, Jachnik, Cheng, and H{\"{u}}llermeier}]{DBLP:journals/jmlr/WaegemanDJCH14}
Waegeman W, Dembczynski K, Jachnik A, et~al (2014) On the bayes-optimality of f-measure maximizers. J Mach Learn Res 15:3333--3388

\bibitem[{Wang et~al(2016)Wang, Yang, Mao, Huang, Huang, and Xu}]{DBLP:conf/cvpr/WangYMHHX16}
Wang J, Yang Y, Mao J, et~al (2016) {CNN-RNN:} {A} unified framework for multi-label image classification. In: {IEEE} Conference on Computer Vision and Pattern Recognition, pp 2285--2294

\bibitem[{Wang et~al(2013)Wang, Wang, Li, He, and Liu}]{DBLP:conf/colt/WangWLHL13}
Wang Y, Wang L, Li Y, et~al (2013) A theoretical analysis of {NDCG} type ranking measures. In: Annual Conference on Learning Theory, pp 25--54

\bibitem[{Wang et~al(2017)Wang, Chen, Li, Xu, and Lin}]{DBLP:conf/iccv/WangCLXL17}
Wang Z, Chen T, Li G, et~al (2017) Multi-label image recognition by recurrently discovering attentional regions. In: {IEEE} International Conference on Computer Vision, pp 464--472

\bibitem[{Wang et~al(2022)Wang, Xu, Yang, He, Cao, and Huang}]{DBLP:conf/nips/WangX00CH22}
Wang Z, Xu Q, Yang Z, et~al (2022) Openauc: Towards auc-oriented open-set recognition. In: Annual Conference on Neural Information Processing Systems, pp 25033--25045

\bibitem[{Wang et~al(2023)Wang, Xu, Yang, He, Cao, and Huang}]{DBLP:journals/pami/WangXYHCH23}
Wang Z, Xu Q, Yang Z, et~al (2023) Optimizing partial area under the top-k curve: Theory and practice. {IEEE} Trans Pattern Anal Mach Intell 45:5053--5069

\bibitem[{Wei et~al(2022)Wei, Mao, Shi, Li, and Zhang}]{DBLP:journals/corr/abs-2210-03968}
Wei T, Mao Z, Shi J, et~al (2022) A survey on extreme multi-label learning. CoRR abs/2210.03968

\bibitem[{Wen et~al(2022{\natexlab{a}})Wen, Xu, Yang, He, and Huang}]{wen2022exploring}
Wen P, Xu Q, Yang Z, et~al (2022{\natexlab{a}}) Exploring the algorithm-dependent generalization of auprc optimization with list stability. In: Annual Conference on Neural Information Processing Systems

\bibitem[{Wen et~al(2022{\natexlab{b}})Wen, Xu, Yang, He, and Huang}]{DBLP:conf/nips/WenX00H22}
Wen P, Xu Q, Yang Z, et~al (2022{\natexlab{b}}) Exploring the algorithm-dependent generalization of {AUPRC} optimization with list stability. In: Annual Conference on Neural Information Processing Systems, pp 28335--28349

\bibitem[{Wen et~al(2024)Wen, Xu, Yang, He, and Huang}]{DBLP:journals/pami/WenX00H24}
Wen P, Xu Q, Yang Z, et~al (2024) Algorithm-dependent generalization of {AUPRC} optimization: Theory and algorithm. {IEEE} Trans Pattern Anal Mach Intell 46:5062--5079

\bibitem[{Wu et~al(2014)Wu, Liu, Wang, Hu, and Ji}]{DBLP:conf/icpr/WuLWHJ14}
Wu B, Liu Z, Wang S, et~al (2014) Multi-label learning with missing labels. In: International Conference on Pattern Recognition, pp 1964--1968

\bibitem[{Wu et~al(2018)Wu, Jia, Liu, Ghanem, and Lyu}]{DBLP:journals/ijcv/WuJLGL18}
Wu B, Jia F, Liu W, et~al (2018) Multi-label learning with missing labels using mixed dependency graphs. Int J Comput Vis 126:875--896

\bibitem[{Wu and Zhu(2020)}]{DBLP:conf/nips/WuZ20}
Wu G, Zhu J (2020) Multi-label classification: do hamming loss and subset accuracy really conflict with each other? In: Annual Conference on Neural Information Processing Systems, pp 1--11

\bibitem[{Wu et~al(2021)Wu, Li, Xu, and Zhu}]{DBLP:conf/nips/WuLXZ21}
Wu G, Li C, Xu K, et~al (2021) Rethinking and reweighting the univariate losses for multi-label ranking: Consistency and generalization. In: Annual Conference on Neural Information Processing Systems, pp 14332--14344

\bibitem[{Wu et~al(2023)Wu, Li, and Yin}]{DBLP:conf/icml/WuLY23}
Wu G, Li C, Yin Y (2023) Towards understanding generalization of macro-auc in multi-label learning. In: International Conference on Machine Learning, pp 37540--37570

\bibitem[{Wu et~al(2020)Wu, Huang, Liu, Wang, and Lin}]{DBLP:conf/eccv/WuH0WL20}
Wu T, Huang Q, Liu Z, et~al (2020) Distribution-balanced loss for multi-label classification in long-tailed datasets. In: European Conference on Computer Vision, pp 162--178

\bibitem[{Wu and Zhou(2017)}]{DBLP:conf/icml/WuZ17}
Wu X, Zhou Z (2017) A unified view of multi-label performance measures. In: International Conference on Machine Learning, pp 3780--3788

\bibitem[{Wydmuch et~al(2018)Wydmuch, Jasinska, Kuznetsov, Busa{-}Fekete, and Dembczynski}]{DBLP:conf/nips/WydmuchJKBD18}
Wydmuch M, Jasinska K, Kuznetsov M, et~al (2018) A no-regret generalization of hierarchical softmax to extreme multi-label classification. In: Bengio S, Wallach HM, Larochelle H, et~al (eds) Annual Conference on Neural Information Processing Systems, pp 6358--6368

\bibitem[{Xie and Huang(2022)}]{DBLP:journals/pami/XieH22}
Xie M, Huang S (2022) Partial multi-label learning with noisy label identification. {IEEE} Trans Pattern Anal Mach Intell 44:3676--3687

\bibitem[{Xie et~al(2020)Xie, Dai, Chen, Dai, Zhao, Zha, Wei, and Pfister}]{NEURIPS2020_ec24a54d}
Xie Y, Dai H, Chen M, et~al (2020) Differentiable top-k with optimal transport. In: Annual Conference on Neural Information Processing Systems, pp 20520--20531

\bibitem[{Xu et~al(2012)Xu, Bu, Chen, and Cai}]{DBLP:conf/www/XuBCC12}
Xu B, Bu J, Chen C, et~al (2012) An exploration of improving collaborative recommender systems via user-item subgroups. In: International World Wide Web Conference, pp 21--30

\bibitem[{Xu and Li(2007)}]{DBLP:conf/sigir/XuL07}
Xu J, Li H (2007) Adarank: a boosting algorithm for information retrieval. In: {ACM} {SIGIR} Conference on Research and Development in Information Retrieval, pp 391--398

\bibitem[{Xu et~al(2019)Xu, Lv, and Geng}]{DBLP:conf/aaai/XuLG19}
Xu N, Lv J, Geng X (2019) Partial label learning via label enhancement. In: {AAAI} Conference on Artificial Intelligence, pp 5557--5564

\bibitem[{Xu et~al(2014)Xu, Xiong, Huang, and Yao}]{DBLP:journals/tmm/XuXHY14}
Xu Q, Xiong J, Huang Q, et~al (2014) Online hodgerank on random graphs for crowdsourceable qoe evaluation. {IEEE} Trans Multim 16:373--386

\bibitem[{Yang and Koyejo(2020)}]{DBLP:conf/icml/YangK20}
Yang F, Koyejo S (2020) On the consistency of top-k surrogate losses. In: International Conference on Machine Learning, pp 10727--10735

\bibitem[{Yang and Ying(2023)}]{DBLP:journals/csur/YangY23}
Yang T, Ying Y (2023) {AUC} maximization in the era of big data and {AI:} {A} survey. {ACM} Comput Surv 55:172:1--172:37

\bibitem[{Yang et~al(2022)Yang, Xu, Bao, Cao, and Huang}]{DBLP:journals/pami/YangXBCH22}
Yang Z, Xu Q, Bao S, et~al (2022) Learning with multiclass {AUC:} theory and algorithms. {IEEE} Trans Pattern Anal Mach Intell 44:7747--7763

\bibitem[{Yang et~al(2023)Yang, Xu, Bao, He, Cao, and Huang}]{DBLP:journals/pami/YangXBHCH23}
Yang Z, Xu Q, Bao S, et~al (2023) Optimizing two-way partial {AUC} with an end-to-end framework. {IEEE} Trans Pattern Anal Mach Intell 45:10228--10246

\bibitem[{Ye et~al(2020)Ye, He, Peng, Wu, and Qiao}]{DBLP:conf/eccv/YeHPWQ20}
Ye J, He J, Peng X, et~al (2020) Attention-driven dynamic graph convolutional network for multi-label image recognition. In: European Conference on Computer Vision, pp 649--665

\bibitem[{Ye et~al(2012)Ye, Chai, Lee, and Chieu}]{DBLP:conf/icml/NanCLC12}
Ye N, Chai KMA, Lee WS, et~al (2012) Optimizing f-measure: {A} tale of two approaches. In: International Conference on Machine Learning, pp 1--8

\bibitem[{You et~al(2020)You, Guo, Cui, Long, Bao, and Wen}]{DBLP:conf/aaai/YouGCLBW20}
You R, Guo Z, Cui L, et~al (2020) Cross-modality attention with semantic graph embedding for multi-label classification. In: {AAAI} Conference on Artificial Intelligence, pp 12709--12716

\bibitem[{Zhang and Zhou(2014)}]{DBLP:journals/tkde/ZhangZ14}
Zhang M, Zhou Z (2014) A review on multi-label learning algorithms. {IEEE} Trans Knowl Data Eng 26:1819--1837

\bibitem[{Zhang et~al(2021)Zhang, Cheng, Huang, Wen, Feng, Li, and Guo}]{DBLP:journals/corr/abs-2112-07368}
Zhang Y, Cheng Y, Huang X, et~al (2021) Simple and robust loss design for multi-label learning with missing labels. CoRR abs/2112.07368

\bibitem[{Zhou et~al(2022)Zhou, Chen, Wang, Chen, and Heng}]{DBLP:conf/eccv/ZhouCWCH22}
Zhou D, Chen P, Wang Q, et~al (2022) Acknowledging the unknown for multi-label learning with single positive labels. In: European Conference on Computer Vision, pp 423--440

\end{thebibliography}

\end{document}